\documentclass{article}

\usepackage[preprint]{neurips_2026}
\usepackage[utf8]{inputenc}
\usepackage[T1]{fontenc}
\usepackage{hyperref}
\usepackage{url}
\usepackage{booktabs}
\usepackage{amsfonts}
\usepackage{nicefrac}
\usepackage{microtype}
\usepackage{amsmath,amssymb,amsthm}
\usepackage{mathtools}
\usepackage{bbm}
\usepackage{graphicx}
\graphicspath{{./}}
\usepackage{subcaption}
\usepackage{xcolor}

\definecolor{linkblue}{RGB}{0,68,136}
\hypersetup{
    colorlinks=true,
    linkcolor=linkblue,
    citecolor=linkblue,
    urlcolor=linkblue,
}

\usepackage{tikz}
\usetikzlibrary{patterns, decorations.pathreplacing, calc, matrix, positioning, fit, backgrounds}

\colorlet{Acol}{green!45!black}
\colorlet{Ncol}{violet!70!black}
\newcommand{\Aclr}{{\color{Acol}\mathcal{A}}}
\newcommand{\Nclr}{{\color{Ncol}\hat{\mathcal{N}}}}
\usepackage[capitalize,noabbrev]{cleveref}
\usepackage{adjustbox}
\usepackage{placeins}

\newtheorem{theorem}{Theorem}
\newtheorem{proposition}[theorem]{Proposition}
\newtheorem{lemma}[theorem]{Lemma}

\theoremstyle{definition}

\theoremstyle{remark}
\newtheorem{remark}[theorem]{Remark}

\newcommand{\E}{\mathbb{E}}
\newcommand{\Prob}{\mathbb{P}}
\newcommand{\R}{\mathbb{R}}
\DeclareMathOperator{\Ber}{Bernoulli}
\DeclareMathOperator{\Geom}{Geometric}
\newcommand{\N}{\mathcal{N}}

\DeclareMathOperator{\Tr}{Tr}

\newcommand{\Pbatch}{P_{\mathrm{batch}}}
\newcommand{\Qbatch}{Q_{\mathrm{batch}}}
\newcommand{\Bone}{\mathcal{B}_1}
\newcommand{\Btwo}{\mathcal{B}_2}
\newcommand{\Bdiag}{\mathcal{B}_{\mathrm{diag}}}
\newcommand{\Bcross}{\mathcal{B}_{\mathrm{cross}}}

\newcommand{\keff}{\kappa_{\mathrm{eff}}}
\newcommand{\eps}{\varepsilon}
\newcommand{\etaeff}{\eta_{\mathrm{eff}}}
\newcommand{\etamax}{\eta_{\mathrm{max}}}
\newcommand{\ceff}{c_{\mathrm{eff}}}

\newif\ifshowcomments
\showcommentstrue

\title{Dynamics of Stochastic Momentum with Sparse Updates in High Dimensions}

\author{
  Katie Everett \\
  Google DeepMind \& MIT \\
  \texttt{everettk@google.com}\\
  \And
  Elliot Paquette \\
  McGill University \& Mila \\
  \texttt{elliot.paquette@mcgill.ca} \\
}

\begin{document}
\maketitle

\begin{abstract}
Existing theory of momentum assumes that gradients arrive at every parameter at a roughly constant rate, an assumption violated in practice by heavy-tailed data distributions and modern architectures. We theoretically analyze the dynamics of two tractable models of momentum under sparse updates: a least squares model with sparse inputs and a logistic regression model with a rare class. Both admit exact closed-form second-moment dynamics whose high-dimensional limits we characterize across three scaling exponents for sparsity, batch size, and momentum decay. The phase structure on both problems is governed by the ratio of two intrinsic timescales: a \emph{momentum retention} timescale (how many active updates the buffer survives) and a \emph{learning} timescale (how many active updates it takes to reduce the squared error). When learning is much slower than retention, the limit matches SGD; when learning is faster, the system is unstable; where the timescales coincide, we recover classical heavy-ball dynamics. The oscillatory dynamics occur at different momentum values for different token sparsity, creating a \emph{spectral conflict} for global momentum across token frequencies.
\end{abstract}

\addtocontents{toc}{\protect\setcounter{tocdepth}{-2}}

\section{Introduction}
\label{sec:intro}
Momentum accumulates past gradients to build velocity and cut corners in high-curvature regions. This can produce an oscillatory trajectory that spirals towards the optimum; this oscillation is the mechanism behind Nesterov acceleration, which traverses ill-conditioned landscapes significantly faster than gradient descent. Existing theory for momentum assumes a homogeneous data stream where every parameter receives gradient updates at a roughly constant rate. In practice, gradient updates can have significant heterogeneity induced by both architecture (attention sparsity, gated activations, mixture-of-experts routing) and data distributions. In particular, the power-law distribution of token frequencies in natural language contains common tokens that update every step while rare tokens experience long intervals of inactivity.

\begin{figure}[!t]
\centering
\begin{minipage}[b]{0.32\textwidth}
\centering
\includegraphics[width=\linewidth]{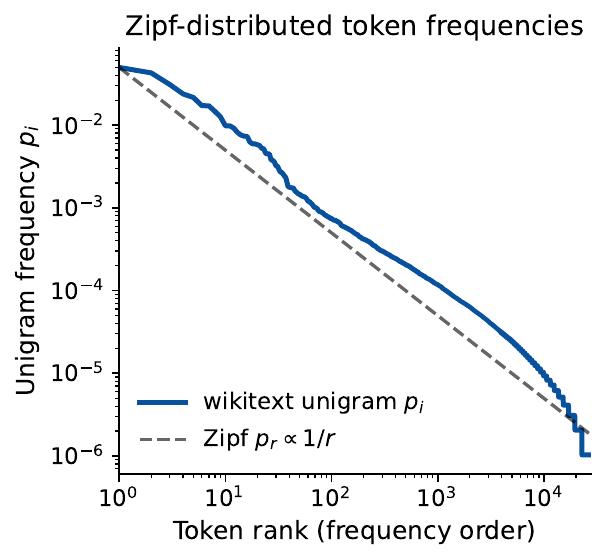}
\end{minipage}\hfill
\begin{minipage}[b]{0.32\textwidth}
\centering
\includegraphics[width=\linewidth]{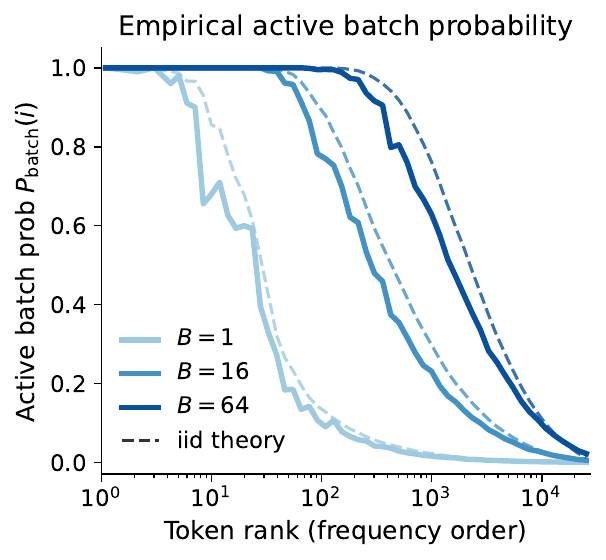}
\end{minipage}\hfill
\begin{minipage}[b]{0.32\textwidth}
\centering
\includegraphics[height=0.923\linewidth, keepaspectratio]{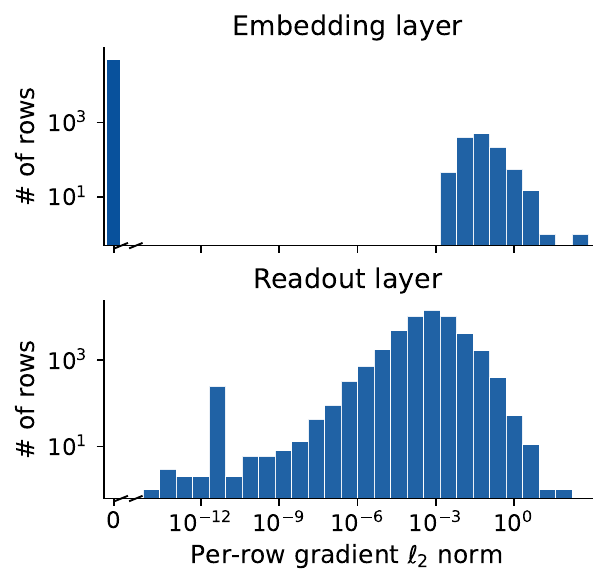}
\end{minipage}
\caption{\textbf{Empirical token statistics show power-law data distributions induce sparse gradients.}
\textbf{(a)} Language data tokens (wikitext, Pythia tokenizer, solid) approximately follow a
Zipfian distribution $p_r \propto 1/r$ (dashed).
\textbf{(b)} Empirical batches of language sequences approximately follow the iid active-token
probability $\Pbatch(i) = 1 - (1 - p_i)^B$ (dashed), despite
within-sequence correlations. Larger batches
($B \in \{1, 16, 64\}$) contain rare tokens more often, shifting the
active batch probability upward at every token rank.
\textbf{(c)} Per-row gradient $\ell_2$ norm from a single forward/backward
pass on a $B\!=\!16$ batch (Pythia-70M, fp32). Embedding layer gradients are nonzero only for active tokens in the batch, similar to the binary sparsity in the least squares model. The softmax function distributes gradients across all tokens in the readout layer, similar to the logistic regression model.}
\label{fig:phase-plane}
\end{figure}

These sparse updates change the physics of momentum. The buffer decays at every global step regardless of whether a gradient arrives; for rare features it can decay to zero between active updates, destroying the inertial memory needed for acceleration. Increasing momentum to bridge this gap creates a different problem: the buffer retains a stale velocity vector that drifts the parameters between active updates. If that drift feeds back into subsequent gradients it can destabilize the system.

We theoretically analyze the dynamics of two tractable models of momentum under sparse updates: a least squares model with Bernoulli-gated inputs and a logistic regression model on a Gaussian mixture with a rare class. Both models admit exact closed-form second-moment dynamics whose high-dimensional limits we characterize across three scaling exponents $(\kappa, \sigma, \gamma)$ for sparsity, batch size, and momentum decay.

Our analysis shows that the entire phase structure is governed by the ratio of two intrinsic timescales: a \emph{momentum retention} timescale (how many active updates the buffer survives) and a \emph{learning} timescale (how many active updates it takes to reduce the squared error). When learning is much slower than retention, momentum equilibrates before the error moves and the limit matches SGD. When learning is faster than retention, the system is unstable. Where these two timescales coincide, we recover classical heavy-ball dynamics. We note, however, that the three-dimensional resonance dynamics occur at a different value of momentum for each token sparsity, creating a \emph{spectral conflict} for any optimizer using a single value of momentum over heterogeneous features: at most one effective sparsity can sit on the resonance line, so common and rare tokens cannot simultaneously be in resonance. We discuss the implications for language model training and scaling studies.

\section{Related Work}
Momentum methods originate with Polyak's heavy-ball method and Nesterov acceleration, where the optimizer carries a velocity that can accelerate convergence on ill-conditioned deterministic problems \citep{polyak1964some, nesterov1983method}. A standard approach to analyzing these methods is to pass to continuous time, relating the acceleration mechanism to damped second-order dynamics \citep{qian1999momentum, su2016differential, wibisono2016variational, shi2022highresolution}. We continue this ODE perspective, but apply it to momentum under stochastic, sparse updates.

\textbf{Acceleration or noise reduction.} In stochastic optimization, determining the mechanism behind the benefits of momentum is more challenging. For the acceleration effect to survive, one line of work shows the batch size must be large enough to control the gradient noise passing into the optimizer, with deterministic heavy-ball rates appearing only above an implicit conditioning ratio or a smoothed condition number threshold \citep{lee2022trajectory, bollapragada2025fast, zhang2019which, fu2023when}. Another line of work instead attributes the benefits of momentum to smoothing or noise reduction: averaging gradients reduces the relative contribution of noise compared to the consistent descent directions \citep{gitman2019understanding, wang2024marginal}. Whether momentum primarily accelerates, primarily smooths, or has any robust effect at all in standard stochastic schemes remains debated.

\textbf{Disappearance in stochastic limits.} In particular, standard heavy-ball momentum disappears from the optimizer entirely in many high-dimensional or noise-dominated stochastic limits; it becomes dynamically equivalent to SGD with a rescaled learning rate, and recovering acceleration over SGD requires modified, problem-aware variants such as dimension-adjusted or carefully scaled stochastic accelerated methods \citep{kidambi2018insufficiency, jain2018accelerating, paquette2021sgdlarge, paquette2021dynamics, varre2022accelerated}. Our problem setting can be viewed as asking whether acceleration and noise-reduction effects of momentum take place in sparsely gated problems.

\textbf{Sources of sparse updates.} There are many sources of sparse updates, both from the data distribution and from architectural choices. On the data side, token frequencies in language follow Zipfian power laws, so different parameters receive active gradients at rates spanning orders of magnitude. Recent work analyzes stochastic optimization under this kind of class imbalance \citep{kunstner2024heavy, kunstner2026scaling, wang2025muon, pezzicoli2025classimbalance}.

Beyond data-induced sparsity, modern architectures introduce sparsity through gating, activation sparsity, and expert routing. Gated feed-forward variants such as SwiGLU and GEGLU multiply one projection by a learned gate \citep{dauphin2017gatedcnn, shazeer2020glu}; attention is naturally sparse on a per-input basis and is increasingly augmented with explicit gating \citep{liu2023dejavu, qiu2025gatedattention}; and trained MLPs exhibit emergent activation sparsity \citep{li2023lazyneuron, zhang2022moefication}. Mixture-of-experts layers go further: an explicit router sends each token through only a small subset of experts \citep{shazeer2017outrageously, lepikhin2020gshard, fedus2022switch}, so each expert's optimizer state evolves on a sparse, route-dependent schedule. The common consequence is that a single global momentum hyperparameter is shared across parameters with very different active-update frequencies, a heterogeneity we quantify in Section~\ref{sec:discussion} as a \emph{spectral conflict} between global momentum and per-token effective sparsity.

\textbf{Basis alignment of sparsity.} If sparsity is aligned with the parameter basis, the optimizer could update parameters from their momentum buffers only for parameters that received a gradient on the current step, as is done in SparseAdam\footnote{\url{https://github.com/pytorch/pytorch/blob/main/torch/optim/sparse_adam.py}} \citep{paszke2019pytorch}. This applies to the embedding layer, where one-hot inputs zero out entire rows of the gradient. But sparsity in transformer features need not be basis-aligned: mechanistic interpretability work finds interpretable sparse features in non-axial directions of the activation space, visible only after dictionary learning with a sparse autoencoder \citep{elhage2022toy, bricken2023monosemanticity, cunningham2024sparse}. For adaptive methods where the second-moment buffer accumulates coordinate-wise, it is less clear how optimizers might handle sparsity that is not basis-aligned. In contrast, SGD and heavy-ball momentum with a scalar coefficient commute with orthogonal changes of basis, so the dynamics we characterize in this paper apply regardless of which basis the sparsity lives in.

\section{SGD with Momentum Under Sparse Updates}
\label{sec:setup}

We study mini-batch SGD with heavy-ball momentum in a high-dimensional scaling regime on two sparse-update problems: a least-squares model and a logistic regression model.

\paragraph{SGD with momentum + continuization.} Given a learning rate $\eta > 0$, momentum parameter $\beta \in [0, 1)$, and mini-batch size $B$, the iterates evolve as
\begin{equation}
    g_k = \frac{1}{B}\sum_{i=1}^B g_{k,i}, \qquad
    m_{k+1} = \beta\, m_k + (1-\beta)\, g_k, \qquad
    \theta_{k+1} = \theta_k - \eta\, m_{k+1},
\end{equation}
where $g_{k,i}$ is the per-sample gradient supplied by the data model below. We write $\eps := 1 - \beta$ for the momentum-decay rate: $\eps \to 0$ corresponds to strong momentum, while $\eps = 1$ recovers vanilla SGD. For analysis we adopt a Poissonized continuous-time embedding of these discrete iterations in which active updates arrive as a unit-rate Poisson process; this yields an exact closed ODE for the second moments without changing the high-dimensional limit, and is detailed in Appendix~\ref{app:ode_derivation}.

\paragraph{Sparse least squares model.}\label{sec:ls-model} Each sample arrives through a sparse gating mechanism. We draw an activation indicator $s_i \sim \Ber(p)$ and an independent latent feature $\tilde{x}_i \sim \N(0, I_d)$, and form the effective gated input $x_i := s_i\, \tilde{x}_i$. The target is $y_i := \langle x_i, \theta^* \rangle$ for an unknown ground-truth parameter $\theta^* \in \R^d$. The per-sample squared-error gradient is then
\begin{equation}
    g_i \;=\; x_i\, \langle x_i,\, \theta - \theta^* \rangle.
\end{equation}
Let $N_k := \sum_{i=1}^B s_{k,i}$ count the active samples in minibatch $k$, and $\Pbatch := \Prob(N_k \ge 1) = 1 - (1-p)^B$ the probability of an \emph{active batch}. When $N_k = 0$ (an empty batch, occurring with probability $1 - \Pbatch$), every per-sample gradient is identically zero: the momentum buffer decays as $m_{k+1} = \beta\, m_k$ without new gradient injection, while the parameters drift under the residual momentum. This decoupling of update arrival from momentum decay is what makes the momentum dynamics nontrivial in the sparse limit.

\paragraph{Sparse logistic regression model.}\label{sec:lr-model} The logistic regression problem is a two-class Gaussian mixture in which one class is rare: with probability $1-p$ draw the common class $X \sim \N(0, I_d)$, label $\tilde{Y} = 0$; with probability $p$ draw the rare class $X \sim \N(\mu, I_d)$, label $\tilde{Y} = 1$, where $\mu \in \R^d$ is the signal direction with fixed norm $\|\mu\| = r > 0$. We fit a binary logistic classifier on this data with the bias held at its Bayes-optimal value, training only the parameters $\theta$ (the per-sample logistic gradient and the explicit Bayes bias are recorded in Appendix~\ref{app:lr_setup}). Unlike the LS model, every minibatch contributes a nonzero gradient, but only the rare-class fraction of samples carries useful information about the signal direction. The relevant sparsity is in the \emph{signal content} of each gradient, not whether the gradient itself is zero.

\paragraph{Co-scaling ansatz.} Throughout, we analyze the high-dimensional limit $d \to \infty$ under a power-law co-scaling of the four parameters that govern the dynamics. For interior limits we sharpen this to the constant-prefactor form
\begin{equation}
    p = p_* d^{-\kappa},\qquad
    B = B_* d^{\sigma},\qquad
    \eps = \eps_* d^{-\gamma},\qquad
    \eta = \eta_* d^{-\alpha},
    \label{eq:coscaling-main}
\end{equation}
with fixed positive constants $(p_*, B_*, \eps_*, \eta_*)$ that pass through into the limit-ODE coefficients of §\ref{sec:main_results} and §\ref{sec:logistic}. The exponents satisfy $\kappa, \sigma, \gamma \ge 0$ (free input parameters), and the learning-rate exponent $\alpha$ is determined by mean-square stability per region of the phase plane (Proposition~\ref{prop:eta_max}). Throughout, $f \asymp g$ denotes equality up to sub-polynomial factors: $\log(f/g)/\log d \to 0$ as $d \to \infty$. We use $\gtrsim$ and $\lesssim$ for the corresponding one-sided polynomial-resolution inequalities. For example, $p \asymp d^{-\kappa}$.

\paragraph{Batching regimes.} We denote three regimes determined by the expected number of active gradients per batch, $pB \asymp d^{\sigma-\kappa}$, agnostic to momentum, that divide the phase diagram into vertical sections. In the \emph{concentrated regime} ($\kappa < \sigma - 1$, so $\sigma - \kappa > 1$), $pB$ grows at least linearly in $d$, causing the gradient to act deterministically in the high-dimensional limit. In the \emph{dense regime} ($\sigma - 1 < \kappa < \sigma$, so $\sigma - \kappa \in (0, 1)$), $pB$ grows sublinearly, so the per-batch gradient retains stochastic fluctuation. In the \emph{sparse regime} ($\sigma < \kappa$, so $\sigma - \kappa < 0$), $pB$ vanishes as $d \to \infty$, so the gaps and buffer decay between active updates are the leading-order effect.

\section{Main Results for Least Squares}
\label{sec:main_results}

Our main results give a unified high-dimensional limit of the sparse momentum least squares dynamics. The central object is the relationship between two intrinsic timescales: the \emph{momentum retention} timescale (how many active updates the buffer survives) and the \emph{learning} timescale (how many active updates it takes to reduce the squared error). Full definitions and derivations are given in Appendix~\ref{app:ls}.

The dynamics admit an exact closed system in the three second moments of $(\theta - \theta^*, m)$ tracking the squared error (risk), the momentum energy, and the error--momentum correlation:
\begin{equation}
    R \;:=\; \E\|\theta - \theta^*\|^2, \qquad
    V \;:=\; \E\|m\|^2, \qquad
    C \;:=\; \E\langle \theta - \theta^*,\, m \rangle.
\end{equation}

\begin{theorem}[Main ODE]
\label{thm:main_ode}
In the isotropic Gaussian model of Section~\ref{sec:setup},
the state variables $(R, V, C)$ satisfy the linear ODE
\begin{equation}
    \tfrac{d}{dt} (R, V, C)^\top
    = \mathbf{A}_{\mathrm{batch}}\, (R, V, C)^\top,
\end{equation}
where $t$ counts active updates and $\mathbf{A}_{\mathrm{batch}}$ is a $3 \times 3$ matrix whose entries are explicit functions of $(\eta, \beta, p, B, d)$. This system is exact (not a mean-field approximation) because the isotropic Gaussian structure provides exact moment closure at the level of $(R, V, C)$.
\end{theorem}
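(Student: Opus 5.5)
The plan is to work in error coordinates $e := \theta - \theta^*$. Each batch induces a random linear map on $(e, m)$, and I will compute its expected action on the second-moment triple $(R, V, C)$.

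\textbf{Step 1: the per-batch linear map.} With $H_k := \frac{1}{B}\sum_{i=1}^B x_{k,i} x_{k,i}^\top$, one batch acts as
\[
m^+ = \beta m + (1-\beta) H_k e, \qquad e^+ = e - \eta m^+ = (I - \eta\eps H_k)e - \eta\beta m .
\]
The key observation is that the batch $(s_{k,i}, \tilde x_{k,i})$ is independent of $(e, m)$, and its law is invariant under conjugation by orthogonal matrices. Hence
\[
\E[H_k] = c_1 I, \qquad \E[H_k A H_k] = c_2 A + c_3 \Tr(A)\, I \quad \text{for symmetric } A .
\]
The constants follow from Gaussian fourth moments:
\begin{itemize}
\item $\E[(x^\top u)^2 x x^\top] = p\,(2uu^\top + \|u\|^2 I)$;
\item cross terms between distinct samples contribute $p^2 uu^\top$.
\end{itemize}
This gives $c_1 = p$, $c_2 = \tfrac{2p}{B} + \tfrac{B-1}{B}p^2$ and $c_3 = \tfrac{p}{B}$.

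\textbf{Step 2: closing the moments.} Expanding $\|e^+\|^2$, $\|m^+\|^2$ and $\langle e^+, m^+\rangle$ and taking conditional expectations, every term reduces to one of three kinds:
\begin{itemize}
\item quadratic forms $\E\langle e, H e\rangle = p R$ and $\E\langle e, H m\rangle = p C$;
\item fourth-order terms such as $\E\|H e\|^2 = (c_2 + d\,c_3) R$ and $\E\langle H e, H m\rangle = (c_2 + d\,c_3) C$, using $\Tr(H^2)$-type identities;
\item the moments $R, V, C$ themselves.
\end{itemize}
The trace term produces the $d$-dependence. The point to check carefully is that a term like $\E\langle m, H^2 m\rangle$ never appears. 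This holds because $H$ multiplies only $e$, never $m$, in the update. So the expectation closes exactly on $(R, V, C)$, giving $(R,V,C)_{k+1} = M_{\mathrm{batch}}(R,V,C)_k$ with $M_{\mathrm{batch}}$ an explicit $3\times 3$ matrix in $(\eta, \beta, p, B, d)$. No Gaussian approximation of the iterates is needed, only of the fresh data, which is why the closure is exact.

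\textbf{Step 3: continuization.} Following the Poissonized embedding of Appendix~\ref{app:ode_derivation}, I let global steps arrive at a Poisson rate. By the Kolmogorov forward equation, the expected moments then evolve as $\tfrac{d}{d\tau}(R,V,C)^\top = (M_{\mathrm{batch}} - I)(R,V,C)^\top$. Rescaling time so that $t$ counts active updates means dividing by $\Pbatch$, so $\mathbf{A}_{\mathrm{batch}} = (M_{\mathrm{batch}} - I)/\Pbatch$. Equivalently, one can decompose $M_{\mathrm{batch}} = (1-\Pbatch)M_0 + \Pbatch M_{\mathrm{act}}$, where $M_0$ is the empty-batch map with $H = 0$ (pure decay $m \mapsto \beta m$ and drift). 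For $M_{\mathrm{act}}$, the constants $c_1, c_2, c_3$ must be replaced by their values conditioned on $N_k \ge 1$, that is, by moments of $N_k/B$ given $N_k \ge 1$.

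\textbf{Main obstacle.} The main difficulty is bookkeeping rather than concept. First, the fourth-moment identity has to be carried out correctly with the Bernoulli gating and the batch averaging, including within-sample versus cross-sample terms. Second, the normalization to active-update time has to be consistent with the Poissonization, so that the empty-batch decay of momentum appears as the $(1-\Pbatch)/\Pbatch$-weighted contribution. I would verify the result by checking two limits: $\beta = 0$, $p = 1$ should recover the known SGD risk recursion $R^+ = (1 - 2\eta c_1 + \eta^2(c_2 + d c_3))R$, and $\eta = 0$ should give pure geometric momentum decay.
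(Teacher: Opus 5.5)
Your Steps 1 and 2 are sound. The isotropy identities $\E[H]=c_1I$ and $\E[HAH]=c_2A+c_3\Tr(A)\,I$, with your values of $c_1,c_2,c_3$, close one global step exactly on $(R,V,C)$. Closure does not actually depend on $H$ never multiplying $m$: a term $\E\langle m,H^2m\rangle$ would also reduce to a multiple of $V$ by isotropy, and in the paper's derivation such terms do occur.

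The gap is in Step 3, which does not produce the theorem's $\mathbf{A}_{\mathrm{batch}}$. The paper does not Poissonize global steps. It re-indexes by active updates and lets a unit-rate Poisson clock fire the \emph{whole} active-to-active map: a run of $K-1$ empty batches with $K\sim\Geom(\Pbatch)$, followed by one active batch whose gradient is evaluated at the drifted point $e-\eta S_{K-1}m$. In your notation this gives
\[
\mathbf{A}_{\mathrm{batch}}=\E_K\bigl[M_{\mathrm{act}}M_0^{K-1}\bigr]-I=\Pbatch\,M_{\mathrm{act}}\bigl(I-\Qbatch M_0\bigr)^{-1}-I .
\]
This resolvent is exactly where the retention factors $\bar\beta_1=\E[\beta^K]$, $\bar\beta_2=\E[\beta^{2K}]$ and the drift coefficients $\delta_\theta,\delta_g,\delta_\theta^{(2)},\dots$ in the paper's entries come from.

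Your matrix $\mathbf{A}'=(\Qbatch M_0+\Pbatch M_{\mathrm{act}}-I)/\Pbatch$ satisfies $\mathbf{A}_{\mathrm{batch}}=\Pbatch\,\mathbf{A}'(I-\Qbatch M_0)^{-1}$. The $R$-column coincides, because with $m=0$ empty steps do nothing. The $V$- and $C$-columns do not:
\begin{itemize}
\item at $\eta=0$, your $V$--$V$ entry is $(\beta^2-1)/\Pbatch$, while the paper's is $\bar\beta_2-1=(\beta^2-1)/(1-\Qbatch\beta^2)$;
\item your leading $R$--$C$ entry is $-2\eta\beta/\Pbatch$, versus $-2\eta\delta_\theta=-2\eta\beta/(\Pbatch+\Qbatch\eps)$.
\end{itemize}
The two agree only when $\eps\ll\Pbatch$.

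What you have derived is the exact moment ODE of a different continuization. In it, the empty steps occur at their own Poisson times rather than being bundled into the active jump. Both versions run the same discrete iterates with active updates at unit rate, so yours is legitimate on its own terms. It is not, however, the paper's Poissonized embedding, nor equivalent to it, and the difference matters. In the memoryless regime $\eps\gg\Pbatch$, your momentum relaxation rate per active update diverges like $2\eps/\Pbatch$. The paper's saturates at $\Theta(1)$, consistent with $1-\bar\beta_1\asymp1-\bar\beta_2\asymp\rho=\eps/(\Pbatch+\eps)$. The later estimates $c_1\asymp\rho$, $c_2\asymp\rho^2$ and the memoryless-region analysis all rest on that scale.

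The missing idea is the stopping-time (active-to-active) representation. With it, your per-step ingredients suffice: compose $K-1$ copies of the empty map with one conditioned active map, average over $K$ via the geometric sums (equivalently, the resolvent above), and subtract $I$.
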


\begin{figure}[t!]
\centering
\includegraphics[width=\textwidth]{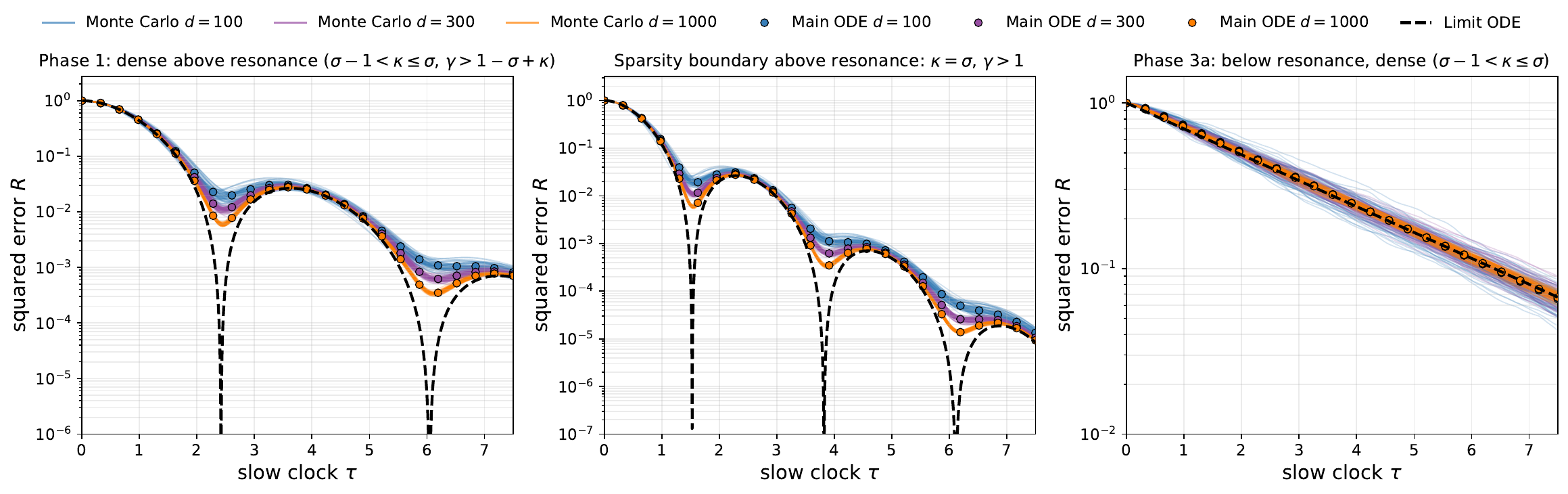}
\caption{\textbf{Risk curves $R(\tau)$ show Monte Carlo and finite-$d$ Main ODE converge to Limit ODE as $d$ grows large.} Three representative regimes at $\sigma=1.2$: \textbf{(left)} dense above resonance, \textbf{(middle)} dense/sparse boundary above resonance, \textbf{(right)} dense below resonance. Faded colored curves are individual Monte Carlo seeds (64 per $d$), circles are Main ODE, dashed line is Limit ODE.}
\label{fig:ls-state-variables}
\end{figure}

\subsection{Learning rate stability constraints}
\label{sec:stability_main}
We use the Routh-Hurwitz criterion to determine the maximum learning rate scaling in terms of $(\kappa, \sigma, \gamma)$ that results in mean-square stability of the system. Universally, the learning rate cannot exceed the standard noise limit $\eta \asymp d^{\sigma - 1}$. When momentum exceeds $\gamma > 1 - \sigma + \kappa$, the correlation between momentum and risk requires a tighter constraint $\eta \asymp d^{\kappa - \gamma}$ where the learning rate must reduce as momentum increases.

\begin{proposition}[Maximum stable learning rate]
\label{prop:eta_max}
The system is mean-square stable iff the characteristic polynomial $p(z) = z^3 + c_1 z^2 + c_2 z + c_3$ of $\mathbf{A}_{\mathrm{batch}}$ satisfies $c_i > 0$ for all $i$ and $c_1 c_2 > c_3$ (Routh--Hurwitz). The binding constraint and resulting $\etamax$ scaling are:
\begin{center}
\renewcommand{\arraystretch}{1.15}
\begin{tabular}{@{}lll@{}}
\toprule
Region & Binding constraint & $\etamax$ scaling \\
\midrule
Above resonance line ($\gamma > 1-\sigma+\kappa$) & $c_1 c_2 > c_3$ (correlation-limited) & $\etamax \asymp d^{\kappa-\gamma}$ \\
Below resonance line ($\gamma < 1-\sigma+\kappa$) & $c_3 > 0$ (noise-limited) & $\etamax \asymp d^{\sigma-1}$ \\
\bottomrule
\end{tabular}
\end{center}
\end{proposition}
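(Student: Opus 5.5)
I would first make $\mathbf{A}_{\mathrm{batch}}$ explicit, since Theorem~\ref{thm:main_ode} only asserts that its entries are explicit functions of $(\eta,\beta,p,B,d)$. Write $e=\theta-\theta^*$. Conditional on an active batch with $N$ active samples, the minibatch gradient is $g=\frac1B\sum_i s_i\tilde x_i\tilde x_i^\top e$. Gaussian moment identities then give
\[
\E[g\mid e]=\tfrac{N}{B}\,e,\qquad
\E\|g\|^2=\tfrac{N}{B^2}\bigl((d+2)+(N-1)\bigr)\|e\|^2 .
\]
Cross terms with $m$ reduce to $\tfrac{N}{B}\langle e,m\rangle$.

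Averaging over $N\sim\mathrm{Bin}(B,p)$ conditioned on $N\ge1$, and Poissonizing as in Appendix~\ref{app:ode_derivation}, yields a linear system in $(R,V,C)$. Its rates are as follows. The momentum decay rate in $V$ is $\asymp \eps/\Pbatch$ per active update, since decay happens every global step. The drift of $R$ is $-2\eta C$ times the inverse active frequency, and the gradient injection has mean coefficient $a:=\E[N/B\mid N\ge1]$ and noise coefficient $b:=\E[N(N+d)/B^2\mid N\ge1]$.

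From this I read off the coefficients $c_1,c_2,c_3$ of $\det(zI-\mathbf{A}_{\mathrm{batch}})$ as explicit polynomials in $\eta,\eps,a,b$. The equivalence of mean-square stability with Routh--Hurwitz is standard for a real $3\times 3$ linear ODE, because the second moments tend to zero iff all eigenvalues have negative real part. That gives the first sentence of the proposition.

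Next I substitute the co-scaling \eqref{eq:coscaling-main} and track the leading-order power of $d$ in each coefficient, in each batching regime:
\begin{itemize}
\item In the sparse regime $\Pbatch\asymp pB$, $N=1$ typically, so $a\asymp 1/B$ and $b\asymp d/B^2$.
\item In the dense and concentrated regimes, $a\asymp p$ and $b\asymp pd/B+p^2$.
\end{itemize}
In every case I expect the following structure. $c_3$ is essentially $\det(-\mathbf{A})$, which scales like $\eps^2\eta a(1-\eta b/(\eps a)\cdot\text{const})$ up to the time normalization. So $c_3>0$ forces $\eta\lesssim a/b\asymp d^{\sigma-1}$ (the noise limit, uniform across regimes). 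The conditions $c_1,c_2>0$ are automatically satisfied at leading order.

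The condition $c_1c_2>c_3$ compares the momentum damping $c_1\asymp \eps/\Pbatch$ against the oscillation frequency squared $\eta a/\Pbatch$ (in active-update time). I expect it to fail once $\eta a\gtrsim \eps\cdot(\text{damping})$, giving $\eta\lesssim \eps/(pd^{0})\asymp d^{\kappa-\gamma}$. Equating the two thresholds, $d^{\kappa-\gamma}=d^{\sigma-1}$, gives exactly the resonance line $\gamma=1-\sigma+\kappa$. Above it the correlation constraint is the smaller one and binds; below it the noise constraint binds.

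The main obstacle is the leading-order bookkeeping of $c_1c_2-c_3$. There are cancellations: the $c_1c_2$ and $c_3$ products share terms of order $\eps^2\eta a$, so one must identify the surviving difference and check that its sign change occurs at $\eta\asymp d^{\kappa-\gamma}$ rather than at a subleading scale. I would handle this by factoring $\mathbf{A}$ in the small-$\eta$ expansion (treating $\eta/\eps$ as the perturbation parameter) and computing the Hurwitz determinant symbolically. I would verify the sign on both sides of the line in each regime separately, with the sparse/dense boundary $\kappa=\sigma$ checked by continuity of $a,b$.
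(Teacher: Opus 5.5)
Your plan is the paper's own route: Routh--Hurwitz, leading-order scalings $c_1\asymp\rho$, $c_2\asymp\rho^2$, $c_3\asymp\eta\mathcal{B}_1\rho^2$, a noise ceiling from $c_3>0$, a correlation ceiling from $c_1c_2>c_3$, then comparison. The step you single out as the main obstacle does not come out the way you expect, and this is a genuine gap rather than bookkeeping. The expansion $c_2\approx(1-\bar\beta_1)(1-\bar\beta_2)$ holds only for $\eta\mathcal{B}_1\ll\rho$. Beyond that, $c_2$ also has $O(\eta)$ terms of size $\asymp\eta\mathcal{B}_1\rho$; for example, the $(R,C)$ minor contributes $-a_Cc_R\approx2\eta\delta_\theta\,\varepsilon\mathcal{B}_1$. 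These terms are comparable to $\rho^2$ exactly in the regime $\eta\mathcal{B}_1\asymp\rho$ where you want $c_1c_2-c_3$ to change sign, and they dominate beyond it. In the dense case ($\Pbatch=1$, $\mathcal{B}_2\approx dp/B$), keeping leading terms gives $c_1\approx3\varepsilon$, $c_2\approx2\varepsilon^2+4\eta\varepsilon p$ and $c_3\approx2\eta\varepsilon^2(2p-\eta\mathcal{B}_2)$, hence
\[
c_1c_2-c_3\approx6\varepsilon^3+8\eta\varepsilon^2p+2\eta^2\varepsilon^2\mathcal{B}_2>0,
\]
with no sign change at $\eta\asymp\varepsilon/p$ or anywhere else. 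The paper's proof makes the same truncation when it writes $c_1c_2\asymp\rho^3$. Its own triple-point computation $a_1a_2-a_3=\rho_*^3(6+8\bar\eta+2\bar\eta^2\xi_*/\rho_*)$ exhibits the same positive $8\bar\eta$ term. Likewise, the above-resonance limit ODE is stable for every $\bar\eta>0$, so no finite $\eta_*^{\mathrm{crit}}$ exists there.

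The structural reason is that $\mathbf{A}_{\mathrm{batch}}=M-I$, where $M$ is the exact one-active-update second-moment map. $M$ sends the cone $\{R\ge0,\ V\ge0,\ C^2\le RV\}$ into itself, since it maps the Gram matrix of $(\theta-\theta^*,m)$ to the expected Gram matrix after a random linear update. By Krein--Rutman the spectral radius $r(M)$ is a real eigenvalue, so the ODE is stable iff $r(M)<1$. As $\eta$ increases from $0$, stability is therefore lost exactly where $\det(I-M)=c_3$ first vanishes. A crossing of $c_1c_2=c_3$ with $c_3>0$ would require eigenvalues $1\pm i\omega$ of $M$ of modulus $>1$, which forces $r(M)$ to have passed through $1$, i.e. $c_3=0$, earlier. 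So $c_3>0$ is the binding condition everywhere. For $\Pbatch=1$ one gets exactly $c_3=\eta\varepsilon^2\bigl[2p(1+\beta)-\eta\bigl(\varepsilon p^2+(1+\beta)\nu\bigr)\bigr]$, where $\nu=p(d+2-p)/B$ is the gradient variance. Its root is $\asymp\min(d^{\sigma-1},d^{\kappa+\gamma})$, which in the dense above-resonance region is $d^{\sigma-1}\gg d^{\kappa-\gamma}$. For instance, $\varepsilon=10^{-3}$, $p=10^{-2}$, $d/B=1$ and $\eta=1=10\,\varepsilon/p$ give $c_1c_2\approx1.3\times10^{-7}>c_3\approx2.0\times10^{-8}>0$. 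The symbolic Hurwitz check you propose would therefore refute, not confirm, the correlation ceiling, and the above-resonance row cannot be proved as stated. What your damping-versus-frequency heuristic actually captures is that for $\eta\mathcal{B}_1\gtrsim\rho$ the motion is underdamped and the decay rate saturates at $\asymp\rho$; that is a rate statement, not a stability boundary.

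Two smaller corrections:
\begin{itemize}
\item The coherent $p^2$ part of your $b$ is the squared mean gradient and cancels in $c_3$. The noise ceiling is therefore governed by the variance $\nu$, not by $a/b$, which with your own $b$ would give $d^{\kappa}$ in the concentrated regime rather than the uniform $d^{\sigma-1}$ you assert.
\item The per-update momentum loss is $\rho=\varepsilon/(\Pbatch+\varepsilon)$, not $\varepsilon/\Pbatch$; it saturates at order one in the memoryless region.
\end{itemize}
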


The two scalings coincide when $\gamma = 1-\sigma+\kappa$. Substituting into either formula gives $\etamax \asymp d^{\sigma-1}$, so $\etamax$ is continuous across this line. For full derivation see Appendix~\ref{app:stability}; a symbolic (SymPy) verification of the stability frontier and these scalings is also provided in the companion repository~\cite{sparsesgdrepo}.  

Throughout the rest of the paper we set $\eta = \eta_* \cdot \etamax$ with $\eta_* \in (0, \eta_*^{\mathrm{crit}})$ a fixed $O(1)$ constant chosen strictly inside the stability frontier; that is, we work at a stable learning rate that scales with $\etamax$. The constants in the limit ODE coefficients (Theorem~\ref{thm:unified_limit}) are then explicit functions of $\eta_*$.

\subsection{Deriving the phase structure}
\label{sec:vieta_sketch}

The model has two natural timescales: the \emph{retention timescale} $1/\rho$, where $\rho := \varepsilon/(\Pbatch + \varepsilon)$ is the fraction of momentum lost per active-update gap; and the \emph{learning timescale} $\tau := 1/(\eta\Bone)$, where $\eta\Bone := \eta p/\Pbatch$ is the per-active-update SGD contraction rate of the squared error.

These two timescales appear in the eigenvalue structure of $\mathbf{A}_{\mathrm{batch}}$ only through their ratio
\begin{equation}
    \label{eq:delta-def}
    \Delta \;:=\; \frac{1/\rho}{\tau} \;=\; \frac{\eta\Bone}{\rho} \;=\; \frac{\text{retention timescale}}{\text{learning timescale}}.
\end{equation}
Stability forces $\Delta \lesssim 1$, and a Vieta argument reduces the relaxation behavior of the ODE to a sharp dichotomy at $\Delta \asymp 1$ which is precisely the resonance line $\gamma = 1 - \sigma + \kappa$. Below resonance, the retention timescale $1/\rho$ governs $V$ and $C$ while $R$ relaxes on the slower learning timescale $\tau$; above resonance, all three modes coincide at the retention timescale.

The matrix $\mathbf{A}_{\mathrm{batch}}$ of the Main ODE has characteristic polynomial $p(z) = z^3 + c_1 z^2 + c_2 z + c_3$. Substituting the co-scaling ansatz into the explicit entries of $\mathbf{A}_{\mathrm{batch}}$ (Appendix~\ref{app:stability}) yields the leading-order coefficient scalings
\begin{equation}
\label{eq:coeff-scalings}
c_1 \asymp \rho, \qquad c_2 \asymp \rho^2, \qquad c_3 \asymp (\eta\Bone)\,\rho^2.
\end{equation}
The learning rate stability constraints exclude the possibility that the retention timescale is longer than the learning timescale; intuitively \emph{you cannot learn faster than momentum relaxes}. The Routh--Hurwitz inequality $c_1 c_2 > c_3$ from Section~\ref{sec:stability_main} requires $\rho^3 \gtrsim (\eta\Bone)\rho^2$ or equivalently that \ $\Delta \lesssim 1$.

We use Vieta's formulas to show the dichotomy between the remaining possibilities: the retention and learning timescales coincide or retention is polynomially faster than learning. The formulas relate the eigenvalues $\lambda_1, \lambda_2, \lambda_3$ to the characteristic polynomial coefficients:
\[
\textstyle\sum_i \lambda_i = -c_1, \qquad
\textstyle\sum_{i<j} \lambda_i \lambda_j = c_2, \qquad
\textstyle\prod_i \lambda_i = -c_3.
\]

Substituting the coefficient scalings (Eq.~\ref{eq:coeff-scalings}) leaves only two possible eigenvalue configurations. When $\Delta \asymp 1$, all three eigenvalues scale at the retention rate $\rho$. When $\Delta \to 0$, we have a fast pair at rate $\rho$ and a slow mode at rate $\eta\Bone$. At $\eta = \etamax$, the dichotomy boundary $\Delta \asymp 1$ coincides with the line $\gamma = 1 - \sigma + \kappa$. For full derivation see Appendix~\ref{app:stability}.

We denote $\gamma = 1 - \sigma + \kappa$ the \emph{resonance line} owing to the damped harmonic oscillator interpretation of the $(R, V, C)$ system (Appendix~\ref{sec:harmonic_interp}). Below it, the fast pair are the momentum modes $V$ and $C$, which equilibrate adiabatically on the retention timescale, leaving only the risk $R$ as a slow mode on the learning timescale giving 1D eigenvalue structure. On and above the resonance line, the two timescales coincide ($\eta\Bone \asymp \rho$, so $\Delta \asymp 1$) and all three eigenvalues coexist at rate $\rho$, so the eigenvalue structure is 3D. The dichotomy is therefore between 1D eigenvalue dynamics below resonance and 3D eigenvalue dynamics on or above; whether the 3D eigenvalue structure carries through to a genuinely 3D limit ODE or further reduces is taken up in \S\ref{sec:unified_limit}. Per-regime scalings under the co-scaling ansatz are collected in Figure~\ref{fig:phase-diagram}.

\subsection{Unified Limit ODEs}
\label{sec:unified_limit}

The two regions of the $(\kappa, \gamma)$ plane above and below the resonance line each admit a canonical limit ODE in the high-dimensional limit. We write the limit ODEs with respect to the learning timescale $\tau$, so that time is measured in units of the learning timescale for each regime.

\begin{theorem}[Unified Limit ODEs]
\label{thm:unified_limit}
Under the co-scaling ansatz, the rescaled dynamics converge as $d \to \infty$ to one of two canonical limit ODEs.

\textbf{(i) Below resonance} ($\gamma < 1-\sigma+\kappa$): the limit ODE is the 1-dimensional SGD limit,
\begin{equation}
\boxed{
    \frac{dR}{d\tau} = - \eta_{\mathrm{eff}}(2 - \eta_{\mathrm{eff}}) \, R,
    \qquad
    \eta_{\mathrm{eff}} := \frac{\eta d}{B}.
}
\label{eq:sgd_limit}
\end{equation}

\textbf{(ii) Above resonance} ($\gamma > 1-\sigma+\kappa$): the limit ODE is the 2-dimensional deterministic heavy-ball limit obtained through change of variables from $(R, V, C)$ to $(x, y)$ using $R = x^2$, $V = y^2$, $C = xy$,
\begin{equation}
\boxed{
    \frac{dx}{d\tau} = -\bar{\eta}\, y,
    \qquad
    \frac{dy}{d\tau} = x - y,
    \qquad
    \bar{\eta} := \frac{\eta_* p_*}{\varepsilon_*}.
}
\label{eq:heavy_ball_limit}
\end{equation}
\end{theorem}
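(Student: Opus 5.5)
The plan is to work directly with the exact linear system of Theorem~\ref{thm:main_ode}. I will make its entries explicit under the co-scaling ansatz~\eqref{eq:coscaling-main}, with $\eta = \eta_*\etamax$ taken from Proposition~\ref{prop:eta_max}, and then carry out a singular-perturbation (Tikhonov/adiabatic elimination) analysis in each region. The first step is to expand $\mathbf{A}_{\mathrm{batch}}$. One writes each active-update gap as a geometric number of empty batches, over which $m$ decays by $\beta$ per step while $\theta$ drifts by $-\eta m$. This gives exact gap-averaged transfer coefficients in terms of $\rho = \eps/(\Pbatch+\eps)$. To these one adds the active-batch injection terms: the mean gradient contributes $\Bone=p/\Pbatch$, and the Gaussian fourth-moment noise term scales like $d/B$ per active sample. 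I would keep only the leading order of each entry in $d$. This reproduces the scalings~\eqref{eq:coeff-scalings} and writes $\mathbf{A}_{\mathrm{batch}} = \rho\,\mathbf{M}_0 + \eta\Bone\,\mathbf{M}_1 + \text{(lower order)}$ for explicit $O(1)$ matrices $\mathbf{M}_0,\mathbf{M}_1$.

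\textbf{Below resonance.} Here $\Delta = \eta\Bone/\rho \to 0$. The $(V,C)$ block of $\rho\mathbf{M}_0$ is Hurwitz at rate $\rho$, and $R$ enters it only as a forcing term. Solving the fast block at quasi-equilibrium expresses $V^\star$ and $C^\star$ as multiples of $R$: $C^\star \approx \eta R\cdot(\text{gradient scale})/\rho$-type expressions, and $V$ includes the noise term $\propto \eta d/B$. Substituting these into $\dot R = -2\eta C + \eta^2 V$ and rescaling time by $\tau = 1/(\eta\Bone)$ should give $dR/d\tau = -(2\etaeff - \etaeff^2)R$. This is the step where the noise-limited constraint $\etamax \asymp d^{\sigma-1}$ makes $\etaeff=\eta d/B$ an $O(1)$ quantity, consistent with Proposition~\ref{prop:eta_max}. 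To make the limit rigorous, I would invoke the Vieta dichotomy of \S\ref{sec:vieta_sketch}. The spectrum separates into a fast pair at rate $\rho$ and a slow eigenvalue $\lambda_{\mathrm{slow}} = -c_3/c_2\,(1+o(1))$. Hence the spectral projection onto the slow eigenvector converges, and after an initial layer of duration $O(1/\rho) = o(\tau)$ the solution is $R(0)e^{\lambda_{\mathrm{slow}} t}(1+o(1))$. It then remains to check that $\lambda_{\mathrm{slow}}\cdot\tau \to -\etaeff(2-\etaeff)$.

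\textbf{Above resonance.} Here $\eta\Bone\asymp\rho$, and I would rescale time by $\rho$, which is equivalent to $\tau$ up to the constant $\bar\eta$. At $\etamax\asymp d^{\kappa-\gamma}$, the noise contributions (the $d/B$ terms in $V$ and $R$) should be lower order. I would verify this by comparing exponents, using $\gamma > 1-\sigma+\kappa$. The limit of $\mathbf{A}_{\mathrm{batch}}/\rho$ is then the deterministic second-moment matrix of the lifted heavy-ball system, namely $\dot R = -2\bar\eta C$, $\dot V = 2C - 2V$, $\dot C = V\cdot(-\bar\eta) + R - C$. It is exactly the Lyapunov lift of the linear map $(x,y)\mapsto(-\bar\eta y,\, x-y)$ under $R=x^2$, $V=y^2$, $C=xy$. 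Identifying the constant $\bar\eta = \eta_* p_*/\eps_*$ requires evaluating $\eta\Bone/\rho$ in each sub-regime (dense and sparse, where $\Pbatch\approx pB$ or $\Pbatch\approx 1$), and I expect the constants to combine into that ratio. Convergence of the rescaled matrix then gives convergence of solutions on compact $\tau$-intervals by continuity of the matrix exponential.

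\textbf{Main obstacle.} I expect the hard step to be the uniform bookkeeping of the leading-order entries of $\mathbf{A}_{\mathrm{batch}}$ across the concentrated, dense and sparse batching regimes. In particular, one must show that the cross terms from drift during empty gaps cancel to give exactly the $R$-coupling ``$x$'' in $dy/d\tau$, with no residual $O(1)$ correction. One must also show that the noise term is negligible above the line but exactly $\etaeff^2$ below it. My first attempt would be to compute everything in terms of the three primitive combinations $\rho$, $\eta\Bone$ and $\eta d/B$, and only then substitute exponents.
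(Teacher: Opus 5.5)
Your overall plan matches the paper's: adiabatic elimination of the momentum pair below resonance, and time rescaling, an entrywise limit and the square-root (Lyapunov) lift above. The above-resonance step, however, fails as you state it, because $\mathbf{A}_{\mathrm{batch}}/\rho$ has no limit in the raw $(R,V,C)$ coordinates. So there is no splitting $\rho\mathbf{M}_0+\eta\Bone\mathbf{M}_1$ with $O(1)$ matrices, and continuity of the matrix exponential cannot be invoked. With $\eta\asymp d^{\kappa-\gamma}$ and $\eps\ll\Pbatch$, the identity $\delta_\theta/(1-\bar\beta_1)=\beta/\eps$ gives $a_C/\rho\approx-2\eta/\eps\asymp d^{\kappa}\to\infty$. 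Meanwhile $c_R/\rho\approx(\Pbatch+\eps)\Bone\approx p\to0$ and $b_C/\rho\approx 2p\to0$; only products such as $a_Cc_R/\rho^2\to-2\bar\eta$ stay finite. The missing ingredient is the $d$-dependent diagonal balancing $D(d)$ that the paper applies before taking the limit. In raw variables it amounts, up to a common scalar, to working with $(R,\,V/p^2,\,C/p)$. This reflects that the buffer is an average of gradients, so $m\approx p\,(\theta-\theta^*)$. Your limit matrix is correct for these balanced variables, and the square-root lift must be applied to them, using that $m_0=0$ places the state on the invariant rank-one set $RV=C^2$. The balancing is also what makes your noise check meaningful. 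In raw form (dense case) $b_R/\rho\approx\eps\Btwo\asymp d^{1-\sigma-\kappa-\gamma}$, which already vanishes for $\gamma>1-\sigma-\kappa$ and would put the threshold in the wrong place. After balancing, the $(V\!\leftarrow\!R)$ entry is $\asymp d^{1-\sigma+\kappa-\gamma}$. It vanishes exactly above resonance and survives on the line as the $\xi_*R$ feedthrough that makes the resonance limit irreducibly 3D.

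Below resonance your route is sound even in raw coordinates, since the Schur complement $a_R-(a_V,a_C)A_{\mathrm{fast}}^{-1}(b_R,c_R)^\top$ is invariant under rescaling $V$ and $C$. Two of your heuristics, however, would mislead the computation. First, consider the persistent-memory regions (dense and sparse below resonance). There the quasi-equilibrium is $C^\star=(\Pbatch+\eps)\bigl(\Bone-\tfrac{\eta}{2}\Btwo\bigr)R$, with no $\eta/\rho$ factor, and the $\eta^2\Btwo$ correction enters through $C$, not through $\eta^2V$. Noise pumps $V$, which pulls $C$ down through the drift coupling $c_V\approx-\eta\delta_{-1,\theta}$, and $a_C\approx-2\eta\delta_\theta$ feeds this into $R$, while $a_VV^\star=o(\eta\Bone)$. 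Only in the memoryless region, where $\rho\asymp1$, does the correction come from the $V$ term. Second, in your Vieta route $\lambda_{\mathrm{slow}}=-(c_3/c_2)(1+o(1))$ is fine, but the leading term $2\eta\Bone(1-\bar\beta_1)(1-\bar\beta_2)$ of $c_3$ is not enough. At $\eta\asymp d^{\sigma-1}$ one has $\eta\Btwo/\Bone=\etaeff\asymp1$, so the negative $\eta^2$-coefficient of $c_3$ is of the same order and must be kept. Keeping it gives $c_3/c_2\approx2\eta\Bone-\eta^2\Btwo$, and hence the rate $\etaeff(2-\etaeff)$ on the slow clock.
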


Below resonance, the momentum modes $V$ and $C$ are determined by the squared error $R$ via adiabatic elimination on the retention timescale, leaving only $R$ in the limit (i). Above resonance, the 3D eigenvalue structure of \S\ref{sec:vieta_sketch} reduces to a 2D linear ODE via the change of variables in (ii). The dynamics remain irreducibly 3D only on the resonance line itself; limits on the boundaries between regions have additional structure. Figure~\ref{fig:phase-diagram} summarizes the per-regime limit ODEs; for details and full derivations see Appendix~\ref{app:regimes}.

\section{The Phase Diagram for Least Squares}
\label{sec:phase_diagram}

The phase diagram for the least squares problem is shown in Figure~\ref{fig:phase-diagram} over $(\kappa, \gamma)$ at a fixed batch exponent $\sigma$. The vertical structure is induced by the batch regimes across sparsity; the diagonal structure is induced by the momentum dynamics. The momentum phase structure is fully characterized by the ratio of the learning timescale and momentum retention timescale, with the resonance line $\gamma=1-\sigma+\kappa$ separating the regions based on this ratio.

We note that batching and momentum are not interchangeable, despite both being mechanisms for averaging gradients. In particular, the heatmap shows that optimal sample efficiency occurs along the vertical $\kappa = \sigma$ line where each batch contains $\asymp 1$ active gradients. This visualizes a high-dimensional generalization of the classical fact that, for simple convex problems, sample complexity is best at batch size $1$. No setting of momentum can attain this optimum when the batch size differs.

\begin{figure}[!htbp]
\centering
\begin{minipage}[c]{0.40\textwidth}
\centering
\resizebox{\linewidth}{\linewidth}{%
\begin{tikzpicture}[scale=2.0]
    \colorlet{colIII}{blue!50}        %
    \colorlet{colRes}{green!60!black} %
    \colorlet{col3D}{orange!60}       %
    \colorlet{colRare}{orange!80}     %

    \def\sigminusone{0.7}
    \def\sig{2.0}
    \pgfmathsetmacro{\xmax}{3.4}
    \pgfmathsetmacro{\ymax}{3.6}
    \pgfmathsetmacro{\resAtXmax}{\xmax - \sigminusone}
    \pgfmathsetmacro{\memAtXmax}{\xmax - \sig}

    \node[anchor=south, font=\large\bfseries] at (1.7, 3.65) {Phase Diagram};

    \fill[col3D, opacity=0.35] (0,0) rectangle (\sigminusone, \ymax);
    \fill[col3D, opacity=0.35] (\sigminusone,0) -- (\sigminusone,\ymax) -- (\xmax,\ymax) -- (\xmax,\resAtXmax) -- cycle;
    \fill[colIII, opacity=0.35] (\sigminusone,0) -- (\xmax,0) -- (\xmax,\resAtXmax) -- cycle;

    \draw[->, thick] (0,0) -- (\xmax,0);
    \draw[->, thick] (0,0) -- (0,\ymax);
    \node[rotate=90, font=\normalsize\bfseries] at (-0.25, \ymax/2) {Momentum Exponent $\gamma$};
    \node[font=\normalsize\bfseries] at (\xmax/2, -0.45) {Sparsity Exponent $\kappa$};

    \node[below, font=\footnotesize] at (0,0) {0};
    \draw (\sigminusone,0.06) -- (\sigminusone,-0.06) node[below, font=\footnotesize] {$\sigma{-}1$};
    \draw (\sig,0.06) -- (\sig,-0.06) node[below, font=\footnotesize] {$\sigma$};

    \draw[colRare, very thick] (\sigminusone, 0) -- (\sigminusone, \ymax);
    \draw[colRare!75!black, very thick] (\sig, 0) -- (\sig, \ymax);
    \draw[colIII!80!black, very thick, dashed] (\sig,0) -- (\xmax,\memAtXmax);
    \draw[colRes, very thick] (\sigminusone,0) -- (\xmax,\resAtXmax);

    \node[col3D!50!black, font=\small\bfseries, rotate=90] (concentrated) at (0.35, 1.8) {Concentrated};
    \node[col3D!50!black, font=\small\bfseries, align=center] (denseabove) at (1.35, 2.25) {Dense\\above res.};
    \node[colIII!70!black, font=\small\bfseries, rotate=45] (densebelow) at (1.45, 0.45) {Dense below res.};
    \node[col3D!50!black, font=\small\bfseries, align=center] (sparseabove) at (2.45, 2.75) {Sparse\\above res.};
    \node[colIII!70!black, font=\small\bfseries, rotate=45] (sparsebelow) at (2.65, 1.45) {Sparse below res.};
    \node[colIII!70!black, font=\small\bfseries, rotate=45, align=center] (memoryless) at (2.95, 0.35) {Memoryless\\sparse};

    \node[colIII!70!black, font=\normalsize\bfseries\boldmath, rotate=45, fill=white, inner sep=1pt] at (2.75, 0.85) {$\gamma = \kappa{-}\sigma$};
    \node[colRes, font=\normalsize\bfseries\boldmath, rotate=45, fill=white, inner sep=1pt] at (2.0, 1.3) {Resonance: $\gamma = 1{-}\sigma{+}\kappa$};

\end{tikzpicture}%
}
\end{minipage}\hfill
\begin{minipage}[c]{0.12\textwidth}
\centering
\begin{tikzpicture}[scale=1.0]
    \colorlet{colIII}{blue!50}
    \colorlet{colRes}{green!60!black}
    \colorlet{col3D}{orange!60}

    \node[anchor=south west, font=\fontsize{6.5pt}{7.5pt}\selectfont\bfseries] at (0, 1.95) {Limit ODEs:};
    \fill[col3D, opacity=0.5] (0, 1.45) rectangle (0.25, 1.65);
    \node[anchor=west, col3D!50!black, font=\fontsize{6.5pt}{7.5pt}\selectfont\bfseries, align=left] at (0.32, 1.55) {Deterministic\\2D heavy-ball};
    \fill[colIII, opacity=0.5] (0, 0.85) rectangle (0.25, 1.05);
    \node[anchor=west, colIII!70!black, font=\fontsize{6.5pt}{7.5pt}\selectfont\bfseries] at (0.32, 0.95) {1D SGD};
    \draw[colRes, very thick] (0, 0.25) -- (0.25, 0.25);
    \node[anchor=west, colRes, font=\fontsize{6.5pt}{7.5pt}\selectfont\bfseries, align=left] at (0.32, 0.25) {3D irreducible\\on resonance};
\end{tikzpicture}
\end{minipage}\hfill
\begin{minipage}[c]{0.44\textwidth}
\centering
\includegraphics[width=\linewidth, height=0.909\linewidth]{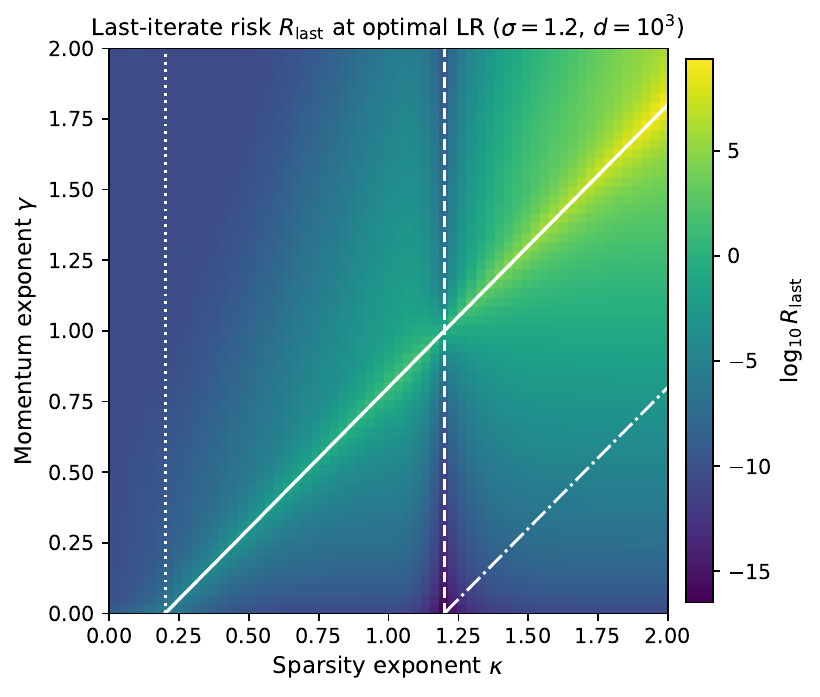}
\end{minipage}

\vspace{0.4em}

\small
\begin{adjustbox}{max width=\textwidth}
\begin{tabular}{@{}llcccl@{}}
\toprule
Batch regime & Momentum phase & Retention $1/\rho$ & Learning $\tau$ & Timescales & Limit ODE \\
\midrule
Concentrated                     & Above resonance               & $d^\gamma$                   & $d^\gamma$                   & $1/\rho \asymp \tau$ & 2D heavy-ball \\
Dense                            & Above resonance               & $d^\gamma$                   & $d^\gamma$                   & $1/\rho \asymp \tau$ & 2D heavy-ball \\
Dense                            & Below resonance               & $d^\gamma$                   & $d^{1-\sigma+\kappa}$        & $1/\rho \ll \tau$    & SGD \\
Sparse                           & Above resonance               & $d^{\gamma-(\kappa-\sigma)}$ & $d^{\gamma-(\kappa-\sigma)}$ & $1/\rho \asymp \tau$ & 2D heavy-ball \\
Sparse                           & Below resonance               & $d^{\gamma-(\kappa-\sigma)}$ & $d$                          & $1/\rho \ll \tau$    & SGD \\
Sparse                           & Memoryless                    & $O(1)$                       & $d$                          & $1/\rho \ll \tau$    & SGD \\
\midrule
Dense                            & On resonance                  & $d^\gamma$                   & $d^\gamma$                   & $1/\rho \asymp \tau$ & 3D irreducible \\
Sparse                           & On resonance                  & $d$                          & $d$                          & $1/\rho \asymp \tau$ & 3D irreducible \\
Dense/sparse ($\kappa{=}\sigma$) & Above resonance               & $d^\gamma$                   & $d^\gamma$                   & $1/\rho \asymp \tau$ & 2D heavy-ball ($P_\star$) \\
Dense/sparse ($\kappa{=}\sigma$) & Below resonance               & $d^\gamma$                   & $d$                          & $1/\rho \ll \tau$    & SGD ($\chi_\star$) \\
Dense/sparse ($\kappa{=}\sigma$) & Triple point ($\gamma{=}1$)   & $d$                          & $d$                          & $1/\rho \asymp \tau$ & 3D irreducible (2D-SDE) \\
\bottomrule
\end{tabular}
\end{adjustbox}
\normalsize

\caption{\textbf{Top left:} Phase diagram in $(\kappa, \gamma)$ at fixed batch exponent $\sigma$. \textbf{Top right:} Sample-complexity heatmap (fixed budget of nonzero gradients). \textbf{Bottom:} Per-regime timescales and limit ODEs.}
\label{fig:phase-diagram}
\end{figure}

Across all regions, the learning timescale is a fundamental limit on the high-dimensional problem. Intuitively, in this $d$-dimensional problem, it takes $\gtrsim d$ active gradient steps to reduce the risk by a constant factor. Concretely, the maximum learning rate that respects the noise limit, $\etamax \asymp d^{\sigma - 1}$, constrains the learning timescale $\tau$ to scale with the model dimension. Above the resonance line, momentum imposes an additional stability constraint that scales the learning rate down further as $\gamma$ increases. To maintain stability, \emph{the momentum retention timescale cannot be faster than the learning timescale}, or equivalently $\Delta \lesssim 1$. This excludes any $\Delta > 1$ phase: the momentum dynamics are characterized by the dichotomy of $\Delta \to 0$ below resonance, or $\Delta \asymp 1$ on and above.

Above resonance, the high-dimensional limit is deterministic even for batches with few active gradients. This is not a noise reduction from momentum in the typical sense, but rather a consequence of the learning rate constraint required for stability. The reduced learning rate scales the noise terms polynomially faster than the deterministic terms; what survives is the 2D linear heavy-ball ODE. In fact, momentum does not induce a noise reduction anywhere in the phase diagram: below resonance the high-dimensional limit is exactly the SGD limit with the same convergence rate and noise factor.

Along the resonance line itself, the optimization dynamics are irreducibly three-dimensional. As a special case, we recover a known result from \citet{paquette2021dynamics}, where the stochastic dimension-adjusted heavy ball (SDAHB) algorithm scales momentum $\beta$ like $1 - c/d$ for a constant $c$ to attain three-dimensional dynamics enabling acceleration; this corresponds to $(\kappa, \gamma, \sigma) = (0, 1, 0)$ in our diagram. We note that the diagonal resonance line creates a practical conflict for any optimizer using a single $(\eta, \beta)$: all parameters live on a horizontal segment in the $(\kappa, \gamma)$ plane, but the resonance line intersects this segment at exactly one point. We call this the \emph{spectral conflict}: at most one effective sparsity can sit on resonance, so common and rare features cannot simultaneously be on the resonance line where the oscillatory mechanisms underlying Nesterov acceleration are available.

\section{Logistic Regression}
\label{sec:logistic}
\label{sec:logistic_pointer} %

The logistic regression (LR) model fits a logistic classifier on a two-class Gaussian mixture (\S\ref{sec:lr-model}): the rare class (probability $p$) is centered at a fixed direction $\mu \in \R^d$ with magnitude $r := \|\mu\|$, while the common class is unstructured Gaussian noise. The optimization problem is to learn the rare-class direction $\mu$ while only sparsely receiving updates that carry signal about it. We show that this model has approximately the same momentum phase structure as least squares (LS).  

The state is described with a five-variable system. Let $\hat\mu \coloneqq \mu/\|\mu\|$ be the unit signal direction, and let $\theta_\perp, m_\perp$ denote the components of the parameter $\theta$ and momentum $m$ orthogonal to $\hat\mu$. The state variables are
\[
R \coloneqq \|\theta_\perp\|^2, \quad V \coloneqq \|m_\perp\|^2, \quad C \coloneqq \langle\theta_\perp, m_\perp\rangle, \quad s \coloneqq \langle\theta - \mu, \hat\mu\rangle, \quad u \coloneqq \langle m, \hat\mu\rangle,
\]
where $(R, V, C)$ are the same second moments as \S\ref{sec:main_results} but restricted to $\hat\mu^\perp$. The state splits into a 3D bulk subsystem $(R, V, C)$ and a 2D signal subsystem $(s, u)$, which are coupled (see Figure \ref{fig:lr-vs-ls-ode}).  The least square system can also be expressed in the same five-variable form.

\begin{theorem}[Logistic main ODE]
  \label{thm:logistic_main_ode}
  The five-variable state $(s, u, R, V, C)$ satisfies a closed ODE that decomposes into a 2D signal subsystem on $(s, u)$ and a 3D bulk subsystem on $(R, V, C)$, coupled through a single scalar function that depends on both subsystems.
\end{theorem}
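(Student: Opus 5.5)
The plan is to mirror the derivation of Theorem~\ref{thm:main_ode} (Poissonized continuization, Appendix~\ref{app:ode_derivation}) while exploiting the rotational symmetry of the mixture around $\hat\mu$. With the bias fixed at its Bayes value $b$, each per-sample gradient has the form $g_i = (\sigma(\langle X_i,\theta\rangle + b) - \tilde Y_i)\,X_i$. Decompose $X_i = \langle X_i,\hat\mu\rangle\hat\mu + X_{i,\perp}$. Under both classes, $X_{i,\perp}\sim\N(0,I-\hat\mu\hat\mu^\top)$ is independent of $\langle X_i,\hat\mu\rangle$ and of the label. The only class dependence is the mean of $\langle X_i,\hat\mu\rangle$, which is $0$ or $r$.

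The logit $\langle X_i,\theta\rangle+b$ splits as $\langle X_i,\hat\mu\rangle(s+r) + \langle X_{i,\perp},\theta_\perp\rangle + b$. Given the state, the second term is exactly $\N(0,R)$ when we condition on $\theta_\perp$. So the scalar residual $\sigma(\cdot)-\tilde Y$ depends on the state only through the pair $(s,\|\theta_\perp\|^2=R)$. The first step is therefore to write down the generator of the Poissonized process acting on the five functionals $s,u,R,V,C$. In the high-dimensional setting we treat $R$ as self-averaging, which is the paper's moment-closure convention, so the expectations taken in the generator are Gaussian integrals in two or three scalar variables.

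\textbf{Signal subsystem.} The drift of $(s,u)$ is linear, coming from the updates $\theta\mapsto\theta-\eta m$ and $m\mapsto\beta m+\eps g$, and is forced by $\E\bigl[(\sigma(\cdot)-\tilde Y)\langle X,\hat\mu\rangle\bigr]$. This quantity is a function $\phi(s,R)$.

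\textbf{Bulk subsystem.} The drift of $(R,V,C)$ contains linear momentum-transport terms identical to the least-squares case, $\dot R\ni -2\eta C$ and $\dot C\ni -\eta V$. Its gradient terms are $\E\langle g_\perp,\theta_\perp\rangle$ and $\E\langle g_\perp,m_\perp\rangle$, together with the Itô/batch-noise contributions $\E\|g_\perp\|^2/B$.

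For the $\theta_\perp$ cross term, Stein's lemma applied to $X_\perp$ gives $\E[(\sigma-\tilde Y)\langle X_\perp,\theta_\perp\rangle] = R\,\E[\sigma'(\cdot)]$. For the $m_\perp$ cross term, Stein's lemma gives $C\,\E[\sigma'(\cdot)]$. The noise term reduces to $(d-1)\,\E[(\sigma-\tilde Y)^2]$ plus lower-order terms.

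The key step is to show that all these expectations collapse to a single scalar function $\psi(s,R)$ multiplying the linear LS-type bulk matrix, with the noise term expressible through the same $\psi$ or through $\phi$. This is what the theorem means by coupling through a single scalar function. To get there, we use the logistic identity $\sigma'=\sigma(1-\sigma)$ together with the Bayes-bias relation $\sigma(\text{logit}) = \Prob(\tilde Y=1\mid X)$ at the optimum. Then we integrate out the class variable and express everything in terms of one Gaussian-mixture integral of $\sigma'$.

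The main obstacle is this single-function reduction. A priori the bulk sees three distinct quantities, $\E\sigma'$, $\E(\sigma-\tilde Y)^2$ and $\phi$, and they must be related.

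My first attempt is to set $\psi := \E[\sigma'(\cdot)]$ over the mixture. I would then check whether $\E(\sigma-\tilde Y)^2$ equals $\psi$ up to terms that are explicit in $(s,R)$, using $\E[\tilde Y\mid X]=\sigma(\text{Bayes logit})$.

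If that reduction fails, the fallback is to weaken what "single scalar function" means. The coupling would be one function $F(s,R)$, possibly vector-valued only through its derivatives, defined as the Gaussian expectation that enters both subsystems. Closure would still hold, because every term depends on the state only via $(s,u,R,V,C)$.

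Finally, the finite-batch and active-update structure (batch-averaged noise, Poisson clock) carries over verbatim from Theorem~\ref{thm:main_ode}. This yields the stated closed five-variable ODE.
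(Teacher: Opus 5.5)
Your closure argument is the paper's: decompose along $\hat\mu$, observe that both class-conditional logits are Gaussian with parameters depending only on $(s,R)$, and apply Stein's lemma. This gives $\E\langle g_\perp,\theta_\perp\rangle=\mathcal{A}R$, $\E\langle g_\perp,m_\perp\rangle=\mathcal{A}C$, signal forcing $f=\mathcal{A}(s+r)+\mathcal{B}r$, and per-sample noise $(d-1)\mathcal{D}_0+R\,\mathcal{D}_\theta$ with $\mathcal{D}_0=\E(\sigma-\tilde Y)^2$. That establishes closure and the signal/bulk split.

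The gap is your ``key step'', which cannot work as an exact identity. Since the model is well-specified, $\E[(\sigma(c)-\tilde Y)^2\mid X]=\sigma'(a)+(\sigma(c)-\sigma(a))^2$, where $a$ is the Bayes logit and $c$ the current logit. This is a different Gaussian integral from $\mathcal{A}=\E\,\sigma'(c)$. The two agree only at $\theta=\mu$, which is also the only point where calibration $\mathcal{A}+\mathcal{B}=0$ holds. Concretely, for small $p$ one has $\mathcal{A}\approx p\alpha$ but $\mathcal{D}_0\approx p$, with $\alpha=e^{((s+r)^2+R-r^2)/2}$. Likewise $\mathcal{B}$ (inside $f$) and $\mathcal{D}_\theta$ are further independent functions of $(s,R)$. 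So the exact ODE genuinely carries four coefficient functions $\mathcal{A},\mathcal{B},\mathcal{D}_0,\mathcal{D}_\theta$. Your fallback therefore proves closure and the decomposition, which is all the paper's exact derivation proves, but not coupling through a single function.

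The missing idea is asymptotic rather than algebraic. The paper realizes the single coupling function only in the sparse, tame-$\alpha$ limit: $p\to 0$ with $r$ fixed and $R=O(1)$. There $b^*\approx\log p-r^2/2\to-\infty$, so $\sigma(z)\approx e^z$ together with the Gaussian moment generating function gives
$\mathcal{A}=p\alpha[1+O(p)]$, $\mathcal{B}=-p[1+O(p)]$, $\mathcal{D}_0=p[1+O(p)]$ and $\mathcal{D}_\theta=O(p^2)$.
Every coefficient except $\mathcal{A}$ becomes state-independent. The only state-dependent coupling is then $\mathcal{A}$, and the noise reduces to $dp/B+\frac{B-1}{B}\mathcal{A}^2R$.

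Two smaller corrections. First, the mini-batch second moment is $\frac1B\E\|g_\perp\|^2+\frac{B-1}{B}\mathcal{A}^2R$. The squared-mean term is not lower order: it is the multiplicative channel, and it dominates for $\kappa<\sigma-1$. Second, there is no active-update or Poisson-clock structure to carry over from least squares, because every logistic step is active. The paper simply takes the exact per-step conditional drift and identifies one discrete step with one unit of $t$. No self-averaging assumption is needed for that closure; the paper explicitly notes that the expectations over the state do not close.
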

The full ODE entries can be found in App~\ref{app:lr_ode_derivation}.  These have Gaussian integrals which cannot be evaluated in closed form.  However, in any limit where the sparsity $p\to0$, while the interaction term $\mathcal{A}$ remains finite (`tame $\mathcal{A}$' in App~\ref{app:lr_ode_derivation}), these Gaussian integrals simplify to the system presented in Figure \ref{fig:lr-vs-ls-ode}.

\begin{figure}[!ht]
\centering
\hspace{-1.0cm}
\resizebox{\textwidth}{!}{%
\begin{tikzpicture}[
  bubbleGrey/.style={draw=gray!60, line width=0.5pt, fill=gray!8,
                     rounded corners=3pt,
                     inner xsep=4pt, inner ysep=10pt,
                     align=left, font=\small, anchor=north west},
  tablabel/.style={font=\bfseries\small, fill=white, draw=gray!60, line width=0.5pt,
                   rounded corners=2pt, inner sep=3pt, anchor=center},
  hdrLR/.style={font=\bfseries\small, fill=blue!12, draw=blue!55, line width=0.5pt,
                rounded corners=2pt, inner sep=4pt, text width=6.4cm, align=center},
  hdrLS/.style={font=\bfseries\small, fill=orange!18, draw=orange!70, line width=0.5pt,
                rounded corners=2pt, inner sep=4pt, text width=6.4cm, align=center},
  cellLR/.style={draw=blue!45, line width=0.4pt, fill=blue!3,
                 rounded corners=2pt, inner sep=5pt, text width=6.4cm,
                 align=left, font=\small},
  cellLS/.style={draw=orange!55, line width=0.4pt, fill=orange!4,
                 rounded corners=2pt, inner sep=5pt, text width=6.4cm,
                 align=left, font=\small},
  rlabel/.style={font=\bfseries\small, anchor=east, align=right, text width=1.2cm}
]
\node[bubbleGrey, text width=2.8cm] (sigBubble) at (0, 0) {%
  \(\dot s = -\eta\beta\,u - \eta\eps\, f\) \\
  \(\dot u = \eps\,(f - u)\) \\
  \vspace{-0.5em}
  {\color{gray!70}\rule{\linewidth}{0.5pt}}\\[1pt]
  \(f \coloneqq \Aclr\,(s+r) - p\,r\)%
  \vspace{-0.3em}
};
\node[bubbleGrey, text width=9.5cm] (bulkBubble) at (3.6, 0) {%
  \setlength{\arraycolsep}{2pt}%
  \(\displaystyle
    \begin{bmatrix}\dot R\\ \dot V\\ \dot C\end{bmatrix}
    =
    \begin{bmatrix}
      -2\eta\eps\Aclr & \eta^2\beta^2 & -2\eta\beta(1 - \eta\eps\Aclr) \\
      0 & \beta^2 - 1 & 2\beta\eps\Aclr \\
      \eps\Aclr & -\eta\beta^2 & -\eps(1 + 2\eta\beta\Aclr)
    \end{bmatrix}
    \!
    \begin{bmatrix} R\\ V\\ C\end{bmatrix}
    + \eps^2\Nclr\begin{bmatrix} \eta^2\\ 1\\ -\eta\end{bmatrix}\)%
};
\node[tablabel] at (sigBubble.north) {Signal};
\node[tablabel] at (bulkBubble.north) {Bulk};

\matrix (m) [
  matrix of nodes,
  row sep=3pt,
  column sep=8pt,
  ampersand replacement=\&,
  nodes={anchor=center},
  anchor=north,
] at ($(current bounding box.south) + (0, -0.05)$) {
  |[hdrLR]| LR \;\textsf{\footnotesize(sparse limit, large $d$)}
  \&
  |[hdrLS]| LS \;\textsf{\footnotesize(5-var embedding\textsuperscript{$\dagger$}, large $d$)}
\\
  |[cellLR]| {%
    \[\Aclr \;=\; p\,\exp\bigl(((s+r)^2 + R - r^2)/2\bigr)\]
    \emph{State-dependent.} Signal and bulk couple.
  }
  \&
  |[cellLS]| {%
    \[\Aclr \;=\; p\]
    \begin{center}\emph{Constant.}  Signal and bulk decouple.\end{center}%
  }
\\
  |[cellLR]| {%
    \[\displaystyle \Nclr \;=\; \underbrace{\frac{d\,p}{B}}_{\text{l.-n.}}
       \;+\; \underbrace{\frac{B-1}{B}\,\Aclr^2\,R}_{\text{m.b.\ noise}}\] \\
       \vspace{-0.5em}
    \begin{center}\emph{Additive floor from label-noise (l.-n.).}\end{center}
  }
  \&
  |[cellLS]| {%
    \[\displaystyle \Nclr \;\simeq\; \underbrace{\frac{d\,p\,(s^2 + R)}{B}}_{\propto\, \|\theta-\theta^*\|^2}
       \;+\; \underbrace{\frac{B-1}{B}\,\Aclr^2\,R}_{\text{m.b.\ noise}}\]
       \vspace{-0.5em}
    \begin{center}\emph{All multiplicative noise, scaling with $R$.}\end{center}
  }
\\
};

\node[rlabel] at ($(m-2-1.west) + (-0.15, 0)$) {\(\Aclr\)};
\node[rlabel] at ($(m-3-1.west) + (-0.15, 0)$) {\(\Nclr\)};
\end{tikzpicture}%
}%
\caption{\textbf{LR vs LS scaling-limit ODEs in matched notation} ($\beta = 1-\eps$). The grey \textbf{Signal} and \textbf{Bulk} systems have an identical form for LR and LS in terms of two coupling terms $\Aclr$ and $\Nclr$; the differences live in the two coloured rows below. (i) The state-dependent factor in $\Aclr$ produces a nonlinear coupling between signal and bulk but \emph{does not} change the $d$-power scalings. (ii) $\Nclr$ acquires an additive label-noise floor $dp/B$ that persists at the optimum, which is not present in the LS system as defined (but could easily be included by modifying the setup). \textsuperscript{$\dagger$}LS, being rotationally symmetric, collapses to a 3-variable system in $(s^2+R,\, u^2+V,\, su+C)$.  For $(s,r,u)$ in LR, set $\mu= \theta^*$.}
\label{fig:lr-vs-ls-ode}
\end{figure}

The LR model has two important properties that differ from LS. First, the LS problem has noise only from per-sample gradient noise that vanishes at the optimum $\theta = \theta^*$, causing the loss to converge to zero. In LR, the random rare-class label introduces an additional noise type that persists even at the population minimum: even when $\theta = \mu$, the gradient still fluctuates across samples because of the random binary label, causing the dynamics to equilibrate at a positive loss floor. The two noise channels dominate at different sparsity scales. Second, the LS system features a term $\Aclr$ which couples the bulk and signal subsystems; this term can be interpreted as the rescaled probability of the learned model misspecifying a sample from the rare class. 

\begin{figure}[!t]
\centering
\begin{minipage}[b]{0.48\textwidth}
\centering
\includegraphics[width=0.95\linewidth]{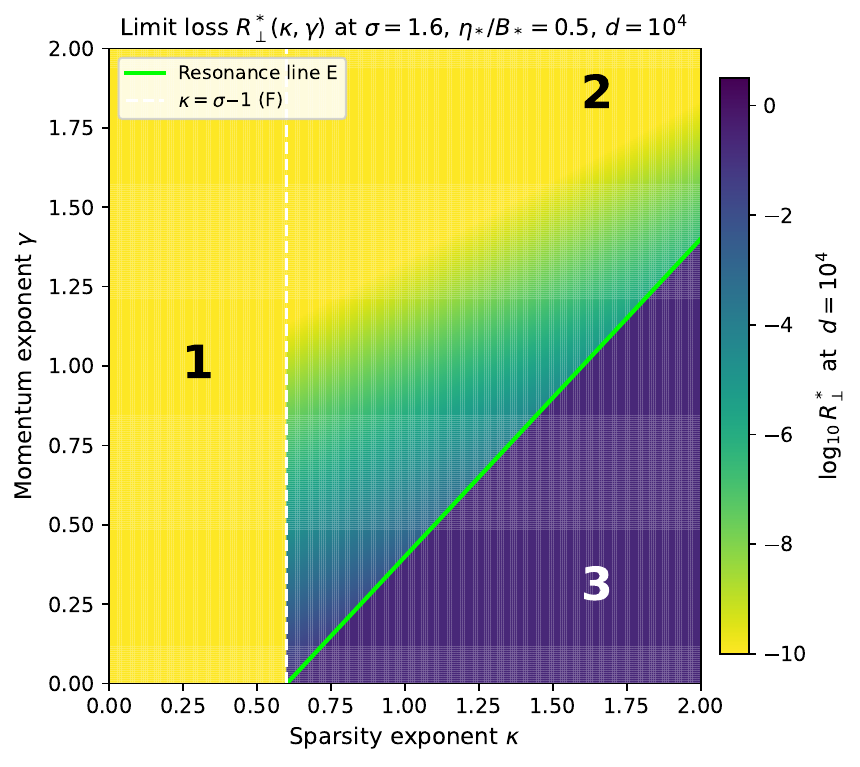}
\end{minipage}\hfill
\begin{minipage}[b]{0.48\textwidth}
\centering
\includegraphics[width=0.95\linewidth]{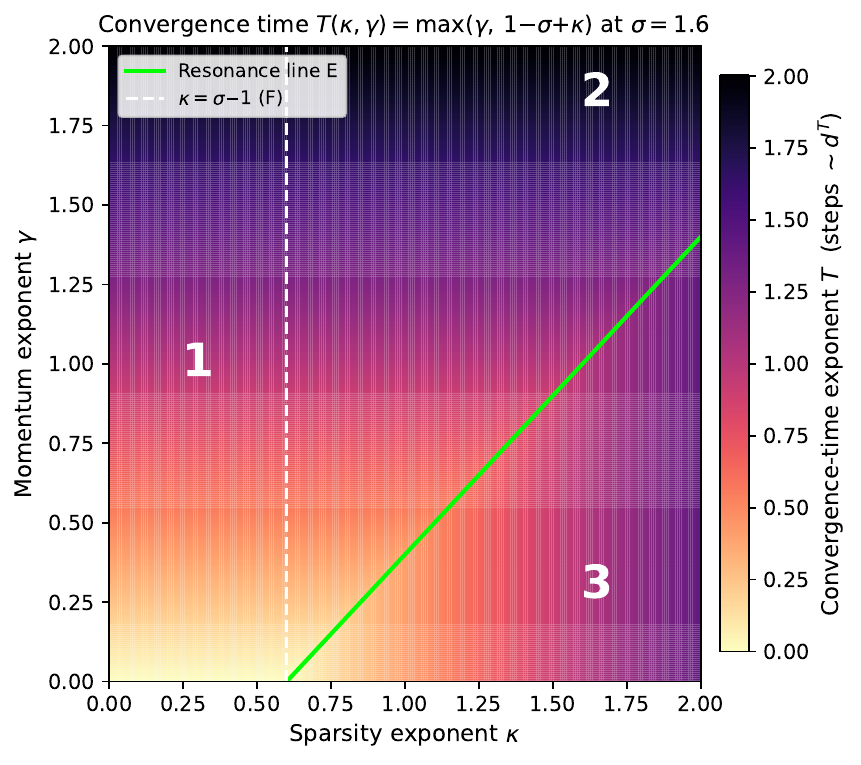}
\end{minipage}
\caption{\textbf{Logistic phase plane: the bias/speed trade-off.} \textbf{(a)} Limit loss $\log_{10} R^*(\kappa, \gamma)$ at $\sigma = 1.6$, $\eta_*/B_* = 0.5$, $d = 10^4$. Concentrated: $R^* = 0$ (clipped to $10^{-10}$). Label-noise regime: above resonance has subleading floor $\asymp d^{1-\sigma+\kappa-2\gamma}$, below resonance has constant floor $\eta_*/(2B_*)$. \textbf{(b)} Convergence-time exponent $T(\kappa, \gamma) = \max(\gamma, 1-\sigma+\kappa)$, steps $\sim d^T$. See \S\ref{sec:logistic} for discussion.}
\label{fig:lr-heatmaps}
\end{figure}

The bulk subsystem $(R, V, C)$ has the same 3D structure as the LS Main ODE (Theorem~\ref{thm:main_ode}). In particular, the phase diagram remains unchanged, and has similar high-dimensional limits across the same phase plane.  The signal subsystem $(s, u)$ obeys heavy-ball dynamics: $s$ contracts toward zero under the gradient of the LR loss along $\hat\mu$, damped by the momentum decay rate $\varepsilon = 1 - \beta$.

The LR phase diagram has the same vertical structure from the batch and sparsity relationship, and this structure now additionally controls which noise channels survive in the limit. In the \emph{concentrated} regime ($\kappa < \sigma - 1$), every batch contains $\asymp d$ active gradients, and hence the optimization collapses to deterministic optimization on the deterministic population-level problem. Conversely, in the \emph{label-noise} regime ($\kappa > \sigma - 1$), the label noise eventually dominates and produces a nontrivial loss floor.

The momentum dynamics are similar to LS with the same diagonal phase structure divided by the resonance line $\gamma = 1 - \sigma + \kappa$. In particular, the learning rate stability constraints are identical with $\eta_{\max} \asymp d^{\sigma-1}$ below resonance and $\eta_{\max} \asymp d^{\kappa-\gamma}$ above resonance. Just as in LS, the learning rate must be slowed polynomially in $\gamma$ above the resonance line $\gamma = 1 - \sigma + \kappa$.

Above resonance (in both the concentrated and label-noise regimes), the bulk and signal subsystems evolve like heavy ball systems, albeit with a shared coupling $\Aclr$. The bulk subsystem reduces to the same 2D linear heavy-ball limit ODE as LS (Theorem~\ref{thm:unified_limit}(ii)) with logistic-specific coefficients, governing how the orthogonal noise relaxes. The signal subsystem reduces to a 2D nonlinear heavy-ball on the LR loss along $\hat\mu$, governing how the alignment error contracts.

Below resonance, momentum vanishes from the limit ODE, just as in LS. Both $(s, R)$ jointly evolve on a 2D nonlinear slow manifold.  The bulk system features an additive label noise instead of the multiplicative noise of LS. Label noise creates a positive loss floor and biases the equilibrium parameter away from $\mu$, which prevents both signal and bulk systems from reaching zero when using a maximal stable learning rate.\footnote{Conversely, annealing the learning rate leads the system to $0$ loss.} 

The label-noise floor creates a bias vs convergence speed trade-off above resonance (Figure~\ref{fig:lr-heatmaps}) that has no analogue in LS where the loss converges to zero across the phase diagram. Above resonance, increasing momentum averages out more label noise and shrinks the loss floor by $d^{-2\gamma}$. However, this comes at the cost of needing to slow the learning rate down for stability, which slows the convergence rate to $d^\gamma$. Below resonance, there is still a positive loss floor but convergence rate is governed only by sparsity so there is no momentum trade-off.

\emph{Overall,} the phase portrait is largely the same as the Bernoulli-gated LS model.  The LR model does not have an artificial Bernoulli gate, and indeed all gradients are active.  Nonetheless, the Bernoulli-gated LS model shares thes same phase portrait, which suggests that simplified model is a reasonable model of other learning problems with sparse signals.

\section{Discussion}
\label{sec:discussion}

We note the implications of the spectral conflict for optimizers over heterogeneous vocabularies. Consider a language model with vocabulary size $V$ that spans token frequencies from $p \asymp 1$ for common tokens to $p \asymp 1/V$ for the rarest. With an optimizer that uses the same momentum for all tokens, this vocabulary forms a horizontal segment in the $(\kappa, \gamma)$ phase plane running from $\kappa$ near $0$ to $\kappa_{\max} = \log_d V$. The diagonal resonance line intersects this segment, leaving some tokens above resonance and others below.

Large batches partially mitigate the spectral conflict: as $\sigma = \log_d B$ grows, common tokens enter the concentrated regime $\kappa < \sigma - 1$ where momentum dynamics no longer depend on token sparsity. For pretraining-scale batches in modern language models,\footnote{For example, Llama-3 8B has $d = 4096$, $V = 128{,}256$, and pretraining $B \approx 4 \times 10^6$ tokens, giving $\sigma \approx 1.8$ and (assuming a Zipfian distribution with $p_{\min} \sim 1/V$) $\kappa_{\max} = \log_d V \approx 1.4$.} $\sigma$ approaches or exceeds $\kappa_{\max}$ so most of the vocabulary becomes concentrated or dense. With appropriate momentum scaling, the entire vocabulary can sit above resonance where the Nesterov-style oscillatory mechanism is available.

Changing batch size requires caution because it changes the effective sparsity. When switching from pretraining on large batches to finetuning on smaller batches, tokens that were effectively dense become sparse again and potentially drop below resonance under the same momentum. We speculate that the catastrophic forgetting phenomenon in fine-tuning could be related to interactions between the optimization dynamics and the effective sparsity of rare features, particularly if there is a noise floor similar to LR that becomes constant below resonance.

In scaling studies, independently varying $d$, $V$, or $B$ shifts the effective sparsity distribution, so a global $(\eta, \beta)$ may land in different phase regions at different scales. We suggest that scaling studies consider preserving the distribution of effective sparsity $\keff$ across tokens. Assuming a fixed Zipf exponent, preserving $\kappa_{\max} = \log_d V$ for a given batch scaling $B \propto d^\sigma$ requires scaling vocabulary as $V \propto d^{\kappa_{\max}}$. In other words, the ratio $V/B$ should scale as $d^{\kappa_{\max} - \sigma}$ rather than reusing the same vocabulary across batch sizes.

\section{Limitations and Future Work}

One limitation of our analysis is the isotropic data covariance in both LS and LR: we identify where the oscillatory mechanism underlying Nesterov acceleration is available, but cannot quantify the convergence-rate speedup under ill-conditioning. Future work in the anisotropic setting could quantify the interaction between momentum, sparsity, conditioning, and convergence rates.

To resolve the spectral conflict itself, future work on momentum optimizers might design schedules, hyperparameters, and effective timescales to be aware of sparsity. Momentum schedules that vary momentum during training might allow each sparsity level to spend some time at or near resonance. Sparsity-aware hyperparameters could set different learning rate or momentum values for different token frequencies. Sparsity-aware effective timescales could give each parameter its own clock, similar to SparseAdam \citep{paszke2019pytorch} and ADANA-STAR \citep{ferbach2026logarithmic}. These approaches are more tractable for sparsity aligned with the parameter basis; extending them to sparsity that is not basis-aligned remains an open question.

\section{Conclusions}
\label{sec:applicability}
\label{sec:scope} %

We analyzed the momentum phase structure in the high-dimensional limit of two distinct models: least squares and logistic regression. We recover nearly identical phase diagrams for both problems; the primary difference is an additive noise floor that appears below resonance in the logistic regression problem. The fact that both problems induce similar phase diagrams, including the same resonance boundary above which the learning rate must be reduced to preserve stability, suggests that these theoretical models capture the essential aspects of momentum dynamics under sparsity. The dynamics are organized by the ratio of the momentum-retention timescale to the learning timescale, with the SGD versus heavy-ball split appearing as the dichotomy at $\Delta \asymp 1$. Batching and momentum both average gradients --- over the batch and over time, respectively --- but act along orthogonal axes of the phase plane and cannot substitute for each other. Finally, we identify a spectral conflict: because the resonance line is sloped, no single global momentum can put all token frequencies on it simultaneously, raising open questions about how to optimally design momentum-based optimizers over heterogeneous sparsity.

\pagebreak

\pagebreak
\raggedbottom

\begin{ack}
The authors thank Courtney Paquette, Jeffrey Pennington and Atish Agarwala for technical discussions and feedback.
E.\ P.\ was supported by an NSERC Discovery Grant RGPIN-2025-04643, an FRQNT–NSERC NOVA Grant, a CIFAR Catalyst Grant, an AFOSR grant and a gift from Google Canada.
\end{ack}

\bibliographystyle{plainnat}
\bibliography{references}
\clearpage
\flushbottom

\appendix
\addtocontents{toc}{\protect\setcounter{tocdepth}{2}}
\renewcommand{\contentsname}{Appendix Contents}
\tableofcontents
\clearpage

\section{Extended Related Work}
\label{app:relwork}

This appendix expands on the main-text related work. Sections~\ref{app:rw-stochastic}--\ref{app:rw-noise} cover momentum theory; Section~\ref{app:rw-architecture} covers architectural sources of sparsity. We highlight the closest prior comparisons to our two-timescale phase diagram and to the spectral conflict of Section~\ref{sec:discussion}.

\subsection{Acceleration of stochastic momentum}
\label{app:rw-stochastic}

\paragraph{Classical foundation.}
Polyak introduced the heavy-ball method with the goal of incorporating a physical notion of momentum into iterative optimization \citep{polyak1964some}. A velocity variable carries past gradients forward, and on deterministic strongly convex quadratics this changes the contraction rate from $(1-1/\kappa)$ to $(1-1/\sqrt{\kappa})$. Nesterov gave a different update structure that achieves the same rate and extends to general smooth convex problems \citep{nesterov1983method}.

The continuous-time perspective came later, and it has a couple of advantages: it more easily connects acceleration to physical mechanics and bypasses some of the algebraic complexity of the discrete analyses. \citet{su2016differential} furnish a second-order ODE limit of Nesterov's method and use it to explain restarting. \citet{wibisono2016variational} reframe acceleration in a variational Bregman--Lagrangian language. \citet{shi2022highresolution} sharpen the picture with high-resolution differential equations that distinguish Polyak heavy ball from Nesterov via a gradient-correction term. We inherit this damped-oscillator picture: in the resonant phase of our least-squares model, the limiting ODE is a $2$-D linear system whose eigenvalues match the deterministic heavy-ball roots after rescaling time.

\paragraph{Stochastic refinements.}
With noisy gradients the heavy-ball story becomes conditional. Stochastic heavy ball converges almost surely on broad nonconvex coercive objectives \citep{gadat2018stochastic}, with quantitative rates in convex and strongly convex settings \citep{sebbouh2021almost}. But \citet{kidambi2018insufficiency} construct a linear-regression instance on which standard stochastic heavy-ball and Nesterov schemes cannot outperform SGD for any choice of step size and momentum, and they propose an accelerated alternative that requires explicit oracle access.

When acceleration over SGD is recovered, it is recovered by careful design. \citet{jain2018accelerating} and \citet{varre2022accelerated} achieve it on least-squares stochastic approximation by decoupling bias decay from noise. The closest comparison cases for our paper are \citet{paquette2021sgdlarge, paquette2021dynamics, ferbach2025dana}: they prove deterministic high-dimensional limits for SGD and stochastic heavy ball on random least-squares problems, and show that small-batch heavy ball with fixed momentum is dynamically equivalent to SGD. Only carefully scaled $n$-dependent momentum (SDANA, SDAHB, DANA) recovers Nesterov-style behavior. Our model differs in where the stochasticity comes from: not finite-sum row sampling but Bernoulli activation and long inactive intervals. The relevant state is no longer Hessian spectral data but active-batch arrival statistics.

\paragraph{Batch size as a gate.}
Several lines of work make momentum acceleration explicitly conditional on the mini-batch scale. \citet{lee2022trajectory} identify an implicit conditioning ratio (ICR) in dense random least squares: above the ICR, SGDM follows the deterministic heavy-ball rate; below, it reduces to single-batch SGD. \citet{bollapragada2025fast} prove a parallel result for minibatch heavy ball on strongly convex quadratics, with a smoothed-condition-number threshold playing the gating role.

The empirical picture matches. \citet{zhang2019which} use a noisy quadratic to show that momentum gives no benefit at small batch but extends scaling at large batch. \citet{fu2023when} attribute the SGD/SGDM gap in deep networks to an effective-learning-rate threshold that controls when momentum can postpone abrupt sharpening.

The practical batch-size literature for language models points the same way. \citet{marek2026small} argue that, with hyperparameters scaled to the small-batch token clock, very small batches train stably and make momentum much less necessary. \citet{zhang2024criticalbatch} and \citet{merrill2026critical} formalize a critical batch size that depends on problem and training stage. \citet{wang2026fastcatchup} studies optimal scheduling between small and large batches via functional scaling laws. We also notice that batch size acts as a gate in our setting: to see a momentum acceleration effect, the batch must be chosen to exactly compensate the sparsity, and only then is acceleration possible.

\subsection{Momentum, noise, smoothing, and effective learning rate}
\label{app:rw-noise}

A common heuristic holds that momentum reduces stochastic gradient noise. The picture is more delicate. \citet{gitman2019understanding} unify heavy ball, Nesterov, and quasi-hyperbolic momentum under one QHM parameterization and analyze stability regions and stationary distributions: momentum changes the \emph{shape} of the stationary distribution, not just its scale. \citet{wang2024marginal} prove that in small-learning-rate noise-dominated regimes, SGD with and without momentum track each other to leading order in $\sqrt{\eta/(1-\beta)}$ over short and long horizons.

These results study different observables: gradient noise, search-direction noise, stationary covariance, mean-square stability, batch sharpness, population risk.  Therefore, they are not necessarily contradictory.  Our model adds another observable: whether the momentum buffer survives between active updates long enough to remain a co-evolving state variable. Below the resonance line the answer is no, the limiting ODE is the SGD ODE, and momentum is adiabatically eliminated. Above the line it survives, but the maximally stable learning-rate scale is smaller, and stochasticity disappears from the leading limit not by variance suppression but because the active-update clock has slowed.

\subsection{Sources of effective sparsity in transformers}
\label{app:rw-architecture}

Sparse, intermittent updates arise not only from heavy-tailed data but also from architecture itself. Modern transformers route gradient signal through input-dependent subsets of weights at every layer, so any given parameter, neuron, head, or expert sees an irregular active-update clock --- exactly the regime our sparse-momentum model studies. We organize the relevant literature into three groups: sparsity that emerges naturally in trained networks (whether feed-forward or attention), gating mechanisms that artificially induce it, and mixture-of-experts routing.

\paragraph{Emergent sparsity in trained networks.}
Trained transformers exhibit substantial sparsity even when the data is dense: a small subset of weights or activations carries most of the signal on each input. Strictly this is closer to a statement about highly correlated or low-rank gradients than about literal gradient zeros. \citet{li2023lazyneuron} report on the order of $3\%$ nonzero post-activation entries for T5-Base and $6.3\%$ for ViT-B/16, with sparsity increasing in model size and persisting across training and evaluation data. \citet{voita2019heads} make the analogous observation for attention: most heads can be pruned with little performance loss, while a few specialized heads carry most of the function. \citet{zhang2022moefication} show that dense FFNs can be partitioned into expert-like subnetworks that activate only $10$--$30\%$ of parameters per input.

The inference-systems literature pushes this picture to LLM scale. \citet{liu2023dejavu} introduce \emph{contextual sparsity}, small input-dependent subsets of attention heads and MLP weights that approximately reproduce the dense output, and they train lightweight predictors to exploit it at inference time. Perhaps the most striking observation is the power law of \citet{song2024powerinfer}: a few ``hot'' neurons fire on most inputs, while a heavy tail of ``cold'' neurons is highly input-dependent. The takeaway is that contemporary transformers already have sparse, input-conditioned computational paths even before any explicit gating or experts.

\paragraph{Gating mechanisms.}
A second source is explicit gating built into the architecture. Gated feed-forward variants such as GLU \citep{dauphin2017gatedcnn}, GEGLU, and SwiGLU \citep{shazeer2020glu} multiply one linear projection by a learned gate; the gate is smooth rather than exactly zero, but it modulates which channels carry signal forward, and therefore which downstream weights receive substantial gradient mass. \citet{qiu2025gatedattention} add a head-specific sigmoid gate after scaled dot-product attention and attribute the gains partly to query-dependent sparse gating scores. Other long-context designs build sparsity directly into attention via fixed patterns \citep{child2019sparse}, hardware-aligned trainable schemes \citep{yuan2025nsa}, or content-routed expert selection \citep{piekos2025mosa}. Closely related concentration phenomena, attention sinks \citep{xiao2024streamingllm} and massive activations \citep{sun2024massiveactivations}, show that even within ostensibly dense attention, the realized signal flow is highly nonuniform.

\paragraph{Mixture-of-experts routing.}
The mixture-of-experts (MoE) layer is perhaps the closest architectural analog of our gated-regression setup. When a token is routed to a few experts, each expert's parameters and optimizer state evolve on a sparse, route-dependent schedule, and the load-balancing, capacity-factor, and dropped-token issues that the MoE literature manages are architectural versions of the clock-matching problem we analyze in a simplified ODE.

\citet{shazeer2017outrageously} introduced the sparsely gated MoE layer: a router selects a small subset of experts per example, adding capacity without proportional compute. \citet{lepikhin2020gshard} bring this to large translation Transformers with automatic sharding, and \citet{fedus2022switch} simplify routing to top-1 expert selection at trillion-parameter scale.

Subsequent work studies how to scale, stabilize, and specialize expert routing. GLaM \citep{du2022glam} demonstrates parameter-efficient scaling; ST-MoE \citep{zoph2022stmoe} addresses stability and transferability; expert-choice routing \citep{zhou2022expertchoice} reverses the assignment direction to mitigate dropped tokens; DeepSeekMoE \citep{dai2024deepseekmoe} pursues fine-grained expert specialization.

\clearpage
\section{Common preliminaries}
\label{app:common}

This appendix collects definitions and conventions that are shared between the least-squares analysis (Appendix~\ref{app:setup}) and the logistic regression analysis (Appendix~\ref{app:lr_setup}): the optimizer, the co-scaling ansatz, and the polynomial-resolution asymptotic notation. Each problem-specific appendix assumes the contents of this appendix and adds only the data-distribution-specific machinery on top.

\subsection{SGD with momentum}
\label{sec:common_sgd}

Both problems are instances of mini-batch SGD with classical momentum, parameterised by learning rate $\eta > 0$, momentum parameter $\beta \in [0, 1)$, and mini-batch size $B$. At discrete time $k$, the iterates evolve as
\begin{align}
    g_k &= \frac{1}{B} \sum_{i=1}^B g_{k,i},
    \label{eq:common-sgd-grad} \\
    m_{k+1} &= \beta\, m_k + (1-\beta)\, g_k,
    \label{eq:common-sgd-mom} \\
    \theta_{k+1} &= \theta_k - \eta\, m_{k+1},
    \label{eq:common-sgd-update}
\end{align}
where $g_{k,i}$ is the per-sample gradient supplied by the problem-specific data model: in the least-squares analysis (Appendix~\ref{app:setup}), $g_{k,i} = x_{k,i}\langle x_{k,i}, \theta_k - \theta^*\rangle$ for the gated Gaussian features $x_{k,i} = s_{k,i}\tilde{x}_{k,i}$; in the logistic regression analysis (Appendix~\ref{app:lr_setup}), $g_{k,i} = (q_\theta(x_{k,i}) - \tilde{y}_{k,i})\, x_{k,i}$ for the two-class Gaussian mixture. We write $\eps := 1 - \beta$ for the momentum-decay rate throughout.

\subsection{Co-scaling ansatz}
\label{sec:common_coscaling}

We analyze the high-dimensional limit $d \to \infty$ under a power-law co-scaling of the four parameters that govern the dynamics:
\begin{enumerate}
    \item \textbf{Token / class sparsity ($\kappa \ge 0$):} the gating probability $p$ (or rare-class probability) scales as $p \asymp d^{-\kappa}$. $\kappa = 0$ corresponds to the dense limit; $\kappa > 0$ captures sparse tokens or rare classes whose update / signal-arrival frequency decreases polynomially with $d$.

    \item \textbf{Mini-batch size ($\sigma \ge 0$):} the batch size scales as $B = B(d) \asymp d^{\sigma}$ (e.g.\ $B(d) := \lceil d^\sigma \rceil$; rounding does not affect polynomial-scale comparisons). Typical values: $\sigma = 0$ (constant batch size, e.g.\ fine-tuning); $\sigma = 1$ (batch size proportional to dimension, e.g.\ large-scale pretraining).

    \item \textbf{Momentum decay ($\gamma \ge 0$):} the decay rate $\eps = 1 - \beta$ scales as $\eps \asymp d^{-\gamma}$. Standard heavy-ball momentum corresponds to $\gamma = 1$ (i.e.\ $\beta = 1 - O(1/d)$); $\gamma = 0$ gives constant momentum independent of $d$.

    \item \textbf{Learning rate ($\alpha$):} the learning rate scales as $\eta \asymp d^{-\alpha}$. The exponent $\alpha$ is determined by mean-square stability per region of the phase plane (cf.\ Appendix~\ref{app:stability} and the analogous LR analysis).
\end{enumerate}

For constant-level interior limits we sharpen the polynomial ansatz to
\begin{equation}
    p = p_* d^{-\kappa}(1 + o(1)),\quad
    B = B_* d^\sigma(1 + o(1)),\quad
    \eps = \eps_* d^{-\gamma}(1 + o(1)),\quad
    \eta = \eta_* d^{-\alpha}(1 + o(1)),
    \label{eq:common_coscaling_constants}
\end{equation}
with fixed positive constants $(p_*, B_*, \eps_*, \eta_*)$.

\subsection{Asymptotic notation}
\label{sec:common_asymptotic_notation}

We work at \emph{polynomial resolution} in the dimension $d$. For positive functions $f(d)$ and $g(d)$, we write $f(d) \asymp g(d)$ if their ratio differs by at most a subpolynomial factor:
\begin{equation}
    f(d) \asymp g(d) \quad\Longleftrightarrow\quad \lim_{d \to \infty} \frac{\log\!\bigl(f(d)/g(d)\bigr)}{\log d} = 0 \quad\Longleftrightarrow\quad \frac{f(d)}{g(d)} = d^{o(1)}.
\end{equation}
Equivalently, $f(d) \asymp d^a$ means $\frac{\log f(d)}{\log d} \to a$, i.e.\ $f(d) = d^{a + o(1)}$. In particular, $f(d) \asymp 1$ means $f(d) = d^{o(1)}$ (we ignore constant factors and logarithmic corrections).

When we need to record super-polynomial decay, we write $f(d) = o_{\mathrm{poly}}(1)$ to mean $d^c f(d) \to 0$ for every $c > 0$ (equivalently, $\frac{\log f(d)}{\log d} \to -\infty$).

\clearpage
\subsection{Notation reference}
\label{sec:common_notation}

Tables~\ref{tab:notation_main} and~\ref{tab:notation_appendix} collect the symbols used throughout the main text and the appendices, grouped by where they are introduced. Table~\ref{tab:notation_main} contains symbols defined in the main text; Table~\ref{tab:notation_appendix} contains additional symbols introduced only in the appendices.

\begin{table}[!htbp]
\centering
\caption{Notation reference: symbols defined in the main text.}
\label{tab:notation_main}
\small
\renewcommand{\arraystretch}{1.3}
\begin{tabular}{@{}lll@{}}
\toprule
\textbf{Symbol} & \textbf{Definition} & \textbf{Meaning} \\
\midrule
\multicolumn{3}{@{}l}{\textit{Model parameters (Section~\ref{sec:setup})}} \\
$d$ & & Model dimension \\
$p$ & $p \asymp d^{-\kappa}$ & Token appearance probability \\
$B$ & $B \asymp d^{\sigma}$ & Batch size \\
$\beta$ & $\beta = 1 - \varepsilon$ & Momentum parameter \\
$\varepsilon$ & $\varepsilon = 1-\beta \asymp d^{-\gamma}$ & Momentum decay rate \\
$\eta$ & $\eta \asymp d^{-\alpha}$ & Learning rate \\
$\theta^*$ & & Ground-truth parameters \\
\midrule
\multicolumn{3}{@{}l}{\textit{Scaling exponents (Section~\ref{sec:setup})}} \\
$\kappa$ & $p \asymp d^{-\kappa}$ & Sparsity exponent \\
$\sigma$ & $B \asymp d^{\sigma}$ & Batch size exponent \\
$\gamma$ & $\varepsilon \asymp d^{-\gamma}$ & Momentum exponent \\
$\alpha$ & $\eta \asymp d^{-\alpha}$ & Learning rate exponent (phase-dependent) \\
\midrule
\multicolumn{3}{@{}l}{\textit{Derived quantities (Section~\ref{sec:setup})}} \\
$\kappa_{\mathrm{eff}}$ & $(\kappa - \sigma)_+$ & Effective sparsity after batching \\
$\Pbatch$ & $1-(1-p)^B$ & Active batch probability ($\asymp d^{-\kappa_{\mathrm{eff}}}$) \\
$\rho$ & $\varepsilon / (\Pbatch + \varepsilon)$ & Momentum retention fraction ($\asymp d^{-\nu}$) \\
$\nu$ & $(\gamma - \kappa_{\mathrm{eff}})_+$ & Retention exponent \\
\midrule
\multicolumn{3}{@{}l}{\textit{State variables (Section~\ref{sec:main_results})}} \\
$R(t)$ & $\mathbb{E}[\|\theta_t - \theta^*\|^2]$ & Squared error (risk) \\
$V(t)$ & $\mathbb{E}[\|m_t\|^2]$ & Momentum energy \\
$C(t)$ & $\mathbb{E}[\langle \theta_t - \theta^*, m_t \rangle]$ & Error--momentum correlation \\
$t$ & & Active-update time (Poisson clock) \\
\midrule
\multicolumn{3}{@{}l}{\textit{Phase diagram quantities (Section~\ref{sec:phase_diagram})}} \\
$\tau$ & $t / d^{\text{(phase power)}}$ & Rescaled time (regime-dependent) \\
$\eta_{\mathrm{eff}}$ & $\eta d / B$ & Effective step size (1D phases) \\
$c_{\mathrm{eff}}$ & $\eta_{\mathrm{eff}}(2-\eta_{\mathrm{eff}})$ & 1D convergence rate \\
$\bar{\eta}$ & $\eta_* p_* / \varepsilon_*$ & Coupling constant (2D phases) \\
\midrule
\multicolumn{3}{@{}l}{\textit{$O(1)$ co-scaling constants (Section~\ref{sec:setup})}} \\
$p_*$ & $\lim_{d\to\infty} p \cdot d^{\kappa}$ & Sparsity prefactor \\
$B_*$ & $\lim_{d\to\infty} B / d^{\sigma}$ & Batch size prefactor \\
$\varepsilon_*$ & $\lim_{d\to\infty} \varepsilon \cdot d^{\gamma}$ & Momentum decay prefactor \\
$\eta_*$ & $\lim_{d\to\infty} \eta \cdot d^{\alpha}$ & Learning rate prefactor \\
\midrule
\multicolumn{3}{@{}l}{\textit{Stability (Section~\ref{sec:main_results})}} \\
$c_1, c_2, c_3$ & & Characteristic polynomial coefficients \\
$\eta_{\max}$ & & Maximum stable learning rate \\
\bottomrule
\end{tabular}
\end{table}

\begin{table}[!htbp]
\centering
\caption{Notation reference: additional symbols introduced in the appendices.}
\label{tab:notation_appendix}
\small
\renewcommand{\arraystretch}{1.3}
\begin{tabular}{@{}lll@{}}
\toprule
\textbf{Symbol} & \textbf{Definition} & \textbf{Where used} \\
\midrule
\multicolumn{3}{@{}l}{\textit{Batch scaling factors (Appendix~\ref{app:ode_derivation})}} \\
$N$ & $\sum_{i=1}^B s_i$ & Active samples in a batch \\
$\Qbatch$ & $1 - \Pbatch$ & Empty batch probability \\
$\mathcal{B}_1$ & $p / \Pbatch$ & Signal scaling factor \\
$\mathcal{B}_{\mathrm{diag}}$ & $p / (B \Pbatch)$ & Diagonal noise scaling \\
$\mathcal{B}_{\mathrm{cross}}$ & $p^2(B-1) / (B \Pbatch)$ & Cross-term noise scaling \\
$\mathcal{B}_2$ & $(d+2)\mathcal{B}_{\mathrm{diag}} + \mathcal{B}_{\mathrm{cross}}$ & Total noise scaling \\
\midrule
\multicolumn{3}{@{}l}{\textit{Geometric waiting time (Appendix~\ref{app:ode_derivation})}} \\
$K$ & $\sim \mathrm{Geom}(\Pbatch)$ & Inter-arrival time \\
$S_K$ & $\beta(1-\beta^K)/(1-\beta)$ & Drift weight (parameter update) \\
$S_{K-1}$ & $\beta(1-\beta^{K-1})/(1-\beta)$ & Drift weight (gradient evaluation) \\
\midrule
\multicolumn{3}{@{}l}{\textit{Retention and drift coefficients (Appendix~\ref{app:ode_derivation})}} \\
$\bar{\beta}_1$ & $\mathbb{E}[\beta^K]$ & First-moment retention \\
$\bar{\beta}_2$ & $\mathbb{E}[\beta^{2K}]$ & Second-moment retention \\
$\delta_\theta$ & $\mathbb{E}[S_K]$ & Mean parameter drift \\
$\delta_g$ & $\mathbb{E}[S_{K-1}]$ & Mean gradient-point drift \\
$\delta_\theta^{(2)}$ & $\mathbb{E}[S_K^2]$ & Squared parameter drift \\
$\delta_g^{(2)}$ & $\mathbb{E}[S_{K-1}^2]$ & Squared gradient-point drift \\
$\delta_{-1,\theta}$ & $\mathbb{E}[\beta^K S_K]$ & Mixed retention--drift \\
$\delta_{-1,g}$ & $\mathbb{E}[\beta^K S_{K-1}]$ & Mixed retention--drift \\
$\delta_{\theta,g}$ & $\mathbb{E}[S_K S_{K-1}]$ & Product drift \\
\midrule
\multicolumn{3}{@{}l}{\textit{Scaled system (Appendix~\ref{app:coscaling})}} \\
$W$ & $V / (\varepsilon^2 \mathcal{B}_2)$ & Scaled momentum energy \\
$Z$ & $C / (\Pbatch + \varepsilon)$ & Scaled correlation \\
$\Lambda_W$ & $\varepsilon^2 \mathcal{B}_2$ & Momentum energy scale \\
$\Lambda_Z$ & $\Pbatch + \varepsilon$ & Correlation scale \\
$\tilde{A}$ & $D^{-1} A D$ & Scaled main ODE matrix \\
\midrule
\multicolumn{3}{@{}l}{\textit{Regime-specific constants (Appendix~\ref{app:regimes})}} \\
$\Lambda_{\mathrm{slow}}$ & $\eta \mathcal{B}_1$ & Slow decay rate (1D regimes) \\
$c_\Lambda$ & $\eta_* p_*$ & $O(1)$ prefactor of $\Lambda_{\mathrm{slow}}$ \\
$\rho_*$ & $\varepsilon_* / (p_* B_*)$ & Retention prefactor (sparse-above) \\
$\bar{A}(d)$ & $d^{\text{(power)}} \tilde{A}(d)$ & Time-rescaled scaled matrix \\
$D$ & regime-dependent diagonal & Balancing transform (2D regimes) \\
$A_{\mathrm{fast}}$ & $2\times 2$ block of $\tilde{A}$ & Fast subsystem (1D regimes) \\
\midrule
\multicolumn{3}{@{}l}{\textit{Triple point only (Appendix~\ref{app:regimes})}} \\
$\xi_*$ & $\varepsilon_*^2 / (P_* p_* B_*)$ & Extra forcing at triple point \\
$\chi_*$ & $p_* B_* / P_*$ & Boundary prefactor at $\kappa = \sigma$ \\
$P_*$ & $1 - e^{-p_* B_*}$ & Limiting activation probability \\
\bottomrule
\end{tabular}
\end{table}

\FloatBarrier  %

\clearpage
\section{Least-squares analysis}
\label{app:ls}

This appendix derives the closed second-moment ODE for the sparse least-squares
model, analyzes its stability, and reduces the dynamics to effective limiting
ODEs in each region of the $(\kappa, \gamma)$ phase plane. It assumes the
optimizer, co-scaling ansatz, and asymptotic notation of
Appendix~\ref{app:common}.

\subsection{Setup and Batched Process Definitions}

\subsubsection{Problem Setup and Algorithm}
\label{app:setup}

\paragraph{Model and Objective}
\label{sec:model_objective}

We consider a sparse Gaussian least-squares problem. At each iteration, we draw a minibatch of $B$ i.i.d.\ samples $\{(s_i,\tilde{x}_i)\}_{i=1}^B$ where
\begin{itemize}
    \item $s_i \sim \Ber(p)$ is a gate variable indicating whether the feature/token \emph{activates} in sample $i$ (i.e., whether this sample produces a nonzero update signal)
    \item $\tilde{x}_i \sim \N(0,\Sigma)$ is a dense feature vector in $\R^d$, independent of $s_i$. Throughout we assume the isotropic case $\Sigma = I_d$.

\end{itemize}
We define the gated feature vector $x_i := s_i \tilde{x}_i$ and the noiseless target
\begin{equation}
    y_i := \langle x_i, \theta^* \rangle = \langle s_i \tilde{x}_i, \theta^* \rangle,
\end{equation}
where $\theta^*\in\R^d$ is the ground-truth parameter. The population objective is
\begin{equation}
    \min_{\theta\in\R^d} f(\theta)
    := \frac{1}{2}\E_{s,\tilde{x}}\!\left[\left(\langle s\tilde{x},\theta\rangle - \langle s\tilde{x},\theta^*\rangle\right)^2\right]
    = \frac{1}{2}\E_{x}\!\left[\left(\langle x,\theta-\theta^*\rangle\right)^2\right].
\end{equation}

\paragraph{Batched Stochastic Gradient}
\label{sec:batched_gradient}

Given a minibatch $\{x_i\}_{i=1}^B$ with $x_i=s_i\tilde{x}_i$, the stochastic gradient is the average of the per-sample gradients:
\begin{equation}
    g(\theta)
    = \frac{1}{B}\sum_{i=1}^B x_i\left(\langle x_i,\theta\rangle - \langle x_i,\theta^*\rangle\right)
    = \frac{1}{B}\sum_{i=1}^B x_i \langle x_i,\theta-\theta^*\rangle.
\end{equation}
Let
\begin{equation}
    N := \sum_{i=1}^B s_i
\end{equation}
denote the number of active (nonzero) samples in the minibatch, and define
\[
\Pbatch := \Prob(N\ge 1),\qquad \Qbatch := 1-\Pbatch.
\]
\begin{itemize}
    \item If $N=0$ (an \emph{empty batch}), then $x_i\equiv 0$ for all $i$ and hence $g(\theta)=0$.
    \item If $N\ge 1$ (an \emph{active batch}), the gradient typically provides a nonzero update signal.
\end{itemize}
We will exploit this dichotomy by separating the dynamics into stretches of empty batches (pure momentum decay and parameter drift) punctuated by active batches (stochastic gradient injection).

We will also use the \emph{active-batch gradient}, defined as the (random) batched gradient drawn from the conditional distribution given an active batch.
\begin{equation}
    \tilde{g}(\theta) \;:=\; g(\theta)\,\big|\, (N \ge 1).
\end{equation}
Equivalently, the unconditional batched gradient can be written as
\begin{equation}
    g(\theta) \;=\; \mathbf{1}\{N\ge 1\}\,\tilde{g}(\theta),
\end{equation}
so that $\E[g(\theta)] = \Pbatch\E[\tilde{g}(\theta)]$ and
$\E[\|g(\theta)\|^2] = \Pbatch\E[\|\tilde{g}(\theta)\|^2]$.

\paragraph{SGD with Momentum on Sparse Batches}
\label{sec:sgd_momentum}

We instantiate the generic SGD-with-momentum update of Appendix~\ref{sec:common_sgd} with the per-sample gradient $g_{k,i} = x_{k,i}\langle x_{k,i}, \theta_k - \theta^*\rangle$ from the gated Gaussian model of Section~\ref{sec:model_objective}. On empty batches ($N_k = 0$), every $x_{k,i} = 0$ so $g_k = 0$: the momentum buffer decays as $m_{k+1} = \beta m_k$ and the parameters drift under the residual momentum, $\theta_{k+1} = \theta_k - \eta \beta m_k$, with no new gradient information injected.

\paragraph{Stopping-Time Representation (Active Updates)}
\label{sec:stopping_time}

To isolate the effect of empty-batch drift, we re-index the dynamics by \emph{active} updates.
Set $\tau_0 := 0$ and let $(\theta_0,m_0)$ denote the algorithm initialization (the state before processing minibatch $k=0$).
For $n\ge 1$, let $\tau_n$ be the \emph{iterate index immediately after} the $n$-th active minibatch update is applied
(equivalently, minibatch index $\tau_n-1$ is the $n$-th $k$ such that $N_k\ge 1$).
Define the inter-arrival time
\begin{equation}
    K_n := \tau_n - \tau_{n-1} \in \{1,2,\dots\},
\end{equation}
so that there are exactly $K_n-1$ empty minibatches between consecutive active updates.
We write $(\theta_{n-1},m_{n-1}) := (\theta_{\tau_{n-1}},m_{\tau_{n-1}})$ for the state immediately after the $(n-1)$-st active update.

\paragraph{Drift phase ($K_n-1$ empty steps).}
During each empty step, $g_k=0$, so $m$ is multiplied by $\beta$ and $\theta$ is advanced by $-\eta$ times the (decayed) momentum. Unrolling over $K_n-1$ empty steps yields
\begin{equation}
    \theta_n^- \;=\; \theta_{n-1} \;-\; \eta m_{n-1}\cdot \beta \sum_{j=0}^{K_n-2}\beta^{j}
    \;=\; \theta_{n-1} \;-\; \eta m_{n-1}\cdot \beta \frac{1-\beta^{K_n-1}}{1-\beta}.
\end{equation}

Define $m_n^- := \beta^{K_n-1}m_{n-1}$ as the momentum immediately before the $n$-th active update (after the $K_n-1$ empty steps).
Then the momentum update at the active minibatch gives
\begin{equation}
    m_n = \beta m_n^- + (1 - \beta) g_n = \beta^{K_n} m_{n-1} + (1 - \beta) g_n.
\end{equation}

\paragraph{Active update (at time $\tau_n$).}
At the active minibatch, the gradient is evaluated at the drifted position $\theta_n^-$:
\begin{equation}
    g_n
    := \frac{1}{B}\sum_{i=1}^B x_{n,i}\,\langle x_{n,i},\theta_n^- - \theta^*\rangle,
    \qquad x_{n,i}=s_{n,i}\tilde{x}_{n,i}.
\end{equation}
The momentum and parameters are then updated by
\begin{align}
    m_n &= \beta^{K_n} m_{n-1} + (1-\beta) g_n, \\
    \theta_n &= \theta_n^- - \eta m_n.
\end{align}

\paragraph{Effective one-step map (active-to-active).}
Combining drift and active update gives the active-indexed recursion
\begin{equation}
    \theta_n
    = \theta_{n-1}
    - \eta m_{n-1}\cdot \frac{\beta(1-\beta^{K_n})}{1-\beta}
    - \eta(1-\beta) g_n,
\end{equation}
where $g_n$ is evaluated at the drifted iterate $\theta_n^-$. This stopping-time representation makes explicit how random stretches of empty batches induce additional drift terms through geometric sums in $\beta$, which we collect later via drift coefficients.

\subsubsection{Batched Process Definitions}
\label{sec:batched_process_definitions}

This subsection collects the basic statistics of the sparse batched process that will be reused throughout the derivations. In particular, the stopping-time representation of Section~\ref{sec:stopping_time} produces geometric sums in $\beta$ over random inter-arrival times; we record their expectations as \emph{retention factors} and \emph{drift coefficients}.

\paragraph{Effective Sparsity and Inter-Arrival Times}
\label{sec:effective_sparsity}

Let $s_{k,i}\sim\Ber(p)$ denote the gate variables for sample $i$ in minibatch $k$, and define the number of active samples in batch $k$ by
\begin{equation}
    N_k := \sum_{i=1}^B s_{k,i}.
\end{equation}
A minibatch is \emph{active} if $N_k\ge 1$, and \emph{empty} otherwise. The probability that a batch is active is
\begin{equation}
    \Pbatch := \Prob(N_k\ge 1) = 1-(1-p)^B, \label{eq:def_pbatch}
\end{equation}
and we write
\begin{equation}
    \Qbatch := 1-\Pbatch
\end{equation}
for the probability of an empty batch.

Recall that $\tau_n$ is the iterate index immediately after the $n$-th active update (equivalently, minibatch index $\tau_n-1$ is the $n$-th active minibatch). The inter-arrival time
\begin{equation}
    K_n := \tau_n-\tau_{n-1}\in\{1,2,\dots\}
\end{equation}
counts the number of minibatches between consecutive active updates. Under the i.i.d.\ batching model, the $K_n$ are i.i.d.\ geometric random variables with success probability $\Pbatch$:
\begin{equation}
    K \sim \Geom(\Pbatch)\ \ \text{on }\{1,2,\dots\},\qquad
    \Prob(K=k)=\Pbatch\,\Qbatch^{k-1}\quad (k\ge 1).
\end{equation}

\paragraph{Momentum Retention Factors}
\label{sec:retention_factors}

The geometric waiting time induces random stretches of pure momentum decay. We define the first- and second-moment momentum retention factors
\begin{align}
    \label{eq:barbeta_defs}
    \bar{\beta}_1 &:= \E[\beta^{K}] = \frac{\Pbatch\beta}{1-\Qbatch\beta}, \\
    \bar{\beta}_2 &:= \E[\beta^{2K}] = \frac{\Pbatch\beta^2}{1-\Qbatch\beta^2}.
\end{align}

\paragraph{Geometric-Sum Statistics and Drift Coefficients}
\label{sec:drift_coefficients_motivation}

Unrolling the momentum recursion across a run of $K-1$ empty minibatches produces geometric sums and mixed products involving
\(
\sum_{j=1}^{K}\beta^j,\;
\sum_{j=1}^{K-1}\beta^j,\;
\beta^{K},\;
\beta^{2K}.
\)
Throughout, the waiting time satisfies
$K\sim\Geom(\Pbatch)$ with $\Prob[K=k]=\Pbatch\,\Qbatch^{k-1}$ for $k\ge1$ and
$\Qbatch=1-\Pbatch$.
When taking expectations over $K$, these quantities reduce to a small set of closed-form coefficients.
We collect these coefficients here, since they appear repeatedly in the second-moment ODE coefficients.

Define the basic geometric-sum random variables
\begin{equation}
S_K\;\coloneqq\;\sum_{j=1}^{K}\beta^{j}
=\beta\,\frac{1-\beta^{K}}{1-\beta},
\qquad
S_{K-1}\;\coloneqq\;\sum_{j=1}^{K-1}\beta^{j}
=\beta\,\frac{1-\beta^{K-1}}{1-\beta}.
\end{equation}

Here $S_K$ is the total momentum-driven drift weight that multiplies $M$ in the active-to-active parameter update,
while $S_{K-1}$ is the momentum-driven drift weight that shifts the \emph{evaluation point} before the active gradient is computed.
In particular, $b(1)=0$ so only the $K-1$ empty minibatches contribute to the pre-gradient drift.

\paragraph{Notation policy (important).}
Several coefficients in the ODEs involve \emph{second moments} of $S_K$ and $S_{K-1}$.
To avoid ambiguity with literal squares, we adopt the convention
\[
\delta_\theta \equiv \E[S_K],\quad
\delta_g \equiv \E[S_{K-1}],\quad
\delta_\theta^{(2)} \equiv \E[S_K^2],\quad
\delta_g^{(2)} \equiv \E[S_{K-1}^2].
\]
In particular, $(\delta_\theta)^2$ and $(\delta_g)^2$ always denote ordinary squares of the first-moment coefficients,
and are \emph{not} the same as $\delta_\theta^{(2)}$ and $\delta_g^{(2)}$.

In addition, we use the shorthand
\begin{equation}
\bar{\beta}_1 \;\coloneqq\; \E[\beta^{K}],
\qquad
\bar{\beta}_2 \;\coloneqq\; \E[\beta^{2K}],
\end{equation}
and define a small number of mixed product coefficients (e.g.\ $\E[\beta^K S_K]$, $\E[\beta^K S_{K-1}]$, and $\E[S_KS_{K-1}]$)
in Section~\ref{sec:product_terms}.

\paragraph{Drift Coefficients}
\label{sec:drift_coefficients}

Define the first-moment (drift) coefficients
\begin{align}
\delta_\theta \;\coloneqq\; \E[S_K]
= \E\!\left[\beta \cdot \frac{1 - \beta^K}{1 - \beta}\right]
= \frac{\beta}{1-\Qbatch\beta},
\label{eq:def_delta_theta_closed}\\
\delta_g \;\coloneqq\; \E[S_{K-1}]
= \E\!\left[\beta \cdot \frac{1 - \beta^{K-1}}{1 - \beta}\right]
= \frac{\Qbatch\beta}{1-\Qbatch\beta}.
\label{eq:delta_g_def}
\end{align}
These satisfy the identities
\begin{equation}
\delta_g = \Qbatch\,\delta_\theta,
\qquad
\delta_\theta-\delta_g = \Pbatch\,\delta_\theta = \bar{\beta}_1,
\label{eq:delta_theta_delta_g_identities}
\end{equation}
where $\bar{\beta}_1=\E[\beta^K]=\frac{\Pbatch\beta}{1-\Qbatch\beta}$.

\paragraph{Squared Drift Coefficients}
\label{sec:squared_drift_coefficients}

We also require the second-moment (``squared drift'') coefficients
\begin{align}
\delta_\theta^{(2)} \;\coloneqq\; \E[S_K^2]
= \E\!\left[\left(\beta\cdot\frac{1-\beta^{K}}{1-\beta}\right)^2\right]
= \frac{\beta^2\big(1+\Qbatch\beta\big)}{\big(1-\Qbatch\beta\big)\big(1-\Qbatch\beta^2\big)},
\label{eq:delta_theta_2_def}\\
\delta_g^{(2)} \;\coloneqq\; \E[S_{K-1}^2]
= \E\!\left[\left(\beta\cdot\frac{1-\beta^{K-1}}{1-\beta}\right)^2\right]
= \frac{\Qbatch\beta^2\big(1+\Qbatch\beta\big)}{\big(1-\Qbatch\beta\big)\big(1-\Qbatch\beta^2\big)}.
\label{eq:delta_g_2_def}
\end{align}
In particular,
\begin{equation}
\delta_g^{(2)} = \Qbatch\,\delta_\theta^{(2)}.
\label{eq:delta_2_relation}
\end{equation}

\paragraph{Product Terms}
\label{sec:product_terms}

Finally, we define several mixed product coefficients that arise from cross-terms between decay, drift,
and gradient-injection contributions:
\begin{align}
\delta_{-1,\theta}
\;\coloneqq\;
\E\!\left[\beta^{K}\,S_K\right]
=
\E\!\left[\beta^{K+1}\cdot\frac{1-\beta^{K}}{1-\beta}\right]
=
\frac{\Pbatch\beta^{2}}{(1-\Qbatch\beta)(1-\Qbatch\beta^{2})},
\label{eq:def_delta_m1_theta_closed}\\
\delta_{-1,g}
\;\coloneqq\;
\E\!\left[\beta^{K}\,S_{K-1}\right]
=
\E\!\left[\beta^{K+1}\cdot\frac{1-\beta^{K-1}}{1-\beta}\right]
=
\frac{\Pbatch\Qbatch\beta^{3}}{(1-\Qbatch\beta)(1-\Qbatch\beta^{2})},
\label{eq:delta_m1_g_def}\\
\delta_{\theta,g}
\;\coloneqq\;
\E\!\left[S_K\,S_{K-1}\right]
=
\E\!\left[\beta^{2}\cdot\frac{(1-\beta^{K})(1-\beta^{K-1})}{(1-\beta)^{2}}\right]
=
\frac{\Qbatch\beta^{2}(1+\beta)}{(1-\Qbatch\beta)(1-\Qbatch\beta^{2})}.
\label{eq:delta_theta_g_def}
\end{align}

\subsection{Derivation of the Poissonized Moment ODE}
\label{sec:ode_derivation}\label{app:ode_derivation}

\subsubsection{Continuization and Generator}
We model the sequence of active updates as a Poisson counting process $\mathcal{N}_t$ with unit rate. Time $t$ corresponds to the number of active updates processed. (To convert timescales back to the original minibatch index, multiply by the mean inter-arrival time $\E[K]=1/\Pbatch$.)

The generator of this jump process is:
\begin{equation}
    \mathcal{L}f(\theta, m) = \E_{K, \text{batch}} [f(\theta^+, m^+) - f(\theta, m)]
\end{equation}
where $(\theta^+,m^+)$ denotes the post-jump state obtained by applying one active-to-active update with waiting time $K$.

The expectation is taken over:
\begin{enumerate}
    \item The geometric waiting time $K \sim \Geom(\Pbatch)$.
    \item The content of the active batch (conditioned on $N \ge 1$).
\end{enumerate}

\subsubsection{State Variables}
We derive closed ODEs for state variables $R$, $V$, and $C$ by explicit expectation over the Gaussian model and the batch process.
\begin{align}
R(t) &= \E[\|\Theta_t - \theta^*\|^2] \quad \text{(total squared error)}\\
V(t) &= \E[\|M_t\|^2] \quad \text{(momentum second moment)}\\
C(t) &= \E[\langle \Theta_t - \theta^*, M_t\rangle] \quad \text{(error-momentum correlation)}
\end{align}

\begin{remark}[Exact moment closure in the Gaussian model]
Despite the terminology sometimes used in the optimization literature, the ODE system derived in this section is \emph{not} based on an uncontrolled mean-field approximation.
In the isotropic Gaussian least-squares setting, the conditional first and second moments of the active-batch gradient are exact functions of the current error at the evaluation point, so the Kolmogorov equation for the Poissonized jump process closes exactly at the level of $(R,V,C)$.
The only modeling step in this section is the Poissonization/continuization of the discrete active-indexed recursion.
\end{remark}

\paragraph{Batch Size Scaling Factors}

Several scaling factors arise relative to the $B=1$ case. When taking expectations of the \emph{active-batch} gradient, we condition on $N\mid(N\ge 1)$ which is a zero-truncated Binomial distribution.

Unconditionally, the number of active samples in a batch is $N\sim\mathrm{Bin}(B,p)$ and
\[
\Pbatch := \Prob(N\ge 1)=1-(1-p)^B.
\]

For any function $f(N)$ with $f(0)=0$ (in particular, for quantities proportional to $N$ or $N(N-1)$), we have
\[
\E[f(N)\mid N\ge 1]
= \frac{\E[f(N)\mathbf 1_{\{N\ge 1\}}]}{\Prob(N\ge 1)}
= \frac{\E[f(N)]}{\Pbatch}.
\]

\textbf{Signal Factor ($\mathcal{B}_1$):} Scaling for linear gradient terms (e.g. $\E[\langle \theta-\theta^*, \tilde{g} \rangle]$).
\begin{equation}
    \mathcal{B}_1 := \E\left[ \frac{N}{B} \,\bigg|\, N \geq 1 \right]
    = \frac{1}{B \Pbatch} \E[N]
    = \frac{p}{\Pbatch}.
\end{equation}

\textbf{Diagonal Second-Moment Factor ($\mathcal{B}_{\mathrm{diag}}$):}
 Scaling for the incoherent (diagonal) contribution to $\E[\|\tilde{g}\|^2]$.
\begin{equation}
    \mathcal{B}_{\mathrm{diag}} := \E\left[ \frac{N}{B^2} \,\bigg|\, N \geq 1 \right]
    = \frac{\mathcal{B}_1}{B}
    = \frac{p}{B \cdot \Pbatch}.
\end{equation}

\textbf{Cross-term Second-Moment Factor ($\mathcal{B}_{\mathrm{cross}}$):}
 Scaling for the coherent (cross-term) contribution to $\E[\|\tilde{g}\|^2]$.
Using $\E[N(N-1)]=B(B-1)p^2$ for $N\sim \mathrm{Bin}(B,p)$,
\begin{equation}
    \mathcal{B}_{\mathrm{cross}} := \E\left[ \frac{N(N-1)}{B^2} \,\bigg|\, N \geq 1 \right]
    = \frac{1}{B^2 \Pbatch}\E[N(N-1)]
    = \frac{p^2(B-1)}{B \cdot \Pbatch}.
\end{equation}

\textbf{Relationships:}
\begin{itemize}
    \item $\mathcal{B}_{\mathrm{diag}} = \mathcal{B}_1 / B$
    \item $\mathcal{B}_{\mathrm{cross}} = p(B-1)\mathcal{B}_{\mathrm{diag}}$
    \item When $p=1$ (always active): $\Pbatch=1$, $\mathcal{B}_1=1$, $\mathcal{B}_{\mathrm{diag}}=1/B$, $\mathcal{B}_{\mathrm{cross}}=(B-1)/B$
    \item When $B = 1$: $\mathcal{B}_1 = 1$, $\mathcal{B}_{\mathrm{diag}} = 1$, $\mathcal{B}_{\mathrm{cross}} = 0$
\end{itemize}

\begin{lemma}[Active-batch gradient first and second moments]
\label{lem:active_batch_gradient_moments}
Condition on an active batch ($N\ge 1$). Let $x_i\sim\mathcal N(0,I_d)$ be i.i.d.\ isotropic Gaussian features,
and let $e\in\R^d$ denote the (possibly random) error vector at the time of gradient evaluation, assumed independent of the fresh batch.
Define the active-batch gradient
\[
\tilde g \;=\; \frac{1}{B}\sum_{i:s_i=1} x_i x_i^\top e.
\]
Then the conditional first and second moments satisfy
\begin{align}
\E[\tilde g \mid N\ge 1,\; e] &= \mathcal{B}_1\,e, \\
\E[\|\tilde g\|^2 \mid N\ge 1,\; e] &= \mathcal{B}_2\,\|e\|^2,
\qquad
\mathcal{B}_2 := (d+2)\mathcal{B}_{\mathrm{diag}}+\mathcal{B}_{\mathrm{cross}},
\end{align}
where $\mathcal{B}_1,\mathcal{B}_{\mathrm{diag}},\mathcal{B}_{\mathrm{cross}}$ are the scaling factors defined above.
\end{lemma}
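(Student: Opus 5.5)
We condition first on the error vector $e$ and on the gate pattern $(s_1,\dots,s_B)$, so $N$ and the active index set $S=\{i: s_i=1\}$ are fixed. Once these are fixed, $\tilde g = B^{-1}\sum_{i\in S} x_i x_i^\top e$ is a sum of $N$ i.i.d.\ Gaussian rank-one terms applied to the fixed vector $e$. The plan is to compute the conditional moments given $(e,S)$ and then average over $N\mid N\ge 1$. Because every quantity that appears is a function $f(N)$ with $f(0)=0$, the truncation identity $\E[f(N)\mid N\ge1]=\E[f(N)]/\Pbatch$ recorded above applies directly.

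For the first moment, we use $\E[x_ix_i^\top]=I_d$, which gives $\E[\tilde g\mid e,S]=(N/B)\,e$. Averaging over $N\mid N\ge1$ turns $N/B$ into $\mathcal{B}_1=p/\Pbatch$. Independence of $e$ from the fresh batch is used here: it lets us condition on $e$ and treat it as deterministic inside the Gaussian expectation.

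For the second moment, we expand $\|\tilde g\|^2 = B^{-2}\sum_{i,j\in S} e^\top x_ix_i^\top x_jx_j^\top e$ and split it into diagonal terms ($i=j$) and off-diagonal terms ($i\ne j$). The off-diagonal terms factor by independence: $\E[e^\top x_ix_i^\top x_jx_j^\top e]=e^\top I\,I\,e=\|e\|^2$. There are $N(N-1)$ of them, which contributes $\E[N(N-1)/B^2\mid N\ge1]\,\|e\|^2=\mathcal{B}_{\mathrm{cross}}\|e\|^2$. Each diagonal term is $\E[(x^\top e)^2\|x\|^2]$. To evaluate it, rotate so that $e=\|e\|e_1$; this is valid by isotropy. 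The term then equals $\|e\|^2\,\E[x_1^2(x_1^2+\sum_{k\ge2}x_k^2)]=\|e\|^2(3+(d-1))=(d+2)\|e\|^2$, using the Gaussian fourth moment $\E x_1^4=3$. The equivalent Isserlis computation gives $\E[xx^\top xx^\top]=(d+2)I$. There are $N$ diagonal terms, contributing $(d+2)\mathcal{B}_{\mathrm{diag}}\|e\|^2$. Adding the two pieces gives $\mathcal{B}_2\|e\|^2$.

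The only real obstacle is this fourth-moment (Wick) computation for the diagonal term. The remaining concern is bookkeeping: the counts $N$ and $N(N-1)$ must be conditioned on the truncated binomial, and not on the raw binomial. With that in place, the stated identities follow exactly, with no approximation, as the remark on exact moment closure requires.
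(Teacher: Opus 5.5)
Your proposal is correct and follows the paper's proof essentially step for step. You condition on $e$ and the active set to get mean $(N/B)e$, split $\|\tilde g\|^2$ into $N$ diagonal terms with $\E[\|x\|^2\langle x,e\rangle^2]=(d+2)\|e\|^2$ and $N(N-1)$ cross terms each equal to $\|e\|^2$, and then average over the zero-truncated binomial to produce $\mathcal{B}_1$, $\mathcal{B}_{\mathrm{diag}}$, and $\mathcal{B}_{\mathrm{cross}}$. The only difference is that you justify the Gaussian fourth-moment identity explicitly by rotating to $e=\|e\|e_1$, whereas the paper simply invokes it.
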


\begin{proof}
We condition on $e$ and take expectations over the fresh Gaussian inputs in the batch and the random active set.
Let the active indices in the batch be $\mathcal{I}=\{i:s_i=1\}$ with $|\mathcal{I}|=N$ (conditioned on $N\ge 1$). The active-batch gradient is
\[
\tilde g \;=\; \frac{1}{B}\sum_{i\in \mathcal{I}} g_i,
\qquad
g_i \;:=\; x_i\langle x_i, e\rangle \;=\; x_i x_i^\top e.
\]

\paragraph{First moment.}
Since $\E[x_i x_i^\top]=I_d$, we have $\E[g_i\mid e]=e$. Conditioned on $N$ and $e$,
\[
\E[\tilde g\mid N,e]=\frac{N}{B}e.
\]
Taking expectation over $N\mid(N\ge1)$ yields $\E[\tilde g\mid N\ge1,e]=\mathcal{B}_1 e$.

\paragraph{Second moment.}
Conditioned on $N$ and $e$, we expand
\begin{equation}
\label{eq:gt2_expand}
\E\!\left[\|\tilde g\|^2 \,\middle|\, N,e\right]
=
\frac{1}{B^2}\sum_{i\in \mathcal{I}}\E\!\left[\|g_i\|^2 \,\middle|\, e\right]
+
\frac{1}{B^2}\sum_{\substack{i,j\in \mathcal{I}\\ i\neq j}}
\E\!\left[\langle g_i,g_j\rangle \,\middle|\, e\right].
\end{equation}

\paragraph{1. Diagonal terms (incoherent contribution).}
For isotropic Gaussian inputs $x\sim \mathcal N(0,I_d)$ and fixed $e$, the fourth-moment identity gives
\begin{equation}
\label{eq:diag_moment}
\E\!\left[\|g_i\|^2 \,\middle|\, e\right]
=
\E\!\left[\|x\|^2\langle x,e\rangle^2\right]
=
(d+2)\|e\|^2.
\end{equation}
Plugging into \eqref{eq:gt2_expand} yields
\begin{equation}
\label{eq:diag_sum}
\frac{1}{B^2}\sum_{i\in S}\E\!\left[\|g_i\|^2 \,\middle|\, e\right]
=
\frac{N}{B^2}(d+2)\|e\|^2.
\end{equation}
Taking expectation over $N$ conditioned on an active batch ($N\ge 1$) introduces $\mathcal{B}_{\mathrm{diag}}$:
\begin{equation}
\label{eq:diag_phi2}
\E\!\left[\frac{N}{B^2}(d+2)\|e\|^2 \,\middle|\, N\ge 1,\; e\right]
=
(d+2)\mathcal{B}_{\mathrm{diag}}\,\|e\|^2.
\end{equation}

\paragraph{2. Cross terms (coherent contribution).}
For $i\neq j$, using $g_i=x_i x_i^\top e$,
\[
\langle g_i,g_j\rangle
=
e^\top (x_i x_i^\top)(x_j x_j^\top)e.
\]
Conditioned on $e$, the inputs $x_i$ and $x_j$ are independent, hence
\begin{equation}
\label{eq:cross_moment}
\E\!\left[\langle g_i,g_j\rangle \,\middle|\, e\right]
=
e^\top \E[x_i x_i^\top]\,\E[x_j x_j^\top]\, e
=
e^\top I\,I\, e
=
\|e\|^2.
\end{equation}
Summing over the $N(N-1)$ ordered pairs and normalizing gives
\begin{equation}
\label{eq:cross_sum}
\frac{1}{B^2}\sum_{\substack{i,j\in S\\ i\neq j}}
\E\!\left[\langle g_i,g_j\rangle \,\middle|\, e\right]
=
\frac{N(N-1)}{B^2}\|e\|^2.
\end{equation}
Taking expectation over $N$ conditioned on $N\ge 1$ introduces $\mathcal{B}_{\mathrm{cross}}$:
\begin{equation}
\label{eq:cross_phi3}
\E\!\left[\frac{N(N-1)}{B^2}\|e\|^2 \,\middle|\, N\ge 1,\; e\right]
=
\mathcal{B}_{\mathrm{cross}}\,\|e\|^2.
\end{equation}

\paragraph{3. Combined result and the $\mathcal{B}_2$ factor.}
Combining \eqref{eq:diag_phi2} and \eqref{eq:cross_phi3}, we obtain the conditional second moment
\begin{equation}
\label{eq:gt2_conditional}
\E\!\left[\|\tilde g\|^2 \,\middle|\, N\ge 1,\; e\right]
=
\left[(d+2)\mathcal{B}_{\mathrm{diag}} + \mathcal{B}_{\mathrm{cross}}\right]\|e\|^2
=
\mathcal{B}_2\,\|e\|^2,
\qquad
\mathcal{B}_2 := (d+2)\mathcal{B}_{\mathrm{diag}}+\mathcal{B}_{\mathrm{cross}}.
\end{equation}
Note that $\mathcal{B}_2$ scales the conditional second moment $\E[\|\tilde g\|^2\mid N\ge 1,e]$ and not a centered variance.

Taking expectation over the randomness in the current error vector yields
\begin{equation}
\label{eq:gt2_unconditional}
\E\!\left[\|\tilde g\|^2 \,\middle|\, N\ge 1\right]
=
\mathcal{B}_2\,\E\!\left[\|e\|^2\right].
\end{equation}
\end{proof}

\subsubsection{Derivation of the Evolution of $R(t)$}
\label{sec:R_evolution}

The Kolmogorov forward equation gives
\begin{equation}
\frac{dR}{dt} \;=\; \E\!\left[\mathcal{L}\,\|\Theta_t - \theta^*\|^2\right],
\end{equation}
where the generator is defined over the joint distribution of the waiting time $K$ and the batched gradient $\tilde g$:
\begin{equation}
\mathcal{L}\,\|\Theta - \theta^*\|^2
\;=\;
\E_{K,\tilde g}\!\left[\|\Theta^+ - \theta^*\|^2 - \|\Theta - \theta^*\|^2\right].
\end{equation}

\paragraph{Step 1: Expand the Squared Error After Update}

From the parameter update rule,
\begin{equation}
\Theta^+ - \theta^*
=
\Theta - \theta^*
- \eta\,S_K\,M
- \eta(1-\beta)\,\tilde g,
\label{eq:theta_update_R}
\end{equation}
where $S_K=\beta\frac{1-\beta^K}{1-\beta}$ (Section~\ref{sec:drift_coefficients_motivation}).
Expanding the square, the increment $\Delta R$ is
\begin{align}
\Delta R
&=
-2\eta\,S_K\,\langle \Theta-\theta^*, M\rangle
-2\eta(1-\beta)\,\langle \Theta-\theta^*, \tilde g\rangle \nonumber\\
&\quad
+\left\|\eta\,S_K\,M + \eta(1-\beta)\,\tilde g\right\|^2.
\label{eq:DeltaR_expand}
\end{align}

\paragraph{Step 2: Evaluate the Cross Terms}

\paragraph{Cross term with momentum drift.}
This term depends only on $K$ and $(\Theta,M)$, not on $\tilde g$, hence
\begin{align}
\E\!\left[-2\eta\,S_K\,\langle \Theta-\theta^*, M\rangle\right]
&= -2\eta\,\E[S_K]\,C
= -2\eta\,\delta_\theta\,C.
\end{align}

\paragraph{Cross term with batched gradient.}
The gradient is evaluated at the drifted position $e_{\text{drift}}$, which satisfies
\[
e_{\text{drift}} = (\Theta-\theta^*) - \eta\,S_{K-1}\,M.
\]
Using the signal scaling factor $\mathcal{B}_1$, we have
$\E[\tilde g\mid K,\Theta,M]=\mathcal{B}_1\,e_{\text{drift}}$.
Therefore
\begin{align}
\E\!\left[-2\eta(1-\beta)\,\langle \Theta-\theta^*, \tilde g\rangle\right]
&= -2\eta(1-\beta)\mathcal{B}_1\,\E\!\left[\langle \Theta-\theta^*, e_{\text{drift}}\rangle\right] \nonumber\\
&= -2\eta(1-\beta)\mathcal{B}_1\,\E\!\left[\|e\|^2 - \eta\,S_{K-1}\,\langle e,M\rangle\right] \nonumber\\
&= -2\eta(1-\beta)\mathcal{B}_1\,R
\;+\;
2\eta^2(1-\beta)\mathcal{B}_1\,\delta_g\,C,
\label{eq:R_cross_grad}
\end{align}
where $S_{K-1}=\beta\frac{1-\beta^{K-1}}{1-\beta}$ and $\delta_g=\E[S_{K-1}]$.

\paragraph{Step 3: Evaluate the Squared Term}

Let
\begin{equation}
A \;\coloneqq\; \eta\,S_K\,M,
\qquad
B \;\coloneqq\; \eta(1-\beta)\,\tilde g.
\end{equation}
Then $\|A+B\|^2=\|A\|^2+2\langle A,B\rangle+\|B\|^2$.

\paragraph{Part 1: $\E[\|A\|^2]$ (squared drift term).}
This term does not involve $\tilde g$, hence
\begin{equation}
\E[\|A\|^2]
= \eta^2\,\E[S_K^2]\;V
= \eta^2\,\delta_\theta^{(2)}\,V,
\end{equation}
where $\delta_\theta^{(2)}=\E[S_K^2]$ is defined in Section~\ref{sec:squared_drift_coefficients}.

\paragraph{Part 2: $2\E[\langle A,B\rangle]$ (drift--gradient cross term).}
Applying conditional expectation for $\tilde g$ gives
\begin{align}
2\E[\langle A,B\rangle]
&= 2\eta^2(1-\beta)\,\E\!\left[S_K\,\langle M,\tilde g\rangle\right] \nonumber\\
&= 2\eta^2(1-\beta)\mathcal{B}_1\,\E\!\left[S_K\,\langle M,e_{\text{drift}}\rangle\right] \nonumber\\
&= 2\eta^2(1-\beta)\mathcal{B}_1\,\E\!\left[S_K\Big(C-\eta\,S_{K-1}\|M\|^2\Big)\right] \nonumber\\
&= 2\eta^2(1-\beta)\mathcal{B}_1\left(\delta_\theta\,C - \eta\,\delta_{\theta,g}\,V\right),
\label{eq:R_cross_AB}
\end{align}
where $\delta_{\theta,g}=\E[S_KS_{K-1}]$.

\paragraph{Part 3: $\E[\|B\|^2]$ (squared batched gradient term).}
Using the variance scaling factor $\mathcal{B}_2=(d+2)\mathcal{B}_{\mathrm{diag}}+\mathcal{B}_{\mathrm{cross}}$, we have
\begin{align}
\E[\|B\|^2]
&= \eta^2(1-\beta)^2\,\E[\|\tilde g\|^2]
= \eta^2(1-\beta)^2\,\mathcal{B}_2\,\E\!\left[\|e_{\text{drift}}\|^2\right].
\end{align}
Expanding the drifted error norm yields
\begin{equation}
\E\!\left[\|e_{\text{drift}}\|^2\right]
= R - 2\eta\,\delta_g\,C + \eta^2\,\delta_g^{(2)}\,V,
\end{equation}
where $\delta_g^{(2)}=\E[S_{K-1}^2]$.
Thus,
\begin{equation}
\E[\|B\|^2]
=
\eta^2(1-\beta)^2\mathcal{B}_2\left[R - 2\eta\,\delta_g\,C + \eta^2\,\delta_g^{(2)}\,V\right].
\label{eq:R_B2_final}
\end{equation}

\paragraph{Step 4: Combine All Terms for $R(t)$}

Collecting coefficients of $(R,V,C)$, we obtain the linear ODE
\begin{equation}
\frac{dR}{dt} = a_R \cdot R + a_V \cdot V + a_C \cdot C,
\end{equation}
with batched coefficients
\begin{align}
a_R
&= -2\eta(1-\beta)\mathcal{B}_1 + \eta^2(1-\beta)^2\mathcal{B}_2,\\
a_V
&= \eta^2 \delta_\theta^{(2)}
- 2\eta^3(1-\beta)\mathcal{B}_1 \delta_{\theta,g}
+ \eta^4(1-\beta)^2\mathcal{B}_2\,\delta_g^{(2)},\\
a_C
&= -2\eta \delta_\theta
+ 2\eta^2(1-\beta)\mathcal{B}_1 (\delta_g + \delta_\theta)
- 2\eta^3(1-\beta)^2\mathcal{B}_2\,\delta_g.
\end{align}

\subsubsection{Evolution of $V(t)$}
\label{sec:V_evolution}

The Kolmogorov forward equation gives
\begin{equation}
\frac{dV}{dt} \;=\; \E\!\left[\mathcal{L}\,\|M_t\|^2\right].
\end{equation}

\paragraph{Expanding Using the Update Rule}

Using the batched momentum update rule,
\begin{equation}
M^+ \;=\; \beta^K M + (1-\beta)\tilde{g},
\end{equation}
we expand
\begin{align}
\|M^+\|^2
&= \|\beta^K M + (1-\beta)\tilde{g}\|^2 \nonumber\\
&= \beta^{2K}\|M\|^2 \;+\; 2\beta^K(1-\beta)\langle M, \tilde{g}\rangle \;+\; (1-\beta)^2\|\tilde{g}\|^2.
\end{align}
Hence the increment $\Delta V$ is
\begin{equation}
\Delta V
= (\beta^{2K} - 1)\|M\|^2 \;+\; 2\beta^K(1-\beta)\langle M, \tilde{g}\rangle \;+\; (1-\beta)^2\|\tilde{g}\|^2.
\end{equation}

\paragraph{Evaluate First Term (Pure Decay)}

This term is independent of the gradient and depends only on the waiting time distribution:
\begin{align}
\E\!\left[(\beta^{2K} - 1)\|M\|^2\right]
&= \left(\E[\beta^{2K}] - 1\right)V
= (\bar{\beta}_2 - 1)\,V.
\end{align}

\paragraph{Evaluate Second Term (Momentum--Gradient Cross Term)}
We evaluate the cross term using the signal scaling factor $\mathcal{B}_1$.
By Lemma~\ref{lem:active_batch_gradient_moments} applied at $e=e_{\text{drift}}$,
$\E[\tilde{g}\mid K,\Theta,M]=\mathcal{B}_1\,e_{\text{drift}}$.
\begin{align}
\E\!\left[2\beta^K(1-\beta)\langle M, \tilde{g}\rangle\right]
&= 2(1-\beta)\mathcal{B}_1\,\E_{K}\!\left[\beta^K \langle M, e_{\text{drift}}\rangle\right].
\end{align}
Expanding the inner product with the drifted position gives
\begin{align}
\langle M, e_{\text{drift}}\rangle
&= \langle M, \Theta - \theta^*\rangle
\;-\; \eta \|M\|^2 \cdot \beta \frac{1-\beta^{K-1}}{1-\beta}.
\end{align}
Taking expectations over $K$ and using $\bar{\beta}_1=\E[\beta^K]$ and $\delta_{-1,g}=\E[\beta^K S_{K-1}]$ (Eq.~\eqref{eq:delta_m1_g_def}), we obtain
\begin{align}
\E\!\left[2\beta^K(1-\beta)\langle M, \tilde{g}\rangle\right]
&= 2(1-\beta)\mathcal{B}_1\bar{\beta}_1\,C
\;-\; 2\eta(1-\beta)\mathcal{B}_1\,\delta_{-1,g}\,V.
\end{align}

\paragraph{Evaluate Third Term (Squared Gradient)}

We use the combined variance scaling factors $\mathcal{B}_{\mathrm{diag}}$ and $\mathcal{B}_{\mathrm{cross}}$:
\begin{align}
\E\!\left[(1-\beta)^2\|\tilde{g}\|^2\right]
&= (1-\beta)^2\left[(d+2)\mathcal{B}_{\mathrm{diag}} + \mathcal{B}_{\mathrm{cross}}\right]\E\!\left[\|e_{\text{drift}}\|^2\right] \nonumber\\
&= (1-\beta)^2\left[(d+2)\mathcal{B}_{\mathrm{diag}} + \mathcal{B}_{\mathrm{cross}}\right]\left[R - 2\eta\delta_g\,C + \eta^2\delta_g^{(2)}\,V\right],
\end{align}
where $\delta_g=\E[S_{K-1}]$ and $\delta_g^{(2)}=\E[S_{K-1}^2]$ are defined in
Sections~\ref{sec:drift_coefficients}--\ref{sec:squared_drift_coefficients}.

\paragraph{Combining All Terms for $V(t)$}

Combining the three contributions yields the linear ODE
\begin{equation}
\frac{dV}{dt} = b_R \cdot R + b_V \cdot V + b_C \cdot C,
\end{equation}
with coefficients
\begin{align}
b_R
&= (1-\beta)^2\left[(d+2)\mathcal{B}_{\mathrm{diag}} + \mathcal{B}_{\mathrm{cross}}\right],\\
b_V
&= (\bar{\beta}_2 - 1)
\;-\; 2\eta(1-\beta)\mathcal{B}_1\,\delta_{-1,g}
\;+\; \eta^2(1-\beta)^2\left[(d+2)\mathcal{B}_{\mathrm{diag}} + \mathcal{B}_{\mathrm{cross}}\right]\delta_g^{(2)},\\
b_C
&= 2(1-\beta)\bar{\beta}_1\mathcal{B}_1
\;-\; 2\eta(1-\beta)^2\left[(d+2)\mathcal{B}_{\mathrm{diag}} + \mathcal{B}_{\mathrm{cross}}\right]\delta_g.
\end{align}

The batched dynamics of the momentum magnitude are thus driven by a linear combination of signal
(scaled by $\mathcal{B}_1$) and noise (scaled by $(d+2)\mathcal{B}_{\mathrm{diag}} + \mathcal{B}_{\mathrm{cross}}$).

\subsubsection{Evolution of $C(t)$}
\label{sec:C_evolution}

The Kolmogorov forward equation gives
\begin{equation}
\frac{dC}{dt} \;=\; \E\!\left[\mathcal{L}\,\langle \Theta_t - \theta^*, M_t\rangle\right].
\end{equation}

\paragraph{Expanding the Increment}

Using the batched update rules,
\begin{align}
\Theta^+ - \theta^*
&= \Theta - \theta^* - \eta M\,S_K - \eta(1-\beta)\tilde{g},
\label{eq:theta_update_for_C}\\
M^+
&= \beta^K M + (1-\beta)\tilde{g},
\label{eq:m_update_for_C}
\end{align}
where $S_K=\beta\frac{1-\beta^K}{1-\beta}$ as in Section~\ref{sec:drift_coefficients_motivation}.
Expanding the inner product $\langle \Theta^+ - \theta^*, M^+ \rangle$ yields six terms:
\begin{align}
\langle \Theta^+ - \theta^*, M^+ \rangle
&= \underbrace{\beta^K \langle \Theta-\theta^*, M\rangle}_{(1)}
+ \underbrace{(1-\beta)\langle \Theta-\theta^*, \tilde g\rangle}_{(2)}
+ \underbrace{(-\eta)\beta^K S_K\|M\|^2}_{(3)} \nonumber\\
&\quad
+ \underbrace{(-\eta)(1-\beta)S_K\langle M,\tilde g\rangle}_{(4)}
+ \underbrace{(-\eta)(1-\beta)\beta^K\langle \tilde g,M\rangle}_{(5)}
+ \underbrace{(-\eta)(1-\beta)^2\|\tilde g\|^2}_{(6)}.
\label{eq:C_six_terms}
\end{align}
We evaluate each contribution under the conditional mean
$\E[\tilde g\mid K,\Theta,M]=\mathcal{B}_1 e_{\text{drift}}$
and conditional second moment
$\E[\|\tilde g\|^2\mid K,\Theta,M]=\mathcal{B}_2\,\|e_{\text{drift}}\|^2$.

\paragraph{Evaluate First Term (Pure Decay)}

Term (1) contributes
\begin{equation}
\E\!\left[(\beta^K-1)\langle\Theta-\theta^*,M\rangle\right]
= (\bar{\beta}_1-1)\,C.
\end{equation}

\paragraph{Evaluate Second Term (Gradient--Error Cross Term)}

Term (2) is linear in $\tilde g$, so applying the signal factor $\mathcal{B}_1$ gives
\begin{align}
\E\!\left[(1-\beta)\langle\Theta-\theta^*,\tilde g\rangle\right]
&= (1-\beta)\mathcal{B}_1\,\E\!\left[\langle \Theta-\theta^*, e_{\text{drift}}\rangle\right] \nonumber\\
&= (1-\beta)\mathcal{B}_1\,\E\!\left[\|e\|^2 - \eta\,S_{K-1}\langle e,M\rangle\right] \nonumber\\
&= (1-\beta)\mathcal{B}_1\,(R - \eta\,\delta_g\,C),
\end{align}
where $S_{K-1}=\beta\frac{1-\beta^{K-1}}{1-\beta}$ and $\delta_g=\E[S_{K-1}]$.

\paragraph{Evaluate Third Term (Momentum--Drift Cross Term)}

Term (3) is independent of $\tilde g$ and contributes
\begin{align}
\E\!\left[-\eta\,\beta^K S_K\|M\|^2\right]
&= -\eta\,\delta_{-1,\theta}\,V,
\end{align}
where $\delta_{-1,\theta}=\E[\beta^K S_K]$.

\paragraph{Evaluate Fourth Term (Drift--Gradient Cross Term)}

Term (4) is linear in $\tilde g$. Applying $\mathcal{B}_1$ and using
$\langle M,e_{\text{drift}}\rangle = C - \eta\,S_{K-1}\|M\|^2$ yields
\begin{align}
\E\!\left[-\eta(1-\beta)S_K\langle M,\tilde g\rangle\right]
&= -\eta(1-\beta)\mathcal{B}_1\,\E\!\left[S_K\langle M,e_{\text{drift}}\rangle\right] \nonumber\\
&= -\eta(1-\beta)\mathcal{B}_1\,\E\!\left[S_K\Big(C - \eta\,S_{K-1}\|M\|^2\Big)\right] \nonumber\\
&= -\eta(1-\beta)\mathcal{B}_1\left(\delta_\theta\,C - \eta\,\delta_{\theta,g}\,V\right),
\end{align}
where $\delta_\theta=\E[S_K]$ and $\delta_{\theta,g}=\E[S_KS_{K-1}]$.

\paragraph{Evaluate Fifth Term (Gradient--Momentum Cross Term)}

Term (5) is also linear in $\tilde g$. Applying $\mathcal{B}_1$ gives
\begin{align}
\E\!\left[-\eta(1-\beta)\beta^K\langle \tilde g,M\rangle\right]
&= -\eta(1-\beta)\mathcal{B}_1\,\E\!\left[\beta^K\langle M,e_{\text{drift}}\rangle\right] \nonumber\\
&= -\eta(1-\beta)\mathcal{B}_1\,\E\!\left[\beta^K\Big(C - \eta\,S_{K-1}\|M\|^2\Big)\right] \nonumber\\
&= -\eta(1-\beta)\mathcal{B}_1\left(\bar{\beta}_1\,C - \eta\,\delta_{-1,g}\,V\right),
\end{align}
where $\delta_{-1,g}=\E[\beta^K S_{K-1}]$.

\paragraph{Evaluate Sixth Term (Squared Gradient)}

Term (6) is quadratic in $\tilde g$. Using the variance factor $\mathcal{B}_2$ yields
\begin{align}
\E\!\left[-\eta(1-\beta)^2\|\tilde g\|^2\right]
&= -\eta(1-\beta)^2\mathcal{B}_2\,\E\!\left[\|e_{\text{drift}}\|^2\right] \nonumber\\
&= -\eta(1-\beta)^2\mathcal{B}_2\left[R - 2\eta\,\delta_g\,C + \eta^2\,\delta_g^{(2)}\,V\right],
\end{align}
where $\delta_g^{(2)}=\E[S_{K-1}^2]$ is defined in Section~\ref{sec:squared_drift_coefficients}.

\paragraph{Combining All Terms for $C(t)$}

Combining the six contributions yields the linear ODE
\begin{equation}
\frac{dC}{dt} = c_R \cdot R + c_V \cdot V + c_C \cdot C.
\end{equation}
Collecting coefficients of $(R,V,C)$ gives
\begin{align}
c_R
&= (1-\beta)\mathcal{B}_1 \;-\; \eta(1-\beta)^2\mathcal{B}_2,\\
c_V
&= -\eta\,\delta_{-1,\theta}
\;+\; \eta^2(1-\beta)\mathcal{B}_1\big(\delta_{\theta,g}+\delta_{-1,g}\big)
\;-\; \eta^3(1-\beta)^2\mathcal{B}_2\,\delta_g^{(2)},\\
c_C
&= (\bar{\beta}_1 - 1)
\;-\; \eta(1-\beta)\mathcal{B}_1\big(\delta_g + \delta_\theta + \bar{\beta}_1\big)
\;+\; 2\eta^2(1-\beta)^2\mathcal{B}_2\,\delta_g.
\end{align}

This completes the $C(t)$ equation. The structure matches the single-sample case, with $\mathcal{B}_1$ modulating
signal terms and $\mathcal{B}_2$ modulating the total gradient variance.

\subsubsection{Complete Batched System of ODEs}
\label{sec:full_odes}

The evolution of the state variables $(R(t), V(t), C(t))$ is governed by the linear system:
\begin{equation}
\frac{d}{dt}\begin{pmatrix} R(t) \\ V(t) \\ C(t) \end{pmatrix}
= \mathbf{A}_{\text{batch}}
\begin{pmatrix} R(t) \\ V(t) \\ C(t) \end{pmatrix},
\end{equation}
where $\mathcal{B}_1$ scales signal terms and $\mathcal{B}_2 \coloneqq (d+2)\mathcal{B}_{\mathrm{diag}} + \mathcal{B}_{\mathrm{cross}}$ scales the total gradient variance.

\paragraph{Diagonal Elements}
\begin{align}
a_R &= -2\eta(1-\beta)\mathcal{B}_1 + \eta^2(1-\beta)^2\mathcal{B}_2,\\
b_V &= (\bar{\beta}_2 - 1) - 2\eta(1-\beta)\mathcal{B}_1\delta_{-1,g}
      + \eta^2(1-\beta)^2\mathcal{B}_2\,\delta_g^{(2)},\\
c_C &= (\bar{\beta}_1 - 1) - \eta(1-\beta)\mathcal{B}_1(\delta_g + \delta_\theta + \bar{\beta}_1)
      + 2\eta^2(1-\beta)^2\mathcal{B}_2\,\delta_g.
\end{align}

\paragraph{Off-Diagonal Elements}
\begin{align}
a_V &= \eta^2\delta_\theta^{(2)} - 2\eta^3(1-\beta)\mathcal{B}_1\delta_{\theta,g}
      + \eta^4(1-\beta)^2\mathcal{B}_2\,\delta_g^{(2)},\\
a_C &= -2\eta\delta_\theta + 2\eta^2(1-\beta)\mathcal{B}_1(\delta_g + \delta_\theta)
      - 2\eta^3(1-\beta)^2\mathcal{B}_2\,\delta_g,\\
b_R &= (1-\beta)^2\mathcal{B}_2,\\
b_C &= 2(1-\beta)\bar{\beta}_1\mathcal{B}_1 - 2\eta(1-\beta)^2\mathcal{B}_2\,\delta_g,\\
c_R &= (1-\beta)\mathcal{B}_1 - \eta(1-\beta)^2\mathcal{B}_2,\\
c_V &= -\eta\delta_{-1,\theta}
      + \eta^2(1-\beta)\mathcal{B}_1(\delta_{\theta,g} + \delta_{-1,g})
      - \eta^3(1-\beta)^2\mathcal{B}_2\,\delta_g^{(2)}.
\end{align}

\subsection{Co-Scaling Ansatz and Asymptotic Framework}
\label{sec:coscaling}\label{app:coscaling}

This appendix instantiates the co-scaling ansatz of Appendix~\ref{sec:common_coscaling} for the least-squares model and derives the asymptotic scalings of all problem-specific quantities (the activation probability $\Pbatch$, retention factors $\bar\beta_i$, drift coefficients $\delta_\theta$, batch noise factors $\mathcal{B}_1, \mathcal{B}_2$, etc.) that appear in the moment-ODE matrix entries of Section~\ref{sec:full_odes}. We use the polynomial-resolution notation $\asymp$ and $o_{\mathrm{poly}}$ from Appendix~\ref{sec:common_asymptotic_notation} throughout, and refer to the constant-level co-scaling \eqref{eq:common_coscaling_constants} for sharper $O(1)$ statements.

\subsubsection{Gradient Noise Regimes}
\label{sec:gradient_noise_regimes}

The structure of the stability analysis depends critically on the scaling of the gradient second moment. Recall from Section~\ref{sec:ode_derivation} that the conditional second moment is governed by the total variance factor $\mathcal{B}_2$:
\begin{equation}
    \E[\|\tilde{g}\|^2 \mid N \geq 1] = \mathcal{B}_2 \cdot \|e_{\mathrm{drift}}\|^2, \qquad \mathcal{B}_2 := (d+2)\mathcal{B}_{\mathrm{diag}} + \mathcal{B}_{\mathrm{cross}}.
\end{equation}
Here, $(d+2)\mathcal{B}_{\mathrm{diag}}$ represents \textbf{incoherent noise} arising from the variance of individual tokens (diagonal terms), while $\mathcal{B}_{\mathrm{cross}}$ represents \textbf{coherent noise} arising from the interference between different active tokens in the same batch (cross terms).

The asymptotic behavior of $\mathcal{B}_2$ is determined by the ratio of these contributions:
\begin{equation}
    \frac{\mathcal{B}_{\mathrm{cross}}}{(d+2)\mathcal{B}_{\mathrm{diag}}} = \frac{p(B-1)}{d+2} \asymp d^{\sigma - \kappa - 1}.
\end{equation}
This ratio defines two distinct noise regimes:

\paragraph{Incoherent Noise Regime ($\sigma < \kappa + 1$).}
The diagonal term dominates: $\mathcal{B}_2 \approx (d+2)\mathcal{B}_{\mathrm{diag}}$. Physically, active minibatches contain few distinct tokens relative to the dimension ($pB \ll d$), so the gradient noise is incoherent and scales with dimension. This covers standard settings, including constant batch size ($\sigma=0$) and linear batch scaling ($\sigma=1$) with sparse tokens.

\paragraph{Coherent Noise Regime ($\sigma \geq \kappa + 1$).}
The cross term dominates: $\mathcal{B}_2 \approx \mathcal{B}_{\mathrm{cross}}$. Physically, minibatches are sufficiently large or dense ($pB \gtrsim d$) that the gradient variance is dominated by the constructive interference of multiple active tokens. This regime may be relevant for massive-batch pretraining in large-scale distributed training settings but results in stricter stability constraints.

\begin{remark}[The Boundary]
The transition occurs at $\sigma = \kappa + 1$. While the contributions are comparable at the boundary, the transition is sharp at polynomial resolution.
\end{remark}

\subsubsection{Asymptotic Scalings of All Quantities}
\label{sec:asymptotic_scalings}

This section derives the asymptotic behavior of all quantities defined in Sections~\ref{sec:batched_process_definitions} and~\ref{sec:ode_derivation} under the co-scaling ansatz of Section~\ref{sec:coscaling}. It serves as a reference for the stability and timescale analysis in subsequent sections.

\paragraph{Effective Sparsity}

The dynamics depend not on the raw token probability $p$, but on the probability that at least one token appears in a batch. We define the \textbf{effective sparsity exponent}:
\begin{equation}
    \keff := \max(0, \kappa - \sigma),
\end{equation}
which yields $\Pbatch \asymp d^{-\keff}$.

\begin{itemize}
    \item \textbf{Sparse regime ($\kappa > \sigma$):} The expected tokens per batch $pB \asymp d^{\sigma - \kappa} \to 0$. Using $1 - (1-p)^B \approx pB$ for small $pB$, we obtain $\Pbatch \asymp d^{-(\kappa - \sigma)}$, so $\keff = \kappa - \sigma > 0$.

    \item \textbf{Reliably active regime ($\kappa \leq \sigma$):} The expected tokens per batch satisfies $pB \asymp d^{\sigma-\kappa}=\Omega(1)$. If $\kappa<\sigma$ then $pB\to\infty$ and $\Qbatch = o_{\mathrm{poly}}(1)$ (Lemma~\ref{lem:dense_suppression}), so $\Pbatch = 1-o_{\mathrm{poly}}(1)\asymp d^0$. If $\kappa=\sigma$ then $pB=\Theta(1)$, so $\Pbatch=\Theta(1)$ and $\Qbatch=\Theta(1)$. In both cases $\keff=0$ and $\Pbatch\asymp d^0$ at polynomial resolution.

\end{itemize}

The exponent $\keff$ governs the arrival rate of active updates: $\keff = 0$ means updates occur at nearly every step, while $\keff > 0$ means updates are rare events with expected waiting time $\asymp d^{\keff}$.

\paragraph{Signal and Second-Moment Scaling Factors}

The scaling factors $\mathcal{B}_1$, $\mathcal{B}_{\mathrm{diag}}$, and $\mathcal{B}_{\mathrm{cross}}$ (defined in Section~\ref{sec:ode_derivation}) govern the effective strength of the gradient signal and noise. Their leading-order scalings split along $\kappa = \sigma$:

\paragraph{Sparse regime ($\kappa > \sigma$).} Here $\Pbatch \asymp pB$, so
\begin{equation}
    \mathcal{B}_1 \asymp d^{-\sigma}, \qquad
    \mathcal{B}_{\mathrm{diag}} \asymp d^{-2\sigma}, \qquad
    \mathcal{B}_{\mathrm{cross}} \asymp d^{-\kappa - \sigma}.
\end{equation}

\paragraph{Reliably active regime ($\kappa \leq \sigma$).} Here $\Pbatch \asymp 1$, so
\begin{equation}
    \mathcal{B}_1 \asymp d^{-\kappa}, \qquad
    \mathcal{B}_{\mathrm{diag}} \asymp d^{-\kappa - \sigma}, \qquad
    \mathcal{B}_{\mathrm{cross}} \asymp d^{-2\kappa}.
\end{equation}

\paragraph{Total variance factor.}
The total variance contribution to the ODEs is $\mathcal{B}_2 := (d+2)\mathcal{B}_{\mathrm{diag}} + \mathcal{B}_{\mathrm{cross}}$. Its dominant term follows the incoherent/coherent split of Section~\ref{sec:gradient_noise_regimes}: in the incoherent regime ($\sigma < \kappa+1$) the diagonal contribution wins, giving $\mathcal{B}_2 \asymp d^{1-\sigma-\min(\kappa,\sigma)}$; in the coherent regime ($\sigma \ge \kappa+1$) the cross term wins, giving $\mathcal{B}_2 \asymp d^{-2\kappa}$. The per-region leading monomials of $\mathcal{B}_2$ are collected in Table~\ref{tab:primitives_by_region} of Section~\ref{sec:phase_regions}.

The ratio $\mathcal{B}_2/\mathcal{B}_1$ appears frequently in the regime reductions:
\begin{equation}
\frac{\mathcal{B}_2}{\mathcal{B}_1}
=\frac{(d+2)p+p^2(B-1)}{Bp}
=\frac{d+2}{B}+\frac{p(B-1)}{B}.
\label{eq:def_Bratio}
\end{equation}

\paragraph{Effective Momentum Retention}

The momentum buffer decays by $\beta$ at each discrete step, but updates only occur at active batches. The physically relevant quantities are the retention gaps $1 - \bar{\beta}_1$ and $1 - \bar{\beta}_2$ (closed forms in Section~\ref{sec:retention_factors}).

We define the \textbf{retention scale}
\begin{equation}
\label{eq:rho_def_appendix}
    \rho \;:=\; \frac{\varepsilon}{\Pbatch + \varepsilon},
\end{equation}
which interpolates between the update-limited regime ($\rho \approx \varepsilon/\Pbatch$ when $\varepsilon \ll \Pbatch$) and the decay-limited regime ($\rho \approx 1$ when $\varepsilon \gg \Pbatch$). We also define the \textbf{correlation timescale exponent}
\begin{equation}
    \nu := \max(0, \gamma - \keff),
\end{equation}
which yields $\rho \asymp d^{-\nu}$.

The asymptotic behavior depends on the characteristic ratio $\varepsilon / \Pbatch \asymp d^{\keff - \gamma}$:

\begin{itemize}
    \item \textbf{Persistent memory ($\gamma > \keff$):} Decay is slower than update rate, so $\varepsilon \ll \Pbatch$ and
    \begin{equation}
        \rho \approx \frac{\varepsilon}{\Pbatch} \asymp d^{\keff - \gamma}, \qquad \nu = \gamma - \keff > 0.
    \end{equation}
    The momentum buffer retains most of its magnitude across active updates.

    \item \textbf{Transient memory ($\gamma < \keff$):} Decay is faster than update rate, so $\varepsilon \gg \Pbatch$ and
    \begin{equation}
        \rho \approx 1, \qquad \nu = 0.
    \end{equation}
    The momentum buffer effectively resets between active updates.

    \item \textbf{Balanced ($\gamma = \keff$):} Both terms contribute, giving $\rho = \Theta(1)$ and $\nu = 0$.
\end{itemize}

The retention gaps $1 - \bar{\beta}_1$ and $1 - \bar{\beta}_2$ track $\rho$ to leading order:

\begin{lemma}[Retention gap ordering]
\label{lem:retention_gap_ordering}
Under the co-scaling ansatz of Section~\ref{sec:coscaling}, the retention gaps and the retention scale $\rho$ from \eqref{eq:rho_def_appendix} satisfy
\begin{equation}
    1 - \bar{\beta}_1 \;\asymp\; 1 - \bar{\beta}_2 \;\asymp\; \rho \;\asymp\; d^{-\nu}.
\end{equation}
\end{lemma}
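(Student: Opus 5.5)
We work directly from the closed forms in \eqref{eq:barbeta_defs}. Writing $\beta = 1-\eps$ and $\Qbatch = 1-\Pbatch$, we get
\[
1-\bar\beta_1 = \frac{1-\Qbatch\beta-\Pbatch\beta}{1-\Qbatch\beta} = \frac{1-\beta}{1-\Qbatch\beta} = \frac{\eps}{\Pbatch + \eps - \Pbatch\eps},
\]
because $1-\Qbatch\beta = 1-(1-\Pbatch)(1-\eps) = \Pbatch+\eps-\Pbatch\eps$. In the same way,
\[
1-\bar\beta_2 = \frac{1-\beta^2}{1-\Qbatch\beta^2} = \frac{\eps(2-\eps)}{\Pbatch + \eps(2-\eps)(1-\Pbatch)}.
\]
These are exact identities. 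The rest of the argument is a two-sided comparison of each with $\rho=\eps/(\Pbatch+\eps)$ up to absolute constants, which is stronger than the polynomial-resolution statement $\asymp$ we need.

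\textbf{Comparing $1-\bar\beta_1$ with $\rho$.} Since $\Pbatch,\eps\in(0,1]$, the denominator satisfies $\Pbatch+\eps-\Pbatch\eps \le \Pbatch+\eps$, which gives $1-\bar\beta_1\ge \rho$. In the other direction, $\Pbatch\eps\le \min(\Pbatch,\eps)\le \tfrac12(\Pbatch+\eps)$, so the denominator is at least $\tfrac12(\Pbatch+\eps)$ and $1-\bar\beta_1\le 2\rho$.

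\textbf{Comparing $1-\bar\beta_2$ with $\rho$.} The numerator $\eps(2-\eps)$ lies in $[\eps,2\eps]$. For the denominator, set $x=\eps(2-\eps)\in[\eps,2\eps]$; it equals $\Pbatch + x(1-\Pbatch)$. This is at most $\Pbatch + 2\eps$, and it is at least $\max(\Pbatch, x(1-\Pbatch))$. To bound it below by a constant multiple of $\Pbatch+\eps$, split into two cases. If $\Pbatch\ge 1/2$, then already $\Pbatch\ge \tfrac12\cdot\tfrac{\Pbatch+\eps}{2}$ since $\eps\le 1$. If $\Pbatch<1/2$, then $x(1-\Pbatch)\ge \eps/2$, so the denominator is at least $\max(\Pbatch,\eps/2)\ge (\Pbatch+\eps)/4$. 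Combining, $\tfrac12\rho \le 1-\bar\beta_2\le 8\rho$.

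\textbf{The scaling $\rho\asymp d^{-\nu}$.} This follows from $\Pbatch\asymp d^{-\keff}$, which was established in the effective-sparsity discussion above, together with $\eps\asymp d^{-\gamma}$. From these, $\Pbatch+\eps\asymp d^{-\min(\keff,\gamma)}$, and therefore
\[
\rho\asymp d^{-\gamma+\min(\keff,\gamma)} = d^{-(\gamma-\keff)_+} = d^{-\nu}.
\]
The only point needing care is the boundary $\kappa=\sigma$, where $\Pbatch=\Theta(1)$ rather than tending to $1$. This causes no difficulty, because the bounds above were derived with constants that are uniform in $\Pbatch$ and $\eps$.

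The main obstacle is simply making sure the denominator bounds for $\bar\beta_2$ hold uniformly across the regimes $\eps\ll\Pbatch$, $\eps\gg\Pbatch$ and $\eps\asymp\Pbatch$. The case split on $\Pbatch\ge 1/2$ versus $\Pbatch<1/2$ handles all of them at once.
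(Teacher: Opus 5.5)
Your proof is correct and follows the same route as the paper: rewrite $1-\bar\beta_1$ and $1-\bar\beta_2$ via the closed forms, compare the denominators $1-\Qbatch\beta$ and $1-\Qbatch\beta^2$ with $\Pbatch+\eps$, and read off $\rho\asymp d^{-\nu}$ from $\Pbatch\asymp d^{-\keff}$ and $\eps\asymp d^{-\gamma}$. Your explicit constants, uniform in $(\Pbatch,\eps)$, are slightly sharper and avoid the paper's step $\beta^2\Pbatch\asymp\Pbatch$, which needs $\beta$ bounded away from $0$; also, your case split for $\bar\beta_2$ is unnecessary, since $1-\Qbatch\beta^2\ge 1-\Qbatch\beta\ge\tfrac12(\Pbatch+\eps)$ gives the upper bound directly.
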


\begin{proof}
From the closed forms in Section~\ref{sec:retention_factors},
\[
1 - \bar{\beta}_1 = \frac{\varepsilon}{1 - \Qbatch \beta},
\qquad
1 - \bar{\beta}_2 = \frac{1 - \beta^2}{1 - \Qbatch \beta^2} = \frac{\varepsilon(1+\beta)}{1 - \Qbatch \beta^2}.
\]
Since $\beta \in [0,1)$, the factor $1+\beta$ is bounded in $[1,2]$. The denominators satisfy
\[
1 - \Qbatch \beta = \Pbatch + \Qbatch\,\varepsilon \;\asymp\; \Pbatch + \varepsilon,
\qquad
1 - \Qbatch \beta^2 = (1-\beta^2) + \beta^2 \Pbatch \;\asymp\; \Pbatch + \varepsilon,
\]
where the second equivalence uses $1 - \beta^2 = \varepsilon(1+\beta) \asymp \varepsilon$ and $\beta^2 \Pbatch \asymp \Pbatch$. Therefore
\[
1 - \bar{\beta}_1 \;\asymp\; 1 - \bar{\beta}_2 \;\asymp\; \frac{\varepsilon}{\Pbatch + \varepsilon} = \rho.
\]
The polynomial scaling $\rho \asymp d^{-\nu}$ then follows from the case analysis above.
\end{proof}

\paragraph{Drift Coefficients}
\label{sec:asymptotic_drift_coefficients}

The drift coefficients $\delta_\theta$, $\delta_g$, and $\delta_{-1,g}$ (defined in
Sections~\ref{sec:drift_coefficients} and \ref{sec:product_terms}) quantify expected parameter displacement during waiting periods.

The first-moment drift coefficient $\delta_\theta$ scales as
\begin{equation}
    \delta_\theta \;\asymp\; d^{\min(\gamma,\,\keff)}.
\end{equation}
Moreover, $\delta_g = \Qbatch\,\delta_\theta$, so in polynomially sparse batching regimes ($\kappa>\sigma$ and hence $\Qbatch=1-\Theta(\Pbatch)\asymp 1$) we also have
$\delta_g \asymp \delta_\theta$, while in reliably-active-via-batching regimes ($\kappa<\sigma$) we have $\Qbatch=o_{\mathrm{poly}}(1)$ and therefore $\delta_g=o_{\mathrm{poly}}(1)$ (even though $\delta_\theta=\Theta(1)$).

\begin{itemize}
    \item \textbf{Sparsity-limited ($\gamma > \keff$):} Here $\varepsilon \ll \Pbatch$, so
    \begin{equation}
        \delta_\theta \asymp \frac{1}{\Pbatch} \asymp d^{\keff}.
    \end{equation}
    The effective memory length is limited by update frequency.

    \item \textbf{Decay-limited ($\gamma < \keff$):} Here $\varepsilon \gg \Pbatch$, so
    \begin{equation}
        \delta_\theta \asymp \frac{1}{\varepsilon} \asymp d^{\gamma}.
    \end{equation}
    The effective memory length is limited by the forgetting factor.
\end{itemize}

\paragraph{Higher-order drift terms.}
Using the unambiguous second-moment notation from Section~\ref{sec:squared_drift_coefficients},
in the polynomially sparse regime ($\kappa>\sigma$) the higher-order drift coefficients all scale as
\begin{equation}
    \delta_\theta^{(2)},\; \delta_{\theta,g},\; \delta_g^{(2)} \;\asymp\; \delta_\theta^{2}.
\end{equation}
(The literal squares $(\delta_\theta)^2$ and $(\delta_g)^2$ are distinct constants from $\delta_\theta^{(2)}$ and $\delta_g^{(2)}$; the second-moment notation is recapped in Section~\ref{sec:squared_drift_coefficients}.)

\begin{remark}[Drift--retention decomposition]
The drift and retention scalings together account for the full momentum exponent:
\begin{equation}
    \min(\gamma,\keff) + \nu = \gamma,
\end{equation}
reflecting the decomposition of momentum persistence into drift accumulation ($\delta_\theta \asymp d^{\min(\gamma,\keff)}$) and correlation memory ($\rho \asymp d^{-\nu}$).
\end{remark}

\subsubsection{Change of Variables: $(R,V,C)\to(R,W,Z)$}
\label{sec:change_of_variables}

To compare the dynamics uniformly across sparsity and variance regimes (including the \emph{coherent}
regime where $\mathcal{B}_{\mathrm{cross}}$ contributes at leading order), we introduce scaled state variables that factor
out the dominant forcing and retention scales.

Throughout we write
\[
\varepsilon:=1-\beta,\qquad \Qbatch:=1-\Pbatch,
\]
and we collect the total (incoherent $+$ coherent) gradient second-moment scale into
\begin{equation}
\label{eq:chi_def_change_vars}
\mathcal{B}_2 \;:=\; (d+2)\mathcal{B}_{\mathrm{diag}}+\mathcal{B}_{\mathrm{cross}}.
\end{equation}
This is the unique combination that appears in the raw forcing term for the momentum second moment,
so it is the natural normalization that remains valid whether $\mathcal{B}_{\mathrm{cross}}$ is negligible or not.

\paragraph{Motivation and scaling choices}

The raw variables $(R,V,C)$ evolve on different natural scales:
\begin{itemize}
    \item $R$ (risk / squared error) is $O(1)$ at initialization and decays toward $0$.
    \item $V$ (momentum energy, i.e.\ a second moment of the momentum buffer) is forced by gradient
    noise at rate $b_R=\varepsilon^2\mathcal{B}_2$ in the raw system.
    \item $C$ (correlation) is refreshed only on non-empty batches and is also damped by momentum
    retention during runs of empty batches; consequently, its effective update/relaxation scale is
    controlled jointly by $\Pbatch$ and $\varepsilon$.
\end{itemize}

We therefore introduce scale factors $\Lambda_W,\Lambda_Z$ so that:
(i) the risk-to-momentum-energy forcing is normalized to an $O(1)$ constant \emph{exactly}, and
(ii) the remaining coefficients separate cleanly into diagonal retention terms and rescaled couplings.

\paragraph{Momentum-energy scale.}
We normalize by the forcing scale in the $V$-equation:
\begin{equation}
\label{eq:SW_def}
\Lambda_W \;:=\; \varepsilon^2\mathcal{B}_2 \;=\; (1-\beta)^2\big[(d+2)\mathcal{B}_{\mathrm{diag}}+\mathcal{B}_{\mathrm{cross}}\big].
\end{equation}
With $W:=V/\Lambda_W$, the forcing from $R$ into $W$ becomes
\begin{equation}
\label{eq:bR_norm_exact}
\tilde b_R \;=\; \frac{b_R}{\Lambda_W}\;=\;1
\end{equation}
\emph{exactly} (since $b_R=\varepsilon^2\mathcal{B}_2$ in the raw system). Using $\mathcal{B}_2$ treats the
diagonal (``incoherent'') contribution $(d+2)\mathcal{B}_{\mathrm{diag}}$ and the coherent contribution $\mathcal{B}_{\mathrm{cross}}$ on equal
footing, so no regime-dependent renormalization is needed.

\paragraph{Correlation scale.}
We normalize correlation by the combined ``refresh $+$ decay'' scale
\begin{equation}
\label{eq:SZ_def}
\Lambda_Z \;:=\; \Pbatch+\varepsilon.
\end{equation}
This choice captures the two mechanisms that control correlation:
\begin{itemize}
    \item \textbf{Update-limited side} ($\Pbatch\gg\varepsilon$): $\Lambda_Z\approx \Pbatch$, correlation is
    limited primarily by how often non-empty batches occur.
    \item \textbf{Retention-limited side} ($\Pbatch\ll\varepsilon$): $\Lambda_Z\approx \varepsilon$, correlation
    is limited primarily by momentum decay during long empty stretches.
    \item \textbf{Crossover} ($\Pbatch\asymp\varepsilon$): both effects are comparable.
\end{itemize}
Note that the exact drift denominators appearing in the batched-process coefficients are governed by
$1-\Qbatch\beta=\Pbatch+\Qbatch\varepsilon$, which satisfies $1-\Qbatch\beta\asymp \Pbatch+\varepsilon$ uniformly;
we use $\Lambda_Z=\Pbatch+\varepsilon$ as a clean constant-factor proxy.

\paragraph{Scaled variables and ordering}

We define the normalized variables
\begin{equation}
\label{eq:WZ_def}
W \;:=\; \frac{V}{\Lambda_W}
\;=\;
\frac{V}{(1-\beta)^2\big[(d+2)\mathcal{B}_{\mathrm{diag}}+\mathcal{B}_{\mathrm{cross}}\big]},
\qquad
Z \;:=\; \frac{C}{\Lambda_Z}
\;=\;
\frac{C}{\Pbatch+(1-\beta)}.
\end{equation}

For the timescale and regime analysis below we order the scaled state as
\begin{equation}
\label{eq:x_ordering_change_vars}
x := (R,W,Z)^\top,
\end{equation}
matching the $(R,V,C)$ ordering of the raw ODE in Section~\ref{sec:full_odes}.

\paragraph{Transformed linear system}

Start from the raw second-moment ODE in $(R,V,C)$ coordinates:
\begin{align}
\dot R &= a_R R + a_V V + a_C C,\\
\dot V &= b_R R + b_V V + b_C C,\\
\dot C &= c_R R + c_V V + c_C C,
\end{align}
with coefficients given in Section~\ref{sec:full_odes}. Substituting $V=\Lambda_W W$ and $C=\Lambda_Z Z$ yields
\begin{equation}
\label{eq:scaled_linear_system_change_vars}
\frac{d}{dt}\begin{pmatrix}R\\ W\\ Z\end{pmatrix}
=
\tilde A
\begin{pmatrix}R\\ W\\ Z\end{pmatrix},
\qquad
\tilde A
=
\begin{pmatrix}
\tilde a_R & \tilde a_W & \tilde a_Z\\
\tilde b_R & \tilde b_W & \tilde b_Z\\
\tilde c_R & \tilde c_W & \tilde c_Z
\end{pmatrix}.
\end{equation}
The naming convention is that $\tilde a_X$ is the coefficient of $X$ in $\dot R$, $\tilde b_X$ in $\dot W$, and $\tilde c_X$ in $\dot Z$. Concretely, if $y=(R,V,C)^\top$ and
$D=\mathrm{diag}(1,\Lambda_W,\Lambda_Z)$ so that $y=Dx$, then $\tilde A = D^{-1} A_{(R,V,C)} D$, where $A_{(R,V,C)}$ is the matrix from Section~\ref{sec:full_odes}. Equivalently, the entrywise rule is
\[
\tilde a_{ij} \;=\; a_{ij}\cdot\frac{\Lambda_j}{\Lambda_i},
\qquad
(\Lambda_R,\Lambda_W,\Lambda_Z)=(1,\Lambda_W,\Lambda_Z).
\]

In particular, the matrix entries in \eqref{eq:scaled_linear_system_change_vars} are
\begin{equation}
\label{eq:scaled_matrix_from_raw_coeffs}
\tilde A
=
\begin{pmatrix}
a_R & a_V\,\Lambda_W & a_C\,\Lambda_Z\\[10pt]
\dfrac{b_R}{\Lambda_W} & b_V & b_C\,\dfrac{\Lambda_Z}{\Lambda_W}\\[10pt]
\dfrac{c_R}{\Lambda_Z} & c_V\,\dfrac{\Lambda_W}{\Lambda_Z} & c_C
\end{pmatrix},
\qquad\text{and by \eqref{eq:SW_def} we have }\quad
\tilde b_R=\frac{b_R}{\Lambda_W}=1.
\end{equation}

\paragraph{Scaled matrix entries in closed form}
\label{sec:scaled-3d}

Writing the entries of $\tilde A$ out explicitly (rows for $\dot R, \dot W, \dot Z$, columns for $R, W, Z$) gives:
\begin{align}
\tilde a_R
&= -2\eta\varepsilon\mathcal{B}_1 + \eta^2\varepsilon^2\mathcal{B}_2,\\
\tilde a_W
&= \left(\eta^2\delta_\theta^{(2)}
-2\eta^3\varepsilon\mathcal{B}_1\,\delta_{\theta,g}
+\eta^4\varepsilon^2\mathcal{B}_2\,\delta_g^{(2)}\right)\varepsilon^2\mathcal{B}_2,\\
\tilde a_Z
&= \left(-2\eta\delta_\theta
+2\eta^2\varepsilon\mathcal{B}_1(\delta_g+\delta_\theta)
-2\eta^3\varepsilon^2\mathcal{B}_2\,\delta_g\right)(\Pbatch+\varepsilon),\\[8pt]
\tilde b_R
&= 1,\\
\tilde b_W
&= (\bar\beta_2-1)\;-\;2\eta\varepsilon\mathcal{B}_1\,\delta_{-1,g}
\;+\;\eta^2\varepsilon^2\mathcal{B}_2\,\delta_g^{(2)},\\
\tilde b_Z
&= \left(2\varepsilon\bar\beta_1\mathcal{B}_1 - 2\eta\varepsilon^2\mathcal{B}_2\,\delta_g\right)\frac{\Pbatch+\varepsilon}{\varepsilon^2\mathcal{B}_2}
= \frac{2\bar\beta_1\mathcal{B}_1\,(\Pbatch+\varepsilon)}{\varepsilon\,\mathcal{B}_2} - 2\eta\,\delta_g\,(\Pbatch+\varepsilon),\\[8pt]
\tilde c_R
&= \frac{c_R}{\Pbatch+\varepsilon}
= \frac{\varepsilon\mathcal{B}_1 - \eta\varepsilon^2\mathcal{B}_2}{\Pbatch+\varepsilon},\\
\tilde c_W
&= \left(-\eta\,\delta_{-1,\theta}
+\eta^2\varepsilon\mathcal{B}_1(\delta_{\theta,g}+\delta_{-1,g})
-\eta^3\varepsilon^2\mathcal{B}_2\,\delta_g^{(2)}\right)\frac{\varepsilon^2\mathcal{B}_2}{\Pbatch+\varepsilon},\\
\tilde c_Z
&= (\bar\beta_1-1)\;-\;\eta\varepsilon\mathcal{B}_1(\delta_g+\delta_\theta+\bar\beta_1)\;+\;2\eta^2\varepsilon^2\mathcal{B}_2\,\delta_g.
\end{align}
By construction, the forcing entry $\tilde b_R=1$ is exact in all regimes.

\paragraph{Key structural features (used later).}
\begin{itemize}
    \item \textbf{Exact forcing normalization:} $\tilde b_R=1$ ensures the risk-to-momentum-energy
    forcing is normalized uniformly across incoherent and coherent variance regimes.
    \item \textbf{Diagonal retention terms are unchanged:} $\tilde b_W=b_V$ and $\tilde c_Z=c_C$,
    so their magnitudes directly encode the intrinsic retention/relaxation scales of the $W$- and $Z$-modes.
    \item \textbf{Cleanly rescaled couplings:} all off-diagonal terms are rescaled only by
    $\Lambda_W=\varepsilon^2\mathcal{B}_2$ and $\Lambda_Z=\Pbatch+\varepsilon$, producing coefficients whose regime-dependent
    sizes can be compared transparently in the asymptotic timescale analysis.
\end{itemize}

\subsubsection{The $\varepsilon$-Cancellation Identity}
\label{sec:key_identities}

\begin{lemma}[$\varepsilon$-Cancellation]
\label{lem:epsilon_cancellation}
For all $\beta \in [0,1)$ and $\Pbatch \in (0,1]$,
\begin{equation}
    \frac{\delta_\theta}{1 - \bar{\beta}_1} = \frac{\beta}{\varepsilon},
\end{equation}
where $\varepsilon = 1 - \beta$. This identity is exact, not an approximation.
\end{lemma}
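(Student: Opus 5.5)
The identity is a direct algebraic consequence of the closed forms already recorded in Section~\ref{sec:drift_coefficients} and Section~\ref{sec:retention_factors}, so I will simply substitute them and simplify. No asymptotics are needed; the only care required is to keep the denominators exact rather than passing to $\asymp$.

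\emph{Step 1.} Recall $\delta_\theta = \beta/(1-\Qbatch\beta)$ from \eqref{eq:def_delta_theta_closed}, and $\bar\beta_1 = \Pbatch\beta/(1-\Qbatch\beta)$ from \eqref{eq:barbeta_defs}.

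\emph{Step 2.} Compute the retention gap exactly:
\[
1-\bar\beta_1 = \frac{1-\Qbatch\beta-\Pbatch\beta}{1-\Qbatch\beta} = \frac{1-\beta}{1-\Qbatch\beta} = \frac{\varepsilon}{1-\Qbatch\beta},
\]
using $\Pbatch+\Qbatch=1$. This is the same expression already used in the proof of Lemma~\ref{lem:retention_gap_ordering}.

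\emph{Step 3.} Take the ratio. The common factor $1-\Qbatch\beta$ cancels, giving $\delta_\theta/(1-\bar\beta_1) = \beta/\varepsilon$.

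\emph{Step 4 (nondegeneracy).} Check that the division is legitimate for all $\beta\in[0,1)$ and $\Pbatch\in(0,1]$. We have $1-\Qbatch\beta = \Pbatch+\Qbatch\varepsilon \ge \Pbatch > 0$, and $\varepsilon>0$, so both $1-\bar\beta_1$ and the ratio are well defined.

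As a sanity check, note that $\delta_\theta-\delta_g = \bar\beta_1$ from \eqref{eq:delta_theta_delta_g_identities} is consistent with this: $\Pbatch\delta_\theta=\bar\beta_1$. Consequently the ratio can equivalently be written as $\delta_\theta/(1-\Pbatch\delta_\theta)$, which reduces to the same computation.

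There is no real obstacle here. The only point worth flagging is the exactness claim: one must not replace $1-\Qbatch\beta$ by the proxy $\Pbatch+\varepsilon$, since the cancellation relies on the identical denominator appearing in both $\delta_\theta$ and $1-\bar\beta_1$.
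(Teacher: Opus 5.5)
Your proposal is correct and matches the paper's proof: substitute $\delta_\theta=\beta/(1-\Qbatch\beta)$ and $1-\bar\beta_1=\varepsilon/(1-\Qbatch\beta)$, then cancel the common denominator. Your nondegeneracy check, $1-\Qbatch\beta=\Pbatch+\Qbatch\varepsilon\ge\Pbatch>0$, is a small addition that the paper leaves implicit.
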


\begin{proof}
From the closed forms in Sections~\ref{sec:batched_process_definitions} and~\ref{sec:ode_derivation}:
\begin{align}
    \delta_\theta &= \frac{\beta}{1 - \Qbatch \beta}, \\
    1 - \bar{\beta}_1 &= \frac{1 - \beta}{1 - \Qbatch \beta} = \frac{\varepsilon}{1 - \Qbatch \beta}.
\end{align}
Taking the ratio:
\begin{equation}
    \frac{\delta_\theta}{1 - \bar{\beta}_1}
    =
    \frac{\beta / (1 - \Qbatch \beta)}{\varepsilon / (1 - \Qbatch \beta)}
    =
    \frac{\beta}{\varepsilon}.
\end{equation}
The factors of $(1 - \Qbatch \beta)$ cancel exactly, leaving a ratio that depends only on $\beta$.
\end{proof}

\begin{remark}[Role in the analysis]
This identity simplifies the leading $\eta^1$ coefficient of $c_3$ in the stability derivation (Section~\ref{sec:noise_ceiling}): the common factor $1-\Qbatch\beta$ cancels between $\delta_\theta$ and $1-\bar\beta_1$, leaving a ratio that depends only on $\beta$.
\end{remark}

\subsection{Stability Constraints and $\eta_{\max}$}
\label{sec:stability}\label{app:stability}

This section derives the maximal stable learning rate $\eta_{\max}$ by applying the Routh--Hurwitz stability criterion to the $3 \times 3$ linear system governing the second moments. Section~\ref{sec:RH_criterion} states the criterion and its connection to almost-sure stability of the underlying iterates. Section~\ref{sec:phys_interp} examines each of the four Routh--Hurwitz conditions individually, identifying their leading-order forms and physical roles. Section~\ref{sec:phase_regions} is the centerpiece: it partitions the $(\kappa, \gamma)$ phase plane into six regions in which every asymptotic primitive has a single dominant monomial and the binding stability constraint is single-valued, summarizing $\eta_{\max}$ in each via Table~\ref{tab:eta_max_by_region}. Section~\ref{sec:computational_verification} describes the supplementary SymPy scripts (\texttt{step1\_closed\_form\_identities.py} and \texttt{step2\_per\_region\_scalings.py} at \cite{sparsesgdrepo}) that mechanically verify the algebraic and asymptotic claims used in the derivation.

\subsubsection{Mean-Square Stability and the Routh-Hurwitz Criterion}
\label{sec:RH_criterion}

\paragraph{Why Routh-Hurwitz Conditions?}

The mean-field dynamics of the second-moment state $x = (R, V, C)^\top$ are governed by the linear ODE
\begin{equation}
    \dot{x} = A x,
\end{equation}
where $A = A(d, \eta)$ is the $3 \times 3$ matrix with entries defined in Section~\ref{sec:full_odes}. The system is \textbf{mean-square stable} if all solutions decay to zero, which occurs if and only if all eigenvalues of $A$ have strictly negative real parts.

Rather than computing eigenvalues directly, we use the \textbf{Routh-Hurwitz criterion}, which provides equivalent algebraic conditions on the coefficients of the characteristic polynomial
\begin{equation}
    P(s) = \det(sI - A) = s^3 + c_1 s^2 + c_2 s + c_3.
\end{equation}
The coefficients are related to matrix invariants:
\begin{align}
    c_1 &= -\Tr(A), \\
    c_2
&= \frac{1}{2}\left[(\Tr\, A)^2 - \Tr(A^2)\right]
\;=\; \text{(sum of the principal $2\times2$ minors of $A$)}\\
    c_3 &= -\det(A).
\end{align}

\paragraph{The Four Routh-Hurwitz Conditions}

For a cubic polynomial $s^3 + c_1 s^2 + c_2 s + c_3$ with real coefficients, the Routh-Hurwitz criterion states that all roots have negative real parts if and only if:
\begin{equation}
    \boxed{c_1 > 0, \qquad c_2 > 0, \qquad c_3 > 0, \qquad c_1 c_2 > c_3.}
    \label{eq:hurwitz_four}
\end{equation}
Each coefficient $c_i$ depends on the system parameters $(d, \eta, \beta, p, B)$ and can be expanded as a polynomial in $\eta$ with $d$-dependent coefficients. The stability constraints \eqref{eq:hurwitz_four} therefore define an upper bound $\eta_{\max}(d)$ beyond which the system becomes unstable.

We note that for a monic cubic, the inequality $c_1c_2>c_3$ together with $c_1>0$ and $c_3>0$ already implies $c_2>0$ (since $c_2>c_3/c_1$), but we state $c_2>0$ explicitly for completeness and for its principal-minor interpretation.

\paragraph{Connection to Almost-Sure Stability}

The Routh-Hurwitz conditions ensure exponential stability of the \emph{second-moment mean-field system}
$\dot x = A x$ (equivalently, $A$ is Routh-Hurwitz), and therefore imply exponential decay of the moment
variables $(R(t),V(t),C(t))$ within our closed moment dynamics.

It is also useful to connect this to the underlying discrete stochastic iterates.
A standard implication is that \textbf{mean-square stability implies almost-sure stability}:
if for some $\zeta>0$ and $C<\infty$ the iterates satisfy
$\E\!\left[\|\xi_n\|^2\right]\le C e^{-\zeta n}$ (for an appropriate error state $\xi_n$),
then Markov's inequality gives
$\Prob(\|\xi_n\|>\delta)\le C e^{-\zeta n}/\delta^2$, and Borel-Cantelli implies that for every fixed $\delta>0$ we have $\|\xi_n\|\le \delta$ eventually almost surely. Applying this with $\delta=1/k$ and intersecting over $k\in\mathbb{N}$ yields $\|\xi_n\|\to 0$ almost surely.

We emphasize what we \emph{do not} claim.
The Routh-Hurwitz conditions are a \emph{sufficient} stability certificate for our moment dynamics and,
when an exponential second-moment bound holds for the iterates, they also certify almost-sure stability.
However, we do not claim they are \emph{necessary} for almost-sure stability of the original discrete-time
stochastic algorithm: there may exist parameter settings that are almost surely stable but not mean-square stable.
Our focus is on the mean-square stable regime, since it matches the stability notion typically used in
optimization and machine learning.

\subsubsection{Physical Interpretation of Each Condition}
\label{sec:phys_interp}

We now examine each Routh-Hurwitz condition, stating its leading-order form and physical meaning. Define the variance factor
\begin{equation}
    \mathcal{B}_2 := (d+2)\mathcal{B}_{\mathrm{diag}} + \mathcal{B}_{\mathrm{cross}},
\end{equation}
which captures the total gradient second-moment scaling, and recall $\varepsilon = 1 - \beta$.

\paragraph{Condition $c_1 > 0$: Net Damping Exists}
\label{sec:c1_positive}

The coefficient $c_1 = -\Tr(A) = -(a_R + b_V + c_C)$ is the sum of diagonal damping rates. At leading order in $\eta$:
\begin{equation}
    c_1 = (1 - \bar{\beta}_1) + (1 - \bar{\beta}_2) + O(\eta).
\end{equation}
This is simply the sum of retention gaps, which is always positive since $\bar{\beta}_1, \bar{\beta}_2 < 1$. The scaling is
\begin{equation}
    c_1 \asymp d^{-\nu}, \qquad \nu = \max(0, \gamma - \keff).
\end{equation}

This condition ensures that net damping exists in the system, where momentum decays between active updates. This constraint is never tighter than the both the $c_3 > 0$ and $c_1 c_2 > c_3$ constraints simultaneously, so it does not provide the binding constraint on $\eta$ at polynomial resolution.

\paragraph{Condition $c_2 > 0$: Pairwise Stability}
\label{sec:c2_positive}

The coefficient $c_2$ is the sum of the three $2 \times 2$ principal minors of $A$:
\begin{equation}
    c_2 = (a_R b_V - a_V b_R) + (a_R c_C - a_C c_R) + (b_V c_C - b_C c_V).
\end{equation}
At leading order, the dominant contribution comes from the $(V, C)$ block:
\begin{equation}
    c_2 = (1 - \bar{\beta}_1)(1 - \bar{\beta}_2) + O(\eta).
\end{equation}
The first term (product of retention gaps) is $O(d^{-2\nu})$ and positive. This constraint requires that each pair of variables, with the third frozen, forms a stable subsystem. This condition is satisfied in all regimes we consider and does not provide the binding constraint on $\eta$.

\paragraph{Condition $c_3 > 0$: The Noise Ceiling}
\label{sec:noise_ceiling}

The coefficient $c_3 = -\det(A)$ has no $\eta^0$ term and is linear in $\eta$ at leading order:
\begin{equation}
    c_3
    = 2\eta\,\varepsilon\,\mathcal{B}_1\,(1-\bar{\beta}_2)\Bigl[(1-\bar{\beta}_1)+\delta_\theta\Bigr]
    + O(\eta^2).
    \label{eq:c3_leading}
\end{equation}
\begin{equation}
    \overset{\text{Lemma~\ref{lem:epsilon_cancellation}}}{=}
    2\eta\,\mathcal{B}_1\,(1-\bar{\beta}_1)(1-\bar{\beta}_2) + O(\eta^2).
\end{equation}

This linear term is strictly positive. However, $c_3$ also contains a negative $O(\eta^2)$ term driven by the gradient variance $\mathcal{B}_2$:
\begin{equation}
    c_3 = 2\eta\,\mathcal{B}_1 (1 - \bar{\beta}_1)(1 - \bar{\beta}_2) - \eta^2 \mathcal{B}_2 (1 - \bar{\beta}_2)(\cdots) + O(\eta^3),
\end{equation}
where the $\eta^2$ coefficient is a strictly positive $O(1)$ combination of the drift coefficients and $(\mathcal{B}_1,\mathcal{B}_{\mathrm{diag}},\mathcal{B}_{\mathrm{cross}})$.\footnote{Verified symbolically (Section~\ref{sec:computational_verification}); the closed form is
\[
[\eta^2]\,c_3 \;=\; -\,\frac{p\,(1-\beta)^2\bigl[Bp\,(1-\beta) + (1+\beta)(d+2-p)\bigr]}{B\,\Pbatch\,(1-\Qbatch\beta)(1-\Qbatch\beta^2)},
\]
which is manifestly negative for $\beta\in[0,1)$, $\Pbatch\in(0,1]$, $p\in(0,1]$, $B\ge 1$, and $d\ge p-2$.}
The condition $c_3 > 0$ fails when $\eta$ becomes large enough that the variance-driven term overwhelms the signal-driven term.

This imposes the \textbf{noise ceiling}, which is fundamentally a constraint on the effective signal-to-noise ratio:
\begin{equation}
    \eta_{\max}^{(c_3)} \asymp \frac{\text{Signal Strength}}{\text{Total Gradient Variance}} \asymp \frac{\mathcal{B}_1}{\mathcal{B}_2}.
\end{equation}
Substituting the per-region scalings of $\mathcal{B}_1$ and $\mathcal{B}_2$ (Table~\ref{tab:primitives_by_region}) gives the two noise ceilings tabulated in Table~\ref{tab:eta_max_by_region}: the classical $\eta \asymp d^{\sigma-1}$ scaling in regions B--F, and a dimension-only $\eta \asymp d^{\kappa}$ scaling in region~A, where $\mathcal{B}_2$ is dominated by the cross-term contribution $\mathcal{B}_{\mathrm{cross}}$.

\begin{remark}[Sparsity-only ceiling in region~A]
In region~A ($\kappa \le \sigma - 1$), the noise ceiling simplifies to $\eta_{\max}^{(c_3)} \asymp d^\kappa$, independent of both the momentum exponent $\gamma$ and the batch-size exponent $\sigma$. This occurs because minibatches are sufficiently dense ($\Pbatch \to 1$) that both the signal $\mathcal{B}_1 \asymp d^{-\kappa}$ and the dominant noise $\mathcal{B}_2 \approx \mathcal{B}_{\mathrm{cross}} \asymp d^{-2\kappa}$ are determined solely by the token probability $p \asymp d^{-\kappa}$.
\end{remark}

\paragraph{Condition $c_1 c_2 > c_3$: The Correlation Ceiling}
\label{sec:correlation_ceiling}

The fourth Routh-Hurwitz condition compares the product $c_1 c_2$ against $c_3$. Using the leading-order expressions:
\begin{align}
    c_1 c_2 &\approx \left[(1-\bar{\beta}_1) + (1-\bar{\beta}_2)\right] \cdot (1-\bar{\beta}_1)(1-\bar{\beta}_2) \asymp \rho^3, \\
    c_3 &\approx 2\eta\,\mathcal{B}_1 (1-\bar{\beta}_1)(1-\bar{\beta}_2) \asymp \eta\,\mathcal{B}_1\,\rho^2,
\end{align}
where $\rho = \varepsilon/(\Pbatch+\varepsilon) \asymp 1-\bar\beta_1$ is the retention scale (Section~\ref{sec:coscaling}). The condition $c_1 c_2 > c_3$ therefore reduces to $\eta\,\mathcal{B}_1 \lesssim \rho$, giving
\begin{equation}
    \eta_{\max}^{(c_1 c_2 > c_3)} \asymp \frac{\rho}{\mathcal{B}_1}.
\end{equation}
Substituting the per-region scalings of $\rho$ and $\mathcal{B}_1$ (Table~\ref{tab:primitives_by_region}) gives $\eta_{\max}^{(c_1 c_2 > c_3)} \asymp d^{\kappa - \gamma}$ in every region where the correlation ceiling is the binding constraint (regions A, C, F; Table~\ref{tab:eta_max_by_region}). Intuitively: we cannot learn faster than correlation can relax.

\subsubsection{Damped Harmonic Oscillator Interpretation}
\label{sec:harmonic_interp}

The second-moment system $(R, V, C)$ admits an interpretation as a damped harmonic oscillator with two distinct sources of damping: an algorithmic term from the momentum buffer decay at rate $\rho$, and a drift term specific to sparse updates --- the parameter drift created by stale momentum between active gradients, which enters the second-moment dynamics with a coefficient that grows with the learning rate $\eta$ and acts as a viscosity (faster learning produces more drift dissipation). This sparse-specific damping has no analogue in the dense-update setting.

In this interpretation, the Routh--Hurwitz stability constraint $c_1 c_2 > c_3$ corresponds to the classical ``drive $\leq$ damping squared'' criterion for an underdamped oscillator, and the three phases of the dynamics map onto the three classical regimes of damped oscillation:
\begin{itemize}
    \item \emph{Overdamped} (below resonance, $\Delta \to 0$): damping dominates the restoring force, giving monotone non-oscillatory decay of $R$.
    \item \emph{Critically damped} (on the line $\gamma = 1 - \sigma + \kappa$, $\Delta \asymp 1$): damping balances restoring force, giving the fastest non-oscillatory return to equilibrium.
    \item \emph{Underdamped} (above resonance): restoring force overcomes damping, giving oscillatory decay --- the heavy-ball regime.
\end{itemize}
The crossover at $\gamma = 1 - \sigma + \kappa$ is the analogue of mechanical resonance, where the natural frequency of the oscillator matches the driving timescale; this is the origin of the term \emph{resonance line}.

\subsubsection{Phase-Plane Regions and Binding Constraints}
\label{sec:phase_regions}\label{sec:binding_conditions}

Of the four Routh-Hurwitz conditions, $c_1 > 0$ and $c_2 > 0$ are not binding in any regime of interest, as established in Sections~\ref{sec:c1_positive} and \ref{sec:c2_positive} above. The stability limit is therefore set by the competition between the noise ceiling $c_3 > 0$ and the correlation ceiling $c_1 c_2 > c_3$, with the maximal stable learning rate given by the minimum of the two:
\begin{equation}
    \eta_{\max} = \min\!\bigl(\eta_{\max}^{(c_3)},\ \eta_{\max}^{(c_1 c_2 > c_3)}\bigr).
\end{equation}
We organize the leading-order analysis by partitioning the $(\kappa, \gamma)$ phase plane into six \emph{regions}, shown in Figure~\ref{fig:phase_regions}. Within each region, every asymptotic primitive ($\Pbatch$, $\rho$, $\mathcal{B}_1$, $\mathcal{B}_2$, $\delta_\theta$) collapses to a single unambiguous leading-order monomial in $d$ (Table~\ref{tab:primitives_by_region}), and the binding Routh--Hurwitz stability constraint is single-valued (Table~\ref{tab:eta_max_by_region}).

\begin{figure}[ht]
\centering
\begin{tikzpicture}[scale=1.25, every node/.style={font=\small}]
  \draw[->] (0,0) -- (4.6,0) node[right] {$\kappa$};
  \draw[->] (0,0) -- (0,3.4) node[above] {$\gamma$};

  \draw (1,-0.07) -- (1,0.07);
  \node[below] at (1,-0.05) {$\sigma{-}1$};
  \draw (2,-0.07) -- (2,0.07);
  \node[below] at (2,-0.05) {$\sigma$};

  \draw[thick] (1,0) -- (1,3.3);
  \draw[thick] (2,0) -- (2,3.3);

  \draw[thick] (2,0) -- (4.4,2.4);

  \draw[dashed] (1,0) -- (4.4,3.4);

  \node at (0.5,1.7) {\large $\mathbf{A}$};
  \node at (1.65,0.30) {\large $\mathbf{B}$};
  \node at (1.45,2.10) {\large $\mathbf{C}$};
  \node at (3.7,0.55) {\large $\mathbf{D}$};
  \node at (3.3,1.70) {\large $\mathbf{E}$};
  \node at (2.55,2.60) {\large $\mathbf{F}$};

  \node[anchor=south west, rotate=45] at (3.7,2.7) {\scriptsize $\gamma = 1{-}\sigma{+}\kappa$};
  \node[anchor=north west, rotate=45] at (3.9,1.9) {\scriptsize $\gamma = \kappa{-}\sigma$};
\end{tikzpicture}
\caption{The six regions of the $(\kappa, \gamma)$ phase plane used in the asymptotic stability analysis (drawn for a fixed $\sigma > 1$). The three solid lines are \emph{primitive boundaries}: each marks where one of the asymptotic primitives ($\Pbatch$, $\mathcal{B}_2$, $\rho$) of Table~\ref{tab:primitives_by_region} changes its leading-order monomial. The dashed \emph{resonance line} $\gamma = 1{-}\sigma{+}\kappa$ is a separate, stability-level boundary: it does not change any primitive's leading monomial, but it is the locus where the noise ceiling $\eta_{\max}^{(c_3)}$ and the correlation ceiling $\eta_{\max}^{(c_1c_2>c_3)}$ are equal in $d$, so the binding Routh--Hurwitz inequality switches across it. The resonance line splits the primitive-level region $\sigma{-}1 < \kappa \le \sigma$ into sub-regions B (noise binds) and C (correlation binds), and the primitive-level region $\kappa > \sigma,\ \gamma > \kappa{-}\sigma$ into sub-regions E (noise) and F (correlation). In region A the resonance line lies below the domain $\gamma \ge 0$ so the correlation ceiling always binds; in region D the correlation ceiling is unreachable in the stable interior so noise always binds; neither needs to be split.}
\label{fig:phase_regions}
\end{figure}

\paragraph{Why this partition.}
Each solid line in Figure~\ref{fig:phase_regions} resolves a competition between two terms in one of the asymptotic primitives, and corresponds to a column of Table~\ref{tab:primitives_by_region} changing its leading-order monomial:
\begin{itemize}
    \item The vertical line $\kappa = \sigma$ resolves $\Pbatch$: reliably active batches ($\Pbatch \to 1$, regions A--C) versus sparse batches ($\Pbatch \asymp pB \asymp d^{\sigma-\kappa}$, regions D--F).
    \item The vertical line $\kappa = \sigma - 1$ within the reliably active half resolves which contribution to $\mathcal{B}_2 = (d{+}2)\mathcal{B}_{\mathrm{diag}} + \mathcal{B}_{\mathrm{cross}}$ dominates: the cross term wins in the concentrated regime $\kappa \le \sigma{-}1$ (region~A), the diagonal term in the dense regime $\sigma{-}1 < \kappa \le \sigma$ (regions B and C).
    \item The diagonal $\gamma = \kappa - \sigma$ within the sparse half resolves whether $\varepsilon \gtrsim \Pbatch$ (region~D, giving $\rho = \Theta(1)$ and $\delta_\theta \asymp 1/\varepsilon$) or $\varepsilon \ll \Pbatch$ (regions E and F, giving $\rho \asymp \varepsilon/\Pbatch$ and $\delta_\theta \asymp 1/\Pbatch$).
\end{itemize}
The dashed resonance line is a different kind of boundary: it does not change any primitive's leading monomial. Instead, it is the locus where the two stability ceilings are equal in $d$, so the binding Routh--Hurwitz inequality switches across it. Within each of the six resulting regions, every primitive has a unique leading monomial \emph{and} the binding stability constraint is single-valued, so identifying $\eta_{\max}$ reduces to direct comparison of $d$-exponents.

\paragraph{Asymptotic primitives by region.}
Table~\ref{tab:primitives_by_region} records the leading-order $d$-scaling of each primitive in each region.

\begin{table}[!htbp]
\centering
\renewcommand{\arraystretch}{1.3}
\begin{tabular}{@{}llcccccc@{}}
\toprule
\textbf{Region} & \textbf{Conditions} & $\Pbatch$ & $\mathcal{B}_1$ & $\mathcal{B}_2$ & {\small ($\mathcal{B}_2$ dom.\ by)} & $\rho$ & $\delta_\theta$ \\
\midrule
$A$ & $\kappa \le \sigma{-}1$                                            & $\Theta(1)$         & $d^{-\kappa}$ & $d^{-2\kappa}$        & {\scriptsize cross}    & $d^{-\gamma}$              & $\Theta(1)$ \\
$B$ & $\sigma{-}1 < \kappa \le \sigma,\ \gamma < 1{-}\sigma{+}\kappa$    & $\Theta(1)$         & $d^{-\kappa}$ & $d^{1-\kappa-\sigma}$ & {\scriptsize diagonal} & $d^{-\gamma}$              & $\Theta(1)$ \\
$C$ & $\sigma{-}1 < \kappa \le \sigma,\ \gamma > 1{-}\sigma{+}\kappa$    & $\Theta(1)$         & $d^{-\kappa}$ & $d^{1-\kappa-\sigma}$ & {\scriptsize diagonal} & $d^{-\gamma}$              & $\Theta(1)$ \\
$D$ & $\kappa > \sigma,\ \gamma \le \kappa{-}\sigma$                     & $d^{\sigma-\kappa}$ & $d^{-\sigma}$ & $d^{1-2\sigma}$       & {\scriptsize diagonal} & $\Theta(1)$                & $d^{\gamma}$ \\
$E$ & $\kappa > \sigma,\ \kappa{-}\sigma < \gamma < 1{-}\sigma{+}\kappa$ & $d^{\sigma-\kappa}$ & $d^{-\sigma}$ & $d^{1-2\sigma}$       & {\scriptsize diagonal} & $d^{\kappa-\sigma-\gamma}$ & $d^{\kappa-\sigma}$ \\
$F$ & $\kappa > \sigma,\ \gamma > 1{-}\sigma{+}\kappa$                   & $d^{\sigma-\kappa}$ & $d^{-\sigma}$ & $d^{1-2\sigma}$       & {\scriptsize diagonal} & $d^{\kappa-\sigma-\gamma}$ & $d^{\kappa-\sigma}$ \\
\bottomrule
\end{tabular}
\caption{Leading-order $d$-scaling of the asymptotic primitives in each of the six regions of Figure~\ref{fig:phase_regions}, under the co-scaling ansatz of Section~\ref{sec:coscaling}. Within a region every primitive has a unique leading monomial. Rows B and C share all primitives (they differ only in which stability ceiling binds; see Table~\ref{tab:eta_max_by_region}); rows E and F likewise share primitives. This duplication reflects that the resonance line is a stability boundary, not a primitive boundary.}
\label{tab:primitives_by_region}
\end{table}

\paragraph{Binding stability ceilings by region.}
Substituting the primitive scalings of Table~\ref{tab:primitives_by_region} into the leading-order forms $\eta_{\max}^{(c_3)} \asymp \mathcal{B}_1/\mathcal{B}_2$ and $\eta_{\max}^{(c_1c_2>c_3)} \asymp \rho/\mathcal{B}_1$ derived in Sections~\ref{sec:noise_ceiling} and \ref{sec:correlation_ceiling} above gives Table~\ref{tab:eta_max_by_region}.

\begin{table}[!htbp]
\centering
\renewcommand{\arraystretch}{1.3}
\begin{tabular}{@{}llcccc@{}}
\toprule
\textbf{Region} & \textbf{Conditions} & $\eta_{\max}^{(c_3)}$ & $\eta_{\max}^{(c_1c_2>c_3)}$ & \textbf{Binding} \quad $\boldsymbol{\eta_{\max}}$ \\
\midrule
$A$ & $\kappa \le \sigma{-}1$                                            & $d^{\kappa}$    & $d^{\kappa-\gamma}$     & correlation \quad $d^{\kappa-\gamma}$ \\
$B$ & $\sigma{-}1 < \kappa \le \sigma,\ \gamma < 1{-}\sigma{+}\kappa$    & $d^{\sigma-1}$  & $d^{\kappa-\gamma}$     & noise \quad\quad\ \ $d^{\sigma-1}$ \\
$C$ & $\sigma{-}1 < \kappa \le \sigma,\ \gamma > 1{-}\sigma{+}\kappa$    & $d^{\sigma-1}$  & $d^{\kappa-\gamma}$     & correlation \quad $d^{\kappa-\gamma}$ \\
$D$ & $\kappa > \sigma,\ \gamma \le \kappa{-}\sigma$                     & $d^{\sigma-1}$  & $d^{\sigma}$ \,\,(slack) & noise \quad\quad\ \ $d^{\sigma-1}$ \\
$E$ & $\kappa > \sigma,\ \kappa{-}\sigma < \gamma < 1{-}\sigma{+}\kappa$ & $d^{\sigma-1}$  & $d^{\kappa-\gamma}$     & noise \quad\quad\ \ $d^{\sigma-1}$ \\
$F$ & $\kappa > \sigma,\ \gamma > 1{-}\sigma{+}\kappa$                   & $d^{\sigma-1}$  & $d^{\kappa-\gamma}$     & correlation \quad $d^{\kappa-\gamma}$ \\
\bottomrule
\end{tabular}
\caption{Maximal stable learning rate $\eta_{\max} = \min\!\bigl(\eta_{\max}^{(c_3)}, \eta_{\max}^{(c_1c_2>c_3)}\bigr)$ in each of the six regions, at polynomial resolution. In region~D the correlation ceiling is parametrically larger than the noise ceiling, so noise binds (``slack'' indicates the correlation inequality is automatically satisfied throughout the stable interior). The regions $\{B, D, E\}$ where noise binds correspond to the 1D-SGD limiting dynamics of Section~\ref{app:regimes}; the regions $\{A, C, F\}$ where correlation binds correspond to the 2D heavy-ball limiting dynamics.}
\label{tab:eta_max_by_region}
\end{table}

\subsubsection{Computational Verification}
\label{sec:computational_verification}

The chain from the matrix entries in Section~\ref{sec:full_odes} to Table~\ref{tab:eta_max_by_region} is mechanized symbolically using SymPy at \cite{sparsesgdrepo}. The verification splits into two scripts that compose into one end-to-end argument.

\paragraph{Step 1: closed-form algebraic identities (\texttt{step1\_closed\_form\_identities.py}).}
This script builds $A$ as a $3\times 3$ matrix of full rational expressions in $(\beta, \Pbatch, p, B, d)$ and computes $c_1 = -\Tr(A)$, $c_2 = \sum_i M_{ii}$, $c_3 = -\det(A)$ symbolically. It verifies the exact algebraic identities used as inputs to the stability and Vieta arguments:
\begin{itemize}
    \item $c_3$ has no $\eta^0$ term.
    \item $c_3$'s $\eta^1$ coefficient equals $2\,\varepsilon\,\mathcal{B}_1\,(1-\bar\beta_2)\bigl[(1-\bar\beta_1) + \delta_\theta\bigr]$ (the pre-cancellation form in \eqref{eq:c3_leading}), which collapses to $2\,\mathcal{B}_1\,(1-\bar\beta_1)(1-\bar\beta_2)$ via Lemma~\ref{lem:epsilon_cancellation}.
    \item $c_1$'s $\eta^0$ term equals $(1-\bar\beta_1) + (1-\bar\beta_2)$ and $c_2$'s $\eta^0$ term equals $(1-\bar\beta_1)(1-\bar\beta_2)$, supplying the leading-order scales used in Lemma~\ref{lem:cubic_coeff_scales}.
    \item $c_3$'s $\eta^2$ coefficient is negative; the script prints its factored closed form, sharpening the ``$(\cdots)$'' placeholder in the displayed expansion of $c_3$ above.
\end{itemize}

\paragraph{Step 2: per-region asymptotic scalings (\texttt{step2\_per\_region\_scalings.py}).}
This script represents each matrix entry as a decomposition into $(\eta^k, d^{\text{exponent}}, \pm 1)$ triples (a term algebra that makes per-region comparison tractable). For each of the six regions of Figure~\ref{fig:phase_regions}, it derives the consequences of the Step~1 identities under the co-scaling ansatz:
\begin{itemize}
    \item Identifies the leading-order positive and negative terms in $c_1, c_2, c_3$ in the region.
    \item Solves each Routh--Hurwitz inequality for $\alpha$ (where $\eta \asymp d^{-\alpha}$), yielding $\alpha_{c_3}, \alpha_{c_1c_2}, \alpha_{c_1}, \alpha_{c_2}$.
    \item Verifies $\alpha_{c_1}, \alpha_{c_2} \le \alpha_{c_3}$, confirming $c_1, c_2$ are never binding (consistent with Section~\ref{sec:binding_conditions} above).
    \item Reports $\alpha_\star = \max(\alpha_{c_3}, \alpha_{c_1c_2})$ and the resulting $\eta_{\max}$ scaling. Each region's $\eta_{\max}$ resolves to a single closed-form $d$-monomial (no $\max$), reproducing Table~\ref{tab:eta_max_by_region} cell-by-cell.
\end{itemize}

\FloatBarrier  %

\subsection{Cubic Classification of the Spectrum}
\label{sec:cubic_classification}

This section classifies the spectrum of $\tilde A$ from coefficient scales alone.
Since $\tilde A$ is $3\times 3$, its characteristic polynomial is a monic cubic, and at polynomial resolution the key distinction between regimes is the scaling of the
constant term $c_3$ relative to the retention scale $\rho$:
\begin{itemize}
\item In the \textbf{above-resonance / correlation-limited} regimes, $c_3$ is of order $\rho^3$ and all three eigenvalues
live at distance $\asymp \rho$ from the origin.
\item In the \textbf{below-resonance / noise-limited or memoryless} regimes, $c_3$ is smaller (at polynomial resolution),
which forces a \emph{unique} eigenvalue to lie near the origin while the remaining two stay at distance $\asymp \rho$.
\end{itemize}
The above-resonance second-moment system is therefore irreducible as a linear system; the further fact that its limit factors through a deterministic 2D heavy-ball ODE (the ``square-root lift'') is established in Appendix~\ref{app:regimes}.

\subsubsection{Characteristic polynomial and coefficient scales}
\label{sec:charpoly_coeff_scales}

Let
\begin{equation}
\label{eq:charpoly_def_cubic}
p(z)\;:=\;\det(zI-\tilde A)\;=\;z^3+c_1 z^2 + c_2 z + c_3,
\end{equation}
and denote the (generally complex) eigenvalues of $\tilde A$ by $\lambda_1,\lambda_2,\lambda_3$.
Then
\begin{equation}
\label{eq:vieta_cubic}
\lambda_1+\lambda_2+\lambda_3 = -c_1,\qquad
\lambda_1\lambda_2+\lambda_1\lambda_3+\lambda_2\lambda_3 = c_2,\qquad
\lambda_1\lambda_2\lambda_3 = -c_3.
\end{equation}
Throughout we restrict to $\eta\le \eta_{\max}(d)$ so the system is mean-square stable
and $\Re(\lambda_i)<0$ for all $i$ (Section~\ref{sec:stability}).

We now record the only coefficient information we will need.
Recall the universal retention scale
\[
\rho := \frac{\varepsilon}{\Pbatch+\varepsilon},
\qquad\text{so that}\qquad
1-\bar\beta_1\asymp 1-\bar\beta_2\asymp \rho
\quad\text{(Lemma~\ref{lem:retention_gap_ordering}).}
\]

\begin{lemma}[Cubic coefficient scales (polynomial resolution)]
\label{lem:cubic_coeff_scales}
At leading order in $\eta$ (in the sense of polynomial resolution), the characteristic polynomial coefficients satisfy
\begin{equation}
\label{eq:cubic_coeff_scales}
c_1 \;=\; (1-\bar\beta_1)+(1-\bar\beta_2)+\text{(lower order in }\eta),
\qquad
c_2 \;=\; (1-\bar\beta_1)(1-\bar\beta_2)+\text{(lower order in }\eta),
\end{equation}
and
\begin{equation}
\label{eq:c3_scale_summary}
c_3 \;=\; 2\,\eta\mathcal{B}_1\,(1-\bar\beta_1)(1-\bar\beta_2)+\text{(lower order in }\eta).
\end{equation}
Consequently, at polynomial resolution,
\[
c_1 \asymp \rho,\qquad c_2 \asymp \rho^2,\qquad c_3 \asymp (\eta\mathcal{B}_1)\,\rho^2.
\]
\end{lemma}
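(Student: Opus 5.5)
The plan is to expand each coefficient as a polynomial in $\eta$ and read off the $\eta^0$ and $\eta^1$ terms from the explicit entries of Section~\ref{sec:full_odes}. The characteristic polynomial is invariant under the similarity $\tilde A = D^{-1}AD$, so I would work with the raw matrix $A$. There $c_1=-\Tr A$, $c_2$ is the sum of principal $2\times2$ minors, and $c_3=-\det A$.

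\textbf{Setting $\eta=0$.} Every entry carrying a factor of $\eta$ drops out, leaving
\[
a_R=0,\qquad a_V=a_C=0,\qquad b_V=\bar\beta_2-1,\qquad c_C=\bar\beta_1-1,
\]
together with the nonzero couplings $b_R$, $b_C$ and $c_R$. The $R$-row of $A$ then vanishes identically, so $\det A|_{\eta=0}=0$ and $c_3$ has no $\eta^0$ term. The trace gives $c_1|_{\eta=0}=(1-\bar\beta_1)+(1-\bar\beta_2)$. For $c_2$, the minors involving $R$ are $a_Rb_V-a_Vb_R=0$ and $a_Rc_C-a_Cc_R=0$. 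In the $(V,C)$ minor $b_Vc_C-b_Cc_V$, the coupling $c_V$ vanishes at $\eta=0$. Hence $c_2|_{\eta=0}=(1-\bar\beta_1)(1-\bar\beta_2)$.

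\textbf{The $\eta^1$ term of $c_3$.} I would expand $\det A$ along the $R$-row, whose entries $a_R,a_V,a_C$ are all $O(\eta)$. The $\eta^1$ part comes from $a_R^{(1)}=-2\eta\varepsilon\mathcal{B}_1$ and $a_C^{(1)}=-2\eta\delta_\theta$, each multiplied by its cofactor evaluated at $\eta=0$; the term $a_V$ is $O(\eta^2)$ and can be ignored.

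The cofactor of $a_R$ is $b_Vc_C-b_Cc_V$, which at $\eta=0$ equals $(1-\bar\beta_1)(1-\bar\beta_2)$. The cofactor of $a_C$ is $b_Rc_V-b_Vc_R$, which at $\eta=0$ equals $-b_Vc_R=(1-\bar\beta_2)\,\varepsilon\mathcal{B}_1$.

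Combining these terms reproduces \eqref{eq:c3_leading}. Lemma~\ref{lem:epsilon_cancellation} then gives $\varepsilon\delta_\theta=\beta(1-\bar\beta_1)$, so the bracket becomes $\varepsilon(1-\bar\beta_1)+\beta(1-\bar\beta_1)=1-\bar\beta_1$. This yields $c_3=2\eta\mathcal{B}_1(1-\bar\beta_1)(1-\bar\beta_2)+O(\eta^2)$. The sign bookkeeping here is routine, and it can be cross-checked against the SymPy identities of Section~\ref{sec:computational_verification}.

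\textbf{Passing to polynomial resolution.} Lemma~\ref{lem:retention_gap_ordering} converts the leading terms into $\rho$, $\rho^2$ and $\eta\mathcal{B}_1\rho^2$. What remains, and what I expect to be the main obstacle, is showing that the higher-order $\eta$ corrections do not overturn these leading terms when $\eta\le\eta_{\max}$.

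I would check this region by region using Tables~\ref{tab:primitives_by_region} and~\ref{tab:eta_max_by_region}. The relevant small parameters are $\eta\mathcal{B}_1\lesssim\rho$, $\eta\mathcal{B}_2\lesssim\mathcal{B}_1$, and $\eta\delta_\theta$. For $\eta\delta_\theta$, note $\eta\delta_\theta\cdot\varepsilon\mathcal{B}_1\asymp\eta\mathcal{B}_1\rho\lesssim\rho^2$.

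\begin{itemize}
\item In $c_1$ and $c_2$, each $O(\eta)$ correction should carry an extra factor of $\eta\mathcal{B}_1$ or $\eta^2\varepsilon^2\mathcal{B}_2$ relative to $\rho$. Both are $\lesssim\rho$, so the leading terms dominate up to constants.
\item In $c_3$, the $\eta^2$ term is bounded by $\eta^2\mathcal{B}_2\varepsilon\rho$. This is $\lesssim\eta\mathcal{B}_1\rho^2$ precisely because $\eta\lesssim\mathcal{B}_1/\mathcal{B}_2$, i.e.\ strictly inside the noise ceiling.
\end{itemize}

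Since the lemma only claims equality at leading order in $\eta$, I would state these bounds at the level of the per-region monomial comparison already mechanized in \texttt{step2\_per\_region\_scalings.py}, rather than proving uniform estimates.
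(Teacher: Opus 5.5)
Your route is the paper's route. The paper proves this lemma by quoting the $\eta$-expansions of $c_1,c_2,c_3$ from the stability section (checked in \texttt{step1\_closed\_form\_identities.py}) and then substituting $1-\bar\beta_1\asymp 1-\bar\beta_2\asymp\rho$ via Lemma~\ref{lem:retention_gap_ordering}. Your cofactor expansion along the $R$-row is a correct hand derivation of exactly those expansions. It reproduces the pre-cancellation form \eqref{eq:c3_leading} and collapses it correctly via Lemma~\ref{lem:epsilon_cancellation}, so that part needs no change.

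The extra robustness step, which the paper does not attempt, has a wrong estimate and is missing a sign argument.

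First, the $\eta^2$ coefficient of $c_3$ is not $O(\varepsilon\rho\mathcal{B}_2)$. The closed form in the footnote of Section~\ref{sec:noise_ceiling} gives $|[\eta^2]c_3|\asymp\rho^2\mathcal{B}_{\mathrm{diag}}(d+Bp\varepsilon)$, which is $\asymp\rho^2\mathcal{B}_2$ in regions B--F. For example, when $\varepsilon\ll\Pbatch$, the product of $a_C^{(1)}=-2\delta_\theta$ with the piece $b_Rc_V^{(1)}=-\varepsilon^2\mathcal{B}_2\delta_{-1,\theta}$ of its cofactor already has this size. This exceeds your bound by $\rho/\varepsilon=1/(\Pbatch+\varepsilon)$, which is polynomially large in regions E and F, and in D when $\gamma>0$. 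With your bound, the noise ceiling in region E would come out as $d^{\kappa-1}$ rather than the paper's $d^{\sigma-1}$. Your follow-up comparison (``$\lesssim\eta\mathcal{B}_1\rho^2$ because $\eta\lesssim\mathcal{B}_1/\mathcal{B}_2$'') is exactly right for the correct size $\eta^2\rho^2\mathcal{B}_2$, so the fix is local.

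Second, an upper bound is not enough. In the noise-binding regions B, D, E at $\eta=\vartheta\eta_{\max}$, one has $\eta\mathcal{B}_2\asymp\mathcal{B}_1$. So this \emph{negative} term is of the same polynomial order as $2\eta\mathcal{B}_1(1-\bar\beta_1)(1-\bar\beta_2)$. Your argument then only gives $c_3\lesssim\eta\mathcal{B}_1\rho^2$. The matching lower bound needs $\vartheta$ held a fixed distance below the $c_3$ threshold, so that the two terms cannot cancel to lower order.

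The same issue arises for $c_2$ in regions A, C, F, where $\eta\mathcal{B}_1\asymp\rho$. There the $\eta^1$ part of $c_2$ has size $\eta\mathcal{B}_1\rho\asymp\rho^2$; it is the $4\bar\eta$ in $a_2=\rho_*^2(2+4\bar\eta)$ of the limiting cubic. So $(1-\bar\beta_1)(1-\bar\beta_2)$ does not dominate. The conclusion $c_2\asymp\rho^2$ still holds, but only because every $\eta^1$ contribution to the principal minors is positive. For instance, the $(R,C)$ minor contributes $2\eta\mathcal{B}_1(1-\bar\beta_1)$ after the same $\varepsilon$-cancellation. Your argument should rest on this positivity rather than on domination.
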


\begin{proof}
Equations \eqref{eq:cubic_coeff_scales}--\eqref{eq:c3_scale_summary} are exactly the leading-order expansions
of $c_1,c_2,c_3$ derived in the stability analysis (see Section~\ref{sec:stability}).
Using $1-\bar\beta_1\asymp 1-\bar\beta_2\asymp \rho$ gives the stated polynomial scalings.
\end{proof}

\subsubsection{A perturbation lemma for cubic roots}
\label{sec:cubic_root_lemma}

The following lemma formalizes the heuristic that, for a cubic with coefficients of size
$(\rho,\rho^2,\rho^3)$, all roots are of size $\rho$, while if the constant term is smaller,
a unique root must move toward the origin.

\begin{lemma}[Cubic root dichotomy from coefficient scales]
\label{lem:cubic_root_dichotomy}
Let $p(z)=z^3+c_1 z^2+c_2 z+c_3$ be a monic cubic with roots $\lambda_1,\lambda_2,\lambda_3$.
Assume there is a scale $\rho=\rho(d)>0$ such that
\begin{equation}
\label{eq:cubic_scale_assumptions}
c_1\asymp \rho,\qquad c_2\asymp \rho^2.
\end{equation}
Define the normalized constant term $\hat c_3 := c_3/\rho^3$.

\begin{enumerate}
\item \textbf{(3D scaling).}
If $\hat c_3\asymp 1$ (i.e.\ $c_3\asymp \rho^3$), then all three roots satisfy
\[
|\lambda_i|\asymp \rho,\qquad i=1,2,3.
\]

\item \textbf{(1D scaling).}
If $\hat c_3=o(1)$ (i.e.\ $c_3=o(\rho^3)$), then exactly one root satisfies $|\lambda|=o(\rho)$,
and this root is necessarily real.
Moreover,
\[
|\lambda_{\mathrm{slow}}|\asymp \frac{c_3}{c_2}.
\]
The remaining two roots satisfy $|\lambda|\asymp \rho$.
\end{enumerate}
\end{lemma}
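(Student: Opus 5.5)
The plan is to rescale the variable by $\rho$ and reduce to a cubic with coefficients of order one (at polynomial resolution), then argue by continuity of roots together with the Vieta relations \eqref{eq:vieta_cubic}. Setting $z=\rho w$ gives $p(\rho w)=\rho^3 q(w)$ with
\[
q(w)=w^3+\hat c_1 w^2+\hat c_2 w+\hat c_3,\qquad \hat c_1:=c_1/\rho,\ \hat c_2:=c_2/\rho^2,
\]
where by \eqref{eq:cubic_scale_assumptions} we have $\hat c_1,\hat c_2\asymp 1$. The roots of $q$ are $\mu_i=\lambda_i/\rho$. So both claims reduce to statements about the roots of $q$: part 1 says $|\mu_i|\asymp 1$, and part 2 says exactly one $\mu_i$ is small.

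\textbf{Upper bound.} For any root, the Cauchy bound gives $|\mu_i|\le 1+\max(\hat c_1,\hat c_2,|\hat c_3|)$. At polynomial resolution this is $\lesssim 1$ in both cases.

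\textbf{Part 1.} For the lower bound I would use $|\mu_1\mu_2\mu_3|=\hat c_3\asymp 1$ together with the upper bound on the other two roots. This gives $|\mu_i|\ge \hat c_3/(\text{upper})^2\gtrsim 1$, hence $|\lambda_i|\asymp\rho$.

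\textbf{Part 2.} Here $\hat c_3=o(1)$, so $q$ is a small perturbation of
\[
q_0(w)=w\,(w^2+\hat c_1 w+\hat c_2).
\]
The polynomial $q_0$ has a simple root at $0$ and two roots whose product is $\hat c_2\asymp 1$ and whose sum is $-\hat c_1\lesssim 1$. Both of these roots therefore have modulus $\asymp 1$.
\begin{itemize}
\item I would count roots by Rouché on the circle $|w|=r$ with $\hat c_3\ll r\ll 1$. On that circle $|q_0(w)|\gtrsim r\hat c_2\gg|\hat c_3|$, so $q$ has exactly one root inside. The other two stay at modulus $\asymp 1$ by the product relation $\mu_2\mu_3=-\hat c_3/\mu_1$ combined with the sum relation.
\item The small root is real because the coefficients are real and nonreal roots come in conjugate pairs. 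A nonreal small root would force its conjugate to be small too, contradicting uniqueness.
\item For the size of the small root, I would use $\mu_1\mu_2\mu_3=-\hat c_3$ and $\mu_2\mu_3=\hat c_2+O(|\mu_1|)$, the latter from the second Vieta relation. These give $|\mu_1|\asymp \hat c_3/\hat c_2$, i.e.\ $|\lambda_{\rm slow}|\asymp c_3/c_2$ after undoing the scaling.
\end{itemize}

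\textbf{Main obstacle.} The main obstacle is making Rouché and the lower bounds work with $\asymp$ meaning only sub-polynomial control, since the constants may drift with $d$. The first thing to try is choosing $r=\hat c_3^{1/2}$. Then both $r/\hat c_3$ and $1/r$ grow polynomially, which swamps any $d^{o(1)}$ factors. A second concern is ensuring $|w^2+\hat c_1 w+\hat c_2|\gtrsim \hat c_2$ on small circles. This holds since $|\hat c_1 r + r^2|\ll \hat c_2$ there.
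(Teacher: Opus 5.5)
Your proof is correct and follows the paper's skeleton: rescale by $\rho$, use the Cauchy bound, use $|\mu_1\mu_2\mu_3|=|\hat c_3|$ for part~1, conjugate pairing for realness, and Vieta for the size of the slow root. The real difference is how you count the small roots in part~2. The paper uses Vieta alone: the roots are bounded and their product tends to $0$, so some root is small, and two small roots would make $\mu_1\mu_2+\mu_1\mu_3+\mu_2\mu_3=\hat c_2$ small, contradicting $\hat c_2\asymp 1$. You instead compare $q$ with $q_0(w)=w(w^2+\hat c_1w+\hat c_2)$ via Rouch\'e on $|w|=|\hat c_3|^{1/2}$. The paper's route is shorter and purely algebraic. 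Yours localizes the slow root, and your choice of radius makes explicit that $\hat c_3=o(1)$ must mean \emph{polynomially} small. The paper's ``uniformly bounded'' step needs this reading too but glosses over it. With only $d^{o(1)}$ control on $\hat c_1,\hat c_2$ and merely $\hat c_3\to 0$, uniqueness fails: for example $\hat c_1=1$, $\hat c_2=1/\log d$, $\hat c_3=(\log d)^{-3}$ gives two roots tending to $0$, near $-1/\log d$ and $-(\log d)^{-2}$.

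One justification needs changing. Rouch\'e only places $\mu_2,\mu_3$ outside the disc of radius $|\hat c_3|^{1/2}$. The relation $\mu_2\mu_3=-\hat c_3/\mu_1$ then gives only $|\mu_2\mu_3|>|\hat c_3|^{1/2}$, not $|\mu_2|,|\mu_3|\gtrsim 1$, and using it further is circular before $|\mu_1|$ is known. Get the lower bound from the second Vieta relation instead, exactly as in your third bullet: $\mu_2\mu_3=\hat c_2-\mu_1(\mu_2+\mu_3)=\hat c_2(1+o(1))$. Alternatively, note that $|q(w)|\ge|w|\hat c_2/2-|\hat c_3|>0$ on the whole annulus $|\hat c_3|^{1/2}\le|w|\le\min\{1,\hat c_2/(2(1+\hat c_1))\}$, whose outer radius is $\asymp 1$.
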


\begin{proof}
Let $\mu:=z/\rho$ and write
\[
0=p(\rho\mu)=\rho^3\Bigl(\mu^3+\hat c_1 \mu^2+\hat c_2 \mu+\hat c_3\Bigr),
\qquad
\hat c_1:=\frac{c_1}{\rho},\ \hat c_2:=\frac{c_2}{\rho^2},\ \hat c_3:=\frac{c_3}{\rho^3}.
\]
By \eqref{eq:cubic_scale_assumptions}, $\hat c_1,\hat c_2$ are $\asymp 1$.
Cauchy's root bound applied to the monic cubic in $\mu$ implies all $\mu$-roots are $O(1)$,
hence all $z$-roots satisfy $|\lambda_i|\lesssim \rho$.

\smallskip
\noindent
\emph{Case 1 ($c_3\asymp \rho^3$).}
Here $|\hat c_3|\asymp 1$, so by Vieta in the $\mu$-variables,
$|\mu_1\mu_2\mu_3| = |\hat c_3| \asymp 1$.
Since each $|\mu_i|\lesssim 1$, each $|\mu_i|$ is bounded below by a positive constant, and thus
$|\lambda_i|=\rho|\mu_i|\asymp \rho$.

\smallskip
\noindent
\emph{Case 2 ($c_3=o(\rho^3$)).}
Now $\hat c_3\to 0$, hence $\mu_1\mu_2\mu_3\to 0$.
Since the $\mu_i$ are uniformly bounded, at least one $\mu_i\to 0$.
If two distinct roots satisfied $\mu_i\to 0$, then the pairwise-product sum
$\mu_1\mu_2+\mu_1\mu_3+\mu_2\mu_3$ would tend to $0$, contradicting
$\hat c_2=\frac{c_2}{\rho^2}\asymp 1$.
Thus exactly one root $\mu_{\mathrm{slow}}$ vanishes, and the other two satisfy
$|\mu|\asymp 1$, i.e.\ $|\lambda|\asymp \rho$.

Finally, since $p$ has real coefficients, any non-real root comes with its complex conjugate.
Because there is only one vanishing root, it must be real.
Moreover, by Vieta,
\[
|\mu_{\mathrm{slow}}|
=\frac{|\hat c_3|}{|\mu_{\mathrm{fast},1}\mu_{\mathrm{fast},2}|}
\asymp |\hat c_3|,
\]
and translating back gives
$|\lambda_{\mathrm{slow}}|=\rho|\mu_{\mathrm{slow}}|\asymp \rho\cdot \frac{c_3}{\rho^3}=\frac{c_3}{\rho^2}$.
Since $c_2\asymp \rho^2$, this is equivalent to $|\lambda_{\mathrm{slow}}|\asymp c_3/c_2$.
\end{proof}

\subsubsection{Implications for 3D vs.\ 1D regimes}
\label{sec:apply_cubic_root_dichotomy}

Combine Lemma~\ref{lem:cubic_coeff_scales} with Lemma~\ref{lem:cubic_root_dichotomy}.
At polynomial resolution, the regime distinction reduces to whether
$\eta\mathcal{B}_1$ is comparable to $\rho$ (correlation-limited) or much smaller (noise-limited/memoryless).

\begin{proposition}[Eigenvalue locations at polynomial resolution]
\label{prop:eigs_cubic_timescales}
Fix a learning rate $\eta=\vartheta\,\eta_{\max}(d)$ with $\vartheta\in(0,1)$ independent of $d$.

\begin{enumerate}
\item \textbf{3D regimes (correlation-limited).}
In the 3D regimes, the binding ceiling is correlation-limited, so $\eta\mathcal{B}_1\asymp \rho$
(at polynomial resolution).
Hence $c_3\asymp \rho^3$, and Lemma~\ref{lem:cubic_root_dichotomy} implies that
all three eigenvalues satisfy $|\lambda|\asymp \rho$, i.e.\ all three modes relax on the retention-controlled
timescale $\tau\asymp \rho^{-1}$.

\item \textbf{1D regimes (noise-limited or memoryless).}
In the 1D regimes, $\eta\mathcal{B}_1\ll \rho$ at polynomial resolution.
Hence $c_3=o(\rho^3)$, and Lemma~\ref{lem:cubic_root_dichotomy} implies there is a unique eigenvalue
$\lambda_{\mathrm{slow}}$ with
\[
|\lambda_{\mathrm{slow}}|\asymp \frac{c_3}{c_2}\asymp \eta\mathcal{B}_1 \ll \rho,
\]
while the other two eigenvalues satisfy $|\lambda|\asymp \rho$.
Equivalently, there is a strict separation between one slow mode and a fast cluster:
\[
\tau_{\mathrm{fast}}\;\lesssim\;\rho^{-1}\;\ll\;\tau_{\mathrm{slow}}\;\asymp\;(\eta\mathcal{B}_1)^{-1}.
\]
\end{enumerate}
\end{proposition}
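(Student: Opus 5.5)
The plan is to combine Lemma~\ref{lem:cubic_coeff_scales} (coefficient scales) with Lemma~\ref{lem:cubic_root_dichotomy} (root dichotomy). Everything then reduces to computing the size of $\eta\mathcal{B}_1/\rho$ at $\eta=\vartheta\,\eta_{\max}$ in each region of Table~\ref{tab:eta_max_by_region}. First I will check the hypotheses $c_1\asymp\rho$ and $c_2\asymp\rho^2$ of Lemma~\ref{lem:cubic_root_dichotomy}. By Lemma~\ref{lem:cubic_coeff_scales}, the $\eta^0$ parts are $(1-\bar\beta_1)+(1-\bar\beta_2)\asymp\rho$ and $(1-\bar\beta_1)(1-\bar\beta_2)\asymp\rho^2$, using Lemma~\ref{lem:retention_gap_ordering}. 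So I must show that the $O(\eta)$ corrections are not larger at $\eta\le\eta_{\max}$. For this I would use the per-region term decomposition of Section~\ref{sec:computational_verification}. Each $\eta^k$ correction to $c_1$ and $c_2$ carries factors such as $\eta\mathcal{B}_1$, $\eta\varepsilon\mathcal{B}_1\delta_\theta$ or $\eta^2\varepsilon^2\mathcal{B}_2\delta^{(2)}$. Using $\eta\mathcal{B}_1\lesssim\rho$ (correlation ceiling) and $\eta\mathcal{B}_2\lesssim\mathcal{B}_1$ (noise ceiling), each such factor is $\lesssim\rho$ times the leading term. I expect this to hold region by region.

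Next I determine the size of $c_3$. By \eqref{eq:c3_scale_summary}, $c_3\asymp\eta\mathcal{B}_1\rho^2$ once the negative $\eta^2$ term is shown to be subordinate. At $\eta=\vartheta\eta_{\max}$ with $\vartheta<1$ fixed, this holds up to constants whenever the noise ceiling is not the binding constraint. When the noise ceiling does bind, the positive linear term still dominates by a factor depending only on $\vartheta$, because the ratio of the two terms is $\vartheta$ times an $O(1)$ quantity.

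\emph{3D case} (regions A, C, F). Here $\eta_{\max}\asymp\rho/\mathcal{B}_1$, so $\eta\mathcal{B}_1\asymp\rho$ and $c_3\asymp\rho^3$. Lemma~\ref{lem:cubic_root_dichotomy}(1) then gives $|\lambda_i|\asymp\rho$ for all three roots.

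\emph{1D case} (regions B, D, E). Here $\eta_{\max}\asymp\mathcal{B}_1/\mathcal{B}_2$, and from Table~\ref{tab:eta_max_by_region} the noise exponent is strictly smaller than the correlation exponent. Hence $\eta\mathcal{B}_1/\rho\asymp d^{-c}$ for some $c>0$. In region D this follows from $\rho\asymp1$ and $\eta\mathcal{B}_1\asymp d^{-1}$. In regions B and E it equals $d^{\sigma-1-\kappa+\gamma}\to0$ strictly below the resonance line. Therefore $c_3=o(\rho^3)$, and Lemma~\ref{lem:cubic_root_dichotomy}(2) yields a unique real slow root with $|\lambda_{\mathrm{slow}}|\asymp c_3/c_2\asymp\eta\mathcal{B}_1$, while the two fast roots satisfy $|\lambda|\asymp\rho$. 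The stated timescale separation follows by inversion.

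The main obstacle is the first step: justifying rigorously that the higher-order-in-$\eta$ corrections to $c_1$, $c_2$ and $c_3$ stay subleading at $\eta\asymp\eta_{\max}$. This is delicate in region A and on the boundaries $\kappa=\sigma$ and $\kappa=\sigma-1$, where drift coefficients like $\delta_\theta^{(2)}$ or $\mathcal{B}_{\mathrm{cross}}$ change their leading behavior. I would handle it by bounding each monomial using the two ceiling inequalities, and cross-check it against the SymPy term algebra.
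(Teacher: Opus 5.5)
Your proposal is correct and is the paper's argument: combine Lemma~\ref{lem:cubic_coeff_scales} with Lemma~\ref{lem:cubic_root_dichotomy}, and decide the case by $\eta\mathcal{B}_1/\rho\asymp\eta/\eta_{\max}^{(c_1c_2>c_3)}$. This ratio is $\asymp 1$ in regions A, C, F, and in the noise-bound regions it is $\asymp d^{\sigma-1-\kappa+\gamma}$ (B, E) or $d^{-1}$ (D), exactly as you computed.

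One caution if you pursue the extra re-verification you plan: in the 3D regimes the $O(\eta)$ part of $c_2$ is \emph{not} subleading. The strict-interior limits give $c_2=\rho^2(2+4\bar\eta)+o(\rho^2)$, of which only $2\rho^2$ comes from $(1-\bar\beta_1)(1-\bar\beta_2)$. So showing the corrections are ``not larger'' yields only $c_2\lesssim\rho^2$. The lower bound also needs their sign: $-a_Cc_R\approx 2\eta\varepsilon\mathcal{B}_1\delta_\theta>0$ and $-b_Cc_V>0$, while negative pieces such as $-a_Vb_R$, of size $\eta^2\mathcal{B}_2\rho^2\lesssim\eta\mathcal{B}_1\rho^2$, are $o(\rho^2)$.
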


\subsection{Regime Reductions: Limiting ODEs in Each Phase}
\label{app:regimes}

This section derives the effective limiting ODE in each region of the
$(\kappa,\gamma)$ phase plane by taking the main ODE in the scaled
$(R,W,Z)$ coordinates (Appendix~\ref{app:coscaling}), plugging in the
regime-specific asymptotic scalings, and applying either diagonal
balancing (2D regimes) or adiabatic elimination (1D regimes).
Together, these reductions constitute a proof of
Theorem~\ref{thm:unified_limit}.

The asymptotic scalings of the main ODE coefficients differ across
sub-regions of the $(\kappa,\gamma)$ plane, requiring separate case
analysis, but in every case above resonance the limit is a deterministic
heavy-ball ODE, and in every case below resonance the limit is scalar
exponential decay identical to SGD.

\subsubsection{Methods and canonical forms}

\paragraph{Diagonal balancing (above resonance).}
When $\eta\Bone\asymp\rho$ (at polynomial resolution), all three
eigenvalues of $\tilde A$ are at scale $\rho$, and no adiabatic
collapse occurs.
The entries of the time-rescaled matrix
$\bar A(d) := d^{\text{(time power)}}\tilde A(d)$ have different
magnitudes, so we apply a diagonal balancing transform
$D(d) = \operatorname{diag}(d_1, d_2, d_3)$, chosen so that
$D^{-1}\bar A D$ converges entrywise as $d\to\infty$.
The limit is a $3\times 3$ matrix whose second-moment structure
factors through a 2D deterministic heavy-ball ODE.

\paragraph{Adiabatic elimination (below resonance).}
When $\eta\Bone\ll\rho$, the pair $(W,Z)$ relaxes on a fast timescale
$\asymp \rho^{-1}$ while $R$ evolves on a slow timescale $\asymp (\eta\Bone)^{-1}$.
Setting $\dot W\approx 0$ and $\dot Z\approx 0$,
we solve for $(W,Z)$ as functions of $R$ and substitute into the $\dot R$
equation to obtain a scalar ODE $\dot R = -\ceff\,R$.

\paragraph{Canonical 2D heavy-ball form (above resonance).}
Above resonance, all three eigenvalues of $\tilde A$ are at the common scale $\rho$ (Appendix~\ref{sec:cubic_classification}), so the second-moment system on $(R, V, C)$ is irreducible as a linear system: there is no adiabatic collapse onto a lower-dimensional eigen-subspace. The deterministic limit nevertheless factors through a 2D heavy-ball ODE via the \emph{square-root lift} $R = X^2$, $V = Y^2$, $C = XY$. The physical initial condition $m_0 = 0$ places the dynamics on the invariant submanifold $\{RV = C^2\}$, on which this lift is well-defined, and the diagonal-balanced limit on $(R, V, C)$ then reduces to
\[
\dot X = -aY, \qquad \dot Y = bX - cY, \qquad a, b, c > 0,
\]
equivalently the closed 3D system on the second moments themselves,
\[
\dot R = -2a\,C, \qquad
\dot V = 2b\,C - 2c\,V, \qquad
\dot C = b\,R - a\,V - c\,C.
\]
The characteristic polynomial of the $(X, Y)$ system is $\lambda^2 + c\lambda + ab$; oscillations occur when $c^2 < 4ab$.

\begin{center}
\small
\renewcommand{\arraystretch}{1.2}
\begin{tabular}{@{}lcccc@{}}
\toprule
Sub-case & $\alpha$ & $a$ & $b$ & $c$ \\
\midrule
Reliably active, above resonance ($\kappa\le\sigma$)
  & $\gamma-\kappa$ & $\eta_*p_*$ & $\eps_*$ & $\eps_*$ \\
Sparse, above resonance ($\kappa>\sigma$)
  & $\gamma-\kappa$ & $\eta_*/B_*$ & $\eps_*/(p_*B_*)$ & $\eps_*/(p_*B_*)$ \\
\bottomrule
\end{tabular}
\end{center}

\paragraph{Canonical 1D SGD form (below resonance).}
Adiabatic elimination yields
\[
\frac{dR}{dt}=-c_F\,R,\qquad
c_F:=\eta\Bone\!\left(2-\eta\frac{\Btwo}{\Bone}\right)+o(\eta\Bone).
\]

\begin{center}
\small
\renewcommand{\arraystretch}{1.2}
\begin{tabular}{@{}lcccc@{}}
\toprule
Sub-case & $\alpha$ & Slow clock $\tau$ & Time constant $T_*$ & $\etaeff$ \\
\midrule
Dense, below resonance ($\sigma-1<\kappa<\sigma$)
  & $1-\sigma$ & $t/d^{1-\sigma+\kappa}$ & $p_*B_*$ & $\eta_*/B_*$ \\
Sparse, below resonance ($\kappa>\sigma$)
  & $1-\sigma$ & $t/d$ & $1$ & $\eta_*/B_*$ \\
\bottomrule
\end{tabular}
\end{center}

\noindent
With the SGD-normalized clock $s:=T_*\tau$,
both rows give $\dot R=-\etaeff(2-\etaeff)R$.

\subsubsection{Auxiliary: reliably-active-via-batching suppression}
\label{sec:dense_suppression}

The reliably active reductions below repeatedly use the fact that when $\kappa < \sigma$, the empty-batch probability $\Qbatch$ is super-polynomially small in $d$, so any term multiplied by $\Qbatch$ (or by $\delta_{-1,g}$, which is proportional to $\Qbatch$) can be dropped at polynomial resolution.

\begin{lemma}[Reliably-active-via-batching suppression]
\label{lem:dense_suppression}
Under the co-scaling ansatz ($p \asymp d^{-\kappa}$, $B \asymp d^\sigma$), if $\kappa < \sigma$ then
\begin{equation}
    \Qbatch = (1-p)^B = o_{\mathrm{poly}}(1),
\end{equation}
i.e.\ $\Qbatch$ decays faster than any inverse polynomial in $d$, and hence $\Pbatch = 1 - o_{\mathrm{poly}}(1)$.
\end{lemma}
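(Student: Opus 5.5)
The plan is to bound $\Qbatch=(1-p)^B$ directly with the elementary inequality $1-x\le e^{-x}$ for $x\in[0,1]$. This gives
\[
\Qbatch=(1-p)^B\le \exp(-pB).
\]
Everything then reduces to showing that $pB$ grows polynomially in $d$.

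Under the co-scaling ansatz, $p\asymp d^{-\kappa}$ and $B\asymp d^{\sigma}$, so
\[
pB = d^{\sigma-\kappa+o(1)}.
\]
Set $\delta:=\sigma-\kappa>0$. For all sufficiently large $d$ the $o(1)$ correction in the exponent is at most $\delta/2$, and hence $pB\ge d^{\delta/2}$. In the sharper constant-prefactor form \eqref{eq:common_coscaling_constants} one even has $pB\ge \tfrac12 p_*B_* d^{\delta}$. Either way, $\Qbatch\le \exp(-d^{\delta/2})$ eventually.

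To conclude super-polynomial decay, fix any $c>0$. Then
\[
d^c\,\Qbatch\le \exp\bigl(c\log d-d^{\delta/2}\bigr)\to 0,
\]
because $d^{\delta/2}$ dominates $c\log d$. This is exactly the definition of $\Qbatch=o_{\mathrm{poly}}(1)$ from Appendix~\ref{sec:common_asymptotic_notation}. The statement $\Pbatch=1-\Qbatch=1-o_{\mathrm{poly}}(1)$ is immediate.

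The only delicate point is handling the subpolynomial slack in $\asymp$, which requires using the strict inequality $\kappa<\sigma$ to absorb it. The argument fails at $\kappa=\sigma$, where $pB=\Theta(1)$ and $\Qbatch=\Theta(1)$, consistent with the boundary discussion earlier. As an afterthought, the same bound also controls any coefficient proportional to $\Qbatch$, such as $\delta_g$ and $\delta_{-1,g}$, because their remaining factors are at most polynomial in $d$.
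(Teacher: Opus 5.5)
Your proof is correct and matches the paper's argument: bound $\Qbatch\le e^{-pB}$ via $1-u\le e^{-u}$, then use $pB=d^{\sigma-\kappa+o(1)}$ to get $c\log d - pB\to-\infty$ for every fixed $c>0$. The one place you go further is the explicit absorption of the $o(1)$ exponent slack through $pB\ge d^{\delta/2}$, which is a slightly more careful write-up of the same step.
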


\begin{proof}
The elementary inequality $1-u \le e^{-u}$ for $u \in (0,1)$ gives
\begin{equation}
    \Qbatch = (1-p)^B \;\le\; e^{-pB}.
\end{equation}
Under the co-scaling ansatz $pB = d^{\sigma-\kappa+o(1)} \to \infty$ whenever $\kappa<\sigma$. For any fixed $c>0$,
\begin{equation}
    \log\!\bigl(d^c \Qbatch\bigr)
    \;\le\;
    c\log d - pB
    \;=\;
    c\log d - d^{\sigma-\kappa+o(1)}
    \xrightarrow[d\to\infty]{}
    -\infty,
\end{equation}
so $d^c \Qbatch \to 0$. Since $c>0$ was arbitrary, $\Qbatch = o_{\mathrm{poly}}(1)$.
\end{proof}

\subsubsection{Above resonance: heavy-ball dynamics}

\paragraph{Reliably active, above resonance ($\kappa\le\sigma$, $\gamma > 1-\sigma+\kappa$)}
\label{sec:regime-dense-above}

This regime covers all $\kappa\le\sigma$ with $\gamma > (1-\sigma+\kappa)_+$.
The derivation has two sub-cases depending on whether
$\kappa\le\sigma-1$ or $\sigma-1<\kappa\le\sigma$,
because the relative scaling of $\Bcross$ vs $\Bdiag$ changes
at $\kappa=\sigma-1$: when $\kappa\le\sigma-1$, the product $pB\to\infty$
fast enough that $\Bcross$ contributes at leading order in $\Btwo$,
requiring a different balancing transform.
Both sub-cases yield the same limiting ODE with
$(a,b,c) = (\eta_*p_*, \eps_*, \eps_*)$ on the clock $\tau = t/d^\gamma$.

\paragraph{Concentrated sub-case ($\kappa\le\sigma-1$).}

\paragraph{Assumptions.}
This is the concentrated batching regime.
Take $\kappa\le \sigma-1$ (correlation-limited), $\gamma>0$, and interior scaling
\[
\eta=\eta_*d^{\kappa-\gamma}(1+o(1)),
\]
with fixed $\eta_*>0$ chosen in the stable interior.
Under the constant-level co-scaling ansatz \eqref{eq:common_coscaling_constants}, write
\[
p=p_*d^{-\kappa}(1+o(1)),\qquad
B=B_*d^\sigma(1+o(1)),\qquad
\eps=\eps_*d^{-\gamma}(1+o(1)),
\]
so
\[
\eta\,d^{\gamma-\kappa}\to\eta_*.
\]
Define the interior coupling constant
\[
\bar\eta:=\frac{\eta_*p_*}{\eps_*}
=\lim_{d\to\infty}\frac{\eta\,p}{\rho}.
\]
Since $\kappa\le\sigma-1$, one has $pB\to\infty$, so by Lemma~\ref{lem:dense_suppression}
\[
\Pbatch=1-o_{\mathrm{poly}}(1),\qquad
\Qbatch=o_{\mathrm{poly}}(1),\qquad
\rho=\frac{\eps}{\Pbatch+\eps}=\eps_*d^{-\gamma}(1+o(1)),\qquad
\Bone=p_*d^{-\kappa}(1+o(1)).
\]

\paragraph{Rescaled system.}
With $x=(R,W,Z)^\top$ and pure-power time scaling
\[
\tau:=\frac{t}{d^\gamma},
\]
\[
\frac{dx}{d\tau}=\bar A(d)\,x,\qquad
\bar A(d):=d^\gamma\tilde A(d)=(\eps_*+o(1))\,\widehat A(d),\qquad
\widehat A(d):=\rho^{-1}\tilde A(d).
\]
Using the explicit entries and reliably-active-via-batching suppression ($\Qbatch=o_{\mathrm{poly}}(1)$),
\[
\bar A(d)=
\begin{pmatrix}
o(1) & \Theta(\rho^3) & \Theta(p^{-1})\\
\Theta(\rho^{-1}) & -2+o(1) & \Theta((\rho^2 p)^{-1})\\
\Theta(p) & \Theta(\rho^2 p) & -1+o(1)
\end{pmatrix}.
\]
So in the current coordinates $(R,W,Z)$, $\bar A(d)$ (equivalently $\widehat A(d)$ up to an $O(1)$ factor) does not have an entrywise finite limit in general.

\paragraph{Limit ODE.}
Define a diagonal balancing transform
\[
D(d):=\operatorname{diag}(\rho^2,1,\rho^2 p),\qquad y:=D(d)^{-1}x.
\]
Then
\[
\frac{dy}{d\tau}=A_{\mathrm{coh}}^{(\tau)}(d)\,y,\qquad
A_{\mathrm{coh}}^{(\tau)}(d):=D(d)^{-1}\bar A(d)D(d),
\]
and all entries of $A_{\mathrm{coh}}^{(\tau)}(d)$ are $O(1)$.
Moreover, along constant-level co-scaling subsequences, $A_{\mathrm{coh}}^{(\tau)}(d)$ converges to a finite $3\times3$ matrix
\[
A_{\mathrm{coh},*}^{(\tau)}=
\eps_*
\begin{pmatrix}
0 & 0 & -2\bar\eta\\
0 & -2 & c_{12,*}\\
1 & -c_{21,*}\bar\eta & -1
\end{pmatrix},
\]
where $c_{12,*},c_{21,*}>0$ are $O(1)$ constants determined by the constant-level parameters
$(p_*,B_*,\eps_*,\eta_*)$.
In the strict interior ($\sigma>\kappa+1$) ($\sigma>\kappa+1$),
\[
c_{12,*}=2,\qquad c_{21,*}=1,
\]
so
\[
A_{\mathrm{coh},*}^{(\tau)}=
\eps_*
\begin{pmatrix}
0 & 0 & -2\bar\eta\\
0 & -2 & 2\\
1 & -\bar\eta & -1
\end{pmatrix}.
\]
Let $(u_1,u_2,u_3)$ denote this state and define
\[
R:=u_1,\qquad V:=u_2,\qquad C:=u_3.
\]
Then the 3D interior limit system is
\begin{align*}
\dot R &= -2\eps_*\bar\eta\,C,\\
\dot V &= -2\eps_*V+2\eps_*C,\\
\dot C &= \eps_*R-\eps_*\bar\eta V-\eps_*C.
\end{align*}
Now impose
\[
R=x^2,\qquad V=y^2,\qquad C=xy.
\]
This is reproduced exactly by
\begin{equation}
\boxed{
\dot x=-\eps_*\bar\eta\,y,\qquad
\dot y=\eps_*(x-y).
}
\end{equation}
Indeed,
\[
\dot R=2x\dot x=-2\eps_*\bar\eta C,\quad
\dot V=2y\dot y=2\eps_*(C-V),\quad
\dot C=\dot x\,y+x\,\dot y=\eps_*(R-\bar\eta V-C).
\]

\paragraph{Interpretation.}
The concentrated regime admits a bona fide deterministic 3D limit ODE on the pure-power timescale
$\tau=t/d^\gamma$ (after balancing). The prior retention-clock form satisfies
$s=\rho t=\eps_*\tau(1+o(1))$, so this is the same limit up to an $O(1)$ clock factor.
Setting $\bar\theta:=\bar\eta$, this is a deterministic heavy-ball-type linear ODE:
\[
\dot x=-\eps_*\bar\theta\,y,\qquad \dot y=\eps_*(x-y).
\]
Here $x$ is the optimizer error state, $\bar\theta$ is the effective step-size parameter on the timescale, and $y$ is the momentum buffer.
In this scaling limit, stochastic volatility terms vanish.

\paragraph{Dense sub-case ($\sigma-1 < \kappa \le \sigma$).}

\paragraph{Assumptions.}
This is the dense above-resonance regime, a 3D correlation-limited regime.
For strict interior analysis take
\[
\sigma-1<\kappa<\sigma,\qquad \gamma>1-\sigma+\kappa,
\]
and under the constant-level co-scaling ansatz \eqref{eq:common_coscaling_constants} write
\[
p=p_*d^{-\kappa}(1+o(1)),\qquad
B=B_*d^\sigma(1+o(1)),\qquad
\eps=\eps_*d^{-\gamma}(1+o(1)),\qquad
\eta=\eta_*d^{\kappa-\gamma}(1+o(1)).
\]
Define
\[
\bar\eta:=\frac{\eta_*p_*}{\eps_*}
=\lim_{d\to\infty}\frac{\eta p}{\rho}.
\]
Since $\kappa<\sigma$, Lemma~\ref{lem:dense_suppression} gives
\[
\Pbatch=1-o_{\mathrm{poly}}(1),\qquad
\Qbatch=o_{\mathrm{poly}}(1),\qquad
\rho=\frac{\eps}{\Pbatch+\eps}=\eps_*d^{-\gamma}(1+o(1)),\qquad
\Bone=p_*d^{-\kappa}(1+o(1)).
\]
Also $\tau_W\asymp\tau_Z\asymp\tau_R\asymp d^\gamma$, so no adiabatic 1D collapse.

\paragraph{Rescaled system.}
Use pure-power time scaling
\[
\tau:=\frac{t}{d^\gamma},\qquad x:=(R,W,Z)^\top,
\]
so
\[
\frac{dx}{d\tau}=\bar A(d)\,x,\qquad
\bar A(d):=d^\gamma\tilde A(d).
\]
From the exact coefficients in \S\ref{sec:scaled-3d} with dense suppression ($\Qbatch=o_{\mathrm{poly}}(1)$):
\[
\tilde b_W=-2\rho+o(\rho),\quad
\tilde c_Z=-\rho+o(\rho),\quad
\tilde b_Z=\frac{2B}{\rho d}+o\!\left(\frac{B}{\rho d}\right),\quad
\tilde c_R=\rho p+o(\rho p),
\]
\[
\tilde c_W=-\eta\rho^2\Btwo+o(\eta\rho^2\Btwo),\qquad
\tilde a_Z=-2\eta+o(\eta),\qquad
\tilde a_W=o(\eta p\rho),\qquad
\tilde a_R=o(\eta p\rho).
\]

\paragraph{Limit ODE.}
For an entrywise finite limit, use diagonal balancing
\[
D_{\mathrm{id}}(d):=\operatorname{diag}\!\left(\rho^2\frac{d}{Bp},\ 1,\ \rho^2\frac{d}{B}\right),
\qquad y:=D_{\mathrm{id}}^{-1}x.
\]
Then
\[
\frac{dy}{d\tau}=A_{\mathrm{id}}^{(\tau)}(d)\,y,\qquad
A_{\mathrm{id}}^{(\tau)}(d):=D_{\mathrm{id}}^{-1}\bar A(d)D_{\mathrm{id}},
\]
and term-by-term substitution gives
\[
A_{\mathrm{id}}^{(\tau)}(d)=
\eps_*
\begin{pmatrix}
0 & 0 & -2\bar\eta\\
0 & -2 & 2\\
1 & -\bar\eta & -1
\end{pmatrix}
+o(1).
\]
Hence the strict-interior effective dynamics are
\[
\boxed{
\begin{aligned}
\dot R &= -2\eps_*\bar\eta\,C,\\
\dot V &= -2\eps_*V+2\eps_*C,\\
\dot C &= \eps_*R-\eps_*\bar\eta V-\eps_*C,
\end{aligned}
\qquad
\bar\eta:=\frac{\eta_*p_*}{\eps_*}.}
\]
This is the same deterministic 3D interior limit system as the concentrated sub-case, after the regime-appropriate balancing.

\paragraph{Interpretation.}
With
\[
R=x^2,\qquad V=y^2,\qquad C=xy,
\]
the boxed 3D limit is exactly
\[
\dot x=-\eps_*\bar\eta\,y,\qquad
\dot y=\eps_*(x-y),
\]
i.e. the same heavy-ball-type deterministic limit as the concentrated sub-case, on the pure-power clock
$\tau=t/d^\gamma$.

\paragraph{Sparse, above resonance ($\kappa>\sigma$, $\gamma > 1-\sigma+\kappa$)}
\label{sec:regime-sparse-above}

\paragraph{Assumptions.}
This is a 3D (correlation-limited) regime.
For strict interior analysis take
\[
\kappa>\sigma,\qquad \gamma>1-\sigma+\kappa,
\]
and under the constant-level co-scaling ansatz \eqref{eq:common_coscaling_constants} write
\[
p=p_*d^{-\kappa}(1+o(1)),\qquad
B=B_*d^\sigma(1+o(1)),\qquad
\eps=\eps_*d^{-\gamma}(1+o(1)),\qquad
\eta=\eta_*d^{\kappa-\gamma}(1+o(1)).
\]
Set
\[
\keff:=\kappa-\sigma>0,\qquad
\nu:=\gamma-\keff>1,\qquad
\rho_*:=\frac{\eps_*}{p_*B_*}.
\]
Define the coupling constant
\[
\bar\eta:=\frac{\eta_*p_*}{\eps_*}
=\lim_{d\to\infty}\frac{\eta p}{\rho}.
\]
Using \eqref{eq:def_pbatch} and \eqref{eq:common_coscaling_constants},
\[
\Pbatch=pB\,(1+o(1)),\qquad
\Qbatch=1-o(1),\qquad
\rho=\frac{\eps}{\Pbatch+\eps}=\rho_*d^{-\nu}(1+o(1)),\qquad
\Bone=\frac{1}{B}(1+o(1)).
\]
Also $\tau_W\asymp\tau_Z\asymp\tau_R\asymp d^\nu$, so no adiabatic 1D collapse.

\paragraph{Rescaled system.}
Use pure-power time scaling
\[
\tau:=\frac{t}{d^\nu},\qquad x:=(R,W,Z)^\top,
\]
so
\[
\frac{dx}{d\tau}=\bar A(d)\,x,\qquad
\bar A(d):=d^\nu\tilde A(d).
\]
From the exact coefficients in \S\ref{sec:scaled-3d} plus
\eqref{eq:def_pbatch}, \eqref{eq:def_delta_theta_closed}, \eqref{eq:def_delta_m1_theta_closed},
\eqref{eq:def_Bratio}, \eqref{eq:common_coscaling_constants}:
\[
\tilde b_W=-2\rho+o(\rho),\quad
\tilde c_Z=-\rho+o(\rho),\quad
\tilde b_Z=\frac{2B}{\rho d}+o\!\left(\frac{B}{\rho d}\right),\quad
\tilde c_R=\frac{\rho}{B}+o\!\left(\frac{\rho}{B}\right),
\]
\[
\tilde c_W=-\eta\rho^2\Btwo+o(\eta\rho^2\Btwo),\qquad
\tilde a_Z=-2\eta+o(\eta),\qquad
\tilde a_W=o\!\left(\eta\rho\frac1B\right),\qquad
\tilde a_R=o\!\left(\eta\rho\frac1B\right).
\]

\paragraph{Limit ODE.}
For an entrywise finite limit, use diagonal balancing
\[
D_{\mathrm{is}}(d):=\operatorname{diag}\!\left(\rho^2 d,\ 1,\ \rho^2\frac{d}{B}\right),
\qquad y:=D_{\mathrm{is}}^{-1}x.
\]
Then
\[
\frac{dy}{d\tau}=A_{\mathrm{is}}^{(\tau)}(d)\,y,\qquad
A_{\mathrm{is}}^{(\tau)}(d):=D_{\mathrm{is}}^{-1}\bar A(d)D_{\mathrm{is}}.
\]
Using $\nu>1$, the $(2,1)$ channel vanishes since
$d^\nu\cdot 1\cdot(\rho^2 d)=\rho_*^2 d^{1-\nu}(1+o(1))\to0$.
Term-by-term substitution gives
\[
A_{\mathrm{is}}^{(\tau)}(d)=
\rho_*
\begin{pmatrix}
0 & 0 & -2\bar\eta\\
0 & -2 & 2\\
1 & -\bar\eta & -1
\end{pmatrix}
+o(1).
\]
Hence the strict-interior effective dynamics are
\[
\boxed{
\begin{aligned}
\tau &:= \frac{t}{d^\nu},\qquad \nu=\gamma-(\kappa-\sigma),\\
\dot R &= -2\rho_*\bar\eta\,C,\\
\dot V &= -2\rho_*V+2\rho_*C,\\
\dot C &= \rho_*R-\rho_*\bar\eta V-\rho_*C,
\end{aligned}
\qquad
\rho_*:=\frac{\eps_*}{p_*B_*},\ \bar\eta:=\frac{\eta_*p_*}{\eps_*}.}
\]
This is the same deterministic 3D interior limit system as the reliably active above-resonance case, after the regime-appropriate balancing.

\paragraph{Interpretation.}
With
\[
R=x^2,\qquad V=y^2,\qquad C=xy,
\]
the boxed 3D limit is exactly
\[
\dot x=-\rho_*\bar\eta\,y,\qquad
\dot y=\rho_*(x-y),
\]
i.e. the same heavy-ball-type deterministic limit as the reliably active above-resonance case, on the pure-power clock
$\tau=t/d^\nu$.

\subsubsection{Below resonance: SGD dynamics}

\paragraph{Dense, below resonance ($\sigma-1<\kappa<\sigma$, $\gamma < 1-\sigma+\kappa$)}
\label{sec:regime-dense-below}

\paragraph{Assumptions.}
This is the dense below-resonance regime, a 1D noise-limited regime.
For strict-interior analysis take
\[
\sigma-1<\kappa<\sigma,\qquad \gamma<1-\sigma+\kappa,\qquad \eta=\vartheta\,\eta_{\max},\ \vartheta\in(0,1),
\]
with
\[
\eta_{\max}\asymp d^{\sigma-1},\qquad
\keff=0,\qquad
\nu=\gamma,\qquad
\rho=\frac{\eps}{\Pbatch+\eps}\asymp d^{-\gamma}.
\]
Define
\[
\Lambda_{\mathrm{slow}}:=\eta\Bone=\eta\,\frac{p}{\Pbatch}.
\]
Using \eqref{eq:def_pbatch}, \eqref{eq:def_Bratio}, \eqref{eq:common_coscaling_constants}, and Lemma~\ref{lem:dense_suppression}, $\kappa<\sigma$ implies
\[
\Pbatch=1-o_{\mathrm{poly}}(1),\qquad
\Qbatch=o_{\mathrm{poly}}(1),\qquad
\Bone=p\,(1+o(1)),\qquad
\frac{\Btwo}{\Bone}=\frac{d}{B}(1+o(1)),
\]
so
\[
\Lambda_{\mathrm{slow}}\asymp d^{-(1-\sigma+\kappa)}.
\]
Under the constant-level co-scaling ansatz \eqref{eq:common_coscaling_constants},
\[
\Lambda_{\mathrm{slow}}
=c_{\Lambda}\,d^{-(1-\sigma+\kappa)}(1+o(1)),
\qquad
c_{\Lambda}:=\eta_*p_*.
\]
Below resonance, $\gamma<1-\sigma+\kappa$ gives $\Lambda_{\mathrm{slow}}\ll \rho$.

\paragraph{Rescaled system.}
\[
u:=\begin{pmatrix}W\\Z\end{pmatrix},\quad
A_{\mathrm{fast}}:=
\begin{pmatrix}\tilde b_W&\tilde b_Z\\ \tilde c_W&\tilde c_Z\end{pmatrix},\quad
f:=\begin{pmatrix}1\\ \tilde c_R\end{pmatrix},\quad
g^\top:=(\tilde a_W,\tilde a_Z).
\]
\[
\dot u=A_{\mathrm{fast}}u+fR,\qquad
\dot R=g^\top u+\tilde a_R R.
\]
In dense below resonance (from exact coefficients in \S\ref{sec:scaled-3d} plus
\eqref{eq:def_pbatch}, \eqref{eq:def_delta_theta_closed}, \eqref{eq:def_delta_m1_theta_closed},
\eqref{eq:def_Bratio}, \eqref{eq:common_coscaling_constants}):
\[
\tilde b_W=-2\rho+o(\rho),\quad
\tilde c_Z=-\rho+o(\rho),\quad
\tilde b_Z=\frac{2\eta}{\rho}+o\!\left(\frac{\eta}{\rho}\right),\quad
\tilde c_R=\rho\Bone+o(\rho\Bone),
\]
\[
\tilde c_W=-\eta\rho^2\Btwo+o(\eta\rho^2\Btwo),\qquad
\tilde a_Z=-2\eta+o(\eta),\qquad
\tilde a_W=o(\eta\Bone),\qquad
\tilde a_R=o(\eta\Bone).
\]

\paragraph{Limit ODE.}
Adiabatic elimination gives
\[
u=-A_{\mathrm{fast}}^{-1}f\,R+o(R),\qquad
\dot R=\lambda_R R,\qquad
\lambda_R=\tilde a_R-g^\top A_{\mathrm{fast}}^{-1}f.
\]
Using
\[
\det(A_{\mathrm{fast}})
\;=\;\tilde b_W\tilde c_Z-\tilde b_Z\tilde c_W
\;=\;2\rho^2\,(1+o(1)),
\]
and
\[
A_{\mathrm{fast}}^{-1}f
=
\frac{1}{\det(A_{\mathrm{fast}})}
\begin{pmatrix}
\tilde c_Z-\tilde b_Z\tilde c_R\\
-\tilde c_W+\tilde b_W\tilde c_R
\end{pmatrix}
=
\begin{pmatrix}
-\dfrac{1}{2\rho}\\[6pt]
-\Bone+\dfrac{\eta}{2}\Btwo
\end{pmatrix}
+o\!\left(
\begin{pmatrix}
\rho^{-1}\\ \Bone
\end{pmatrix}
\right),
\]
we obtain
\[
\lambda_R
=-2\eta\Bone+\eta^2\Btwo+o(\eta\Bone)
=-\Lambda_{\mathrm{slow}}\left(2-\eta\frac{\Btwo}{\Bone}\right)+o(\Lambda_{\mathrm{slow}}).
\]
For effective dynamics, we rescale time by a pure power of $d$ and absorb $O(1)$ prefactors into the drift constant. With
\[
\tau:=\frac{t}{d^{1-\sigma+\kappa}},
\]
the leading drift constant is
\[
c_{\mathrm{eff}}
=c_{\Lambda}\left(2-\eta\frac{\Btwo}{\Bone}\right)+o(1)
=c_{\Lambda}\left(2-\eta\left(\frac{d}{B}+p\right)\right)+o(1).
\]
In this regime
\[
\frac{d/B}{p}=d^{1-\sigma+\kappa}\to\infty
\qquad
(\sigma-1<\kappa\le\sigma),
\]
so $d/B$ dominates $p$, hence
\[
c_{\mathrm{eff}}=c_{\Lambda}\left(2-\frac{\eta d}{B}\right)+o(1).
\]
Therefore the effective limit dynamics are
\[
\boxed{
\begin{aligned}
\tau := \frac{t}{d^{1-\sigma+\kappa}},\qquad \frac{dR}{d\tau} = -c_{\mathrm{eff}}\,R,\\
c_{\mathrm{eff}} := c_{\Lambda}\left(2-\frac{\eta d}{B}\right),\qquad c_{\Lambda}:=\eta_*p_*.
\end{aligned}
}
\]

\paragraph{Interpretation.}
\[
\tau_W,\tau_Z\lesssim \rho^{-1}\asymp d^{\gamma}\ll \tau_R\asymp \Lambda_{\mathrm{slow}}^{-1},
\qquad
\Lambda_{\mathrm{slow}}=\eta\frac{p}{\Pbatch}.
\]
So $(W,Z)$ collapse to a fast cluster and only $R$ remains on the slow timescale.
Define
\[
\eta_{\mathrm{eff}}:=\frac{\eta d}{B}.
\]
Using the SGD clock
\[
\tau_{\mathrm{SGD}}:=\Lambda_{\mathrm{slow}}\,t=\eta\frac{p}{\Pbatch}\,t,
\]
the reduced law is
\[
\frac{dR}{d\tau_{\mathrm{SGD}}}=-(2-\eta_{\mathrm{eff}})R+o(1)\,R.
\]
Using the pure-power clock from the boxed limit,
\[
\tau=\frac{t}{d^{1-\sigma+\kappa}},\qquad
\tau_{\mathrm{SGD}}=c_{\Lambda}(1+o(1))\,\tau.
\]
Hence this is the same SGD-type drift with effective step size $\eta d/B$; the different
time normalization comes from the activation-rate factor $p/\Pbatch$ in physical time.

\paragraph{Learning-rate constraint (resolved).}
In this regime, calibration is fully controlled by the stability-constrained effective step size
\[
\eta_{\mathrm{eff}}=\frac{\eta d}{B}.
\]
With interior scaling $\eta=\vartheta\,\eta_{\max}$, one gets $\eta_{\mathrm{eff}}=\vartheta+o(1)$, so the reduced
drift rate is fixed as
\[
c_{\mathrm{eff}}=c_{\Lambda}\,(2-\eta_{\mathrm{eff}}).
\]
Thus the learning rate is properly constrained by the stability ceiling; no separate boost rule is needed in the
effective 1D limit.

\paragraph{Sparse, below resonance ($\kappa>\sigma$, $\kappa-\sigma < \gamma < 1-\sigma+\kappa$)}
\label{sec:regime-sparse-below}

\paragraph{Assumptions.}
This is a 1D (noise-limited) regime with persistent but sub-resonant memory.
For strict-interior analysis take
\[
\kappa>\sigma,\qquad \kappa-\sigma<\gamma<1-\sigma+\kappa,\qquad \eta=\vartheta\,\eta_{\max},\ \vartheta\in(0,1),
\]
and
\[
\keff=\kappa-\sigma>0,\qquad
\nu=\gamma-\keff\in(0,1),\qquad
\rho=\frac{\eps}{\Pbatch+\eps}\asymp d^{-\nu}=d^{-(\gamma-\keff)},\qquad
\eta_{\max}\asymp d^{\sigma-1}.
\]
Define
\[
\Lambda_{\mathrm{slow}}:=\eta\Bone=\eta\,\frac{p}{\Pbatch}.
\]
Using \eqref{eq:def_pbatch}, \eqref{eq:def_Bratio}, and \eqref{eq:common_coscaling_constants}, $\kappa>\sigma$ implies
\[
\Pbatch=pB\,(1+o(1)),\qquad
\Bone=\frac{1}{B}(1+o(1)),\qquad
\frac{\Btwo}{\Bone}=\frac{d}{B}(1+o(1)),\qquad
\Lambda_{\mathrm{slow}}=\frac{\eta}{B}(1+o(1)).
\]
Since $\nu\in(0,1)$, we have $\Lambda_{\mathrm{slow}}\ll \rho$.

\paragraph{Rescaled system.}
\[
u:=\begin{pmatrix}W\\Z\end{pmatrix},\quad
A_{\mathrm{fast}}:=
\begin{pmatrix}\tilde b_W&\tilde b_Z\\ \tilde c_W&\tilde c_Z\end{pmatrix},\quad
f:=\begin{pmatrix}1\\ \tilde c_R\end{pmatrix},\quad
g^\top:=(\tilde a_W,\tilde a_Z).
\]
\[
\dot u=A_{\mathrm{fast}}u+fR,\qquad
\dot R=g^\top u+\tilde a_R R.
\]
In sparse-below-resonance (from exact coefficients in \S\ref{sec:scaled-3d} plus
\eqref{eq:def_pbatch}, \eqref{eq:def_delta_theta_closed}, \eqref{eq:def_delta_m1_theta_closed},
\eqref{eq:def_Bratio}, \eqref{eq:common_coscaling_constants}):
\[
\tilde b_W=-2\rho+o(\rho),\quad
\tilde c_Z=-\rho+o(\rho),\quad
\tilde b_Z=\frac{2B}{d\,\rho}+o\!\left(\frac{B}{d\,\rho}\right),\quad
\tilde c_R=\rho\Bone+o(\rho\Bone),
\]
\[
\tilde c_W=-\eta\rho^2\Btwo+o(\eta\rho^2\Btwo),\qquad
\tilde a_Z=-2\eta+o(\eta),\qquad
\tilde a_W=o(\eta\Bone),\qquad
\tilde a_R=o(\eta\Bone).
\]

\paragraph{Limit ODE.}
Adiabatic elimination gives
\[
u=-A_{\mathrm{fast}}^{-1}f\,R+o(R),\qquad
\dot R=\lambda_R R,\qquad
\lambda_R=\tilde a_R-g^\top A_{\mathrm{fast}}^{-1}f.
\]
Using
\[
\det(A_{\mathrm{fast}})
\;=\;\tilde b_W\tilde c_Z-\tilde b_Z\tilde c_W
\;=\;2\rho^2\,(1+o(1)),
\]
and
\[
A_{\mathrm{fast}}^{-1}f
=
\frac{1}{\det(A_{\mathrm{fast}})}
\begin{pmatrix}
\tilde c_Z-\tilde b_Z\tilde c_R\\
-\tilde c_W+\tilde b_W\tilde c_R
\end{pmatrix}
=
\begin{pmatrix}
-\dfrac{1}{2\rho}\\[6pt]
-\Bone+\dfrac{\eta}{2}\Btwo
\end{pmatrix}
+o\!\left(
\begin{pmatrix}
\rho^{-1}\\ \Bone
\end{pmatrix}
\right),
\]
we obtain
\[
\lambda_R
=-2\eta\Bone+\eta^2\Btwo+o(\eta\Bone).
\]
Using $\Bone=\frac1B(1+o(1))$ and $\frac{\Btwo}{\Bone}=\frac{d}{B}(1+o(1))$,
\[
\lambda_R
=-\frac{\eta}{B}\left(2-\frac{\eta d}{B}\right)+o(d^{-1})
=-\frac{c_{\mathrm{eff}}}{d}+o(d^{-1}).
\]
\[
\boxed{
\begin{aligned}
\tau := \frac{t}{d},\qquad \frac{dR}{d\tau} = -c_{\mathrm{eff}}\,R,\\
c_{\mathrm{eff}} := \left(\frac{\eta d}{B}\right)\left(2-\frac{\eta d}{B}\right).
\end{aligned}
}
\]
With $\eta=\vartheta\,\eta_{\max}$ this is
$c_{\mathrm{eff}}=\vartheta(2-\vartheta)$.

\paragraph{Interpretation.}
\[
\tau_W,\tau_Z\lesssim \rho^{-1}\asymp d^{\gamma-\keff}\ll d\asymp \tau_R,
\]
so $(W,Z)$ remain a fast cluster relative to the unique slow risk mode.
Define
\[
\eta_{\mathrm{eff}}:=\frac{\eta d}{B}.
\]
Then
\[
\frac{dR}{d\tau}=-\eta_{\mathrm{eff}}(2-\eta_{\mathrm{eff}})\,R,
\]
which matches the Gaussian least-squares SGD mean-risk law with effective step size
$\eta_{\mathrm{eff}}$.

\paragraph{Memoryless ($\kappa>\sigma$, $\gamma \le \kappa-\sigma$)}
\label{sec:regime-memoryless}

\paragraph{Assumptions.}
This is a 1D regime with order-one retention gaps.
For strict-interior analysis take
\[
\kappa>\sigma,\qquad 0<\gamma<\kappa-\sigma,\qquad \eta=\vartheta\,\eta_{\max},\ \vartheta\in(0,1),
\]
and
\[
\keff=\kappa-\sigma>0,\qquad
\nu=0,\qquad
\rho=\frac{\eps}{\Pbatch+\eps}=\Theta(1),\qquad
\eta_{\max}\asymp d^{\sigma-1}.
\]
Define
\[
\Lambda_{\mathrm{slow}}:=\eta\Bone=\eta\,\frac{p}{\Pbatch}.
\]
Using \eqref{eq:def_pbatch} and \eqref{eq:common_coscaling_constants}, $\kappa>\sigma$ implies
\[
\Pbatch=pB\,(1+o(1)),\qquad
\Bone=\frac{1}{B}(1+o(1)),\qquad
\Lambda_{\mathrm{slow}}=\frac{\eta}{B}(1+o(1)).
\]
Hence $\Lambda_{\mathrm{slow}}\ll \rho$.

\paragraph{Rescaled system.}
\[
u:=\begin{pmatrix}W\\Z\end{pmatrix},\quad
A_{\mathrm{fast}}:=
\begin{pmatrix}\tilde b_W&\tilde b_Z\\ \tilde c_W&\tilde c_Z\end{pmatrix},\quad
f:=\begin{pmatrix}1\\ \tilde c_R\end{pmatrix},\quad
g^\top:=(\tilde a_W,\tilde a_Z).
\]
\[
\dot u=A_{\mathrm{fast}}u+fR,\qquad
\dot R=g^\top u+\tilde a_R R.
\]
In sparse-memoryless (from exact coefficients in \S\ref{sec:scaled-3d} plus
\eqref{eq:def_pbatch}, \eqref{eq:def_delta_theta_closed}, \eqref{eq:def_Bratio}, \eqref{eq:common_coscaling_constants}):
\[
\tilde b_W=-1+o(1),\quad
\tilde c_Z=-1+o(1),\quad
\tilde b_Z=-2\eta+o(\eta),\quad
\tilde c_W=o(1),\quad
\tilde c_R=\Bone+o(\Bone),
\]
\[
\tilde a_Z=-2\eta+o(\eta),\qquad
\tilde a_W=\eta^2\Btwo+o(\eta^2\Btwo),\qquad
\tilde a_R=o(\eta\Bone).
\]

\paragraph{Limit ODE.}
Adiabatic elimination gives
\[
u=-A_{\mathrm{fast}}^{-1}f\,R+o(R),\qquad
\dot R=\lambda_R R,\qquad
\lambda_R=\tilde a_R-g^\top A_{\mathrm{fast}}^{-1}f.
\]
Using the previous leading forms,
\[
A_{\mathrm{fast}}^{-1}f=
\begin{pmatrix}
-1\\ -\Bone
\end{pmatrix}
+o\!\left(
\begin{pmatrix}
1\\ \Bone
\end{pmatrix}
\right),
\]
so
\[
\lambda_R
=\tilde a_R+\tilde a_W+\Bone\tilde a_Z+o(\eta\Bone)
=-2\eta\Bone+\eta^2\Btwo+o(\eta\Bone).
\]
With $\Bone=\frac1B(1+o(1))$ and $\frac{\Btwo}{\Bone}=\frac{d}{B}(1+o(1))$, this is
\[
\lambda_R
=-\frac{\eta}{B}\left(2-\frac{\eta d}{B}\right)+o(d^{-1})
=-\frac{c_{\mathrm{eff}}}{d}+o(d^{-1}).
\]
\[
\boxed{
\begin{aligned}
\tau := \frac{t}{d},\qquad \frac{dR}{d\tau} = -c_{\mathrm{eff}}\,R,\\
c_{\mathrm{eff}} := \left(\frac{\eta d}{B}\right)\left(2-\frac{\eta d}{B}\right).
\end{aligned}
}
\]
With $\eta=\vartheta\,\eta_{\max}$ this is
$c_{\mathrm{eff}}=\vartheta(2-\vartheta)$.

\paragraph{Interpretation.}
\[
\tau_W,\tau_Z=O(1),\qquad
\tau_R\asymp d,
\]
so $(W,Z)$ are an order-one fast cluster and only $R$ survives on the optimization timescale.
Define
\[
\eta_{\mathrm{eff}}:=\frac{\eta d}{B}.
\]
Then
\[
\frac{dR}{d\tau}=-\eta_{\mathrm{eff}}(2-\eta_{\mathrm{eff}})\,R,
\]
which matches the Gaussian high-dimensional SGD-limit with effective step size
$\eta_{\mathrm{eff}}$.

\subsubsection{Boundary cases: rays from the triple point}
\label{sec:regimes-boundary-cases}

The preceding sub-sections cover the interiors of the four phases.
This section completes the phase decomposition by deriving the limit ODE
on each codim-1 phase boundary that meets the triple point
$(\kappa,\gamma)=(\sigma,1)$. Four rays emanate from the triple point:
the dense and sparse halves of the resonance line $\gamma=1-\sigma+\kappa$,
and the dense/sparse boundary $\kappa=\sigma$ above and below resonance.
Together with the triple point itself, these five cases organize into a
single universal-template story.

\paragraph{Universal template on the resonance line.}
On the entire resonance line $\gamma=1-\sigma+\kappa$, the post-balancing
$(W\!\leftarrow\! R)$ feedthrough survives at order one, so the limit ODE has
the same matrix structure as at the triple point:
\begin{equation}
\dot R=-2\rho_*\bar\eta\,C,\qquad
\dot V=-2\rho_*V+2\rho_*C+\xi_*R,\qquad
\dot C=\rho_*R-\rho_*\bar\eta V-\rho_*C,
\label{eq:universal_resonance_box}
\end{equation}
with parameters
\begin{equation}
\rho_*=\frac{\eps_*}{P_*^{\mathrm{res}}},\qquad
\bar\eta=\frac{\eta_*p_*}{\eps_*},\qquad
\xi_*=\frac{\eps_*^2}{P_*^{\mathrm{res}}\,p_*B_*},
\label{eq:universal_resonance_params}
\end{equation}
where the effective activation probability $P_*^{\mathrm{res}}$ takes the
value $1$ on the dense half ($\sigma-1<\kappa<\sigma$, where
$\Pbatch\to 1$ exponentially), the corner value $1-e^{-p_*B_*}$ at the
triple point ($\kappa=\sigma$, where $\Pbatch\to P_*\in(0,1)$), and the
linearization $p_*B_*$ on the sparse half ($\kappa>\sigma$, where
$\Pbatch\to pB$ at polynomial rate). The dimensionless ratio
\begin{equation}
\zeta_*:=\frac{\xi_*}{\rho_*}=\frac{\eps_*}{p_*B_*}
\label{eq:zeta_invariant}
\end{equation}
is the same on the entire resonance line. Two consequences are immediate:
the Hurwitz threshold
$\bar\eta\,\zeta_*<2\iff\eta_*<2B_*$ is uniform across the line, and the
spectral type (one real eigenvalue and a complex-conjugate pair, or three
real eigenvalues) is determined by the universal cubic
\begin{equation}
p(\mu)=\mu^3+3\mu^2+(2+4\bar\eta)\mu+2\bar\eta(2-\bar\eta\zeta_*),\qquad
\mu:=\lambda/\rho_*,
\label{eq:universal_cubic}
\end{equation}
whose discriminant
$\Delta_{\mathrm{spec}}=4((1-4\bar\eta)^3-27\bar\eta^4\zeta_*^2)$
depends only on $(\bar\eta,\zeta_*)$. The slow-time scale is
$\tau=t/d^\gamma$ on the dense half and $\tau=t/d$ on the corner and
sparse half. Off the resonance line the $(W\!\leftarrow\! R)$ feedthrough
vanishes at polynomial rate, recovering the strict-interior canonical
heavy-ball block.

The remaining boundary case ($\kappa=\sigma$, $\gamma>1$) sits on the
dense/sparse axis above the resonance line. There the resonance
feedthrough disappears (\,$\xi_*R$ at rate $d^{1-\gamma}$\,), so the limit
reduces to the canonical 2D heavy-ball form with boundary constants. The
five derivations below confirm both stories term by term.

\paragraph{Resonance line, dense half ($\sigma-1<\kappa<\sigma$, $\gamma=1-\sigma+\kappa$)}
\label{sec:regime-resonance-dense}

\paragraph{Assumptions.}
This is the dense half of the resonance line. Take
\[
\sigma-1<\kappa<\sigma,\qquad \gamma=1-\sigma+\kappa,\qquad \eta=\eta_*d^{\kappa-\gamma}(1+o(1)),
\]
with the constant-level co-scaling \eqref{eq:common_coscaling_constants}.
By Lemma~\ref{lem:dense_suppression}, $\Pbatch=1-o_{\mathrm{poly}}(1)$, hence
\[
\rho:=\frac{\eps}{\Pbatch+\eps}=\eps_*d^{-\gamma}(1+o(1)),\qquad
\Bone=p_*d^{-\kappa}(1+o(1)),
\]
i.e.\ on this ray $P_*=1$ and $\rho_*=\eps_*$. As at the triple point, all three eigenvalues of $\tilde A$ sit at scale $\rho$, so $\tau_W\asymp\tau_Z\asymp\tau_R\asymp d^\gamma$ and there is no adiabatic 1D collapse. Set
\[
\bar\eta:=\frac{\eta_*p_*}{\eps_*},\qquad
\xi_*:=\frac{\eps_*^2}{p_*B_*},\qquad
\zeta_*:=\frac{\xi_*}{\rho_*}=\frac{\eps_*}{p_*B_*}.
\]

\paragraph{Rescaled system.}
Use pure-power time scaling
\[
\tau:=\frac{t}{d^\gamma},\qquad x:=(R,W,Z)^\top,\qquad
\frac{dx}{d\tau}=\bar A(d)\,x,\qquad \bar A(d):=d^\gamma\tilde A(d).
\]
From the entrywise formulas of \S\ref{sec:scaled-3d} together with Lemma~\ref{lem:dense_suppression},
\[
\tilde b_W=-2\rho+o(\rho),\qquad
\tilde c_Z=-\rho+o(\rho),\qquad
\tilde b_Z=\frac{2B}{\rho d}+o\!\left(\frac{B}{\rho d}\right),\qquad
\tilde c_R=\rho p+o(\rho p),
\]
\[
\tilde c_W=-\eta\rho^2\Btwo+o(\eta\rho^2\Btwo),\qquad
\tilde a_Z=-2\eta+o(\eta),\qquad
\tilde a_W=o(\eta p\rho),\qquad
\tilde a_R=o(\eta p\rho).
\]

\paragraph{Limit ODE.}
Apply the same diagonal balancing as in the strict dense above-resonance interior,
\[
D_{\mathrm{id}}(d):=\operatorname{diag}\!\left(\rho^2\frac{d}{Bp},\ 1,\ \rho^2\frac{d}{B}\right),\qquad y:=D_{\mathrm{id}}^{-1}x.
\]
The novelty relative to the strict interior is the post-balancing $(W\!\leftarrow\! R)$ entry:
\[
[D_{\mathrm{id}}^{-1}\bar A D_{\mathrm{id}}]_{2,1}
=\bar A_{2,1}\,\frac{D_{\mathrm{id}}(R,R)}{D_{\mathrm{id}}(W,W)}
=d^\gamma\cdot\rho^2\frac{d}{Bp}
=\frac{\eps_*^2}{p_*B_*}\,d^{-\gamma+1-\sigma+\kappa}+o(1).
\]
On the strict interior $\gamma>1-\sigma+\kappa$ the exponent is negative and the entry vanishes, recovering the canonical 3D heavy-ball block; on the resonance line the exponent is exactly $0$ and the entry survives at the limit value $\xi_*$. The remaining entries of $D_{\mathrm{id}}^{-1}\bar A D_{\mathrm{id}}$ have the same leading-order forms as in the strict interior. Term-by-term substitution gives
\[
A_{\mathrm{dr}}^{(\tau)}(d)=
\begin{pmatrix}
0 & 0 & -2\rho_*\bar\eta\\
\xi_* & -2\rho_* & 2\rho_*\\
\rho_* & -\rho_*\bar\eta & -\rho_*
\end{pmatrix}+o(1),
\]
and hence the dense resonance effective dynamics are
\[
\boxed{
\begin{aligned}
\tau &:= \frac{t}{d^\gamma},\quad \gamma=1-\sigma+\kappa,\\
\dot R &= -2\rho_*\bar\eta\,C,\\
\dot V &= -2\rho_* V+2\rho_* C+\xi_* R,\\
\dot C &= \rho_* R-\rho_*\bar\eta V-\rho_* C,
\end{aligned}
\qquad
\rho_*=\eps_*,\ \xi_*=\frac{\eps_*^2}{p_*B_*},\ \bar\eta=\frac{\eta_*p_*}{\eps_*}.}
\]
This is the same matrix structure as the triple-point box, with $P_*=1$ in place of the corner value $1-e^{-s_*}$.

\paragraph{Hurwitz stability.}
The characteristic polynomial of the boxed system equals
$\chi(\lambda)=\lambda^3+3\rho_*\lambda^2+\rho_*^2(2+4\bar\eta)\lambda+2\rho_*^3\bar\eta(2-\bar\eta\zeta_*)$, identical in functional form to the triple-point cubic. Routh--Hurwitz reduces to the single nontrivial condition
\[
\bar\eta\,\zeta_*<2\quad\iff\quad \eta_*<2B_*,
\]
identical to the triple-point threshold (using $\bar\eta\zeta_*=\eta_*p_*/\eps_*\cdot\eps_*/(p_*B_*)=\eta_*/B_*$).

\paragraph{Eigenvalue type.}
Setting $\mu:=\lambda/\rho_*$ gives
\[
p(\mu)=\mu^3+3\mu^2+(2+4\bar\eta)\mu+2\bar\eta(2-\bar\eta\zeta_*),
\]
with cubic discriminant
$\Delta_{\mathrm{spec}}=4\!\left((1-4\bar\eta)^3-27\bar\eta^4\zeta_*^2\right)$.
A complex-conjugate pair occurs whenever $(1-4\bar\eta)^3<27\bar\eta^4\zeta_*^2$, automatic for $\bar\eta\ge1/4$.

\paragraph{2D lift: ODE fails, SDE closes exactly.}
The $\xi_* R$ feedthrough rules out a deterministic 2D-ODE square representation (cf.\ the discussion at the triple point), but the boxed system is the second-moment closure of the linear 2D SDE
\[
dX_\tau=-\rho_*\bar\eta\,Y_\tau\,d\tau,\qquad
dY_\tau=\rho_*(X_\tau-Y_\tau)\,d\tau+\sqrt{\xi_*}\,X_\tau\,dB_\tau,
\]
with $R=\E[X_\tau^2]$, $V=\E[Y_\tau^2]$, $C=\E[X_\tau Y_\tau]$.

\paragraph{Interpretation.}
Off the resonance line the post-balancing $(W\!\leftarrow\! R)$ feedthrough vanishes at polynomial rate $d^{-(\gamma-(1-\sigma+\kappa))}$, so the strict-interior limit is the canonical 2D heavy-ball form. Exactly on the line that polynomial rate is zero, the feedthrough survives at the value $\xi_*=\eps_*^2/(p_*B_*)$, and the limit becomes genuinely 3D-irreducible.

\paragraph{Resonance line, sparse half ($\kappa>\sigma$, $\gamma=1-\sigma+\kappa$)}
\label{sec:regime-resonance-sparse}

\paragraph{Assumptions.}
This is the sparse half of the resonance line. Take
\[
\kappa>\sigma,\qquad \gamma=1-\sigma+\kappa,\qquad \eta=\eta_*d^{\kappa-\gamma}(1+o(1)),
\]
with the constant-level co-scaling \eqref{eq:common_coscaling_constants}. Since $\kappa>\sigma$,
$pB=p_*B_*d^{\sigma-\kappa}\to 0$ at polynomial rate, so by~\eqref{eq:def_pbatch}
\[
\Pbatch=p_*B_*\,d^{\sigma-\kappa}(1+o(1)),\qquad
\Qbatch=1-o(1).
\]
On this ray $\Pbatch\gg\eps$ (since $\Pbatch/\eps=(p_*B_*/\eps_*)\,d^{1}\to\infty$), so
\[
\rho:=\frac{\eps}{\Pbatch+\eps}=\frac{\eps_*}{p_*B_*}\,d^{-\nu}(1+o(1)),\qquad
\nu:=\gamma-(\kappa-\sigma)=1.
\]
With $\rho_*:=\eps_*/(p_*B_*)$, $\bar\eta:=\eta_*p_*/\eps_*$, define
\[
\xi_*:=\rho_*^2=\frac{\eps_*^2}{(p_*B_*)^2},\qquad
\zeta_*:=\frac{\xi_*}{\rho_*}=\frac{\eps_*}{p_*B_*}.
\]
Note that $\zeta_*=\rho_*$ here, but the dimensionless ratio $\zeta_*$ has the same value as on the dense half of the line. As at the triple point and on the dense half, all three eigenvalues sit at scale $\rho$, so $\tau_W\asymp\tau_Z\asymp\tau_R\asymp d$ and there is no adiabatic 1D collapse.

\paragraph{Rescaled system.}
Use pure-power time scaling
\[
\tau:=\frac{t}{d^\nu}=\frac{t}{d},\qquad x:=(R,W,Z)^\top,\qquad
\frac{dx}{d\tau}=\bar A(d)\,x,\qquad \bar A(d):=d\,\tilde A(d).
\]
From \S\ref{sec:scaled-3d} on the sparse side (cf.\ the strict sparse-above-resonance interior),
\[
\tilde b_W=-2\rho+o(\rho),\qquad
\tilde c_Z=-\rho+o(\rho),\qquad
\tilde b_Z=\frac{2B}{\rho d}+o\!\left(\frac{B}{\rho d}\right),\qquad
\tilde c_R=\frac{\rho}{B}+o\!\left(\frac{\rho}{B}\right),
\]
\[
\tilde c_W=-\eta\rho^2\Btwo+o(\eta\rho^2\Btwo),\qquad
\tilde a_Z=-2\eta+o(\eta),\qquad
\tilde a_W=o\!\left(\eta\rho/B\right),\qquad
\tilde a_R=o\!\left(\eta\rho/B\right).
\]

\paragraph{Limit ODE.}
Apply the same diagonal balancing as in the strict sparse-above-resonance interior,
\[
D_{\mathrm{is}}(d):=\operatorname{diag}\!\left(\rho^2 d,\ 1,\ \rho^2\frac{d}{B}\right),\qquad y:=D_{\mathrm{is}}^{-1}x.
\]
The novelty relative to the strict interior is the post-balancing $(W\!\leftarrow\! R)$ entry:
\[
[D_{\mathrm{is}}^{-1}\bar A D_{\mathrm{is}}]_{2,1}
=\bar A_{2,1}\,\frac{D_{\mathrm{is}}(R,R)}{D_{\mathrm{is}}(W,W)}
=d\cdot\rho^2 d=\rho_*^2\,d^{2-2\nu}+o(1).
\]
On the strict interior $\nu>1$ this entry vanishes; on the resonance line $\nu=1$ exactly, and the entry survives at $\xi_*=\rho_*^2$. The remaining entries of $D_{\mathrm{is}}^{-1}\bar A D_{\mathrm{is}}$ match the strict-interior limit, giving
\[
A_{\mathrm{sr}}^{(\tau)}(d)=
\begin{pmatrix}
0 & 0 & -2\rho_*\bar\eta\\
\xi_* & -2\rho_* & 2\rho_*\\
\rho_* & -\rho_*\bar\eta & -\rho_*
\end{pmatrix}+o(1),
\]
and hence the sparse-resonance effective dynamics are
\[
\boxed{
\begin{aligned}
\tau &:= \frac{t}{d},\\
\dot R &= -2\rho_*\bar\eta\,C,\\
\dot V &= -2\rho_* V+2\rho_* C+\xi_* R,\\
\dot C &= \rho_* R-\rho_*\bar\eta V-\rho_* C,
\end{aligned}
\qquad
\rho_*=\frac{\eps_*}{p_*B_*},\ \xi_*=\rho_*^2,\ \bar\eta=\frac{\eta_*p_*}{\eps_*}.}
\]
This is the same matrix structure as the triple-point box, with the corner factor $P_*=1-e^{-s_*}$ replaced by its small-$s_*$ linearization $P_*\approx s_*=p_*B_*$.

\paragraph{Hurwitz stability and eigenvalue type.}
The characteristic polynomial is functionally identical to the triple-point and dense-half cubic, with the same dimensionless ratio $\zeta_*=\eps_*/(p_*B_*)$. Hence
\[
\bar\eta\,\zeta_*<2\quad\iff\quad \eta_*<2B_*,
\]
and the cubic discriminant
$\Delta_{\mathrm{spec}}=4((1-4\bar\eta)^3-27\bar\eta^4\zeta_*^2)$ governs the real-vs-complex split as in the dense half. The eigenvalues are $\rho_*$ times the roots of the universal cubic
$p(\mu)=\mu^3+3\mu^2+(2+4\bar\eta)\mu+2\bar\eta(2-\bar\eta\zeta_*)$. Compared to the dense half, the only change is the prefactor $\rho_*$, which equals $\eps_*$ on the dense half and $\eps_*/(p_*B_*)$ here.

\paragraph{2D lift.}
As at the triple point, the $\xi_* R$ feedthrough rules out a deterministic 2D-ODE square representation; the boxed system is the second-moment closure of the linear 2D SDE with diffusion coefficient $\sqrt{\xi_*}=\rho_*$.

\paragraph{Interpretation.}
Off the resonance line ($\nu>1$) the post-balancing $(W\!\leftarrow\! R)$ feedthrough vanishes at rate $d^{2-2\nu}$, recovering the strict-interior canonical heavy-ball block. On the line $\nu=1$, it survives at $\xi_*=\rho_*^2$, and the limit is again genuinely 3D-irreducible. Together with the dense-half result, this confirms that the universal triple-point template governs the entire resonance line; only the per-ray scale $\rho_*$ and the slow-time clock differ.

\paragraph{Dense/sparse boundary, above resonance ($\kappa=\sigma$, $\gamma>1$)}
\label{sec:regime-boundary-above}

\paragraph{Assumptions.}
This is the dense/sparse boundary above the resonance line, completing the $\kappa=\sigma$ ray that meets the triple point from above. Take
\[
\kappa=\sigma,\qquad \gamma>1,\qquad \eta=\eta_*d^{\sigma-\gamma}(1+o(1)),
\]
with constant-level co-scaling
\[
p=p_*d^{-\sigma}(1+o(1)),\qquad
B=B_*d^\sigma(1+o(1)),\qquad
\eps=\eps_*d^{-\gamma}(1+o(1)).
\]
As at the triple point, $pB\to s_*=p_*B_*\in(0,\infty)$, so~\eqref{eq:def_pbatch} gives
\[
\Pbatch\to P_*=1-e^{-s_*}\in(0,1),\qquad \Qbatch\to e^{-s_*}\in(0,1).
\]
Then
\[
\rho=\frac{\eps}{\Pbatch+\eps}=\frac{\eps_*}{P_*}d^{-\gamma}(1+o(1)),\qquad
\Bone=\frac{p}{\Pbatch}=\frac{p_*}{P_*}d^{-\sigma}(1+o(1)),
\]
i.e.\ $\rho_*=\eps_*/P_*$ (the boundary value of the universal $\rho_*$) and $\bar\eta=\eta_*p_*/\eps_*$. All three eigenvalues sit at scale $\rho$, so $\tau_W\asymp\tau_Z\asymp\tau_R\asymp d^\gamma$ and there is no adiabatic 1D collapse.

\paragraph{Rescaled system.}
Use pure-power time scaling
\[
\tau:=\frac{t}{d^\gamma},\qquad x:=(R,W,Z)^\top,\qquad
\frac{dx}{d\tau}=\bar A(d)\,x,\qquad \bar A(d):=d^\gamma\tilde A(d),
\]
and the corner balancing
\[
D_{\mathrm{crit}}(d):=\operatorname{diag}\!\left(\rho^2 d\,\frac{\Pbatch}{pB},\ 1,\ \rho^2\frac{d}{B}\right),\qquad y:=D_{\mathrm{crit}}^{-1}x.
\]

\paragraph{The corner $(W\!\leftarrow\! R)$ feedthrough vanishes.}
Repeating the corner calculation with $\gamma>1$ in place of $\gamma=1$,
\[
[D_{\mathrm{crit}}^{-1}\bar A D_{\mathrm{crit}}]_{2,1}
=\bar A_{2,1}\,\frac{D_{\mathrm{crit}}(R,R)}{D_{\mathrm{crit}}(W,W)}
=d^\gamma\cdot\rho^2 d\,\frac{\Pbatch}{pB}
=\frac{\eps_*^2}{P_*\,s_*}\,d^{1-\gamma}+o(d^{1-\gamma}),
\]
which $\to 0$ since $\gamma>1$. The corner's distinguishing channel disappears at polynomial rate above resonance, confirming the master-text claim that this ray reduces to the canonical heavy-ball block (with boundary constants).

\paragraph{Limit ODE.}
With the $(W\!\leftarrow\! R)$ feedthrough killed, the remaining post-balancing entries reproduce the canonical 3D heavy-ball block, with the boundary scale $\rho_*=\eps_*/P_*$:
\[
A_{\mathrm{ka}}^{(\tau)}(d)=\rho_*\begin{pmatrix}
0 & 0 & -2\bar\eta\\
0 & -2 & 2\\
1 & -\bar\eta & -1
\end{pmatrix}+o(1).
\]
Hence the boundary-above-resonance effective dynamics are
\[
\boxed{
\begin{aligned}
\tau &:= \frac{t}{d^\gamma},\\
\dot R &= -2\rho_*\bar\eta\,C,\\
\dot V &= -2\rho_* V+2\rho_* C,\\
\dot C &= \rho_* R-\rho_*\bar\eta V-\rho_* C,
\end{aligned}
\qquad
\rho_*=\frac{\eps_*}{P_*},\ P_*=1-e^{-p_*B_*},\ \bar\eta=\frac{\eta_*p_*}{\eps_*}.}
\]
This is the same form as the strict dense- and sparse-above-resonance interiors, with the boundary value $\rho_*=\eps_*/P_*$ in place of the dense-interior value $\eps_*$ or the sparse-interior value $\eps_*/(p_*B_*)$. As $s_*=p_*B_*\to\infty$ (dense limit) the boundary $\rho_*$ approaches $\eps_*$; as $s_*\to 0$ (sparse limit) it approaches $\eps_*/s_*$. The entire family is continuous across the boundary at $\kappa=\sigma$.

\paragraph{2D heavy-ball reduction.}
With $R=x^2$, $V=y^2$, $C=xy$, the boxed 3D limit is exactly
\[
\dot x=-\rho_*\bar\eta\,y,\qquad \dot y=\rho_*(x-y),
\]
the same heavy-ball ODE as the strict interiors with the sole change $\eps_*\to\rho_*=\eps_*/P_*$. The characteristic polynomial of the $(x,y)$ system is $\lambda^2+\rho_*\lambda+\rho_*^2\bar\eta$; oscillations occur when $\bar\eta>1/4$, identical in form to the strict-interior thresholds.

\paragraph{Interpretation.}
On the $\kappa=\sigma$ ray the activation probability $\Pbatch$ converges to the boundary value $P_*\in(0,1)$ rather than $0$ or $1$. Above resonance ($\gamma>1$) the resonance feedthrough $\xi_* R$ vanishes at rate $d^{1-\gamma}$, so only the boundary value $\rho_*=\eps_*/P_*$ propagates into the limit; the eigenvalue dimension is still 3D, but the limit factors through a 2D heavy-ball ODE, exactly as in the two strict above-resonance interiors. This makes the $\kappa=\sigma$ ray a smooth interpolation between the dense and sparse halves above resonance, and confirms continuity of the phase diagram across the dense/sparse boundary above the resonance line.

\paragraph{Dense/sparse boundary, below resonance ($\kappa=\sigma$, $0<\gamma<1$)}
\label{sec:regime-boundary-below}

\paragraph{Assumptions.}
This is the dense/sparse boundary inside the below-resonance region.
Take
\[
\kappa=\sigma,\qquad 0<\gamma<1,\qquad \eta=\vartheta\,\eta_{\max},\ \vartheta\in(0,1),
\]
with constant-level co-scaling
\[
p=p_*d^{-\sigma}(1+o(1)),\qquad
B=B_*d^\sigma(1+o(1)),\qquad
\eps=\eps_*d^{-\gamma}(1+o(1)),\qquad
\eta=\eta_*d^{\sigma-1}(1+o(1)).
\]
Define
\[
s_*:=p_*B_*,\qquad
P_*:=\lim_{d\to\infty}\Pbatch\in(0,1).
\]
For $\sigma>0$ (so $p\to0$ and $B\to\infty$), this specializes to $P_*=1-e^{-s_*}$.
Then
\[
\Pbatch\to P_*,\qquad
\Qbatch\to 1-P_*,\qquad
\keff=0,\qquad
\nu=\gamma,\qquad
\rho:=\frac{\eps}{\Pbatch+\eps}
=\frac{\eps_*}{P_*}d^{-\gamma}(1+o(1)).
\]
Also
\[
\Bone=\frac{p}{\Pbatch}
=\frac{p_*}{P_*}d^{-\sigma}(1+o(1)),\qquad
\frac{\Btwo}{\Bone}
=\frac{d+2}{B}+\frac{p(B-1)}{B}.
\]
Set the slow prefactor
\[
\Lambda_{\mathrm{slow}}:=\eta\Bone
=\frac{\eta_*p_*}{P_*}d^{-1}(1+o(1)).
\]
Since $\gamma<1$, $\Lambda_{\mathrm{slow}}\ll\rho$.

\paragraph{Rescaled system.}
Using the scaled matrix entries from \S\ref{sec:scaled-3d},
\[
u:=\begin{pmatrix}W\\Z\end{pmatrix},\quad
A_{\mathrm{fast}}:=
\begin{pmatrix}\tilde b_W&\tilde b_Z\\ \tilde c_W&\tilde c_Z\end{pmatrix},\quad
f:=\begin{pmatrix}1\\ \tilde c_R\end{pmatrix},\quad
g^\top:=(\tilde a_W,\tilde a_Z),
\]
\[
\dot u=A_{\mathrm{fast}}u+fR,\qquad
\dot R=g^\top u+\tilde a_R R.
\]
At this boundary (same leading fast/slow hierarchy as below-resonance 1D reductions):
\[
\tilde b_W=-2\rho+o(\rho),\quad
\tilde c_Z=-\rho+o(\rho),\quad
\tilde b_Z=\frac{2B}{d\,\rho}+o\!\left(\frac{B}{d\,\rho}\right),\quad
\tilde c_R=\rho\Bone+o(\rho\Bone),
\]
\[
\tilde c_W=-\eta\rho^2\Btwo+o(\eta\rho^2\Btwo),\qquad
\tilde a_Z=-2\eta+o(\eta),\qquad
\tilde a_W=o(\eta\Bone),\qquad
\tilde a_R=o(\eta\Bone).
\]

\paragraph{Limit ODE.}
Adiabatic elimination gives
\[
u=-A_{\mathrm{fast}}^{-1}f\,R+o(R),\qquad
\dot R=\lambda_R R,\qquad
\lambda_R=\tilde a_R-g^\top A_{\mathrm{fast}}^{-1}f.
\]
As in the neighboring 1D reductions,
\[
\det(A_{\mathrm{fast}})=2\rho^2(1+o(1)),
\]
and therefore
\[
\lambda_R
=-2\eta\Bone+\eta^2\Btwo+o(\eta\Bone)
=-\Lambda_{\mathrm{slow}}\!\left(2-\eta\frac{\Btwo}{\Bone}\right)+o(\Lambda_{\mathrm{slow}}).
\]
Using $\eta p=O(d^{-1})$ and $\eta\,\frac{d}{B}=O(1)$,
\[
\eta\frac{\Btwo}{\Bone}
=\eta\left(\frac{d+2}{B}+\frac{p(B-1)}{B}\right)
=\frac{\eta d}{B}+O(\eta p)+O\!\left(\frac{\eta}{B}\right)
=\frac{\eta d}{B}+o(1).
\]
Hence
\[
\lambda_R
=-\frac{1}{d}\left(\frac{\eta p\,d}{\Pbatch}\right)\left(2-\frac{\eta d}{B}\right)+o(d^{-1}).
\]
Define
\[
\eta_{\mathrm{eff}}:=\frac{\eta d}{B},\qquad
\chi_*:=\frac{p_*B_*}{P_*}=\frac{s_*}{P_*}.
\]
Then on the pure-power clock $\tau=t/d$,
\[
\boxed{
\begin{aligned}
\tau:=\frac{t}{d},\qquad
\frac{dR}{d\tau}=-c_{\mathrm{eff}}^{\mathrm{crit}}\,R,\\
c_{\mathrm{eff}}^{\mathrm{crit}}
=\left(\frac{\eta_*p_*}{P_*}\right)\left(2-\frac{\eta_*}{B_*}\right)
=\chi_*\,\eta_{\mathrm{eff}}\,(2-\eta_{\mathrm{eff}}).
\end{aligned}
}
\]

\paragraph{Interpretation.}
\[
\tau_W,\tau_Z\lesssim \rho^{-1}\asymp d^\gamma\ll d\asymp \tau_R,
\]
so the boundary is still effective 1D below resonance, with a unique slow risk mode.
Relative to the two neighboring interiors, the new feature is the finite activation constant
$P_*$: the reduced drift gains the boundary factor
$\chi_*=p_*B_*/P_*$, interpolating between sparse below resonance ($\chi_*=1$ when $p_*B_*\ll1$)
and dense frequent activation ($\chi_*\sim p_*B_*$ when $p_*B_*\gg1$).

\paragraph{Numerical verification (critical boundary).}
Figure~\ref{fig:incoh_critical_boundary_verification} compares the boxed critical-boundary law against
main ODE trajectories for a sweep of $p_*$ values (hence a sweep of $P_*$). In log scale, trajectories
are close to straight lines on the $\tau=t/d$ clock, and fitted slopes track
$c_{\mathrm{eff}}^{\mathrm{crit}}=\chi_*\eta_{\mathrm{eff}}(2-\eta_{\mathrm{eff}})$.

\begin{figure}[ht]
\centering
\includegraphics[width=\linewidth]{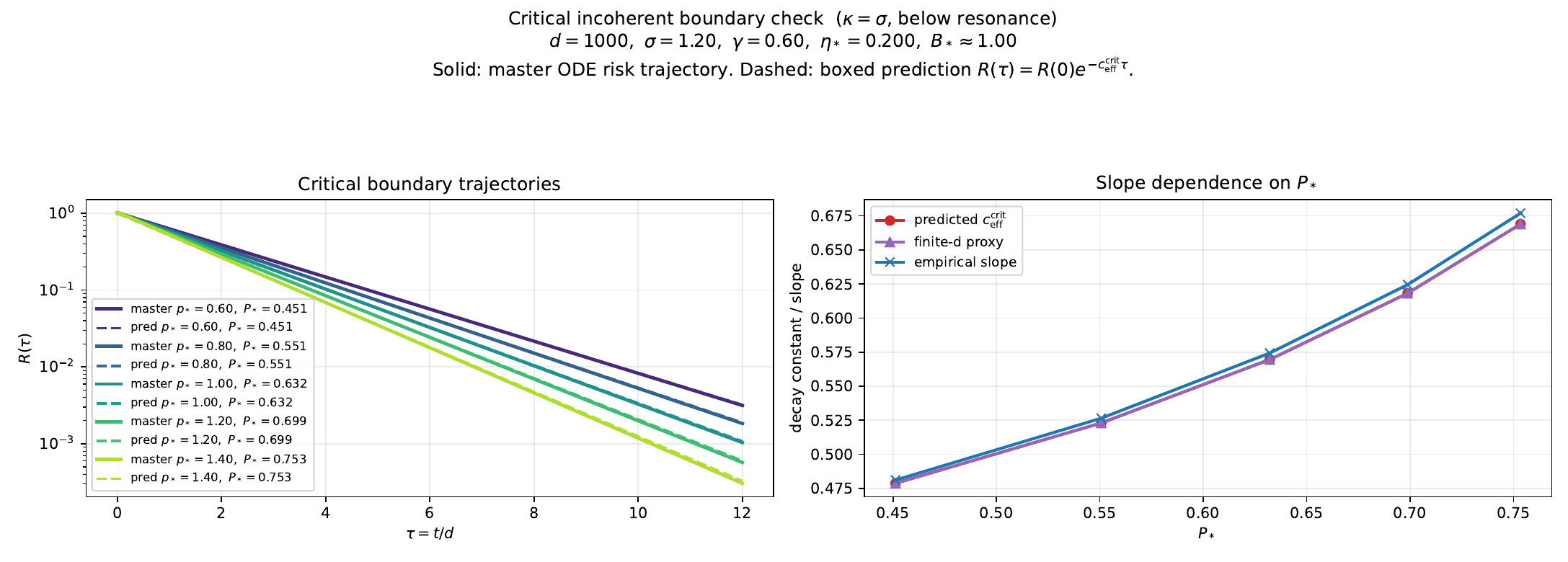}
\caption{\textbf{Critical incoherent boundary verification ($\kappa=\sigma$, $0<\gamma<1$).}
Left: main ODE risk trajectories (solid) and boxed prediction (dashed) for a sweep of $p_*$. Right:
predicted and empirical decay rates versus $P_*$. Parameters:
$d=1000$, $\sigma=1.2$, $\gamma=0.6$, $\eta_*=0.2$, $B_*=1.0$, and
$p_*\in\{0.6,0.8,1.0,1.2,1.4\}$.}
\label{fig:incoh_critical_boundary_verification}
\end{figure}

\paragraph{Triple point ($\kappa=\sigma$, $\gamma=1$)}
\label{sec:regime-triple-point}

\paragraph{Assumptions.}
This is the dense/sparse boundary point on the resonance line
\[
\gamma=1-\sigma+\kappa.
\]
Impose
\[
\kappa=\sigma,\qquad \gamma=1,\qquad \eta=\eta_*d^{\sigma-1}(1+o(1)),
\]
with constant-level co-scaling
\[
p=p_*d^{-\sigma}(1+o(1)),\qquad
B=B_*d^\sigma(1+o(1)),\qquad
\eps=\eps_*d^{-1}(1+o(1)).
\]
Let
\[
s_*:=p_*B_*,\qquad
P_*:=\lim_{d\to\infty}\Pbatch\in(0,1),\qquad
\rho_*:=\frac{\eps_*}{P_*},\qquad
\bar\eta:=\frac{\eta_*p_*}{\eps_*}.
\]
For $\sigma>0$, $P_*=1-e^{-s_*}$.
Then
\[
\rho=\frac{\eps}{\Pbatch+\eps}
=\rho_*d^{-1}(1+o(1)),\qquad
\Bone=\frac{p}{\Pbatch}
=\frac{p_*}{P_*}d^{-\sigma}(1+o(1)),
\]
and
\[
\eta\Bone=\frac{\eta_*p_*}{P_*}\,d^{-1}(1+o(1)),\qquad
\rho^{-1}\asymp d.
\]
Thus $\tau_W,\tau_Z,\tau_R$ are all order $d$ (critical 3D, no 1D adiabatic collapse).

\paragraph{Rescaled system.}
Set
\[
\tau:=\frac{t}{d},\qquad x:=(R,W,Z)^\top,\qquad \frac{dx}{d\tau}=\bar A(d)x,\quad \bar A(d):=d\,\tilde A(d).
\]
Using the exact formulas in \S\ref{sec:scaled-3d},
\[
\tilde b_W=-2\rho+o(\rho),\quad
\tilde c_Z=-\rho+o(\rho),\quad
\tilde b_Z=\frac{2B}{d\,\rho}+o\!\left(\frac{B}{d\,\rho}\right),\quad
\tilde c_R=\rho\Bone+o(\rho\Bone),
\]
\[
\tilde c_W=-\eta\rho^2\Btwo+o(\eta\rho^2\Btwo),\qquad
\tilde a_Z=-2\eta+o(\eta),\qquad
\tilde a_W=o(\eta\rho\Bone),\qquad
\tilde a_R=o(\eta\Bone).
\]

\paragraph{Limit ODE.}
Use boundary-aware diagonal balancing
\[
D_{\mathrm{crit}}(d):=
\operatorname{diag}\!\left(
\rho^2 d\,\frac{\Pbatch}{pB},\ 1,\ \rho^2\frac{d}{B}
\right),
\qquad
y:=D_{\mathrm{crit}}^{-1}x.
\]
Then
\[
\frac{dy}{d\tau}=A_{\mathrm{crit}}^{(\tau)}(d)\,y,\qquad
A_{\mathrm{crit}}^{(\tau)}(d):=D_{\mathrm{crit}}^{-1}\bar A(d)D_{\mathrm{crit}},
\]
and entrywise asymptotics give
\[
A_{\mathrm{crit}}^{(\tau)}(d)=
\begin{pmatrix}
0 & 0 & -2\rho_*\bar\eta\\
\xi_* & -2\rho_* & 2\rho_*\\
\rho_* & -\rho_*\bar\eta & -\rho_*
\end{pmatrix}
+o(1),
\qquad
\xi_*:=\rho_*^2\frac{P_*}{s_*}
=\frac{\eps_*^2}{P_*p_*B_*}.
\]
Hence the critical-point effective dynamics are
\[
\boxed{
\begin{aligned}
\tau &:= \frac{t}{d},\\
\dot R &= -2\rho_*\bar\eta\,C,\\
\dot V &= -2\rho_*V+2\rho_*C+\xi_*R,\\
\dot C &= \rho_*R-\rho_*\bar\eta V-\rho_*C,
\end{aligned}
\qquad
\rho_*=\frac{\eps_*}{P_*},\ \bar\eta=\frac{\eta_*p_*}{\eps_*},\ \xi_*=\frac{\eps_*^2}{P_*p_*B_*}.}
\]

\paragraph{Hurwitz stability of the 3D ODE (direct).}
For the boxed $(R,V,C)$ system, the linear matrix is
\[
A_*=
\begin{pmatrix}
0 & 0 & -2\rho_*\bar\eta\\
\xi_* & -2\rho_* & 2\rho_*\\
\rho_* & -\rho_*\bar\eta & -\rho_*
\end{pmatrix}.
\]
Its characteristic polynomial can be written as
\[
\chi(\lambda)=\lambda^3+a_1\lambda^2+a_2\lambda+a_3,
\]
with
\[
a_1=3\rho_*,\qquad
a_2=\rho_*^2(2+4\bar\eta),\qquad
a_3=2\rho_*^3\bar\eta\!\left(2-\bar\eta\frac{\xi_*}{\rho_*}\right).
\]
Routh--Hurwitz for a cubic requires
\[
a_1>0,\qquad a_2>0,\qquad a_3>0,\qquad a_1a_2>a_3.
\]
Here
\[
a_1a_2-a_3
=\rho_*^3\!\left(6+8\bar\eta+2\bar\eta^2\frac{\xi_*}{\rho_*}\right)>0,
\]
so the only nontrivial condition is
\[
\boxed{
\bar\eta\frac{\xi_*}{\rho_*}<2
\iff \frac{\bar\eta\,\xi_*}{\rho_*}<2.}
\]
At equality, $a_3=0$ and one eigenvalue is at $0$ (critical threshold).
Using the triple-point parameter map,
\[
\bar\eta\frac{\xi_*}{\rho_*}
=\frac{\eta_*p_*}{\eps_*}\cdot\frac{\eps_*}{p_*B_*}
=\frac{\eta_*}{B_*},
\]
so equivalently
\[
\boxed{\eta_*<2B_*.}
\]

\paragraph{Eigenvalue type (real vs.\ complex).}
Let
\[
\zeta_*:=\frac{\xi_*}{\rho_*}=\frac{\eps_*}{p_*B_*}.
\]
For modal ansatz $e^{\lambda\tau}$, write $\mu:=\lambda/\rho_*$. The characteristic polynomial becomes
\[
p(\mu)=\mu^3+3\mu^2+(2+4\bar\eta)\mu+2\bar\eta(2-\bar\eta\zeta_*).
\]
Its cubic discriminant is
\[
\Delta_{\mathrm{spec}}
=4\!\left((1-4\bar\eta)^3-27\bar\eta^4\zeta_*^2\right).
\]
Hence:
\[
\Delta_{\mathrm{spec}}<0
\iff \text{one real eigenvalue and one complex-conjugate pair},
\]
\[
\Delta_{\mathrm{spec}}>0
\iff \text{three distinct real eigenvalues}.
\]
Equivalently, complex modes occur when
\[
(1-4\bar\eta)^3<27\bar\eta^4\zeta_*^2.
\]
In particular, if $\bar\eta\ge\frac14$, the left side is nonpositive while the right side is positive,
so a complex-conjugate pair is automatic.
For $0<\bar\eta<\frac14$, this is the threshold
\[
\zeta_*>\frac{(1-4\bar\eta)^{3/2}}{3\sqrt{3}\,\bar\eta^2}.
\]

\paragraph{2D lift: ODE fails, SDE closes exactly.}
A deterministic 2D ODE square lift does not reproduce the boxed system when $\xi_*>0$.
Indeed, for
\[
\dot X=\alpha X+\beta Y,\qquad \dot Y=\gamma X+\delta Y,
\]
the induced moment equations for
\[
R=X^2,\qquad V=Y^2,\qquad C=XY
\]
are
\[
\dot R=2\alpha R+2\beta C,\qquad
\dot V=2\gamma C+2\delta V,\qquad
\dot C=\gamma R+\beta V+(\alpha+\delta)C,
\]
so $\dot V$ has no $R$ forcing. Hence the triple-point $\xi_*R$ term rules out an exact 2D-ODE square representation (except $\xi_*=0$).

However, the triple-point system is exactly the second-moment closure of a linear 2D SDE:
\[
\boxed{
\begin{aligned}
dX_\tau &= -\rho_*\bar\eta\,Y_\tau\,d\tau,\\
dY_\tau &= \rho_*(X_\tau-Y_\tau)\,d\tau+\sqrt{\xi_*}\,X_\tau\,dB_\tau,
\end{aligned}}
\]
with
\[
R(\tau)=\E[X_\tau^2],\qquad V(\tau)=\E[Y_\tau^2],\qquad C(\tau)=\E[X_\tau Y_\tau].
\]
By It\^o,
the additional quadratic-variation term $(dY_\tau)^2=\xi_*X_\tau^2\,d\tau$ gives precisely
the $\xi_*R$ contribution in $\dot V$.

\paragraph{Interpretation.}
Compared with the strict above-resonance interiors, the corner has one additional surviving channel:
the $(W\leftarrow R)$ feedthrough remains order one after balancing (the $\xi_*R$ term in $\dot V$).
Off the corner this term disappears at polynomial resolution; at the corner it is exactly critical and keeps the
limit genuinely 3D.

\paragraph{Numerical verification (triple point).}
Figure~\ref{fig:triple_point_verification_resonant} compares balanced main ODE trajectories against the
boxed triple-point 3D limit ODE, while also showing the raw risk trajectory. In this resonant setup, the
system is near-critical but still stable (negative spectral abscissa on the $\tau=t/d$ clock), so oscillations
are visible without true divergence.

\begin{figure}[ht]
\centering
\includegraphics[width=\linewidth]{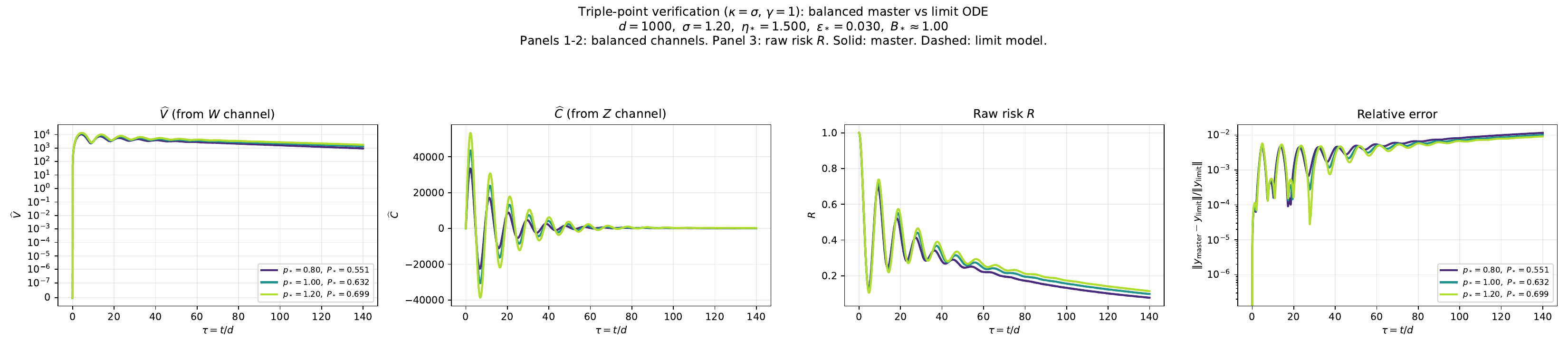}
\caption{\textbf{Triple-point verification ($\kappa=\sigma$, $\gamma=1$).}
Panels show $\widehat V$, $\widehat C$, raw risk $R$, and relative error between balanced main and
triple-point limit trajectories, for a sweep of $p_*$.
Run parameters: $d=1000$, $\sigma=1.2$, $\eta_*=1.5$, $\eps_*=0.03$, $B_*=1.0$,
raw initialization $[R,V,C]=[1,0,0]$, $p_*\in\{0.8,1.0,1.2\}$, $\tau_{\max}=140$.}
\label{fig:triple_point_verification_resonant}
\end{figure}

\subsection{Numerical validation: main ODE vs.\ simplified limit ODE}
\label{app:ls_validation}

This subsection collects numerical verifications of the main-vs-limit
ODE reduction across additional regions and on the boundary lines.
Figure~\ref{fig:ls-state-variables} in the main text shows the
state-variable evolution at the triple point $\kappa = \sigma$,
$\gamma = 1$. Sections~\ref{app:ls_validation_corner}--\ref{app:ls_validation_risk_heatmap}
below extend the verification to the resonant case at the triple point and to
the critical dense/sparse boundary ($\kappa = \sigma$, below resonance), and
report a sample-efficiency-time-normalized version of the heatmap pair from
Figure~\ref{fig:phase-diagram}.
Section~\ref{app:ls_validation_per_phase} collects per-region
$(R,V,C)$ convergence panels at $d\in\{100,1000,10000\}$ for each of the
six interior regimes plus all five codim-1 boundary lines that meet the
triple point.

\subsubsection{Triple point}
\label{app:ls_validation_corner}

\begin{figure}[ht]
\centering
\includegraphics[width=0.95\linewidth]{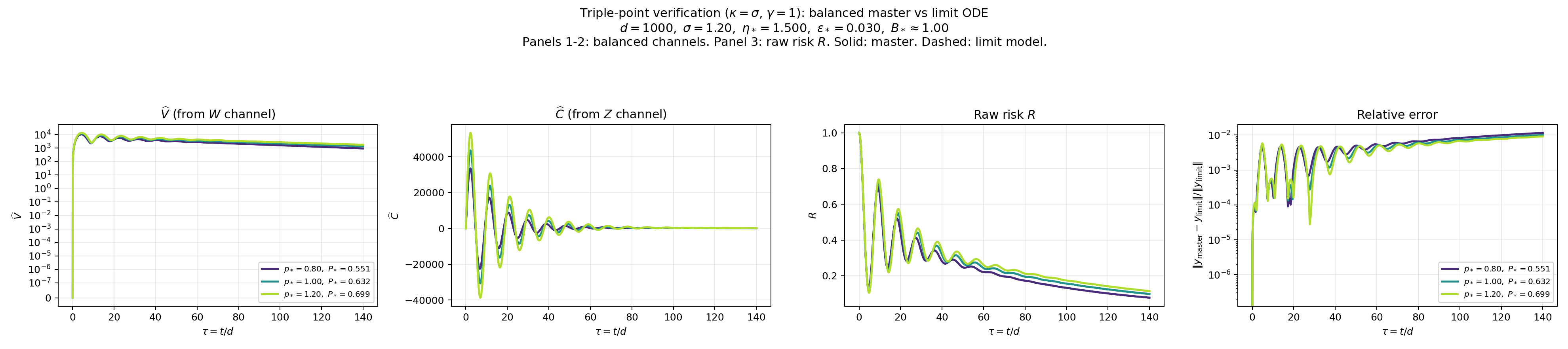}
\caption{\textbf{Triple point, resonant case.} Main ODE (solid) vs.\
simplified limit ODE (dashed) for momentum-energy proxy $\bar V$,
correlation proxy $\bar C$, raw risk $R$, and relative error, at
$d = 1000$, $\sigma = 1.20$, $\kappa = \sigma$, $\gamma = 1$, with
$\eta_\star = 0.200$, $\eps_\star = 1.000$, $B_\star \approx 1.00$.
Five traces correspond to varying $p_\star$ in $\{0.60, 0.80, 1.00, 1.20, 1.40\}$.}
\label{fig:app_ls_triple_point_resonant}
\end{figure}

\subsubsection{Last-iterate risk heatmap}
\label{app:ls_validation_risk_heatmap}

\begin{figure}[ht]
\centering
\includegraphics[width=0.7\linewidth]{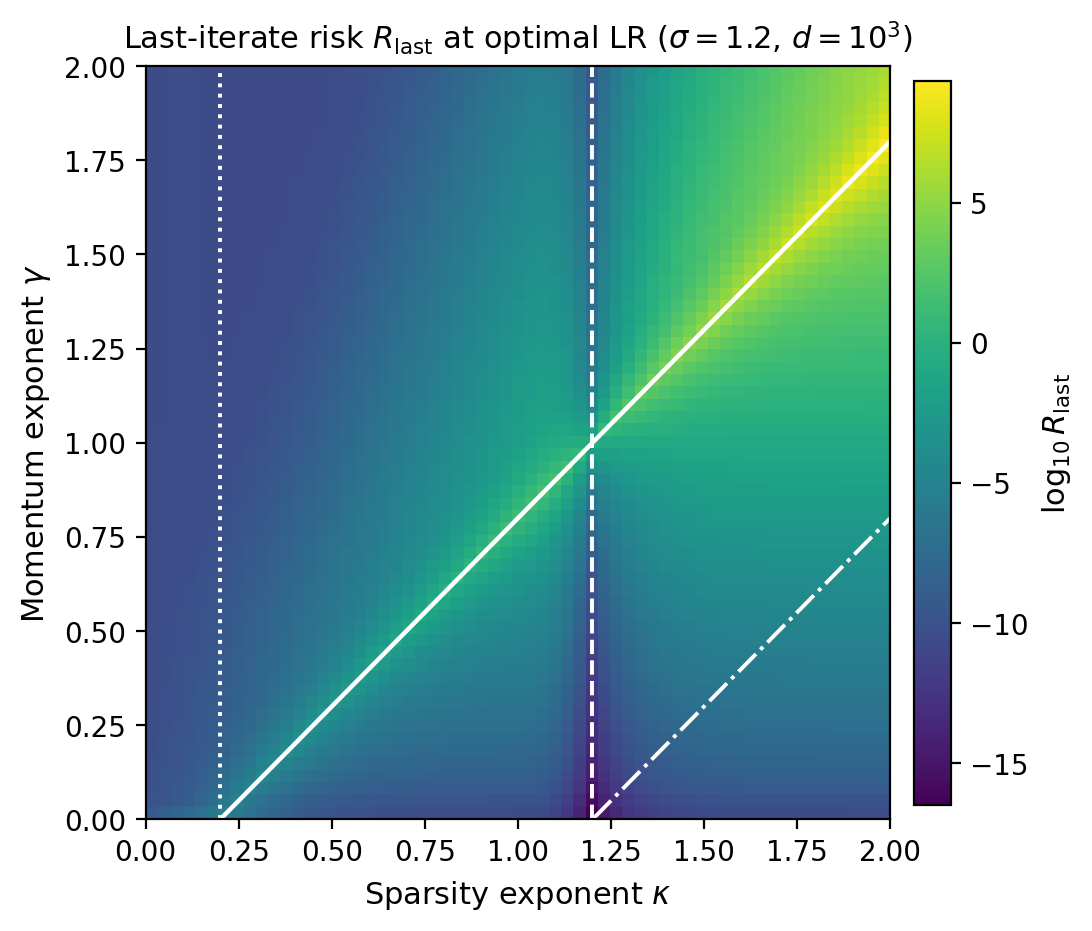}
\caption{\textbf{Last-iterate risk heatmap for the LS Main ODE.}
$\log_{10} R_{\mathrm{last}}$ at optimal learning rate over the
$(\kappa, \gamma)$ phase plane, $\sigma = 1.2$, $d = 10^3$. White
overlays mark the phase boundaries: dotted vertical $\kappa = \sigma - 1$
(concentrated/dense crossover), dashed vertical $\kappa = \sigma$
(dense/sparse crossover), solid diagonal $\gamma = 1 - \sigma + \kappa$
(resonance line), and dash-dot $\gamma = \kappa - \sigma$ (memoryless
boundary). The dark stripe along $\kappa = \sigma$ reflects the
sample-efficiency optimum at $pB = \Theta(1)$, where the expected number
of activations per batch is order one — see
Section~\ref{sec:phase_diagram} for the analytical discussion.}
\label{fig:ls_risk_heatmap}
\end{figure}

\subsubsection{Per-region $(R,V,C)$ convergence}
\label{app:ls_validation_per_phase}

The figures in this subsection compare the rescaled finite-$d$ main ODE
$(R,V,C)$ trajectories at $d \in \{100,1000,10000\}$ against the boxed limit
ODE for each region of the $(\kappa,\gamma)$ phase plane. All panels use the
common initial condition $(R_0,V_0,C_0)=(1,0,0)$, the slow clock $\tau$ from
the regime's boxed system, and the regime-appropriate balancing
\eqref{eq:universal_resonance_box} of $(W,Z)$. In every panel the limit ODE
is overlaid as a dashed black curve. As predicted, the squared-error
trajectory $R(\tau)$ collapses essentially perfectly across $d$ in every
region; the rescaled momentum-energy and correlation traces $V$ and $C$
exhibit progressive finite-$d$ convergence toward the limit, with the
$d=10{,}000$ trace tracking the limit ODE to within plotting accuracy in
every regime. Below-resonance regimes (sparse below resonance, memoryless sparse, and the $\kappa=\sigma$
below-resonance boundary) reduce to a 1D limit, so $V$ and $C$ are not on the
slow clock and are omitted from the right two panels.

\paragraph{Interior regimes.}
The six interior regimes (Theorem~\ref{thm:unified_limit}) are summarized
across Figures~\ref{fig:app_ls_phase1_dense}--\ref{fig:app_ls_phase3b_memoryless}.

\begin{figure}[ht]
\centering
\includegraphics[width=\linewidth]{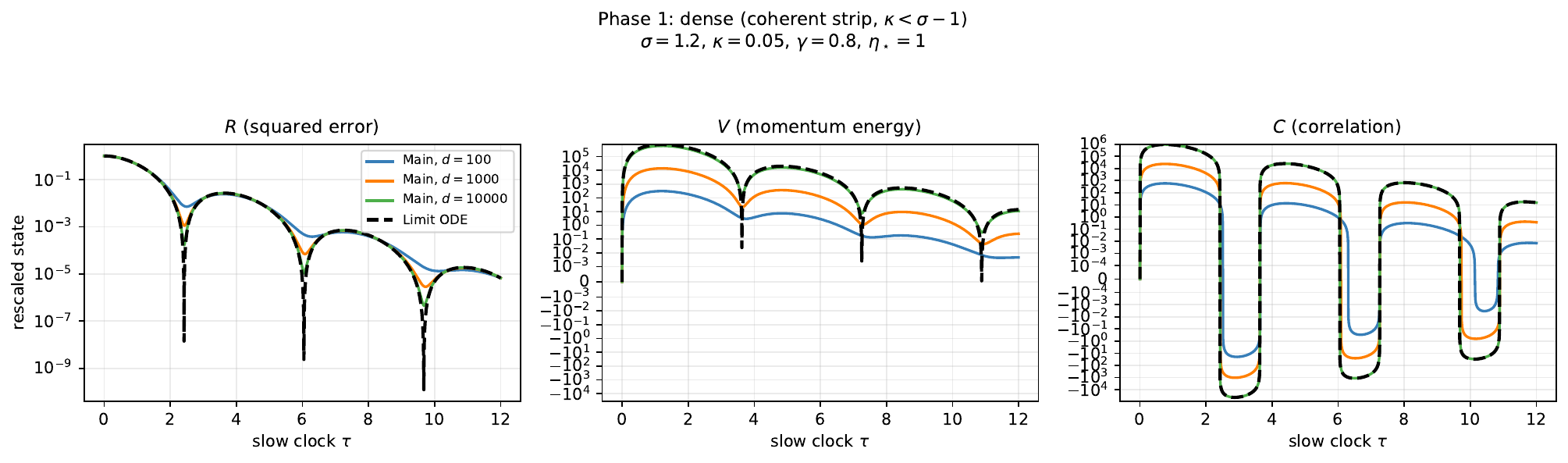}
\caption{\textbf{Concentrated regime ($\kappa<\sigma-1$).} Rescaled main ODE
$(R,V,C)$ at $d\in\{100,1000,10000\}$ vs.\ the canonical 2D heavy-ball limit
($\sigma=1.20$, $\kappa=0.05$, $\gamma=0.80$, $\eta_\star=0.20$).}
\label{fig:app_ls_phase1_dense}
\end{figure}

\begin{figure}[ht]
\centering
\includegraphics[width=\linewidth]{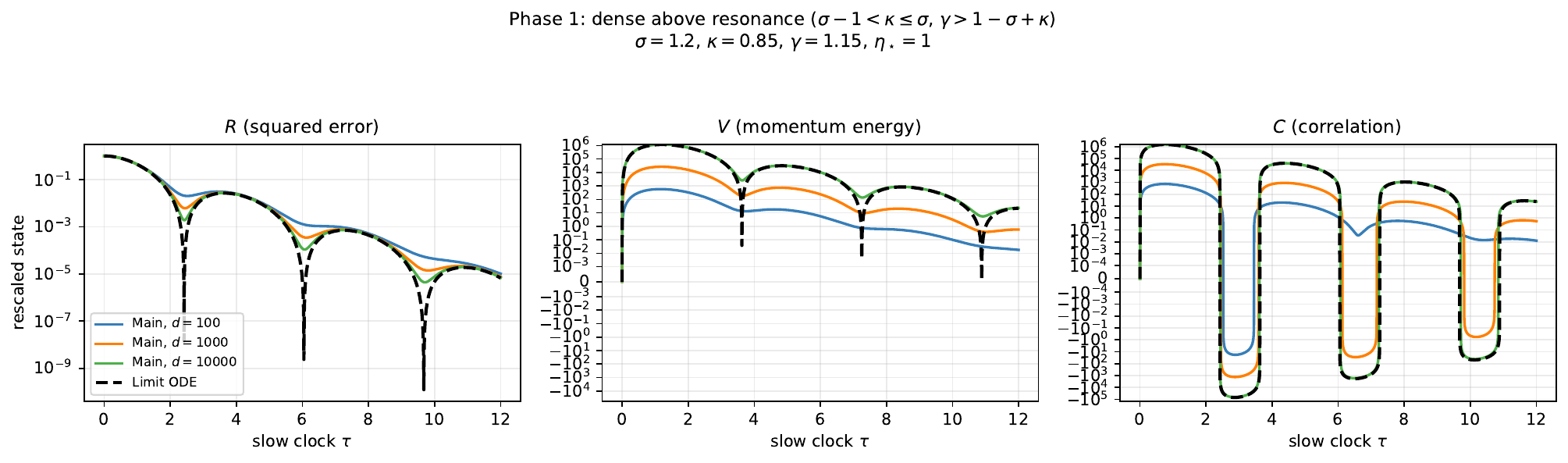}
\caption{\textbf{Dense above resonance ($\sigma-1<\kappa\leq\sigma$).}
Rescaled main ODE vs.\ canonical 2D heavy-ball limit ($\sigma=1.20$,
$\kappa=0.85$, $\gamma=1.15$, $\eta_\star=0.20$).}
\label{fig:app_ls_phase1_dense_above_res}
\end{figure}

\begin{figure}[ht]
\centering
\includegraphics[width=\linewidth]{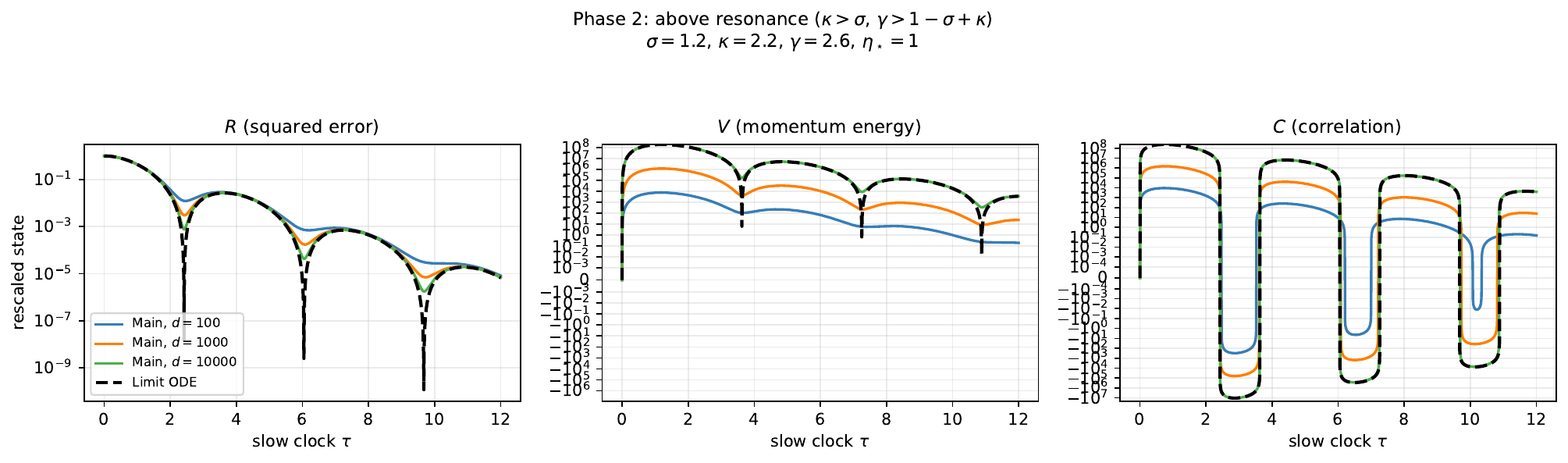}
\caption{\textbf{Sparse above resonance ($\kappa>\sigma$).}
Rescaled main ODE vs.\ canonical 2D heavy-ball limit on the $\nu$-clock
$\tau=t/d^{\gamma-(\kappa-\sigma)}$ ($\sigma=1.20$, $\kappa=2.20$,
$\gamma=2.60$, $\eta_\star=0.20$).}
\label{fig:app_ls_phase2_sparse_above}
\end{figure}

\begin{figure}[ht]
\centering
\includegraphics[width=\linewidth]{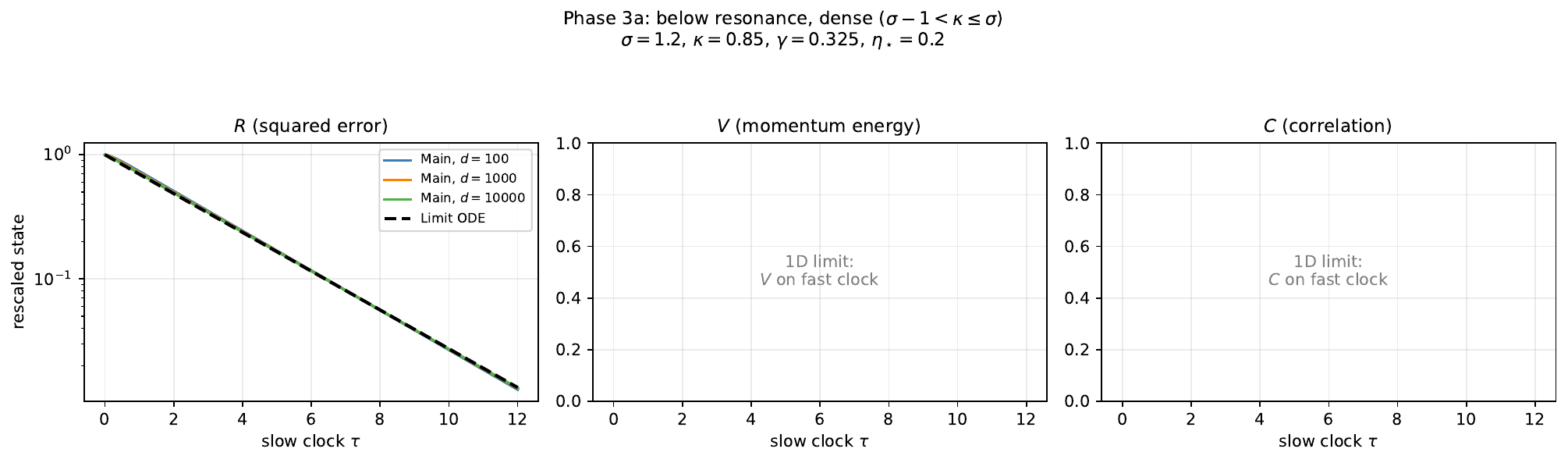}
\caption{\textbf{Dense below resonance.} Squared error $R$
collapses to the scalar SGD limit on $\tau=t/d^{1-\sigma+\kappa}$
($\sigma=1.20$, $\kappa=0.85$, $\gamma=0.325$, $\eta_\star=0.20$). $V$ and
$C$ live on the fast clock and are not shown.}
\label{fig:app_ls_phase3a_below_dense}
\end{figure}

\begin{figure}[ht]
\centering
\includegraphics[width=\linewidth]{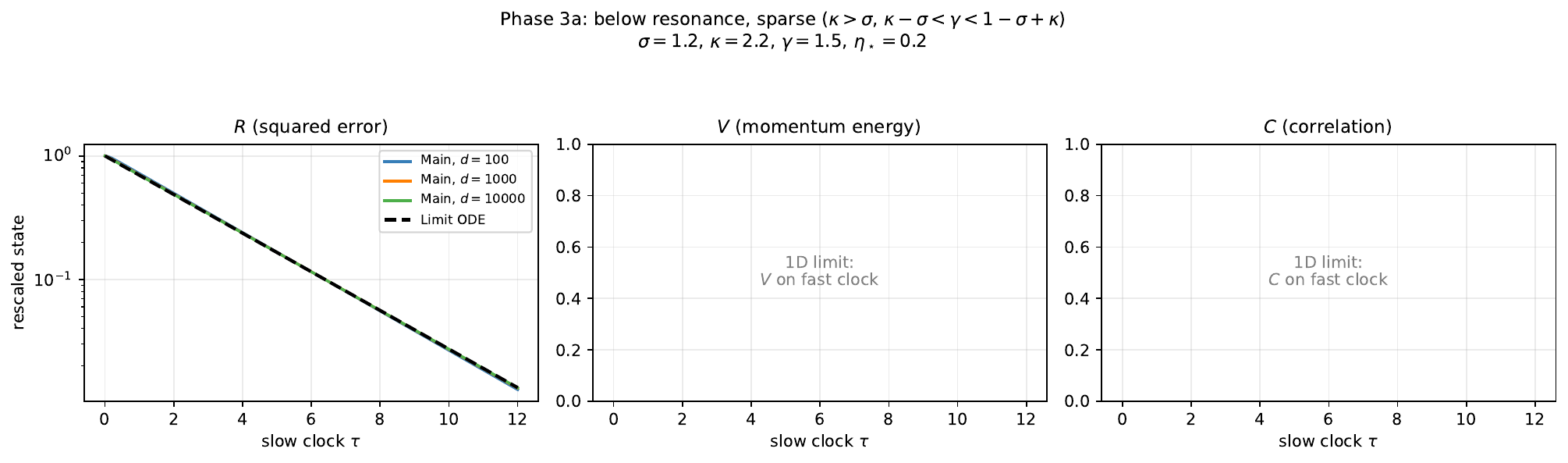}
\caption{\textbf{Sparse below resonance.} Squared error
$R$ collapses to the scalar SGD limit on $\tau=t/d$ ($\sigma=1.20$,
$\kappa=2.20$, $\gamma=1.50$, $\eta_\star=0.20$).}
\label{fig:app_ls_phase3a_below_sparse}
\end{figure}

\begin{figure}[ht]
\centering
\includegraphics[width=\linewidth]{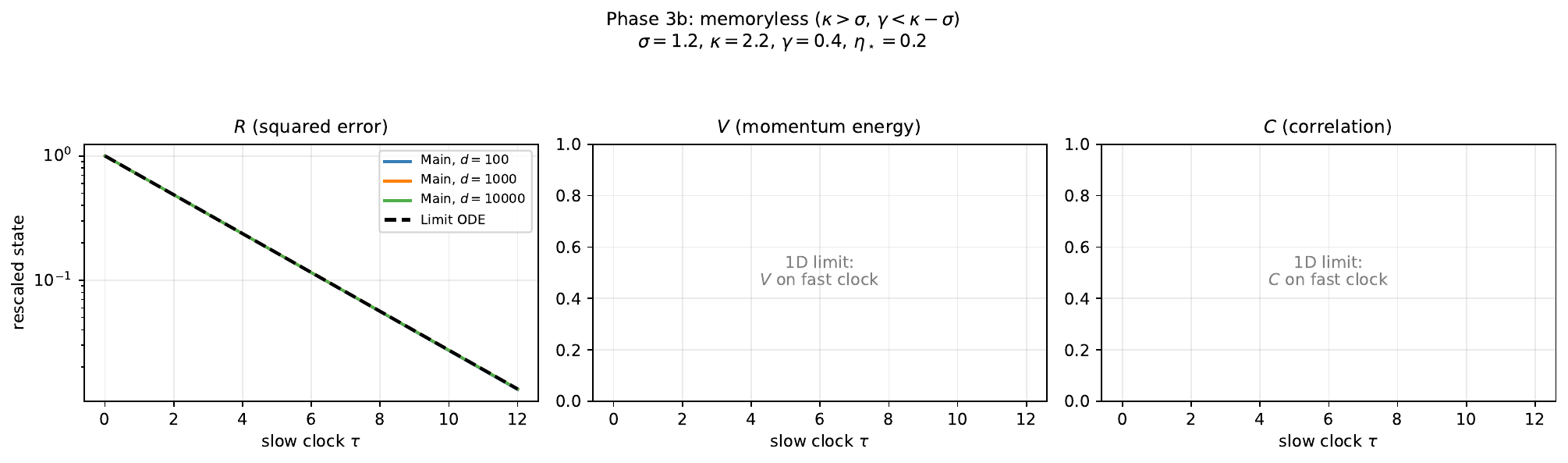}
\caption{\textbf{Memoryless sparse.} Squared error $R$ collapses to the
scalar SGD limit on $\tau=t/d$ ($\sigma=1.20$, $\kappa=2.20$,
$\gamma=0.40$, $\eta_\star=0.20$).}
\label{fig:app_ls_phase3b_memoryless}
\end{figure}

\paragraph{Boundary lines.}
Figures~\ref{fig:app_ls_boundary_resonance_dense}--\ref{fig:app_ls_boundary_noise_character}
verify the four boundary-case derivations of
Section~\ref{sec:regimes-boundary-cases} (the resonance line and the
$\kappa=\sigma$ ray, plus the noise-character line $\kappa=\sigma-1$). The
two resonance-line panels in particular confirm the universal-template
prediction \eqref{eq:universal_resonance_box}: the rescaled $(R,V,C)$
trajectories converge to the boxed 3D-irreducible limit with the
$\xi_\star$ feedthrough, not the canonical 2D heavy-ball block of the
adjacent strict interiors.

\begin{figure}[ht]
\centering
\includegraphics[width=\linewidth]{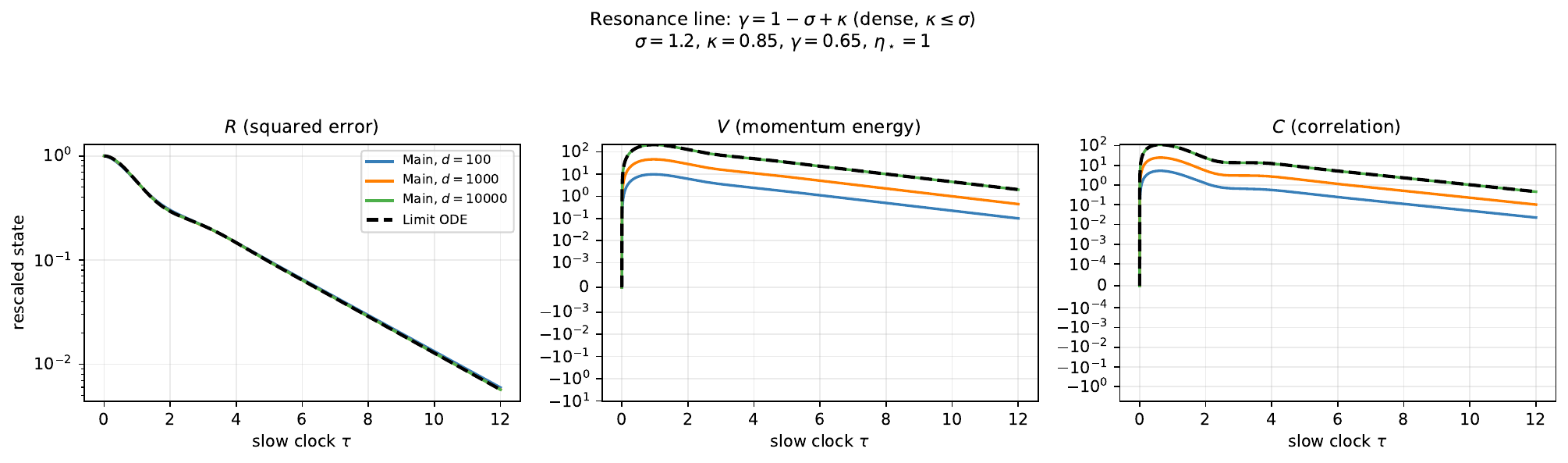}
\caption{\textbf{Resonance line, dense half ($\sigma-1<\kappa<\sigma$,
$\gamma=1-\sigma+\kappa$).}
Main ODE rescaled to the slow clock $\tau=t/d^\gamma$ vs.\ the boxed
limit~\eqref{eq:universal_resonance_box} with $P_\star^{\mathrm{res}}=1$,
$\rho_\star=\eps_\star$, $\xi_\star=\eps_\star^2/(p_\star B_\star)$
($\sigma=1.20$, $\kappa=0.85$, $\gamma=0.65$, $\eta_\star=0.50$). At these
parameters $\bar\eta=0.5$, $\zeta_\star=1$, predicted slow-$\tau$ spectrum
$\{-0.576,\,-1.213\pm 1.062\,i\}$.}
\label{fig:app_ls_boundary_resonance_dense}
\end{figure}

\begin{figure}[ht]
\centering
\includegraphics[width=\linewidth]{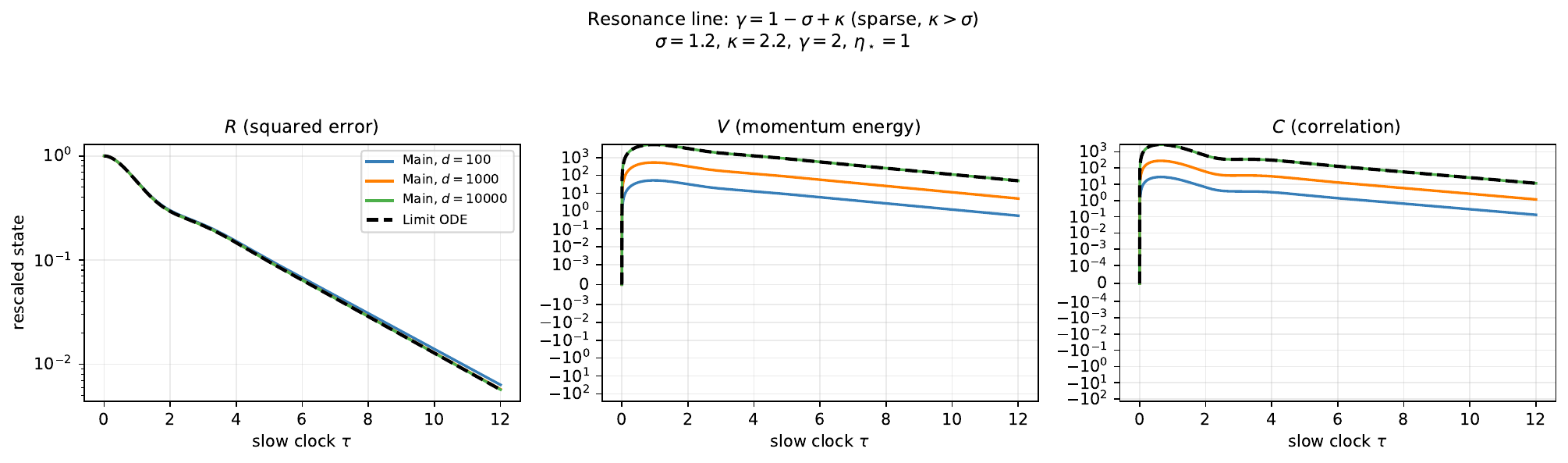}
\caption{\textbf{Resonance line, sparse half ($\kappa>\sigma$,
$\gamma=1-\sigma+\kappa$).}
Main ODE rescaled to $\tau=t/d$ vs.\ the
limit~\eqref{eq:universal_resonance_box} with $P_\star^{\mathrm{res}}=p_\star B_\star$,
$\rho_\star=\eps_\star/(p_\star B_\star)$, $\xi_\star=\rho_\star^2$
($\sigma=1.20$, $\kappa=2.20$, $\gamma=2.00$, $\eta_\star=0.50$). The
matrix structure is identical to the dense half; only the scale and clock
differ.}
\label{fig:app_ls_boundary_resonance_sparse}
\end{figure}

\begin{figure}[ht]
\centering
\includegraphics[width=\linewidth]{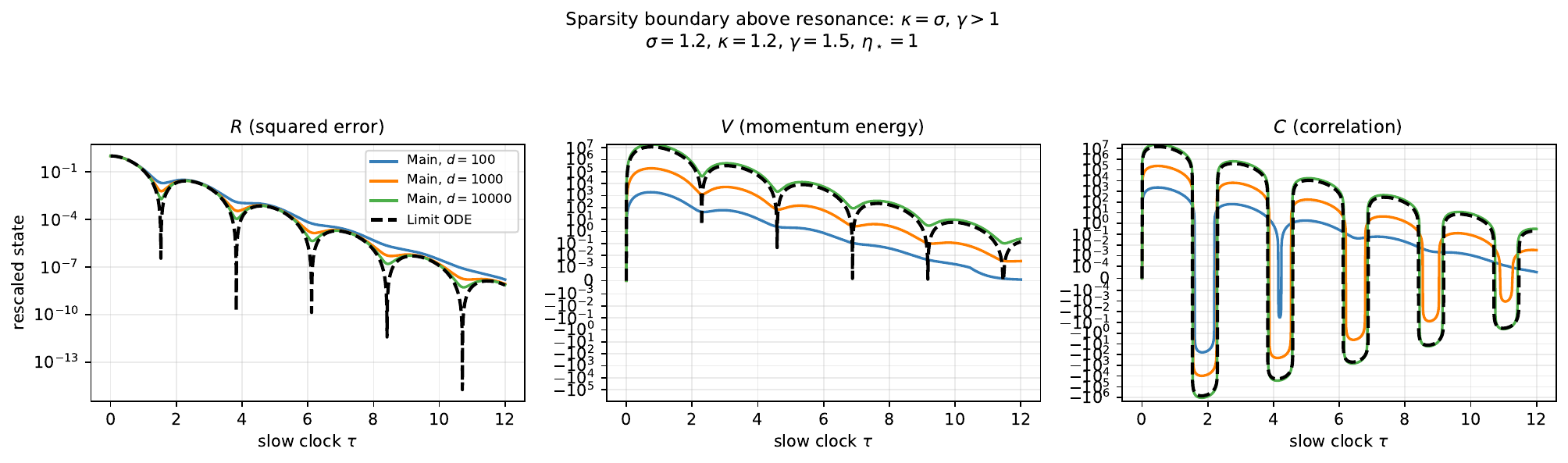}
\caption{\textbf{Dense/sparse boundary above resonance ($\kappa=\sigma$,
$\gamma>1$).}
Main ODE vs.\ canonical 2D heavy-ball limit with boundary scale
$\rho_\star=\eps_\star/P_\star$, $P_\star=1-e^{-p_\star B_\star}$
($\sigma=1.20$, $\kappa=1.20$, $\gamma=1.50$, $\eta_\star=0.20$). The
resonance feedthrough $\xi_\star R$ vanishes at rate $d^{1-\gamma}$,
recovering the heavy-ball block with the boundary constant $P_\star\in(0,1)$.}
\label{fig:app_ls_boundary_kappa_eq_sigma_above}
\end{figure}

\begin{figure}[ht]
\centering
\includegraphics[width=\linewidth]{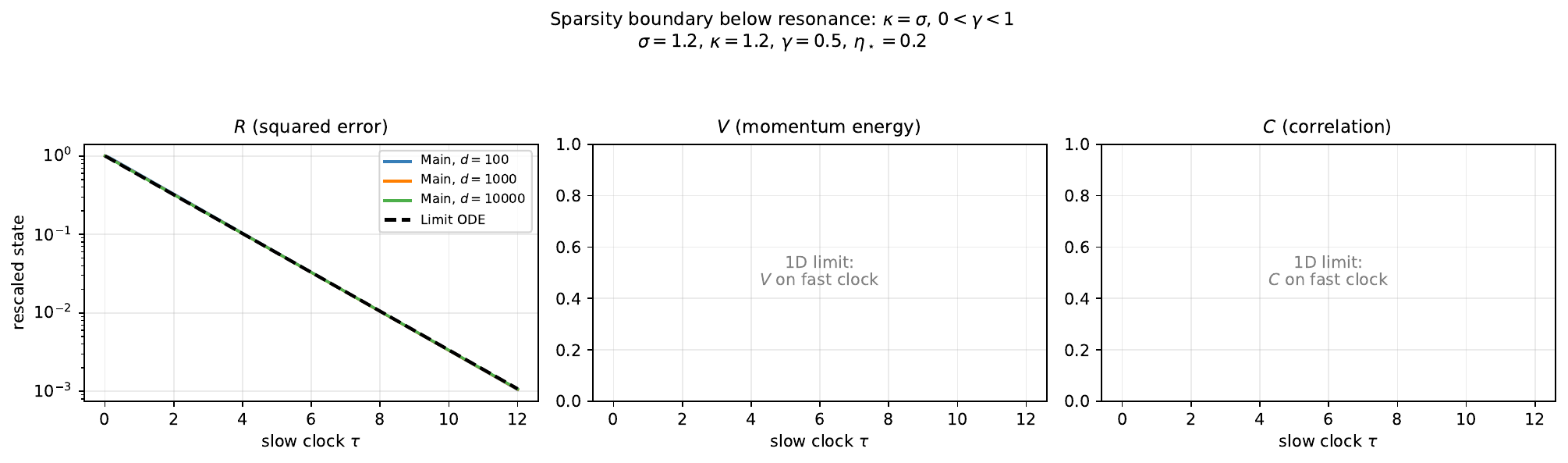}
\caption{\textbf{Dense/sparse boundary below resonance ($\kappa=\sigma$,
$0<\gamma<1$).}
Squared error $R$ collapses to the scalar SGD limit on $\tau=t/d$ with
boundary-renormalized prefactor
$c_{\mathrm{eff}}^{\mathrm{crit}}=\chi_\star\,\eta_{\mathrm{eff}}(2-\eta_{\mathrm{eff}})$,
$\chi_\star=p_\star B_\star/P_\star$ ($\sigma=1.20$, $\kappa=1.20$,
$\gamma=0.50$, $\eta_\star=0.20$). Companion to
Figure~\ref{fig:incoh_critical_boundary_verification}, which sweeps $p_\star$ at fixed
$d=1000$.}
\label{fig:app_ls_boundary_kappa_eq_sigma_below}
\end{figure}

\begin{figure}[ht]
\centering
\includegraphics[width=\linewidth]{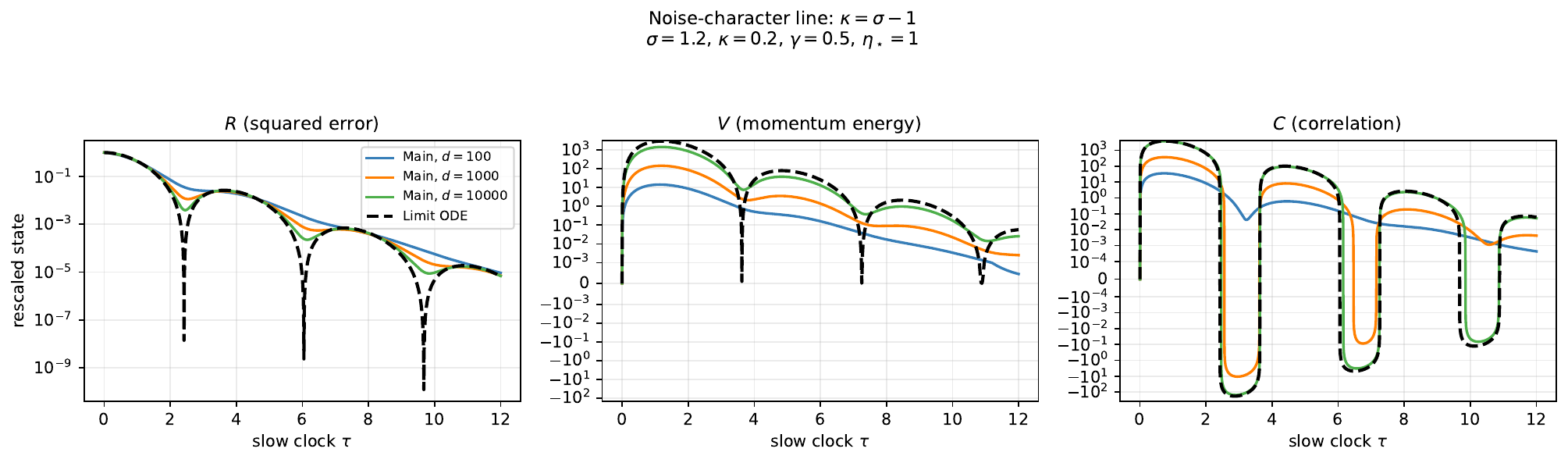}
\caption{\textbf{Noise-character line ($\kappa=\sigma-1$).}
Main ODE rescaled to $\tau=t/d^\gamma$ vs.\ the canonical 2D heavy-ball
limit shared with the concentrated regime ($\sigma=1.20$, $\kappa=0.20$, $\gamma=0.50$,
$\eta_\star=0.20$). This boundary marks the transition between the
concentrated strip and the dense above-resonance interior; the limit ODE is
the same on both sides, and the convergence is uniform.}
\label{fig:app_ls_boundary_noise_character}
\end{figure}

\clearpage
\section{Logistic regression analysis}
\label{app:lr}
\label{sec:region-A}     %
\label{sec:region-D}     %
\label{fig:lr-experiments} %

This appendix derives the closed second-moment ODE for the sparse
logistic regression model, identifies its scaling limits across the
$(\kappa, \gamma)$ phase plane in the tame-$\alpha$ regime, and
validates each limit ODE against optax simulations. It assumes the
optimizer, co-scaling ansatz, and asymptotic notation of
Appendix~\ref{app:common}, and the data model of
Section~\ref{sec:lr-model}.

The structural backbone is the population gradient decomposition
$\nabla L = \mathcal{A}\theta + \mathcal{B}\mu$
(Proposition~\ref{prop:lr_pop_grad}); the central object is the exact
five-variable ODE on $(s, u, R_\perp, V_\perp, C_\perp)$
(Theorem~\ref{thm:logistic_main_ode} of Section~\ref{sec:logistic});
and the principal scientific output is the five-region phase atlas
(Table~\ref{tab:lr_atlas}) — three regimes that share the LS
mechanism (resonance line, heavy-ball above, gradient-flow below) and
two genuine novelties (the noise-floor boundary at $\kappa = \sigma - 1$
and the irreducibly 2D coupled slow manifold below resonance).

\subsection{Setup and population gradient}
\label{app:lr_setup}

This subsection records the logistic-regression data model of
Section~\ref{sec:lr-model} in the form needed for the ODE derivation,
and proves the Stein-decomposition formula for the population gradient
(Proposition~\ref{prop:lr_pop_grad}) that supplies the structural
backbone of every subsequent calculation. It assumes the SGD-with-momentum
optimizer of Appendix~\ref{sec:common_sgd} and the co-scaling ansatz
of Appendix~\ref{sec:common_coscaling}.

\subsubsection{Data-generating process}
\label{sec:lr_dgp}

Fix a signal vector $\mu \in \R^d$ with $r := \|\mu\| > 0$ and a
class probability $p \in (0, 1)$ (typically $p \ll 1$). A single sample
$(X, Y)$ is drawn as:
\begin{enumerate}
    \item Draw $Y \sim \Ber(p)$, with $Y = 2$ (the rare class) having probability $p$ and $Y = 1$ (the common class) having probability $1 - p$.
    \item Draw $Z \sim \N(0, I_d)$ independently of $Y$.
    \item Set $X := \mu + Z$ if $Y = 2$, and $X := Z$ if $Y = 1$.
\end{enumerate}
The common class is unstructured Gaussian noise; all signal about $\mu$
is carried by the rare class. We write
$\tilde y := \mathbf{1}\{y = 2\} \in \{0, 1\}$ for the binary label.

\subsubsection{Logistic model and gradient}
\label{sec:lr_model_recap}

We fit a binary logistic model with parameters $(\theta, b) \in \R^d \times \R$,
\begin{equation}
    P_{\theta, b}(Y = 2 \mid X = x) \;=\; \sigma\!\bigl(\langle\theta, x\rangle + b\bigr),
    \qquad \sigma(z) := \frac{e^z}{1 + e^z},
    \label{eq:lr_model}
\end{equation}
with logistic loss
$\ell(\theta, b; x, y) = -\tilde y(\langle\theta, x\rangle + b) + \log(1 + e^{\langle\theta, x\rangle + b})$.
The per-sample gradient with respect to $\theta$ is
\begin{equation}
    g \;=\; \nabla_\theta \ell \;=\; \bigl(\sigma(\langle\theta, x\rangle + b) - \tilde y\bigr)\, x,
    \label{eq:lr_per_sample_grad}
\end{equation}
which is the gradient of Section~\ref{sec:lr-model} with $q_\theta(x) = \sigma(\langle\theta, x\rangle + b^*)$.

\paragraph{Bayes-optimal parameters and fixed bias.}
A direct log-likelihood-ratio computation on the two-class Gaussian
mixture identifies the Bayes posterior as the logistic family with
\begin{equation}
    \theta^* = \mu, \qquad b^* = \log\frac{p}{1-p} - \frac{r^2}{2},
    \label{eq:lr_bayes_params}
\end{equation}
so the model is well-specified: $(\theta^*, b^*)$ is the population
minimizer of the logistic loss. We hold $b = b^*$ fixed throughout the
analysis and train only $\theta$. The motivation is timescale separation:
the bias adapts to the global class frequency $p$ (a 1D statistic
estimable in $\asymp 1/p$ samples), while $\theta$ must accumulate
signal evidence from rare-class samples. This isolates the
high-dimensional dynamics from a 1D bias-fitting transient.

\begin{remark}[The trivial classifier is nearly calibrated]
\label{rem:lr_trivial_classifier}
At $\theta = 0$ the model predicts $P(Y = 2 \mid x) = \sigma(b^*) \approx p$
for every $x$, which is exactly the marginal class probability. So the
initial gradient is $\asymp p$ in magnitude, and the learning timescale is
$\asymp 1/p$. This parallels the $\asymp 1/\Pbatch$ activation-rate timescale
of the LS analysis (Appendix~\ref{app:setup}).
\end{remark}

\subsubsection{Population gradient via Stein's lemma}
\label{sec:lr_pop_grad}

Decompose $\theta = \theta_\parallel\hat\mu + \theta_\perp$ along the signal direction
$\hat\mu := \mu / r$, and write $\theta_\parallel := \langle\theta, \hat\mu\rangle$,
$R_\perp := \|\theta_\perp\|^2$, and $q := \|\theta\|^2 = \theta_\parallel^2 + R_\perp$.
The two class-conditional logits are then Gaussian:
\begin{equation}
    G_1 := \langle\theta, Z\rangle + b^* \sim \N(b^*, q),
    \qquad
    G_2 := \langle\theta, \mu + Z\rangle + b^* \sim \N(\theta_\parallel r + b^*, q),
    \label{eq:lr_logits}
\end{equation}
and depend on $\theta$ only through $(\theta_\parallel, q)$, equivalently through
$(s, R_\perp)$ where $s := \theta_\parallel - r$ is the signal error.

\begin{proposition}[Population gradient structure]
\label{prop:lr_pop_grad}
The population gradient of the logistic loss $L(\theta) = \E[\ell(\theta, b^*; X, Y)]$ is
\begin{equation}
    \nabla L(\theta) \;=\; \mathcal{A}(\theta_\parallel, q)\, \theta + \mathcal{B}(\theta_\parallel, q)\, \mu,
    \label{eq:lr_pop_grad}
\end{equation}
where
\begin{align}
    \mathcal{A}(\theta_\parallel, q) &:= (1-p)\, \E[\sigma'(G_1)] + p\, \E[\sigma'(G_2)],
    \label{eq:lr_calA} \\
    \mathcal{B}(\theta_\parallel, q) &:= p\, \bigl(\E[\sigma(G_2)] - 1\bigr),
    \label{eq:lr_calB}
\end{align}
and $\sigma'(z) = \sigma(z)(1 - \sigma(z))$. In particular, for any vector $v \in \R^d$,
\begin{equation}
    \E[\langle v, g\rangle \mid \theta] \;=\; \mathcal{A}\, \langle v, \theta\rangle + \mathcal{B}\, \langle v, \mu\rangle,
    \label{eq:lr_inner_grad}
\end{equation}
which depends on $v$ only through $\langle v, \theta\rangle$ and $\langle v, \mu\rangle$.
\end{proposition}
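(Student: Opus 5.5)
The approach is to split the population expectation by class and apply Gaussian integration by parts (Stein's lemma) to each class-conditional term. Write the per-sample gradient as $g = (\sigma(\langle\theta,X\rangle+b^*) - \tilde y)X$. Conditioning on the label gives
\[
\nabla L(\theta) = (1-p)\,\E\bigl[\sigma(\langle\theta,Z\rangle+b^*)\,Z\bigr] + p\,\E\bigl[(\sigma(\langle\theta,\mu+Z\rangle+b^*)-1)(\mu+Z)\bigr],
\]
with $Z\sim\N(0,I_d)$. Interchanging gradient and expectation is harmless here, since $\sigma$ is bounded with bounded derivative and $X$ has all moments.

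Each term is handled with the multivariate Stein identity $\E[f(Z)Z] = \E[\nabla f(Z)]$. This applies to $f(z)=h(\langle\theta,z\rangle + c)$ with $h$ bounded and $C^1$ with bounded derivative, and gives $\E[h(\langle\theta,Z\rangle+c)Z] = \theta\,\E[h'(\langle\theta,Z\rangle+c)]$.

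\textbf{Common class.} Take $h=\sigma$ and $c=b^*$. The contribution is $(1-p)\,\E[\sigma'(G_1)]\,\theta$.

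\textbf{Rare class.} Split $\mu+Z$ into two pieces.
\begin{itemize}
\item The $\mu$-part gives $p\,(\E[\sigma(G_2)]-1)\,\mu = \mathcal{B}\mu$ directly.
\item The $Z$-part uses Stein with $h=\sigma-1$ and $c = \langle\theta,\mu\rangle + b^* = \theta_\parallel r + b^*$. It gives $p\,\E[\sigma'(G_2)]\,\theta$, since the constant shift does not affect the derivative.
\end{itemize}

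Summing the three contributions yields $\mathcal{A}\theta + \mathcal{B}\mu$ with $\mathcal{A}$ and $\mathcal{B}$ exactly as in \eqref{eq:lr_calA}--\eqref{eq:lr_calB}.

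It remains to check that $\mathcal{A}$ and $\mathcal{B}$ depend only on $(\theta_\parallel,q)$. This holds because $G_1\sim\N(b^*,q)$ and $G_2\sim\N(\theta_\parallel r+b^*,q)$, as in \eqref{eq:lr_logits}: $\langle\theta,Z\rangle\sim\N(0,\|\theta\|^2)$ and $\langle\theta,\mu\rangle = r\theta_\parallel$.

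The inner-product form \eqref{eq:lr_inner_grad} then follows by linearity. Here $v$ is deterministic, or more generally independent of the fresh sample given $\theta$, so $\E[\langle v,g\rangle\mid\theta] = \langle v,\nabla L(\theta)\rangle$.

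The only step needing care is justifying Stein's lemma and the interchanges, which is routine given the boundedness of $\sigma$ and $\sigma'$. A degenerate case $\theta=0$ needs no separate treatment: Stein's lemma still holds, and the $\theta$-term simply vanishes.
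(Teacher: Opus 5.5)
Your proposal is correct and follows essentially the same route as the paper's proof. Both split the population expectation by class and apply Stein's lemma to the common-class term and to the $Z$-part of the rare-class term, absorbing the shift $\theta_\parallel r + b^*$ into $h$, then read off the $\mu$-part directly as $\mathcal{B}\mu$. Your extra remarks justifying the gradient--expectation interchange and requiring $v$ to be independent of the fresh sample are sound details the paper leaves implicit.
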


\begin{proof}
Stein's lemma states $\E[h(\langle\theta, Z\rangle)\, Z] = \E[h'(\langle\theta, Z\rangle)]\, \theta$ for any sufficiently regular scalar function $h$.
Apply it to the class-1 contribution with $h(u) = \sigma(u + b^*)$ to obtain
$\E[\sigma(G_1)\, Z] = \E[\sigma'(G_1)]\, \theta$. Apply it to the
$Z$-part of the class-2 contribution with $h(u) = \sigma(\theta_\parallel r + u + b^*) - 1$
to obtain $\E[(\sigma(G_2) - 1)\, Z] = \E[\sigma'(G_2)]\, \theta$. The
remaining $\mu$-component of class-2 contributes
$\E[\sigma(G_2) - 1]\, \mu = \mathcal{B}\, \mu / p$. Summing
the class-1 and class-2 contributions with weights $(1-p)$ and $p$
recovers \eqref{eq:lr_pop_grad}.
\end{proof}

At $\theta = \mu$ ($\theta_\parallel = r$, $q = r^2$): $\nabla L(\mu) = (\mathcal{A}^* + \mathcal{B}^*)\mu = 0$, giving the calibration identity $\mathcal{A}^* + \mathcal{B}^* = 0$.

\subsubsection{Gradient second moment and orthogonal noise}
\label{sec:lr_grad_2nd_moment}

Stein's second identity
$\E[h(\langle\theta, Z\rangle)\, Z Z^\top] = \E[h]\, I + \E[h'']\, \theta\theta^\top$
gives the gradient second-moment tensor
\begin{equation}
    \E[g g^\top \mid \theta] \;=\; \mathcal{D}_0\, I + \mathcal{D}_\mu\, \mu\mu^\top + \mathcal{D}_\theta\, \theta\theta^\top + \mathcal{D}_\times(\theta\mu^\top + \mu\theta^\top),
    \label{eq:lr_ggT}
\end{equation}
with diagonal coefficients
\begin{align}
    \mathcal{D}_0(\theta_\parallel, q) &:= (1-p)\, \E[\sigma(G_1)^2] + p\, \E[(1 - \sigma(G_2))^2],
    \label{eq:lr_calD0} \\
    \mathcal{D}_\theta(\theta_\parallel, q) &:= (1-p)\, \E[(\sigma^2)''(G_1)] + p\, \E[((1 - \sigma)^2)''(G_2)].
    \label{eq:lr_calDtheta}
\end{align}
The cross coefficients $(\mathcal{D}_\mu, \mathcal{D}_\times)$ vanish under
the orthogonal projection $P_\perp := I - \hat\mu\hat\mu^\top$ and are
not needed for the bulk dynamics. Tracing
\eqref{eq:lr_ggT} against $P_\perp$ yields the per-sample
orthogonal noise
\begin{equation}
    \E[\|g_\perp\|^2 \mid \theta] \;=\; (d-1)\mathcal{D}_0 + R_\perp \mathcal{D}_\theta \;=:\; \mathcal{N}_\perp.
    \label{eq:lr_Nperp}
\end{equation}
For a mini-batch gradient $\bar g = B^{-1}\sum_{i=1}^B g_i$ with i.i.d.\ samples,
$\E[\bar g \bar g^\top] = B^{-1}\E[g g^\top] + \tfrac{B-1}{B}\E[g]\E[g]^\top$, so
\begin{equation}
    \E[\|\bar g_\perp\|^2 \mid \theta] \;=\; \frac{1}{B}\bigl[(d-1)\mathcal{D}_0 + R_\perp\mathcal{D}_\theta\bigr] + \frac{B-1}{B}\mathcal{A}^2\, R_\perp \;=:\; \bar{\mathcal{N}}_\perp.
    \label{eq:lr_Nperp_bar}
\end{equation}

\paragraph{Two noise channels.}
Equation \eqref{eq:lr_Nperp_bar} reveals the structural feature that
distinguishes the logistic case from LS: the orthogonal noise has an
\emph{additive} channel proportional to $(d-1)\mathcal{D}_0$ — independent
of $R_\perp$ — and a \emph{multiplicative} channel proportional to
$\mathcal{A}^2 R_\perp$. The additive channel persists at the optimum
$\theta = \mu$ ($R_\perp = 0$); it is the asymptotic label-noise floor
intrinsic to classification with rare events. The LS counterpart has
only the multiplicative channel.

\subsubsection{Population KL divergence (excess risk)}
\label{sec:lr_kl}

The natural objective is the excess population risk, equal to the
expected KL divergence from the Bayes-optimal conditional to the
learned conditional:
\begin{equation}
    \mathrm{KL}_{\mathrm{pop}}(\theta) \;:=\; \E_X\bigl[\mathrm{KL}\bigl(P_{\theta^*}(\cdot \mid X)\,\|\,P_\theta(\cdot \mid X)\bigr)\bigr] \;=\; L(\theta) - L(\theta^*).
    \label{eq:lr_kl_def}
\end{equation}
Writing $a = \langle\mu, X\rangle + b^*$ for the optimal logit and
$c = \langle\theta, X\rangle + b^*$ for the current logit, the binary
KL collapses to $\kappa(a, c) := \sigma(a)(a - c) + \phi(c) - \phi(a)$
with $\phi(z) := \log(1 + e^z)$. Taking the expectation over $X$ in
each class, with independent standard normals $\xi, \zeta \sim \N(0, 1)$,
\begin{equation}
    \mathrm{KL}_{\mathrm{pop}}(\theta) \;=\; (1 - p)\, \E_{\xi, \zeta}[\kappa(a_1, c_1)] + p\, \E_{\xi, \zeta}[\kappa(a_2, c_2)],
    \label{eq:lr_kl_full}
\end{equation}
with class-conditional logits
$a_1 = r\xi + b^*$, $c_1 = \theta_\parallel\xi + \sqrt{R_\perp}\,\zeta + b^*$
(class 1) and
$a_2 = r^2 + r\xi + b^*$,
$c_2 = \theta_\parallel r + \theta_\parallel\xi + \sqrt{R_\perp}\,\zeta + b^*$ (class 2).
The KL depends on $\theta$ only through $(\theta_\parallel, R_\perp) = (s + r, R_\perp)$,
and admits the small-error expansion
\begin{equation}
    \mathrm{KL}_{\mathrm{pop}}(\theta) \;=\; \tfrac{1}{2}\, p\, \alpha^*\,\bigl[(1 + r^2)\, s^2 + R_\perp\bigr] + O\bigl(s^3,\, R_\perp^2,\, s R_\perp\bigr)
    \label{eq:lr_kl_expansion}
\end{equation}
near the optimum, with the $(1 + r^2)$ factor on the signal direction matching the
signal-direction Hessian eigenvalue $\kappa_* = p\alpha^*(1 + r^2)$ derived
in Appendix~\ref{app:lr_stability}, and the unweighted $R_\perp$ matching
the orthogonal Hessian eigenvalue $\approx p\alpha^*$.

\subsection{Derivation of the five-variable ODE}
\label{app:lr_ode_derivation}

This subsection derives the closed second-moment ODE governing
mini-batch SGD with momentum on the logistic regression model of
Appendix~\ref{app:lr_setup}. The closure is exact at the level of
per-realization conditional drift: five state variables form a
sufficient statistic for their own conditional drift, by rotational
symmetry in the $(d-1)$-dimensional subspace orthogonal to the signal
direction $\hat\mu$.

\subsubsection{State variables and closure}
\label{sec:lr_state_closure}

Decompose $\theta = \theta_\parallel\hat\mu + \theta_\perp$ and $m = u\hat\mu + m_\perp$
along the signal direction. The five per-realization state variables are
\begin{equation}
    \underbrace{s = \langle\theta - \mu, \hat\mu\rangle = \theta_\parallel - r, \quad u = \langle m, \hat\mu\rangle}_{\text{signal (1st moments)}} \;\;\;\Big|\;\;\; \underbrace{R_\perp = \|\theta_\perp\|^2, \quad V_\perp = \|m_\perp\|^2, \quad C_\perp = \langle\theta_\perp, m_\perp\rangle}_{\text{bulk (2nd moments)}},
    \label{eq:lr_state}
\end{equation}
with auxiliaries $\theta_\parallel = s + r$ and $q = \theta_\parallel^2 + R_\perp$.

\begin{lemma}[Five-variable closure by rotational symmetry]
\label{lem:lr_closure}
The conditional drift $\E[\theta_{k+1} - \theta_k \mid \theta_k, m_k]$ and
$\E[m_{k+1} - m_k \mid \theta_k, m_k]$ depend on $(\theta_k, m_k)$ only
through the five-tuple \eqref{eq:lr_state}, as do the conditional
second-moment changes that close the dynamics for $(R_\perp, V_\perp, C_\perp)$.
\end{lemma}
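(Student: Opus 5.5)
The plan is to exploit the invariance of the data law under orthogonal maps fixing $\hat\mu$, and then read off the needed conditional moments from Proposition~\ref{prop:lr_pop_grad} and the second-moment tensor \eqref{eq:lr_ggT}. Write one step as $m_{k+1} = \beta m_k + \eps \bar g$ and $\theta_{k+1} = \theta_k - \eta m_{k+1}$, where $\bar g$ is the minibatch gradient at $\theta_k$. Conditionally on $(\theta_k, m_k)$, the fresh batch is independent of the past. So every conditional drift is a linear combination of $\theta_k$, $m_k$ and $\E[\bar g\mid\theta_k]$. Every conditional second-moment change is a quadratic form in $\theta_k$, $m_k$ and $\E[\bar g\bar g^\top\mid\theta_k]$.

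\textbf{First moments.} By \eqref{eq:lr_pop_grad}, $\E[\bar g\mid\theta] = \mathcal{A}\theta + \mathcal{B}\mu$, where $\mathcal{A}$ and $\mathcal{B}$ depend only on $(\theta_\parallel, q) = (s+r,\, (s+r)^2 + R_\perp)$. Projecting onto $\hat\mu$ gives the drifts of $s$ and $u$:
\[
\E[u_{k+1}\mid\cdot] = \beta u + \eps\bigl(\mathcal{A}(s+r) + \mathcal{B}r\bigr), \qquad \E[s_{k+1}\mid\cdot] = s - \eta\,\E[u_{k+1}\mid\cdot].
\]
Both are functions of the five-tuple. The orthogonal drifts are $\E[\theta_{\perp,k+1}] = \theta_\perp - \eta(\beta m_\perp + \eps\mathcal{A}\theta_\perp)$, and similarly for $m_\perp$. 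These are vectors, so the lemma's phrase ``drift depends only on the five-tuple'' has to be read in the equivariant sense: the drift lies in $\mathrm{span}(\hat\mu, \theta_\perp, m_\perp)$, with coefficients that are functions of the five-tuple. I would state this interpretation explicitly.

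\textbf{Second moments.} Expand $\|\theta_{\perp,k+1}\|^2$, $\|m_{\perp,k+1}\|^2$ and $\langle\theta_{\perp,k+1}, m_{\perp,k+1}\rangle$. The terms not involving $\bar g$ are already in $(R_\perp, V_\perp, C_\perp)$. The terms linear in $\bar g$ are $\langle\theta_\perp,\bar g\rangle$ and $\langle m_\perp,\bar g\rangle$. By \eqref{eq:lr_inner_grad}, their conditional means are $\mathcal{A}R_\perp$ and $\mathcal{A}C_\perp$, since $\langle\theta_\perp,\mu\rangle = \langle m_\perp,\mu\rangle = 0$ and $\langle m_\perp,\theta\rangle = C_\perp$. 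The term quadratic in $\bar g$ is $\|\bar g_\perp\|^2$, whose conditional mean is $\bar{\mathcal{N}}_\perp$ from \eqref{eq:lr_Nperp_bar}. That quantity involves only $\mathcal{D}_0$, $\mathcal{D}_\theta$, $\mathcal{A}$ and $R_\perp$, all of which are functions of $(s, R_\perp)$.

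\textbf{Main obstacle.} The one step that needs care is verifying the second-moment structure \eqref{eq:lr_ggT}. I would do this by conditioning on the class and applying Gaussian integration by parts twice to $Z \mapsto h(\langle\theta, Z\rangle) ZZ^\top$, with $h$ depending on the logit. This gives the form $\E[h]\,I + \E[h'']\,\theta\theta^\top$, plus $\mu$-terms in the rare class. I would then check that every $\mu$- and cross-term is annihilated by $P_\perp$. One subtlety must be confirmed: a $\theta\theta^\top$ term projects to $\theta_\perp\theta_\perp^\top$, whose trace is $R_\perp$, which is acceptable.

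Finally, I would remark that closure holds only at the per-realization level. Taking expectations of nonlinear functions such as $\mathcal{A}(s, R_\perp)R_\perp$ does not close in the moments without concentration of $(s, R_\perp)$. For the lemma as stated, however, conditional closure is exactly what is required.
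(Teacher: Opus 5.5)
Your proposal is correct and follows essentially the same route as the paper: the first-order drifts come from Proposition~\ref{prop:lr_pop_grad} tested against $\hat\mu$, $\theta_\perp$, $m_\perp$, and the second-order terms come from contracting \eqref{eq:lr_ggT} with $P_\perp$ to obtain $\bar{\mathcal{N}}_\perp$. Your equivariant reading of the vector drifts and your plan to verify \eqref{eq:lr_ggT} by a double Stein identity are useful sharpenings of points the paper leaves implicit.
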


\begin{proof}
By Proposition~\ref{prop:lr_pop_grad}, $\E[\langle v, g\rangle \mid \theta]$
depends on the test vector $v$ only through $\langle v, \theta\rangle$ and
$\langle v, \mu\rangle$; specializing to $v \in \{\hat\mu, \theta_\perp, m_\perp\}$
shows the first-order conditional drift of each state variable depends
on $\theta$ only through $(\theta_\parallel, R_\perp)$. For the second-order terms,
the gradient outer-product tensor \eqref{eq:lr_ggT} contracts against
$P_\perp$ to a function of $R_\perp$ alone (eq.~\eqref{eq:lr_Nperp}),
so the bulk noise $\bar{\mathcal{N}}_\perp$ is also determined by $(\theta_\parallel, R_\perp)$.
Combining first- and second-order contributions, every drift formula in
\eqref{eq:lr_S1}--\eqref{eq:lr_B3} below is a function of the five-tuple.
\end{proof}

This is exact, not asymptotic, at the level of per-realization
conditional drift. (The expectation over the state, however, does not
close: $\E[F(\text{state})] \neq F(\E[\text{state}])$ because the drift
is nonlinear in $(s, R_\perp)$ through $\mathcal{A}, \mathcal{B}, \mathcal{D}_0, \mathcal{D}_\theta$.
This contrasts with the LS case of Appendix~\ref{app:setup}, where
linearity of the gradient gives an exact ODE for the expectations
themselves.)

\subsubsection{Per-step conditional drift}
\label{sec:lr_per_step_drift}

Throughout this subsection, $\E[\,\cdot\,]$ denotes $\E[\,\cdot \mid \theta_k, m_k]$.
Writing the SGD-with-momentum updates from Appendix~\ref{sec:common_sgd}
in the parameterization $\eps = 1 - \beta$,
\begin{align*}
    m_{k+1} &= (1 - \eps)\, m_k + \eps\, \bar g_k, \\
    \theta_{k+1} &= \theta_k - \eta\, m_{k+1},
\end{align*}
and projecting along $\hat\mu$ and onto $\hat\mu^\perp$ yields six
exact identities for the per-step changes that we now compute term by term.

\paragraph{Signal: $\Delta u, \Delta s$.}
From $u_{k+1} = (1 - \eps)u_k + \eps\langle \bar g_k, \hat\mu\rangle$ and
$s_{k+1} = s_k - \eta u_{k+1}$:
\begin{align}
    \Delta u &= -\eps\, u + \eps\,\langle \bar g_k, \hat\mu\rangle, \\
    \Delta s &= -\eta(1 - \eps)\, u - \eta\eps\,\langle \bar g_k, \hat\mu\rangle.
\end{align}
Define the signal drift $f := \E[\langle g, \hat\mu\rangle] = \mathcal{A}\, \theta_\parallel + \mathcal{B}\, r$
(by \eqref{eq:lr_inner_grad} with $v = \hat\mu$ and $\langle\hat\mu, \theta\rangle = \theta_\parallel$,
$\langle\hat\mu, \mu\rangle = r$). Taking expectations (and noting that
mini-batching does not affect the per-sample mean, so $\E[\langle\bar g, \hat\mu\rangle] = f$),
\begin{equation}
    \E[\Delta u] = -\eps\, u + \eps\, f, \qquad
    \E[\Delta s] = -\eta(1 - \eps)\, u - \eta\eps\, f.
    \label{eq:lr_signal_drift}
\end{equation}
The $\Delta s$ identity is exact (linear in $\eta$, no $\eta^2$ correction)
because $\theta$ updates depend on $m_{k+1}$ which is already in the
state at time $k+1$.

\paragraph{Bulk: $\Delta R_\perp$.}
From $\theta_{\perp, k+1} = \theta_{\perp, k} - \eta m_{\perp, k+1}$,
$R_{\perp, k+1} = R_{\perp, k} - 2\eta\langle\theta_{\perp, k}, m_{\perp, k+1}\rangle + \eta^2 V_{\perp, k+1}$.
Expanding $m_{\perp, k+1} = (1-\eps) m_{\perp, k} + \eps \bar g_{\perp, k}$ and $V_{\perp, k+1} = \|m_{\perp, k+1}\|^2$,
and using
$\E[\langle\theta_\perp, \bar g_\perp\rangle] = \mathcal{A} R_\perp$,
$\E[\langle m_\perp, \bar g_\perp\rangle] = \mathcal{A} C_\perp$
(both from \eqref{eq:lr_inner_grad} with $v = \theta_\perp, m_\perp$,
which annihilates the $\langle v, \mu\rangle$ term),
and $\E[\|\bar g_\perp\|^2] = \bar{\mathcal{N}}_\perp$ from \eqref{eq:lr_Nperp_bar}:
\begin{equation}
    \E[\Delta R_\perp] = -2\eta(1-\eps) C_\perp - 2\eta\eps\, \mathcal{A}\, R_\perp + \eta^2(1-\eps)^2 V_\perp + 2\eta^2(1-\eps)\eps\, \mathcal{A}\, C_\perp + \eta^2\eps^2\, \bar{\mathcal{N}}_\perp.
    \label{eq:lr_R_drift}
\end{equation}

\paragraph{Bulk: $\Delta V_\perp, \Delta C_\perp$.}
Similar expansions give
\begin{align}
    \E[\Delta V_\perp] &= -(2\eps - \eps^2) V_\perp + 2(1-\eps)\eps\, \mathcal{A}\, C_\perp + \eps^2\, \bar{\mathcal{N}}_\perp,
    \label{eq:lr_V_drift} \\
    \E[\Delta C_\perp] &= -\eps\, C_\perp + \eps\, \mathcal{A}\, R_\perp - \eta(1-\eps)^2 V_\perp - 2\eta(1-\eps)\eps\, \mathcal{A}\, C_\perp - \eta\eps^2\, \bar{\mathcal{N}}_\perp.
    \label{eq:lr_C_drift}
\end{align}

\subsubsection{The exact five-variable ODE}
\label{sec:lr_full_ode}

Identifying $d/dt \leftrightarrow \E[\Delta \cdot \mid \text{state}]$
with one discrete step corresponding to one unit of $t$, the five-variable
system is given by the following collected ODE.

\paragraph{Signal subsystem.}
\begin{align}
    \dot s &= -\eta(1-\eps)\, u - \eta\eps\, f,
    \label{eq:lr_S1} \\
    \dot u &= \eps\,(f - u).
    \label{eq:lr_S2}
\end{align}

\paragraph{Bulk subsystem.}
\begin{align}
    \dot R_\perp &= -2\eta(1-\eps)\, C_\perp - 2\eta\eps\, \mathcal{A}\, R_\perp + \eta^2(1-\eps)^2 V_\perp + 2\eta^2(1-\eps)\eps\, \mathcal{A}\, C_\perp + \eta^2\eps^2\, \bar{\mathcal{N}}_\perp,
    \label{eq:lr_B1} \\
    \dot V_\perp &= -(2\eps - \eps^2)\, V_\perp + 2(1-\eps)\eps\, \mathcal{A}\, C_\perp + \eps^2\, \bar{\mathcal{N}}_\perp,
    \label{eq:lr_B2} \\
    \dot C_\perp &= -\eps\, C_\perp + \eps\, \mathcal{A}\, R_\perp - \eta(1-\eps)^2 V_\perp - 2\eta(1-\eps)\eps\, \mathcal{A}\, C_\perp - \eta\eps^2\, \bar{\mathcal{N}}_\perp.
    \label{eq:lr_B3}
\end{align}

\paragraph{Auxiliaries.}
$\theta_\parallel = s + r$, $q = \theta_\parallel^2 + R_\perp$, $f = \mathcal{A}\, \theta_\parallel + \mathcal{B}\, r$,
and the batched orthogonal noise
\begin{equation}
    \bar{\mathcal{N}}_\perp = \frac{1}{B}\bigl[(d-1)\mathcal{D}_0 + R_\perp\, \mathcal{D}_\theta\bigr] + \frac{B-1}{B}\, \mathcal{A}^2\, R_\perp,
    \label{eq:lr_Nperp_bar_repeat}
\end{equation}
where the coefficient functions $\mathcal{A}(\theta_\parallel, q), \mathcal{B}(\theta_\parallel, q),
\mathcal{D}_0(\theta_\parallel, q), \mathcal{D}_\theta(\theta_\parallel, q)$ are the
Gaussian--sigmoid expectations of \eqref{eq:lr_calA}--\eqref{eq:lr_calB}
and \eqref{eq:lr_calD0}--\eqref{eq:lr_calDtheta}.

\subsubsection{Structural features}
\label{sec:lr_ode_structure}

\paragraph{Signal subsystem is heavy-ball.}
Equations \eqref{eq:lr_S1}--\eqref{eq:lr_S2} are a 2D nonlinear
heavy-ball system in $(s, u)$ with curvature set by $\partial_s f$ at
the optimum; the signal-direction Hessian eigenvalue
$\kappa_* = p\alpha^*(1 + r^2)$ is computed in
Appendix~\ref{app:lr_stability}.

\paragraph{Bulk subsystem differs from LS in two ways.}
Compared to the LS three-variable ODE in Appendix~\ref{sec:full_odes}:
(i) the multiplicative coupling coefficient $\mathcal{A}$ depends on the
signal state via $\theta_\parallel = s + r$, so the bulk is driven by the signal
dynamics — a feature absent in LS where the analogous coefficient
$\mathcal{B}_1$ is a deterministic constant; (ii) the orthogonal noise
$\bar{\mathcal{N}}_\perp$ contains the additive piece $(d-1)\mathcal{D}_0 / B$
that does not vanish at $R_\perp = 0$, producing a fluctuation floor
$R_\perp^* > 0$ in noise-floor regimes (Appendix~\ref{app:lr_regimes}).
The LS counterpart has only the multiplicative noise
$\propto R$ and admits $R^* = 0$ throughout its stable region.

\paragraph{Population KL from the ODE state.}
The excess risk \eqref{eq:lr_kl_full} is computable from the state
$(s, R_\perp)$ alone via 2D Gauss--Hermite quadrature, and admits the
small-error expansion $\mathrm{KL}_{\mathrm{pop}} = \tfrac{1}{2}\, p\, \alpha^*\, [(1 + r^2)\, s^2 + R_\perp] + O(s^3, R_\perp^2, s R_\perp)$
near the optimum (eq.~\eqref{eq:lr_kl_expansion}). The factor
$1 + r^2$ on the signal direction matches the signal-direction Hessian
eigenvalue derived from linearization of \eqref{eq:lr_S1}--\eqref{eq:lr_S2},
while the unweighted $R_\perp$ matches the orthogonal Hessian eigenvalue
$\approx p\alpha^*$.

\subsection{Co-scaling, tame-$\alpha$ regime, and sparse-limit ODE}
\label{app:lr_coscaling}

This subsection instantiates the co-scaling ansatz of
Appendix~\ref{sec:common_coscaling} on the logistic five-variable ODE
\eqref{eq:lr_S1}--\eqref{eq:lr_B3}, identifies the regime in which the
Gaussian--sigmoid coefficients $\mathcal{A}, \mathcal{B}, \mathcal{D}_0,
\mathcal{D}_\theta$ admit closed leading-order expressions
(Lemma~\ref{lem:lr_tame_alpha}), and writes down the resulting
sparse-limit ODE that is the starting point for every per-region analysis
in Appendix~\ref{app:lr_regimes}.

\subsubsection{Co-scaling ansatz, recap}
\label{sec:lr_coscaling_recap}

Following Appendix~\ref{sec:common_coscaling}, take $d \to \infty$ with
\begin{equation}
    p = p_*\, d^{-\kappa}, \qquad
    B = B_*\, d^{\sigma}, \qquad
    \eps = \eps_*\, d^{-\gamma}, \qquad
    \eta = \eta_*\, d^{-\alpha_\eta},
    \label{eq:lr_coscaling}
\end{equation}
for fixed positive constants $(p_*, B_*, \eps_*, \eta_*)$. We use
$\alpha_\eta$ for the learning-rate exponent throughout this appendix
to avoid conflict with the auxiliary scalar
$\alpha := e^{(q - r^2)/2}$ defined below; in main-text notation
this is the same $\alpha$ that appears in the LS Proposition~\ref{prop:eta_max}.
The ansatz requires $\kappa > 0$ (large-mean degeneration) and
$\sigma > 0$ (nontrivial batch growth). Regime-adapted choices of
$\alpha_\eta$ are identified in Appendix~\ref{app:lr_stability}.

\subsubsection{Tame-$\alpha$ coefficient reductions}
\label{sec:lr_tame_alpha}

The Gaussian--sigmoid expectations $\E[\sigma'(G_j)], \E[\sigma(G_j)^2], \dots$
in \eqref{eq:lr_calA}--\eqref{eq:lr_calDtheta} admit explicit
leading-order forms in the regime where (i) $p \to 0$ and (ii) $r$
is fixed and $R_\perp = O(1)$, so that the auxiliary
\begin{equation}
    \alpha := e^{(q - r^2)/2} \;=\; e^{(\theta_\parallel^2 + R_\perp - r^2)/2} \;=\; O(1).
    \label{eq:lr_alpha_def}
\end{equation}
We refer to this as the \emph{tame-$\alpha$ regime}. With
$b^* = \log\tfrac{p}{1-p} - \tfrac{r^2}{2}$, the class-conditional
logit means are $\mu_1 = b^* \approx \log p - r^2/2$ and $\mu_2 = \theta_\parallel r + b^*$;
both are large negative for $p \to 0$ at fixed $r, R_\perp$, and
$e^{b^*} \approx p\, e^{-r^2/2}$.

\begin{lemma}[Tame-$\alpha$ coefficient reductions]
\label{lem:lr_tame_alpha}
Under \eqref{eq:lr_coscaling} with $r$ fixed and $R_\perp = O(1)$,
\begin{align}
    \mathcal{A} &= p\, \alpha\,\bigl[1 + O(p)\bigr], &
    \mathcal{B} &= -p\,\bigl[1 + O(p)\bigr], \\
    \mathcal{D}_0 &= p\,\bigl[1 + O(p)\bigr], &
    \mathcal{D}_\theta &= O(p^2),
    \label{eq:lr_tame_coeffs}
\end{align}
and consequently the orthogonal noise reduces to
\begin{equation}
    \bar{\mathcal{N}}_\perp \;=\; \frac{d\, p}{B}\bigl[1 + o(1)\bigr] + \frac{B - 1}{B}\, p^2 \alpha^2\, R_\perp\bigl[1 + O(p)\bigr] + O(p^2/B).
    \label{eq:lr_Nhat_tame}
\end{equation}
\end{lemma}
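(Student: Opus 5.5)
The plan is to reduce every Gaussian--sigmoid expectation in \eqref{eq:lr_calA}--\eqref{eq:lr_calDtheta} to Gaussian moment-generating functions. Both class logits have very negative means, since $b^* \approx \log p - r^2/2 \to -\infty$ while the variance $q$ stays $O(1)$. On that region the sigmoid is essentially an exponential, so each expectation becomes a lognormal integral.

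\textbf{Step 1: expansions and error bounds.} For $z<0$ I will use the expansions
\[
\sigma(z)=e^z\bigl(1+O(e^z)\bigr),\qquad \sigma'(z)=e^z\bigl(1+O(e^z)\bigr),\qquad \sigma(z)^2=e^{2z}\bigl(1+O(e^z)\bigr),
\]
together with the global bounds $0\le\sigma\le\min(1,e^z)$ and $\sigma'\le e^z$. Every error term is then controlled by $\E[e^{kG_j}]$ for $k=1,2,3$. Since $G_j\sim\N(\mu_j,q)$, these are $e^{k\mu_j+k^2q/2}$. For fixed $r$ and bounded $R_\perp$, each extra factor $e^{\mu_j}$ costs $O(p)$ times an $O(1)$ constant, so all errors are relative $O(p)$.

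\textbf{Step 2: the coefficient $\mathcal{A}$.}
\begin{itemize}
\item \emph{Class 1.} Here $\E[e^{G_1}]=e^{b^*+q/2}=p\,e^{(q-r^2)/2}(1+O(p))=p\alpha(1+O(p))$, and $(1-p)=1+O(p)$.
\item \emph{Class 2.} This term carries an extra factor $p$, and $\E[\sigma'(G_2)]\le \E[e^{G_2}]=O(p)$, so its contribution is $O(p^2)$.
\end{itemize}
Together these give $\mathcal{A}=p\alpha(1+O(p))$.

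\textbf{Step 3: the coefficients $\mathcal{B}$ and $\mathcal{D}_0$.}
\begin{itemize}
\item \emph{$\mathcal{B}$.} Since $\E[\sigma(G_2)]\le\E[e^{G_2}]=O(p)$, we get $\mathcal{B}=p(\E\sigma(G_2)-1)=-p(1+O(p))$.
\item \emph{$\mathcal{D}_0$.} The class-1 term is $\E[\sigma(G_1)^2]\le \E[e^{2G_1}]=O(p^2)$. The class-2 term is $p\,\E[(1-\sigma(G_2))^2]=p(1+O(p))$. Hence $\mathcal{D}_0=p(1+O(p))$.
\end{itemize}

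\textbf{Step 4: the coefficient $\mathcal{D}_\theta$.} The derivatives satisfy $(\sigma^2)''=O(e^{2z})$ on $z<0$ and $|(\sigma^2)''|\le C\min(e^{2z},1)$ globally. Likewise $((1-\sigma)^2)''=-2\sigma'(1-\sigma)+2(\sigma')^2-\ldots$ is $O(e^z)$ and bounded. Hence the class-1 term is $O(p^2)$ and the class-2 term is $p\cdot O(p)$, giving $\mathcal{D}_\theta=O(p^2)$.

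\textbf{Step 5: the noise $\bar{\mathcal{N}}_\perp$.} Substitute the four coefficients into \eqref{eq:lr_Nperp_bar}:
\begin{itemize}
\item $(d-1)\mathcal{D}_0/B=\tfrac{dp}{B}(1+o(1))$, because $(d-1)/d\to1$.
\item $R_\perp\mathcal{D}_\theta/B=O(p^2/B)$.
\item $\tfrac{B-1}{B}\mathcal{A}^2R_\perp=\tfrac{B-1}{B}p^2\alpha^2R_\perp(1+O(p))$.
\end{itemize}
This is exactly \eqref{eq:lr_Nhat_tame}.

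\textbf{Main obstacle.} The delicate step is justifying the expansions in Step 1 under the expectation. The Gaussian $G_j$ has unbounded support, so on the event $\{G_j>0\}$ the expansion $\sigma\approx e^z$ fails.

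I will split the expectations on $\{G_j\le0\}$ and $\{G_j>0\}$. On the bad event the integrands are bounded, so that piece is at most $\Prob(G_j>0)\le e^{-\mu_j^2/(2q)}=p^{\Theta(\log(1/p))}$, which is super-polynomially small. That is far below the $O(p^2)$ targets.

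Alternatively, the uniform inequalities $|\sigma(z)-e^z|\le e^{2z}$ and $|\sigma'(z)-e^z|\le 2e^{2z}$ hold for all $z$. These remove the need to split, with the errors bounded by $\E e^{2G_j}$.

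Uniformity requires $q$ bounded. That follows from the tame-$\alpha$ hypothesis, which states $\alpha=O(1)$, and from fixed $r$ and $R_\perp=O(1)$ keeping $\theta_\parallel$ bounded.
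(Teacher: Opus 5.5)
Your proposal is correct and follows the same route as the paper: replace $\sigma$, $\sigma'$, $\sigma^2$ and the second derivatives by their $z\to-\infty$ exponential asymptotics, then evaluate each term with the Gaussian MGF $\E e^{kG}=e^{k\mu+k^2q/2}$ and $e^{b^*+r^2/2}=p/(1-p)$. Your uniform bounds $|\sigma(z)-e^z|\le e^{2z}$ and $|\sigma'(z)-e^z|\le 2e^{2z}$ turn the paper's ``$\approx$'' into genuine relative $O(p)$ errors, and they are preferable to the split at $G_j=0$ (which would also need the tails $\E[e^{kG_j}\mathbf{1}\{G_j>0\}]$, since those are not bounded, though they are likewise super-polynomially small); you also correctly identify boundedness of $q$, and hence of $\theta_\parallel$, as the hypothesis actually needed.
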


\begin{proof}
Use the asymptotic identities $\sigma(z) \approx e^z$, $\sigma'(z) \approx e^z$,
$\sigma(z)^2 \approx e^{2z}$, $(\sigma^2)''(z) \approx 4 e^{2z}$,
$1 - \sigma(z) \approx 1 - e^z$, $((1 - \sigma)^2)''(z) \approx -2 e^z$
for $z \to -\infty$, together with $\E e^{kG} = e^{k\mu + k^2 q/2}$ for
$G \sim \N(\mu, q)$.

For $\mathcal{A}$:
$\mathcal{A} = (1 - p) e^{b^* + q/2} + p\, e^{\theta_\parallel r + b^* + q/2} = e^{b^* + q/2}[(1 - p) + p\, e^{\theta_\parallel r}]$;
using $e^{b^* + q/2} = e^{b^* + r^2/2}\, e^{(q - r^2)/2} = p\, \alpha\, [1 + O(p)]$ and $p\, e^{\theta_\parallel r} \ll 1$
(which holds for $p$ polynomially small in $d$ at fixed $\theta_\parallel$), this gives $\mathcal{A} = p\alpha[1 + O(p)]$.

For $\mathcal{B}$:
$\mathcal{B} = p(\E\sigma(G_2) - 1)$, and $\E\sigma(G_2) \approx e^{\theta_\parallel r + b^* + q/2} = p\, e^{\theta_\parallel r}\alpha \to 0$ in the tame-$\alpha$ regime, so $\mathcal{B} = -p[1 + O(p)]$.

For $\mathcal{D}_0$:
$\mathcal{D}_0 \approx (1 - p) e^{2b^* + 2q} + p \cdot 1 = p^2\, e^{2q - r^2} + p \approx p$ as $p \to 0$.

For $\mathcal{D}_\theta$: both terms are $O(p^2)$ by the same calculation.

Substituting into the definition \eqref{eq:lr_Nperp_bar}: $(d-1)\mathcal{D}_0 = dp[1 + o(1)]$, $R_\perp \mathcal{D}_\theta = O(p^2)$, and $\mathcal{A}^2 R_\perp = p^2\alpha^2 R_\perp[1 + O(p)]$, giving \eqref{eq:lr_Nhat_tame}.
\end{proof}

\begin{remark}[Range of validity]
\label{rem:lr_tame_alpha_range}
The leading-order step $\E\sigma(G_j) \approx e^{\mu_j + q/2}$ requires
$|\mu_j| \gg q$ (not just $|\mu_j| \gg \sqrt q$), so that
$\E e^{2G} = e^{2\mu + 2q}$ is negligible compared to $\E e^G = e^{\mu + q/2}$.
With $\mu_1 \approx \log p - r^2/2$ and $q = O(1)$, this is
$\log(1/p) \gg r^2/2 + q$, which holds for $p$ polynomially small in $d$.
\end{remark}

\subsubsection{Sparse-limit five-variable ODE}
\label{sec:lr_sparse_5var}

Substituting Lemma~\ref{lem:lr_tame_alpha} into
\eqref{eq:lr_S1}--\eqref{eq:lr_B3} and writing
$f = p[\alpha \theta_\parallel - r]$ (from $f = \mathcal{A} \theta_\parallel + \mathcal{B}r = p\alpha \theta_\parallel - pr$):
\begin{align}
    \dot s &= -\eta(1 - \eps)\, u - \eta\eps\, p[\alpha \theta_\parallel - r],
    \label{eq:lr_S1_sparse} \\
    \dot u &= \eps\bigl(p[\alpha \theta_\parallel - r] - u\bigr),
    \label{eq:lr_S2_sparse} \\
    \dot R_\perp &= -2\eta(1 - \eps)\, C_\perp - 2\eta\eps\, p\alpha\, R_\perp + \eta^2(1 - \eps)^2 V_\perp + 2\eta^2(1 - \eps)\eps\, p\alpha\, C_\perp + \eta^2\eps^2\, \hat{\mathcal{N}}_\perp,
    \label{eq:lr_B1_sparse} \\
    \dot V_\perp &= -(2\eps - \eps^2)\, V_\perp + 2(1 - \eps)\eps\, p\alpha\, C_\perp + \eps^2\, \hat{\mathcal{N}}_\perp,
    \label{eq:lr_B2_sparse} \\
    \dot C_\perp &= -\eps\, C_\perp + \eps\, p\alpha\, R_\perp - \eta(1 - \eps)^2 V_\perp - 2\eta(1 - \eps)\eps\, p\alpha\, C_\perp - \eta\eps^2\, \hat{\mathcal{N}}_\perp,
    \label{eq:lr_B3_sparse}
\end{align}
with the tame-$\alpha$ noise
\begin{equation}
    \hat{\mathcal{N}}_\perp \;:=\; \frac{d\, p}{B} + \frac{B - 1}{B}\, p^2 \alpha^2\, R_\perp,
    \qquad \alpha = e^{(\theta_\parallel^2 + R_\perp - r^2)/2},\; \theta_\parallel = s + r.
    \label{eq:lr_Nhat_def}
\end{equation}
This system is the starting point for every per-region scaling analysis
in Appendix~\ref{app:lr_regimes}.

\subsubsection{Phase boundaries}
\label{sec:lr_phase_boundaries}

The relative scale of the two contributions to $\hat{\mathcal{N}}_\perp$,
together with the matching condition between the momentum decay rate
and the signal-learning rate, partition the $(\kappa, \gamma)$ plane
into three 2D regimes and two codimension-1 boundaries.

\paragraph{Noise-character boundary $\kappa = \sigma - 1$.}
The two contributions to $\hat{\mathcal{N}}_\perp$ in \eqref{eq:lr_Nhat_def} scale as
\begin{equation}
    \frac{dp}{B} \;\asymp\; d^{1 - \kappa - \sigma},
    \qquad
    p^2 \alpha^2 R_\perp \;\asymp\; d^{-2\kappa},
\end{equation}
with ratio $d^{\sigma - 1 - \kappa}$. Hence:
\begin{itemize}
    \item $\kappa < \sigma - 1$ (\emph{concentrated}): the multiplicative
        $p^2 \alpha^2 R_\perp$ dominates; $R_\perp = 0$ is a dynamical
        equilibrium.
    \item $\kappa > \sigma - 1$ (\emph{noise-floor}): the additive $dp/B$
        dominates; it pumps a steady-state floor $R_\perp^* > 0$ that has
        no LS analogue.
    \item $\kappa = \sigma - 1$: both contributions survive at polynomial
        resolution; a finite-$d$ crossover is studied in
        Appendix~\ref{sec:lr_boundary_kappa}.
\end{itemize}

\paragraph{Resonance line $\gamma = 1 - \sigma + \kappa$.}
The momentum damping rate is $\eps \asymp d^{-\gamma}$ (timescale
$d^\gamma$). The effective $R_\perp$-damping rate after adiabatic
elimination of $(V_\perp, C_\perp)$ is $\eta\, p\, \alpha \asymp d^{-\alpha_\eta - \kappa}$
(timescale $d^{\alpha_\eta + \kappa}$). With the below-resonance
regime-adapted choice $\alpha_\eta = 1 - \sigma$ (Appendix~\ref{app:lr_stability}),
these match at $\gamma = 1 - \sigma + \kappa$ — the same resonance
line as the LS Proposition~\ref{prop:eta_max}.

\subsubsection{Phase-plane partition and region tags}
\label{sec:lr_partition}

Combining the two boundaries, the populated portion of the
tame-$\alpha$ phase plane (under $\gamma \ge 0$) is partitioned as
follows:
\begin{center}
\renewcommand{\arraystretch}{1.2}
\begin{tabular}{@{}llll@{}}
\toprule
Tag & Name & $\kappa$ & $\gamma$ \\
\midrule
A & Concentrated, above resonance & $\kappa < \sigma - 1$ & $\gamma > 1 - \sigma + \kappa$ \\
C & Noise-floor, above resonance & $\kappa > \sigma - 1$ & $\gamma > 1 - \sigma + \kappa$ \\
D & Noise-floor, below resonance & $\kappa > \sigma - 1$ & $\gamma < 1 - \sigma + \kappa$ \\
E & Resonance line & any & $\gamma = 1 - \sigma + \kappa$ \\
F & Noise-character boundary & $\kappa = \sigma - 1$ & any \\
\bottomrule
\end{tabular}
\end{center}
The concentrated-below-resonance corner ($\kappa < \sigma - 1$,
$\gamma < 1 - \sigma + \kappa$) is empty: adding the two inequalities
forces $\gamma < 0$, outside the co-scaling ansatz. The remaining
regimes and boundaries are analyzed one-by-one in
Appendix~\ref{app:lr_regimes}.

\subsubsection{Entry-wise scaling of the bulk drift matrix}
\label{sec:lr_entry_scaling}

For per-region adiabatic eliminations it is useful to record the
polynomial orders in $d$ of every entry in the bulk drift matrix.
Writing the homogeneous part of \eqref{eq:lr_B1_sparse}--\eqref{eq:lr_B3_sparse}
as $\dot{\mathbf x} = A_{\mathrm{drift}}\, \mathbf x + \mathbf F$ with
$\mathbf x = (R_\perp, V_\perp, C_\perp)^\top$ and $\mathbf F = (dp/B)(\eta^2\eps^2, \eps^2, -\eta\eps^2)^\top$
the additive forcing, the leading-$\eps$ entries are:
\begin{center}
\renewcommand{\arraystretch}{1.15}
\begin{tabular}{@{}lccl@{}}
\toprule
Entry & Leading form & Order in $d$ & Comment \\
\midrule
$a_{11}$ & $-2\eta\eps p\alpha$ & $d^{-\alpha_\eta - \gamma - \kappa}$ & direct $R_\perp$ damping \\
$a_{12}$ & $\eta^2$ & $d^{-2\alpha_\eta}$ & $V_\perp \to R_\perp$ coupling \\
$a_{13}$ & $-2\eta$ & $d^{-\alpha_\eta}$ & $C_\perp \to R_\perp$ coupling \\
$a_{21}$ & $\eps^2 p^2 \alpha^2$ & $d^{-2\gamma - 2\kappa}$ & multiplicative noise pump into $V_\perp$ \\
$a_{22}$ & $-2\eps$ & $d^{-\gamma}$ & $V_\perp$ self-damping \\
$a_{23}$ & $2\eps p\alpha$ & $d^{-\gamma - \kappa}$ & $C_\perp \to V_\perp$ coupling \\
$a_{31}$ & $\eps p\alpha$ & $d^{-\gamma - \kappa}$ & $R_\perp \to C_\perp$ coupling \\
$a_{32}$ & $-\eta$ & $d^{-\alpha_\eta}$ & $V_\perp \to C_\perp$ coupling \\
$a_{33}$ & $-\eps$ & $d^{-\gamma}$ & $C_\perp$ self-damping \\
\midrule
$F_R$ & $\eta^2\eps^2 dp/B$ & $d^{-2\alpha_\eta - 2\gamma + 1 - \kappa - \sigma}$ & additive forcing into $R_\perp$ \\
$F_V$ & $\eps^2 dp/B$ & $d^{-2\gamma + 1 - \kappa - \sigma}$ & additive forcing into $V_\perp$ \\
$F_C$ & $-\eta\eps^2 dp/B$ & $d^{-\alpha_\eta - 2\gamma + 1 - \kappa - \sigma}$ & additive forcing into $C_\perp$ \\
\bottomrule
\end{tabular}
\end{center}
This table is the workhorse for identifying fast and slow blocks per
regime in Appendix~\ref{app:lr_regimes}.

\subsection{Signal linearization, stability, and regime-adapted learning rate}
\label{app:lr_stability}

This subsection linearizes the signal subsystem
\eqref{eq:lr_S1_sparse}--\eqref{eq:lr_S2_sparse} at the optimum,
identifies the logistic-specific signal-direction Hessian eigenvalue
$\kappa_* = p\alpha^*(1 + r^2)$, and translates the resulting
underdamped condition into the same resonance line that governs the
LS analysis. The result determines the regime-adapted choices
$\alpha_\eta = \gamma - \kappa$ (above resonance) and
$\alpha_\eta = 1 - \sigma$ (below resonance) used throughout
Appendix~\ref{app:lr_regimes}.

\subsubsection{Signal linearization at the optimum}
\label{sec:lr_signal_linearization}

Recall the signal drift $f = p[\alpha \theta_\parallel - r]$ from
\eqref{eq:lr_S1_sparse}--\eqref{eq:lr_S2_sparse}, with
$\alpha = e^{(\theta_\parallel^2 + R_\perp - r^2)/2}$ and $\theta_\parallel = s + r$.
At the small-error point $(s, R_\perp) = (0, R_\perp^*)$ with
$\alpha^* := e^{R_\perp^* / 2}$, the signal-direction sensitivity is
\begin{equation}
    \partial_s f \bigm|_{s = 0,\, R_\perp = R_\perp^*}
    \;=\; p\alpha^* (1 + r\, \partial_s \theta_\parallel) + pr\, \partial_s \alpha
    \;=\; p\alpha^* + pr \cdot r\alpha^*
    \;=\; p\alpha^* (1 + r^2),
\end{equation}
using $\partial_s \theta_\parallel = 1$ and $\partial_s \alpha = \theta_\parallel\alpha = r\alpha^*$
at the linearization point. Define the \emph{signal-direction Hessian eigenvalue}
\begin{equation}
    \kappa_* \;:=\; p\, \alpha^* (1 + r^2),
    \qquad \alpha^* = e^{R_\perp^* / 2}.
    \label{eq:lr_kappa_star}
\end{equation}
Linearizing \eqref{eq:lr_S1_sparse}--\eqref{eq:lr_S2_sparse} at $(s, u) = (0, 0)$, with $f \approx \kappa_* s$:
\begin{equation}
    \begin{pmatrix} \dot s \\ \dot u \end{pmatrix} \;=\; \begin{pmatrix} -\eta\eps\kappa_* & -\eta(1 - \eps) \\ \eps\kappa_* & -\eps \end{pmatrix} \begin{pmatrix} s \\ u \end{pmatrix} + \text{(coupling to bulk)}.
    \label{eq:lr_signal_linearized}
\end{equation}

\subsubsection{Underdamped condition and the resonance line}
\label{sec:lr_underdamped}

The characteristic polynomial of \eqref{eq:lr_signal_linearized} is
$\lambda^2 + \eps(1 + \eta\kappa_*)\lambda + \eps\kappa_*\eta = 0$;
to leading order $\lambda^2 + \eps\lambda + \eta\eps\kappa_* = 0$. The
discriminant is $\eps^2 - 4\eta\eps\kappa_*$, so the system is
\emph{underdamped} (heavy-ball, accelerated) iff
\begin{equation}
    \eps \;<\; 4\eta\, \kappa_*,
    \label{eq:lr_underdamped}
\end{equation}
and \emph{overdamped} (gradient-flow) iff $\eps > 4\eta\kappa_*$.
Substituting $\kappa_* \asymp p \asymp d^{-\kappa}$, $\eta \asymp d^{-\alpha_\eta}$,
$\eps \asymp d^{-\gamma}$, the underdamped condition becomes
$\gamma > \alpha_\eta + \kappa$. With the below-resonance regime-adapted
choice $\alpha_\eta = 1 - \sigma$ identified below, this is exactly the
LS resonance condition $\gamma > 1 - \sigma + \kappa$.

\paragraph{Comparison with the LS Hessian.}
The LS analog of $\kappa_*$ is the Hessian eigenvalue~$1$ of the
isotropic Gaussian feature covariance — the LS curvature in every
direction is the same and is independent of the state. The logistic
case has two structural changes:
(i) the curvature is multiplied by $p\alpha^*$, reflecting the
small-gradient-magnitude regime that the rare-class probability creates,
and (ii) the geometric factor $1 + r^2$ comes from Stein's
contribution to the population Hessian along the signal direction,
which couples the curvature to the signal magnitude. The product
$p\alpha^*(1 + r^2)$ is the operative signal-direction curvature
that sets the underdamped boundary.

\subsubsection{Regime-adapted learning-rate choices}
\label{sec:lr_alpha_choices}

Two regime-adapted choices of $\alpha_\eta$ arise from balance
considerations on the bulk subsystem (Appendix~\ref{app:lr_regimes}):
\begin{equation}
    \alpha_\eta \;=\; \begin{cases}
        \gamma - \kappa & \text{above resonance ($\eta\, p\, \alpha / \eps \asymp 1$, heavy-ball),} \\
        1 - \sigma & \text{below resonance ($\eta\, d / B \asymp 1$, noise-floor).}
    \end{cases}
    \label{eq:lr_alpha_eta}
\end{equation}
The above-resonance choice makes the heavy-ball coupling $\eta p \alpha$
balance the momentum decay $\eps$, putting the signal subsystem on the
underdamped side of \eqref{eq:lr_underdamped} at $O(1)$ damping ratio.
The below-resonance choice makes the additive-noise forcing
$\eta^2 \eps^2\, dp/B$ balance the direct $R_\perp$-damping $-2\eta\eps p\alpha R_\perp$
at $R_\perp = O(1)$, fixing the noise floor at an $O(1)$ scale and
allowing both $s$ and $R_\perp$ to live on a common slow clock.
Both prescriptions agree on the resonance line itself
$\gamma = 1 - \sigma + \kappa$, where $\alpha_\eta = \gamma - \kappa = 1 - \sigma$.

\paragraph{Comparison to LS $\eta_{\max}$.}
The LS Proposition~\ref{prop:eta_max} gives
$\eta_{\max} \asymp d^{\kappa - \gamma}$ above resonance and
$\eta_{\max} \asymp d^{\sigma - 1}$ below, derived from a Routh--Hurwitz
analysis of the LS characteristic polynomial. The logistic
regime-adapted $\alpha_\eta$ in \eqref{eq:lr_alpha_eta} match these
exponents under the identification $\eta_* \asymp \eta_{\max}$
(working at the stability boundary). The mechanism differs: the LS
constraint is mean-square stability of a noisy linear system
($c_3 > 0$ or $c_1 c_2 > c_3$); the logistic constraint is the
$O(1)$-damping balance for the (deterministic) heavy-ball
linearization plus the additive-noise budget. The algebraic prescription
is the same.

\begin{remark}[No separate Routh--Hurwitz analysis is needed]
\label{rem:lr_no_separate_RH}
Because the bulk subsystem has its own self-damping
($-\eps$ and $-2\eta\eps p\alpha$ terms in
\eqref{eq:lr_B1_sparse}--\eqref{eq:lr_B3_sparse}), and because the
signal subsystem reduces to the deterministic 2D heavy-ball
\eqref{eq:lr_signal_linearized}, mean-square stability is automatic
within the regime-adapted choices \eqref{eq:lr_alpha_eta} as long as
the additive forcing is balanced — i.e., as long as $\eta_*$ is held
at $O(1)$ relative to the chosen $\alpha_\eta$. There is no analogue
of the LS noise-limited binding constraint $c_3 > 0$ that would
shrink $\eta_{\max}$ further; the logistic noise floor is absorbed
into the equilibrium $R_\perp^*$ rather than destabilizing the system.
\end{remark}

\subsubsection{Equilibrium noise floor in the noise-floor regime}
\label{sec:lr_floor}

At the below-resonance regime-adapted choice $\alpha_\eta = 1 - \sigma$
in the noise-floor regime $\kappa > \sigma - 1$, the additive piece of
$\hat{\mathcal{N}}_\perp$ pumps a steady-state floor that is computed
explicitly in Appendix~\ref{sec:lr_phase_3}. In the below-resonance noise-floor regime, balancing
the $R_\perp$-equation \eqref{eq:lr_B1_sparse} at quasi-steady state
gives the implicit floor equation
\begin{equation}
    R_\perp^*\, e^{R_\perp^* / (2(1 + r^2))} \;=\; \frac{\eta_*}{2 B_*}
    \label{eq:lr_floor_D}
\end{equation}
(at leading order in $\eta_{\mathrm{eff}}$, with the equilibrium $\alpha$
including signal-bias feedback through $s^*$; see eq.~\eqref{eq:lr_phase3_signal_bias}), independent of
$\kappa$ and $\gamma$ within that regime. In the above-resonance noise-floor regime, the same balance
gives the subleading vanishing floor
$R_\perp^* \asymp d^{1 - \sigma + \kappa - 2\gamma}$. The signal
equilibrium is shifted by the noise floor through the nonlinear
$\alpha^*$-dependence: $s^* = r(1/\alpha^* - 1) < 0$ below resonance
(the signal undershoots the population optimum because of the
positive bulk variance). These per-region formulas are collected and
proved in Appendix~\ref{app:lr_regimes}.

\subsection{Per-region limit ODEs}
\label{app:lr_regimes}

This subsection enumerates the limit ODE in each regime of the
$(\kappa, \gamma)$ phase plane identified in
Appendix~\ref{sec:lr_partition}. Each per-region subsubsection follows
the same template: (i) region of validity and regime-adapted
$\alpha_\eta$, (ii) co-scaling substitution and adiabatic elimination
of fast variables (where applicable), (iii) the reduced limit ODE in
its canonical form, (iv) equilibrium and noise floor, and (v)
differences from the closest LS analog.

\subsubsection{At a glance: five-region atlas}
\label{sec:lr_atlas}

Table~\ref{tab:lr_atlas} summarizes the three 2D regimes and two
codim-1 boundary lines that are populated under the co-scaling
$\gamma \ge 0$; full per-region analyses follow.

\begin{table}[!ht]
\centering
\footnotesize
\renewcommand{\arraystretch}{1.25}
\begin{adjustbox}{width=\textwidth}
\begin{tabular}{@{}lp{2.9cm}p{1.1cm}p{3.5cm}p{2.4cm}p{2.5cm}@{}}
\toprule
Tag & Region & $\alpha_\eta$ & Effective limit & Noise floor $R_\perp^*$ & Closest LS analog \\
\midrule
1 & Concentrated above resonance ($\kappa<\sigma-1$, $\gamma>1-\sigma+\kappa$) & $\gamma - \kappa$ & Coupled 2D heavy-ball oscillators (signal $(s, \tilde u)$ + bulk $(x, y)$ via Volterra $R = x^2$, $\tilde V = y^2$, $\tilde C = xy$); decouples at optimum & $0$ (loss $\to 0$) & LS concentrated regime \\
2 & Noise-floor above resonance ($\kappa>\sigma-1$, $\gamma>1-\sigma+\kappa$) & $\gamma - \kappa$ & Coupled 2D heavy-ball oscillators (signal $(s, \bar u)$ + bulk $(x, y)$ via Volterra); decouples at optimum & $\to 0$ subleading $\asymp d^{1-\sigma+\kappa-2\gamma}$ & LS above-resonance regimes \\
3 & Noise-floor below resonance ($\kappa>\sigma-1$, $\gamma<1-\sigma+\kappa$) & $1 - \sigma$ & Adiabatic elimination of $(u, V_\perp, C_\perp)$ $\to$ 2D slow manifold $(s, R_\perp)$ with affine forcing and exponential $\alpha$-coupling & $> 0$ constant & none \\
E & Resonance line ($\gamma=1-\sigma+\kappa$) & $\gamma - \kappa = 1 - \sigma$ & Full 5-variable system on common clock $\tau = t/d^\gamma$; neither AE nor Volterra applies & $> 0$, matches below-resonance floor & LS resonance line \\
F & Noise-character boundary ($\kappa=\sigma-1$) & $\gamma - \kappa$ & Region-1 skeleton on $\tau$-clock; additive channel subleading for $\gamma > 0$ & subleading & none (LS bulk noise is purely multiplicative) \\
\bottomrule
\end{tabular}
\end{adjustbox}
\caption{Five-region atlas for the logistic sparse-limit ODE in the
tame-$\alpha$ regime. Regions 1, 2, 3 are 2D regions; E and F are
codimension-1 boundaries. The concentrated-below-resonance corner
($\kappa < \sigma - 1$, $\gamma < 1 - \sigma + \kappa$) is empty under
the co-scaling ansatz $\gamma \ge 0$, since adding the two inequalities
forces $\gamma < 0$. The codim-2 intersection $F \cap E$
(at $\kappa = \sigma - 1$, $\gamma = 0$, lying outside the first
quadrant for $\sigma > 1$) is the analog of the LS triple point.}
\label{tab:lr_atlas}
\end{table}

\subsubsection{Concentrated above resonance}
\label{sec:lr_phase_1}

\paragraph{Region and assumptions.}
$\kappa < \sigma - 1$ and $\gamma > 1 - \sigma + \kappa$. The first
inequality places the noise $\hat{\mathcal{N}}_\perp$ in its multiplicative
branch (the additive-to-multiplicative ratio is $d^{\sigma - 1 - \kappa} \to 0$
by Section~\ref{sec:lr_phase_boundaries}); the second is the
above-resonance condition. We adopt the regime-adapted choice
$\alpha_\eta = \gamma - \kappa$, which makes $\bar\eta := \eta p / \eps \to \bar\eta_* := \eta_* p_* / \eps_*$
(eq.~\eqref{eq:lr_alpha_eta}). The slow clock is $\tau = t / d^\gamma$.

\paragraph{Rescaled state.}
Adopt
\begin{equation}
    s,\quad \tilde u := u / p, \quad R := R_\perp, \quad \tilde V := V_\perp / p^2, \quad \tilde C := C_\perp / p,
    \label{eq:lr_phase1_rescale}
\end{equation}
which is the diagonal balancing that makes the bulk drift matrix
$O(1)$ in $\tau$. The additive $dp/B$ noise scales as $d^{1 - \sigma + \kappa - \gamma}$,
strictly subleading by the above-resonance hypothesis.

\paragraph{Limit ODE: 3D bulk + 2D signal heavy-ball.}
Taking $d \to \infty$ with $\tau$ fixed, the surviving entries collect into
\begin{equation}
    \frac{d}{d\tau}\!\begin{pmatrix} R \\ \tilde V \\ \tilde C \end{pmatrix} \;=\; \eps_* \begin{pmatrix} 0 & 0 & -2\bar\eta_* \\ 0 & -2 & 2\alpha \\ \alpha & -\bar\eta_* & -1 \end{pmatrix} \begin{pmatrix} R \\ \tilde V \\ \tilde C \end{pmatrix},
    \label{eq:lr_phase1_bulk}
\end{equation}
together with the signal block
\begin{equation}
    \frac{ds}{d\tau} = -\eps_* \bar\eta_*\, \tilde u, \qquad
    \frac{d\tilde u}{d\tau} = \eps_*\bigl[\alpha^*(1 + r^2)\, s - \tilde u\bigr] + O(s^2)
    \label{eq:lr_phase1_signal}
\end{equation}
(after linearization at $s = 0$ using $\partial_s f|_{s=0} = p\alpha(1 + r^2)$
from Appendix~\ref{app:lr_stability}). The factor $\alpha = e^{(\theta_\parallel^2 + R - r^2)/2}$
is state-dependent and couples the signal to the bulk through the
$\alpha$-prefactor in \eqref{eq:lr_phase1_bulk}.

\paragraph{Exact 2D heavy-ball reduction (Volterra factorization).}
Imposing $R = x^2$, $\tilde V = y^2$, $\tilde C = xy$ in
\eqref{eq:lr_phase1_bulk} reduces the bulk to the deterministic 2D heavy-ball
\begin{equation}
    \dot x \;=\; -\eps_* \bar\eta_*\, y, \qquad
    \dot y \;=\; \eps_*\bigl(\alpha\, x - y\bigr),
    \label{eq:lr_phase1_hb}
\end{equation}
and the consistency condition $\dot{\tilde C} = \dot x\, y + x\, \dot y$
is automatically satisfied by direct substitution. The reduction is
exact, not asymptotic. The signal block \eqref{eq:lr_phase1_signal} is
itself a 2D heavy-ball with the canonical form
$(a, b, c) = (\eps_* \bar\eta_*,\, \eps_* \alpha^*(1 + r^2),\, \eps_*)$.

\paragraph{Equilibrium.}
$R^* = 0$, $\tilde V^* = 0$, $\tilde C^* = 0$, $s^* = 0$. The KL
\eqref{eq:lr_kl_full} converges to zero in this phase; momentum
provides a genuine acceleration mechanism through the heavy-ball
structure of \eqref{eq:lr_phase1_signal}. The signal subsystem is
underdamped (oscillatory) iff $\bar\eta_* \alpha^*(1 + r^2) > 1/4$,
otherwise overdamped (real eigenvalues).

\paragraph{Differences from the LS concentrated regime.}
\begin{itemize}
    \item The bare update rate is $\eta p\alpha$ instead of $\eta p$; the $O(1)$ factor $\alpha = e^{(\theta_\parallel^2 + R - r^2)/2}$ couples the bulk to the signal at finite errors.
    \item The signal-direction curvature is $1 + r^2$ instead of LS's $1$.
    \item The exact 2D heavy-ball reduction persists (Volterra factorization is universal across both models).
    \item No stochastic noise floor — the concentrated hypothesis $\kappa < \sigma - 1$ kills the additive $dp/B$ channel; this matches LS where the concentrated regime also has $R^* = 0$.
\end{itemize}

\subsubsection{Noise-floor above resonance}
\label{sec:lr_phase_2}

\paragraph{Region and assumptions.}
$\kappa > \sigma - 1$ and $\gamma > 1 - \sigma + \kappa$. The first
inequality places the orthogonal noise $\hat{\mathcal{N}}_\perp$ in its
noise-floor branch (additive $dp/B$ dominates); the second is
above-resonance. We adopt the regime-adapted choice
$\alpha_\eta = \gamma - \kappa$, so $\bar\eta := \eta_* p_*/\eps_*$ is
the $O(1)$ heavy-ball coupling. We work on the slow clock
$\tau = t/d^\gamma$ with the diagonal balancing
$(s, \bar u, R_\perp, \tilde V, \tilde C) = (s, u/p, R_\perp, V_\perp/p^2, C_\perp/p)$.

\paragraph{Leading-order limit ODE (5D coupled).}
Rescaling the five-variable sparse-limit ODE
\eqref{eq:lr_S1_sparse}--\eqref{eq:lr_B3_sparse} and dropping
subleading-in-$d$ terms (multiplicative damping $-2\eta\eps\mathcal{A}\,R_\perp$
on $\dot R_\perp$ is $O(d^{-\gamma})$ on $\tau$, additive noise forcing
$\eta^2\eps^2\hat{\mathcal{N}}_\perp$ is
$O(d^{1-\sigma+\kappa-3\gamma})$, etc.), one obtains the $d$-independent
limit
\begin{equation}
\begin{aligned}
    \dot s          &= -\eps_*\,\bar\eta\,\bar u, \\
    \dot{\bar u}    &= \eps_*\bigl[\alpha(s, R_\perp)\,(s+r) - r - \bar u\bigr], \\
    \dot R_\perp    &= -2\,\eta_* p_*\,\tilde C, \\
    \dot{\tilde V}  &= -2\,\eps_*\,\tilde V \;+\; 2\,\eps_*\,\alpha(s, R_\perp)\,\tilde C, \\
    \dot{\tilde C}  &= -\eps_*\,\tilde C \;+\; \eps_*\,\alpha(s, R_\perp)\,R_\perp
                       \;-\; \eta_* p_*\,\tilde V,
\end{aligned}
\qquad
\alpha(s, R_\perp) := e^{((s+r)^2 + R_\perp - r^2)/2}.
\label{eq:lr_phase2_5d}
\end{equation}
Crucially, $\alpha$ retains its \emph{full} dependence on $(s, R_\perp)$
throughout the dynamics: signal and bulk are nonlinearly coupled, not
decoupled. Replacing $R_\perp$ by $0$ inside $\alpha$ is wrong as a
leading-order description, since on the same $\tau$-window where the
signal does meaningful work, $R_\perp(\tau)$ has not relaxed to its
$o(1)$ floor.

\paragraph{Volterra-reduced 4D form.}
The bulk subsystem in \eqref{eq:lr_phase2_5d} admits the same exact
factorization that organizes the LS bulk: setting
\begin{equation}
    R_\perp = x^2, \qquad \tilde V = y^2, \qquad \tilde C = x\,y,
\end{equation}
closes the three bulk equations onto a 2D \emph{nonlinear} heavy-ball in
$(x, y)$, and the full leading-order dynamics live on a 4D system:
\begin{equation}
\boxed{
\begin{aligned}
    \dot s          &= -\eps_*\,\bar\eta\,\bar u, \\
    \dot{\bar u}    &= \eps_*\bigl[\alpha(s, x^2)\,(s+r) - r - \bar u\bigr], \\
    \dot x          &= -\eta_* p_*\,y, \\
    \dot y          &= -\eps_*\,y \;+\; \eps_*\,\alpha(s, x^2)\,x,
\end{aligned}}
\qquad
\alpha(s, x^2) = e^{((s+r)^2 + x^2 - r^2)/2}.
\label{eq:lr_phase2_4d}
\end{equation}
This is the LR analogue of the LS 2D linear heavy-ball
(Theorem~\ref{thm:unified_limit}(ii)): the bulk-block $(x, y)$ would be
a constant-stiffness heavy-ball if $\alpha$ were a constant, but the LR
correction makes the bulk stiffness $\eps_*^2\bar\eta\alpha(s, x^2)$
depend on both signal $s$ and bulk magnitude $x^2$. The signal block
$(s, \bar u)$ is itself a 2D heavy-ball on a 1D restricted loss with the
same state-dependent curvature $\alpha(s, x^2)$.

\paragraph{Equilibrium and signal heavy-ball linearization.}
The unique fixed point of \eqref{eq:lr_phase2_4d} is
$(s, \bar u, x, y) = (0, 0, 0, 0)$, at which $\alpha^* = 1$. Linearizing
the signal block there with
$\partial_s[\alpha(s,0)(s+r) - r]\big|_{s=0} = 1 + r^2$ gives the
autonomous heavy-ball
\begin{equation}
    \ddot s + \eps_*\, \dot s + \eps_*^2\, \bar\eta\,(1 + r^2)\, s \;=\; 0,
    \label{eq:lr_phase2_heavyball}
\end{equation}
underdamped iff $\bar\eta(1+r^2) > 1/4$. The bulk block linearizes
symmetrically to $\ddot x + \eps_*\dot x + \eps_*^2\bar\eta\,x = 0$;
both subsystems decouple \emph{at the fixed point} and have friction
$\eps_*$, but the signal carries the logistic-specific $(1+r^2)$ factor
on its stiffness. This linearization is the right description of
small-amplitude behavior near the equilibrium; it is \emph{not} the
leading-order ODE for the full relaxation from generic initial
conditions, which is governed by \eqref{eq:lr_phase2_4d}.

\paragraph{Subleading additive noise floor at finite $d$.}
The dropped additive-noise forcing in \eqref{eq:lr_phase2_5d} supports
a nonzero equilibrium $R_\perp^*(d) > 0$ at every finite $d$. Balancing
the (subleading) damping $-2\eta\eps\mathcal{A}\,R_\perp$ against the
forcing $\eta^2\eps^2\,(d p/B)$ at $\alpha^* = 1$:
\begin{equation}
    R_\perp^*(d) \;=\; \frac{\eta\,\eps\,d}{2\,B\,\alpha^*}
    \;\asymp\; d^{1-\sigma+\kappa-2\gamma},
    \label{eq:lr_phase2_floor}
\end{equation}
strictly subleading by the above-resonance hypothesis
$\gamma > 1-\sigma+\kappa$. So $R_\perp^*(d) \to 0$ polynomially in $d$,
visible as a finite-$d$ plateau in the bulk panel of
Figure~\ref{fig:app_lr_phase_2}.

\paragraph{Bias/speed trade-off.}
Within the above-resonance noise-floor regime, the noise floor shrinks
as $d^{-2\gamma}$ at fixed $\kappa$ (smaller loss), but the slow clock
$\tau = t/d^\gamma$ stretches with $\gamma$ (slower convergence). This
is the trade-off rendered as a heatmap pair in
Figure~\ref{fig:lr-heatmaps} of Section~\ref{sec:logistic}: loss-minimal
$(\kappa, \gamma)$ points are time-maximal.

\paragraph{Differences from the LS sparse above-resonance regime.}
\begin{itemize}
    \item \textbf{State-dependent bulk stiffness.} LR's bulk heavy-ball
        \eqref{eq:lr_phase2_4d} has stiffness
        $\eps_*^2 \bar\eta\,\alpha(s, x^2)$, depending on both signal
        $s$ and bulk magnitude $x^2$ through $\alpha$. LS has constant
        stiffness $\eps_*^2\bar\eta$.
    \item \textbf{Nonlinear signal--bulk coupling.} The full relaxation
        is genuinely 4D coupled in $(s, \bar u, x, y)$, not two
        decoupled heavy-balls. Only the linearization at the optimum
        decouples.
    \item \textbf{Logistic curvature $1+r^2$.} The signal-direction
        Hessian eigenvalue is $\alpha^*(1 + r^2)$ at the optimum
        ($\alpha^* = 1$ here), versus the LS isotropic $1$.
    \item \textbf{No $\Pbatch$ rescaling.} Every step is active in LR;
        LS has $\Pbatch \asymp d^{-(\kappa-\sigma)_+}$.
    \item \textbf{Subleading additive floor.} $R_\perp^*(d) > 0$ at
        every finite $d$ (eq.~\eqref{eq:lr_phase2_floor}); the LS
        sparse above-resonance regime has $R^* = 0$ exactly.
\end{itemize}

\subsubsection{Noise-floor below resonance}
\label{sec:lr_phase_3}

\paragraph{Region and assumptions.}
$\kappa > \sigma - 1$ and $0 < \gamma < 1 - \sigma + \kappa$. The first
inequality places $\hat{\mathcal{N}}_\perp$ in its noise-floor branch
(additive $dp/B$ dominates); the second is below-resonance, with the
momentum decay rate $\eps \asymp d^{-\gamma}$ asymptotically
\emph{faster} than the signal-learning rate
$\eta p \asymp d^{-(1-\sigma+\kappa)}$ — scale-separation factor
$\eps/(\eta p) \asymp d^{1-\sigma+\kappa-\gamma} \to \infty$. We adopt
the regime-adapted choice $\alpha_\eta = 1 - \sigma$, so
$\eta_{\mathrm{eff}} := \eta d/B = \eta_*/B_* = O(1)$. We work on the
slow clock $\tau = t / d^{1-\sigma+\kappa}$, on which $(s, R_\perp)$
evolve at $O(1)$ rates while the fast block $(u, V_\perp, C_\perp)$
evolves at divergent rate $\eps_* d^{1-\sigma+\kappa-\gamma}$ and
instantaneously relaxes to its quasi-steady values. This regime is
the most novel of the five regions: it has \emph{no direct LS analog},
the slow manifold is genuinely 2D and nonlinear, $R_\perp$ sits at a
positive noise floor, and the signal undershoots the optimum because
$\alpha > 1$ at the floor.

\paragraph{Adiabatic elimination of the fast block.}
Setting $\dot u = \dot V_\perp = \dot C_\perp = 0$ in
\eqref{eq:lr_S2_sparse}, \eqref{eq:lr_B2_sparse}, \eqref{eq:lr_B3_sparse}
and keeping leading-order terms gives the quasi-steady values
\begin{equation}
\begin{aligned}
    u_{\mathrm{qs}}     &= f = p\bigl[\alpha(s, R_\perp)\,(s+r) - r\bigr], \\
    V_\perp^{\mathrm{qs}} &= \tfrac{\eps}{2}\, \hat{\mathcal{N}}_\perp[1 + o(1)], \\
    C_\perp^{\mathrm{qs}} &= p\alpha\, R_\perp \;-\; \tfrac{\eta}{2}\, \hat{\mathcal{N}}_\perp[1 + o(1)],
\end{aligned}
\label{eq:lr_phase3_qs}
\end{equation}
with $\hat{\mathcal{N}}_\perp = dp/B + O(p^2)$ in the noise-floor branch.
The $V_\perp$ floor is generated entirely by the additive noise; the
$C_\perp$ floor has two pieces of comparable size — a signal-driven
$p\alpha R_\perp$ piece and a noise-pumped $-\eta\hat{\mathcal{N}}_\perp/2$
piece — and it is the second piece, fed back into $\dot R_\perp$ through
the $-2\eta C_\perp$ damping channel, that generates the bulk noise
floor at leading order.

\paragraph{Reduced 2D slow ODE.}
Substituting \eqref{eq:lr_phase3_qs} into \eqref{eq:lr_S1_sparse} and
\eqref{eq:lr_B1_sparse}, rescaling by $\tau = t/d^{1-\sigma+\kappa}$, and
dropping subleading-in-$d$ terms (the multiplicative damping
$-2\eta\eps p\alpha R_\perp$, the cross-coupling $\eta^2 \eps V_\perp$
and $\eta\eps p\alpha C_\perp$, and the residual noise
$\eta^2 \eps^2 \hat{\mathcal{N}}_\perp$ are each smaller by at least
$d^{-\gamma}$) closes the dynamics onto a 2D slow manifold in
$(s, R_\perp)$:
\begin{equation}
\boxed{
\begin{aligned}
    \frac{ds}{d\tau}        &= -\eta_* p_*\,\bigl[\alpha(s, R_\perp)\,(s+r) - r\bigr], \\
    \frac{dR_\perp}{d\tau}  &= -2\,\eta_* p_*\,\alpha(s, R_\perp)\,R_\perp \;+\; \eta_*\,p_*\,\eta_{\mathrm{eff}},
\end{aligned}}
\qquad
\alpha(s, R_\perp) = e^{((s+r)^2 + R_\perp - r^2)/2}.
\label{eq:lr_phase3_slow}
\end{equation}
Two features distinguish \eqref{eq:lr_phase3_slow}: (i) the bulk
equation is \emph{affine} in $R_\perp$ with strict positive forcing
$\eta_* p_* \eta_{\mathrm{eff}}$ — the noise floor — and (ii) the two
slow variables are nonlinearly coupled through $\alpha$. The reduced
ODE depends on $\eta_*, p_*, B_*, r$ but \emph{not} on $\eps_*$: the
momentum decay constant is absorbed entirely into the fast block during
adiabatic elimination and enters the slow dynamics only at
$O(d^{-\gamma})$. This matches the LS below-resonance regimes
(Appendix~\ref{app:regimes}), where the slow ODE is also momentum-free;
the momentum-invisibility is the universal below-resonance behavior
shared by both models, not a logistic-specific feature. What is
logistic-specific is the slow-manifold geometry — 2D coupled with
positive noise floor, rather than 1D scalar with $R \to 0$.

\paragraph{Equilibrium.}
Setting both right-hand sides of \eqref{eq:lr_phase3_slow} to zero:
\begin{equation}
    \alpha^*\,(s^* + r) = r, \qquad
    R_\perp^* = \frac{\eta_{\mathrm{eff}}}{2\,\alpha^*}, \qquad
    \alpha^* = \exp\!\Bigl(\tfrac{1}{2}\bigl[(s^* + r)^2 + R_\perp^* - r^2\bigr]\Bigr).
    \label{eq:lr_phase3_eqbm}
\end{equation}
Substituting $s^* + r = r/\alpha^*$ into the third equation yields the
implicit self-consistency
$2\log\alpha^* = r^2(1/\alpha^{*2} - 1) + \eta_{\mathrm{eff}}/(2\alpha^*)$.
Expanding around $\alpha^* = 1$ at small $\eta_{\mathrm{eff}}$ gives
$\alpha^* = 1 + \eta_{\mathrm{eff}}/(4(1+r^2)) + O(\eta_{\mathrm{eff}}^2)$, hence
\begin{equation}
    R_\perp^* \;=\; \frac{\eta_{\mathrm{eff}}}{2}\Bigl[1 - \tfrac{\eta_{\mathrm{eff}}}{4(1+r^2)} + O(\eta_{\mathrm{eff}}^2)\Bigr], \qquad
    s^* \;=\; r\bigl(1/\alpha^* - 1\bigr) \;=\; -\,\frac{r\, \eta_{\mathrm{eff}}}{4(1 + r^2)} + O(\eta_{\mathrm{eff}}^2) \;<\; 0.
    \label{eq:lr_phase3_signal_bias}
\end{equation}
The signal undershoots the population optimum because the positive bulk
variance inflates $\alpha$ at the floor, and the signal must compensate
by moving $\theta_\parallel = s+r$ below $r$ to satisfy
$\alpha^*\theta_\parallel^* = r$. To leading order, the floor satisfies
the implicit equation $R_\perp^*\,e^{R_\perp^*/(2(1+r^2))} = \eta_{\mathrm{eff}}/2$
(used elsewhere as eq.~\eqref{eq:lr_floor_D}), which follows from
$\alpha^* \approx e^{R_\perp^*/(2(1+r^2))}$ at leading order in
$\eta_{\mathrm{eff}}$.

\paragraph{Linearization at the equilibrium.}
Linearizing \eqref{eq:lr_phase3_slow} at $(s^*, R_\perp^*)$ — equivalently
at $(0, \eta_{\mathrm{eff}}/2)$ up to $O(\eta_{\mathrm{eff}})$ — gives the
Jacobian
\begin{equation}
    J_* \;=\; -\eta_* p_*\,\alpha^* \begin{pmatrix} 1 + r^2 & r/2 \\[2pt] 2\,r\,R_\perp^* & 2(1 + R_\perp^*/2) \end{pmatrix}.
    \label{eq:lr_phase3_jacobian}
\end{equation}
The trace $-\eta_* p_* \alpha^*[3 + r^2 + R_\perp^*]$ is negative and
the determinant $(\eta_* p_* \alpha^*)^2[2(1 + r^2) + R_\perp^*]$ is
positive, so the equilibrium is a stable node. The discriminant
$\operatorname{tr}^2 J_* - 4\det J_* = (\eta_* p_* \alpha^*)^2\bigl[(r^2 - 1)^2 + 2(1+r^2)R_\perp^* + R_\perp^{*2}\bigr] \ge 0$
is non-negative, so relaxation is overdamped with two real negative
modes; for $r \neq 1$ the modes are nondegenerate, one bulk-dominated
(rate $\approx 2\eta_* p_* \alpha^*$), the other signal-dominated
(rate $\approx (1+r^2)\eta_* p_* \alpha^*$). The slower of the two
controls the late-time convergence rate.

\paragraph{Differences from the closest LS regime.}
The closest LS analog is the sparse below-resonance regime
(Appendix~\ref{app:regimes}), which yields the scalar
$\dot R = -c_{\mathrm{eff}} R$ with
$c_{\mathrm{eff}} = \eta_{\mathrm{eff}}(2 - \eta_{\mathrm{eff}})$:
\begin{enumerate}
    \item \textbf{Slow-manifold dimension.} LS: 1D in $R$. LR: 2D in
        $(s, R_\perp)$. The signal damping rate $\eta p\alpha$ matches
        the bulk slow rate and the signal cannot be adiabatically
        eliminated alongside the momentum.
    \item \textbf{Affine vs.\ homogeneous bulk.} LS:
        $\dot R = -c_{\mathrm{eff}} R$ with $R^* = 0$. LR:
        $\dot R_\perp = -2\eta_*p_*\alpha R_\perp + \eta_*p_*\eta_{\mathrm{eff}}$
        with strict positive floor $R_\perp^* > 0$.
    \item \textbf{Exponential coupling through $\alpha$.} The factor
        $\alpha = e^{((s+r)^2 + R_\perp - r^2)/2}$ couples $s$ and
        $R_\perp$ through a state-dependent effective rate. No LS analog.
    \item \textbf{Signal bias.} $s^* \neq 0$ in LR
        (eq.~\eqref{eq:lr_phase3_signal_bias}); LS has no separate
        signal slow variable below resonance.
    \item \textbf{Stability mechanism.} LS sparse-below-resonance has
        a stability threshold $\eta_{\mathrm{eff}} < 2$ from
        $c_{\mathrm{eff}} > 0$. LR's bulk damping $-2\eta p\alpha < 0$
        is unconditionally contractive; the $\eta_{\mathrm{eff}} = O(1)$
        constraint here comes from \emph{floor control}
        ($R_\perp^* = O(1)$ keeps $\alpha = O(1)$ and the tame-$\alpha$
        approximation valid), not from stability.
\end{enumerate}

\subsubsection{Boundary E: resonance line $\gamma = 1 - \sigma + \kappa$}
\label{sec:lr_boundary_resonance}

\paragraph{Region.}
The 1D locus where the momentum/signal timescale $d^\gamma$ and the
slow signal-learning timescale $d^{1 - \sigma + \kappa}$ coincide. The
two regime-adapted choices of \eqref{eq:lr_alpha_eta} agree on this line:
$\alpha_\eta = \gamma - \kappa = 1 - \sigma$. The resulting
dimensionless rate satisfies $\eps \asymp \eta\, p\, \alpha \asymp d^{-\gamma}$,
so the three a-priori distinct timescales of the five-variable system
collapse to a common clock $d^\gamma$. Neither heavy-ball
adiabatic-$u$ elimination (as below resonance) nor slow-manifold bulk
reduction (as above resonance) applies — the full five-variable system
survives at polynomial resolution.

\paragraph{Limit ODE.}
Set the slow clock $\tau := t / d^\gamma$, freeze
$\alpha \to \alpha^* = e^{R_\perp^* / 2}$, write $A^* := \alpha^*$,
$\beta^* := r(\alpha^* - 1)$ so $f = p[A^* s + \beta^*]$, and
adopt the rebalancing $(\tilde u, \tilde V, \tilde C) = (u\,d^\kappa,\, V_\perp\,d^{2\kappa},\, C_\perp\,d^\kappa)$, scaled by pure powers of $d$ (which differs from the natural-units form $(u/p, V_\perp/p^2, C_\perp/p)$ by constant factors of $p_*, p_*^2, p_*$).
Multiplying both sides of \eqref{eq:lr_S1_sparse}--\eqref{eq:lr_B3_sparse}
by $d^\gamma$ and taking $d \to \infty$ yields
\begin{equation}
\begin{aligned}
    \frac{ds}{d\tau} &= -\eta_*\, \tilde u, \\
    \frac{d\tilde u}{d\tau} &= \eps_* p_* A^*\, s - \eps_*\, \tilde u + \eps_* p_* \beta^*, \\
    \frac{dR_\perp}{d\tau} &= -2\eta_*\, \tilde C + \mathbf{1}\{\kappa = \sigma - 1\}\bigl(\eta_*^2 \eps_*^2\, p_* / B_* + \eta_*^2 \eps_*^2\, p_*^2 (A^*)^2\, R_\perp\bigr), \\
    \frac{d\tilde V}{d\tau} &= -2\eps_*\, \tilde V + 2\eps_* p_* A^*\, \tilde C + \eps_*^2\, p_* / B_* + \mathbf{1}\{\kappa = \sigma - 1\}\, \eps_*^2 (A^*)^2\, R_\perp, \\
    \frac{d\tilde C}{d\tau} &= -\eps_*\, \tilde C + \eps_* p_* A^*\, R_\perp - \eta_*\, \tilde V.
\end{aligned}
\label{eq:lr_boundary_E_limit}
\end{equation}

\paragraph{Survival of the additive noise channel.}
The additive forcing in $\dot R_\perp$ has bare order
$d^{-2\alpha_\eta - 2\gamma + 1 - \kappa - \sigma}$; multiplying by
the clock factor $d^\gamma$ leaves $d^{2(\sigma - 1 - \kappa)} = d^{-2\gamma}$,
which is strictly subleading on the interior of E ($\gamma > 0$,
equivalently $\kappa \ne \sigma - 1$). On the $\dot{\tilde V}$ equation,
the rebalancing $V_\perp \to \tilde V = V_\perp / p^2$ exactly
compensates the noise prefactor, so the $\eps^2 (dp/B)$ pump contributes
$\eps_*^2 p_* / B_*$ at $O(1)$ for every $\sigma$. The codimension-2
stratum $\kappa = \sigma - 1$, $\gamma = 0$ is a triple point where the
noise-character boundary meets the resonance line. There, both noise
channels of $\hat{\mathcal{N}}_\perp$ scale as $d^{-2\kappa}$, so on the
slow clock the multiplicative channel $p^2 \alpha^2 R_\perp$ also survives
at $O(1)$: it pumps an $\eps_*^2 (A^*)^2 R_\perp$ contribution to
$\dot{\tilde V}$ (after the $V_\perp/p^2$ rebalancing absorbs $p_*^2$) and
an $\eta_*^2 \eps_*^2 p_*^2 (A^*)^2 R_\perp$ contribution to $\dot R_\perp$,
alongside the additive $\eta_*^2 \eps_*^2 p_*/B_*$ pump there.

\paragraph{Equilibrium and noise floor.}
Set all RHS in \eqref{eq:lr_boundary_E_limit} to zero. The signal block
gives the nullcline $\tilde u^* = 0$ (from $\dot s = 0$), which combined
with $\dot{\tilde u} = 0$ yields $A^*(s^* + r) = r$ — equivalently
$s^* + r = r/A^*$. For the bulk block on the interior of E
($\kappa \neq \sigma - 1$), $\dot R_\perp = -2\eta_*\tilde C$ forces
$\tilde C^* = 0$. Then $\dot{\tilde V} = 0$ gives
\begin{equation}
    \tilde V^* \;=\; \frac{\eps_*\,p_*}{2\,B_*},
    \label{eq:lr_boundary_E_Vfloor}
\end{equation}
the noise-pumped variance floor, and substituting into $\dot{\tilde C} = 0$
(which reads $\eps_* p_* A^* R_\perp = \eta_* \tilde V$ at $\tilde C^* = 0$)
gives
\begin{equation}
    A^*\,R_\perp^* \;=\; \frac{\eta_*}{2\,B_*} \;=\; \frac{\eta_{\mathrm{eff}}}{2}.
    \label{eq:lr_boundary_E_floor}
\end{equation}
The common $\eps_* p_*$ prefactor cancels, so neither the momentum
coupling $\bar\eta = \eta_* p_*/\eps_*$ nor the slow-clock scale $\eps_*$
enters the steady state: the parallel momentum $u$ carries no direct
noise (the orthogonal noise lives entirely in $V_\perp, C_\perp$), and
the bulk damping and forcing pick up the same $\eps_*$ prefactor in the
rebalanced system. Expanding around $\alpha^* = 1$ at small
$\eta_{\mathrm{eff}}$ as in Appendix~\ref{sec:lr_phase_3} gives
$R_\perp^*\,e^{R_\perp^*/(2(1+r^2))} = \eta_*/(2B_*)$ — the same implicit
floor equation as the below-resonance noise-floor regime,
eq.~\eqref{eq:lr_floor_D}. So the noise floor is continuous across the
resonance line into the below-resonance regime, and discontinuous (it
drops from a positive value to a vanishing
$d^{1 - \sigma + \kappa - 2\gamma}$ scaling) into the above-resonance
noise-floor regime as one crosses E from below.

\paragraph{Differences from LS resonance.}
The LS resonance line of Proposition~\ref{prop:eta_max} is also a
codim-1 locus where the two $\eta_{\max}$ scalings coincide; LS has the
unified limit ODE of Theorem~\ref{thm:unified_limit} reducing to the
boundary case (which the LS appendix treats as part of the regime
analysis). The logistic boundary E is similarly the only locus where
the 5D eigenvalue structure does not reduce to a lower-dimensional
limit; it sits between the deterministic 2D heavy-ball regimes above
and the irreducibly 2D coupled slow manifold below.

\subsubsection{Boundary F: noise-character crossover $\kappa = \sigma - 1$}
\label{sec:lr_boundary_kappa}

\paragraph{Region.}
The vertical line $\kappa = \sigma - 1$ in the tame-$\alpha$ phase
plane, where the two contributions to $\hat{\mathcal{N}}_\perp$ scale
identically: $dp/B \asymp p^2\alpha^2 R_\perp \asymp d^{-2\kappa}$.
Their asymptotic ratio
\begin{equation}
    \frac{(B - 1)\, p^2 \alpha^2\, R_\perp / B}{dp / B} \;\xrightarrow{d \to \infty}\; p_* B_*\, \alpha^2\, R_\perp \;=\; O(1),
\end{equation}
so both noise channels survive at the same polynomial order at leading
order. This boundary has no LS analog (the LS bulk noise is purely
multiplicative).

\paragraph{Choice of $\alpha_\eta$.}
The resonance line $\gamma = 1 - \sigma + \kappa$ passes through the
origin on this boundary ($\gamma = 0$ at $\kappa = \sigma - 1$), so for
any $\gamma > 0$ we are strictly above the effective resonance and
adopt the heavy-ball calibration $\alpha_\eta = \gamma - \kappa = \gamma - (\sigma - 1)$.

\paragraph{Reduced 3D limit ODE.}
Using slow clock $\tau = t / d^\gamma$ and the Phase-1 diagonal
balancing $(\bar u, \tilde V, \tilde C) = (u/p, V_\perp/p^2, C_\perp/p)$,
the bulk subsystem reduces to the homogeneous Phase-1 form
\begin{equation}
    \frac{d}{d\tau}\!\begin{pmatrix} R_\perp \\ \tilde V \\ \tilde C \end{pmatrix} \;=\; \eps_* \begin{pmatrix} 0 & 0 & -2\bar\eta \\ 0 & -2 & 2\alpha_* \\ \alpha_* & -\bar\eta & -1 \end{pmatrix} \begin{pmatrix} R_\perp \\ \tilde V \\ \tilde C \end{pmatrix}.
    \label{eq:lr_boundary_F_limit}
\end{equation}
The additive contribution from $\hat{\mathcal{N}}_\perp$ scales as
$d^\gamma\, \eps^2\, (dp/B) / p^2 \asymp d^{1 + \kappa - \sigma - \gamma}$
on the slow clock, which at $\kappa = \sigma - 1$ becomes $d^{-\gamma}$ and
vanishes for $\gamma > 0$. Multiplicative contributions into $R_\perp$ and
$\tilde C$ carry extra factors of $\eta\eps$ or $\eps^2$ and are likewise
subleading. Hence the leading-order limit ODE on the interior of F is
the homogeneous Phase-1 matrix.

\paragraph{Consequences.}
At leading polynomial resolution, the limit ODE on the interior of F
coincides with the Phase-1 skeleton; the noise-character crossover does
\emph{not} produce a leading-order additive forcing for $\gamma > 0$. The
additive channel re-enters only at the codimension-2 intersection
$F \cap E$ ($\gamma = 0$, $\kappa = \sigma - 1$), which is the triple
point analog where $d^{-\gamma}$ collapses to $O(1)$ and the additive
channel acquires an $O(1)$ contribution to $\dot R_\perp$ itself.

\subsubsection{Synthesis: shared structure and genuine novelties}
\label{sec:lr_synthesis}

\paragraph{What is shared with LS.}
\begin{itemize}
    \item \textbf{Resonance line $\gamma = 1 - \sigma + \kappa$ persists.} The matching condition between the momentum timescale $d^\gamma$ and the signal-learning timescale $d^{1 - \sigma + \kappa}$ is invariant under the switch from least-squares to logistic loss; it depends only on the ratio $\eps / (\eta p)$, governed by the same exponent algebra in both settings.
    \item \textbf{Two regime-adapted $\alpha_\eta$ choices.} $\alpha_\eta = \gamma - \kappa$ above resonance (heavy-ball coupling $\eta p\alpha / \eps \asymp 1$) and $\alpha_\eta = 1 - \sigma$ below resonance ($\eta d / B \asymp 1$) carry over from LS verbatim. The motivation differs (LS: mean-square stability; logistic: damping balance + noise-floor control), but the algebraic prescription is the same.
    \item \textbf{2D heavy-ball template above resonance.} The concentrated and rare-event/noise-floor above-resonance regimes both reduce to a 2D heavy-ball $\ddot s + \eps_* \dot s + \eps_*^2 \bar\eta\, p_* \alpha^*(1 + r^2) s = 0$, plus exponential bulk decay; LS has $1$ where logistic has $1 + r^2$.
\end{itemize}

\paragraph{What logistic changes.}
\begin{enumerate}
    \item \textbf{The dense/sparse boundary $\kappa = \sigma$ disappears.} Every step is active in logistic ($\Pbatch \equiv 1$), so the LS sparse-batching cutoff at $\kappa = \sigma$ has no logistic counterpart. The LS batching regimes collapse into a partition by noise character ($\kappa \lessgtr \sigma - 1$) and resonance ($\gamma \lessgtr 1 - \sigma + \kappa$).
    \item \textbf{A new noise-character boundary $\kappa = \sigma - 1$ appears} (Boundary~F). It is intrinsically nonlinear: LS has only multiplicative bulk noise, no analog of the additive $dp/B$ channel.
    \item \textbf{The below-resonance noise-floor regime is genuinely novel.} The coupled 2D slow manifold $(s, R_\perp)$ with affine noise-floor forcing and nonlinear $\alpha$-coupling has no LS analog. The signal cannot be adiabatically eliminated because its damping rate $\eta p \alpha$ is at the same slow timescale as the bulk; the equilibrium signal is biased away from the population optimum by the noise floor.
    \item \textbf{The above-resonance noise-floor regime carries a vanishing floor $R_\perp^*(d) \asymp d^{1 - \sigma + \kappa - 2\gamma}$.} Above resonance, the additive noise floor exists at every finite $d$ but vanishes in the limit; the leading dynamics coincide with the concentrated above-resonance regime. This is the bias/speed trade-off of Section~\ref{sec:logistic}.
    \item \textbf{Logistic-specific signal-direction curvature $1 + r^2$.} The factor enters the heavy-ball curvature in the above-resonance regimes and Boundary~E, and the linearized Jacobian in the below-resonance noise-floor regime; no LS analog (LS curvature is $1$).
    \item \textbf{Bias against perfect alignment in noise-floor regimes.} In the below-resonance noise-floor regime, the floor $R_\perp^* > 0$ forces $\alpha^* > 1$, shifting the equilibrium signal to $s^* = r(1/\alpha^* - 1) < 0$. LS has no separate signal slow variable.
\end{enumerate}

\paragraph{Triple-point structure.}
The codim-2 stratum $\kappa = \sigma - 1$, $\gamma = 0$ ($F \cap E$) is the
unique point where (i) both noise channels survive at $\asymp 1$ and (ii)
the full five-variable system is at common timescale. For $\sigma > 1$
this triple point lies outside the first quadrant; for $\sigma = 1$ it
sits at the origin.

\paragraph{Outside the tame-$\alpha$ regime.}
The tame-$\alpha$ analysis requires $r$ fixed and $R_\perp = O(1)$.
Two regimes lie outside this scope: (i) the log-growth regime
$R_\perp = O(\log d)$, where $\alpha$ is no longer $O(1)$ but a
polynomial in $d$ — we expect the below-resonance noise-floor coupling to qualitatively
persist with stronger feedback; (ii) signal-magnitude scaling
$r = r_* d^{\pm \nu}$ for some exponent $\nu \ne 0$, which introduces an extra exponent and
rotates both phase boundaries. Both are left for future work.

\subsection{Numerical validation: limit ODE vs.\ full five-variable sparse-limit ODE}
\label{app:lr_validation}

This subsection collects validations across the logistic phase plane,
comparing the per-region reduced limit ODEs against finite-$d$ trajectories
of the full five-variable sparse-limit ODE
\eqref{eq:lr_S1_sparse}--\eqref{eq:lr_B3_sparse}, where the dashed
``limit'' curve in each figure is obtained by integrating the same
full ODE at a very large reference $d$ (used as a numerical proxy for
the asymptotic limit). The 2D heavy-ball limit of the concentrated above-resonance
regime (Section~\ref{sec:logistic}, Figure~\ref{fig:lr-heatmaps})
is shown together with finite-$d$ trajectory data in
Figure~\ref{fig:app_lr_phase_1}. Figures~\ref{fig:app_lr_phase_2}--\ref{fig:app_lr_boundary_kappa}
extend the verification to the remaining regimes and the two
boundary lines.

\begin{figure}[ht]
\centering
\includegraphics[width=0.95\linewidth]{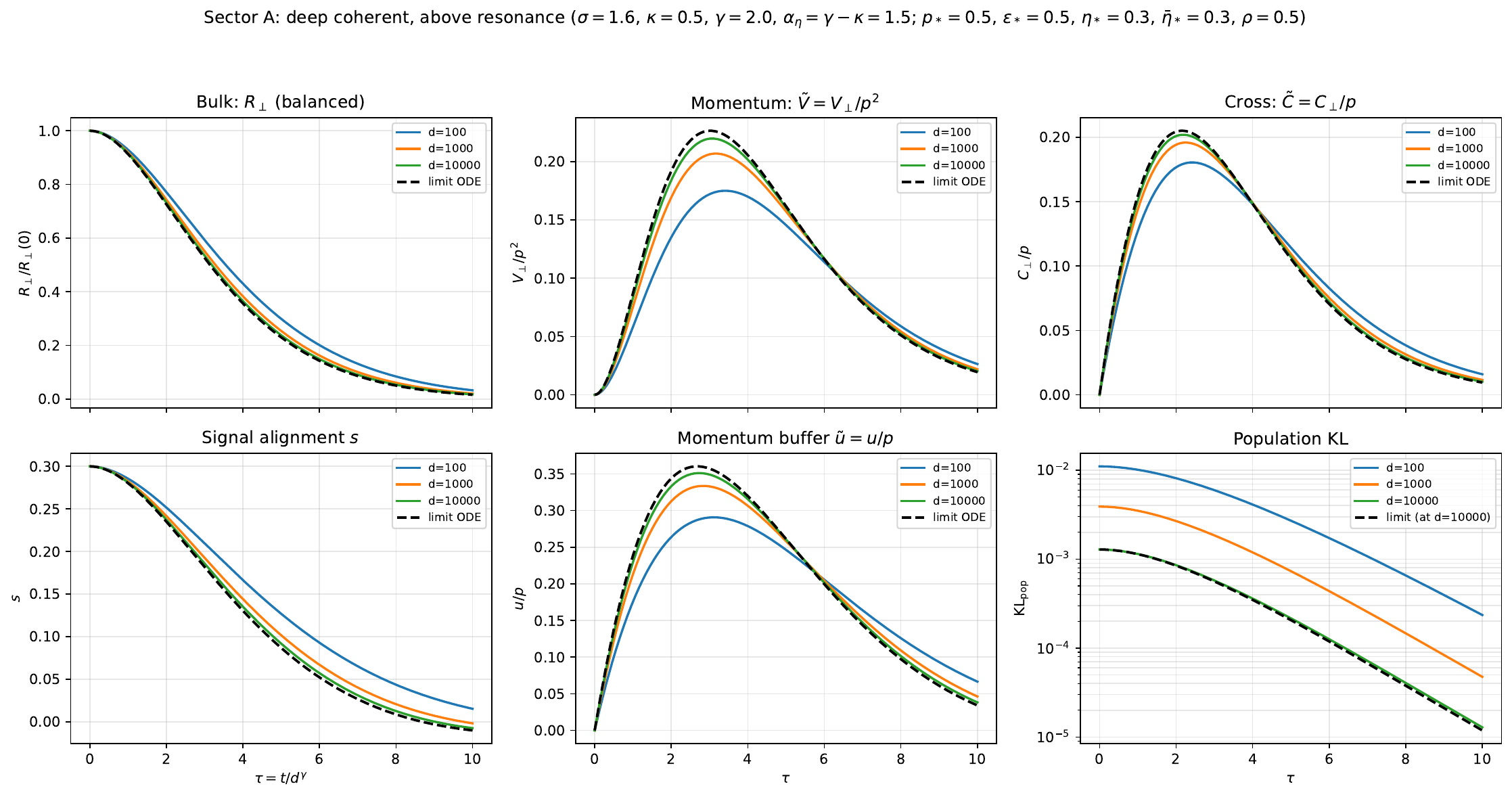}
\caption{\textbf{Concentrated above resonance.} Rescaled
trajectories from the full five-variable sparse-limit ODE
\eqref{eq:lr_S1_sparse}--\eqref{eq:lr_B3_sparse} (solid, one colour per
$d \in \{10^2, 10^3, 10^4\}$) collapse onto the 3D bulk heavy-ball
limit \eqref{eq:lr_phase1_bulk} (dashed) as $d \to \infty$. A fourth
panel reports the population KL divergence along the trajectory.
Parameters: $\sigma = 1.6$, $\kappa = 0.5$, $\gamma = 2.0$,
$\alpha_\eta = \gamma - \kappa = 1.5$, $p_* = 0.5$, $B_* = 1$,
$\eps_* = 0.5$, $\eta_* = 0.3$, $r = 0.5$, with initial condition
$(s, R_\perp)(0) = (0.3, 0.3)$.}
\label{fig:app_lr_phase_1}
\end{figure}

\begin{figure}[ht]
\centering
\includegraphics[width=0.95\linewidth]{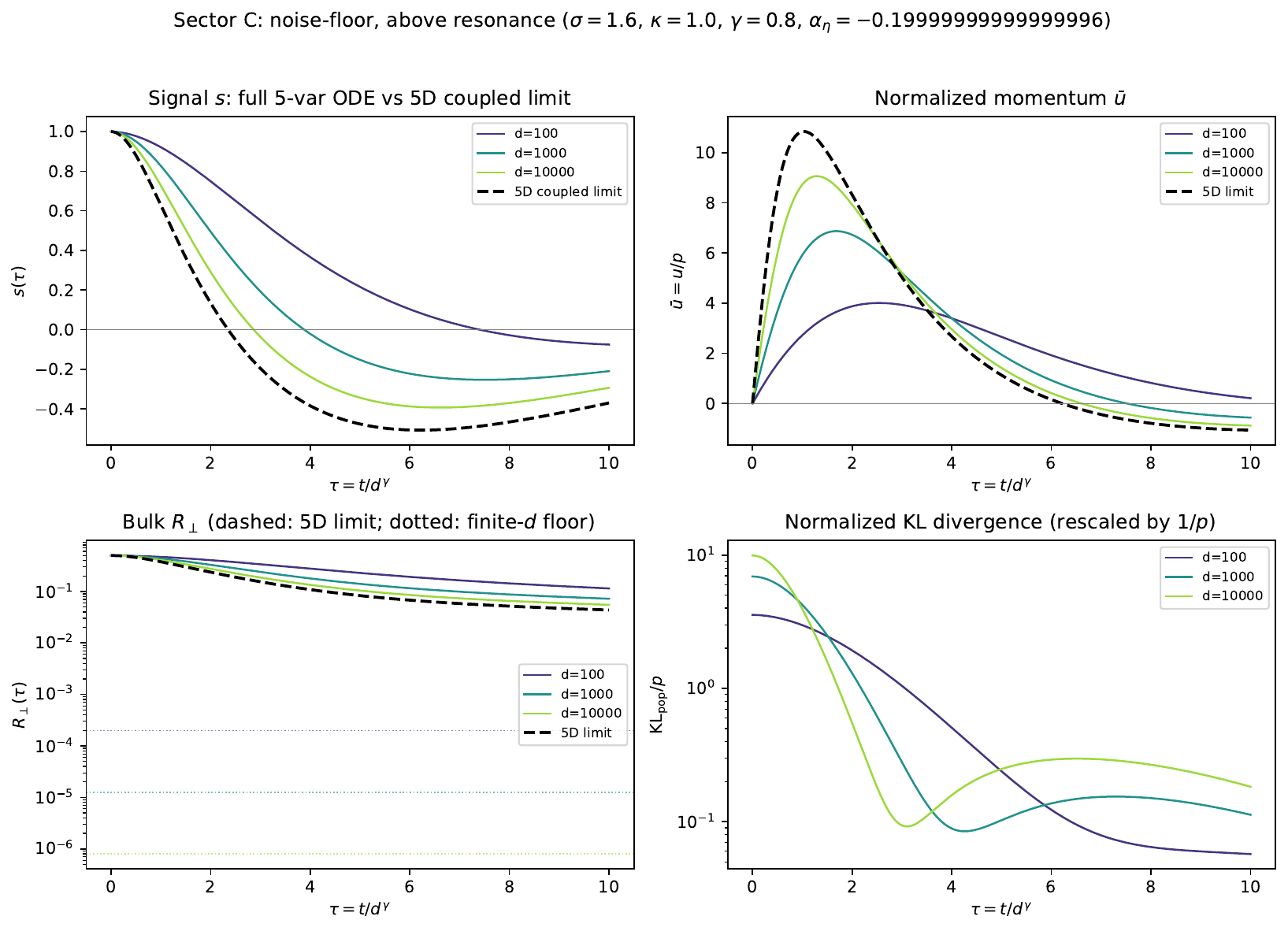}
\caption{\textbf{Noise-floor above resonance.} Leading-order 5D coupled
limit ODE \eqref{eq:lr_phase2_5d} (equivalently, the 4D Volterra form
\eqref{eq:lr_phase2_4d}; dashed) tracked by finite-$d$ trajectories of
the full five-variable sparse-limit ODE (colored,
$d \in \{100, 1000, 10000\}$). The limit retains the full
$\alpha(s, R_\perp)$ coupling between signal and bulk; the
``decoupled heavy-ball'' \eqref{eq:lr_phase2_heavyball} is the
linearization at the equilibrium, not the relaxation dynamics. The
bulk panel (lower-left) shows the subleading additive noise floor
\eqref{eq:lr_phase2_floor} (dotted, one per $d$) below the
trajectories; the floor is reached only at much larger $\tau$ than the
window shown. The normalized KL panel (lower-right) divides by $p$ to
remove the trivial $p \asymp d^{-\kappa}$ prefactor.}
\label{fig:app_lr_phase_2}
\end{figure}

\begin{figure}[ht]
\centering
\includegraphics[width=0.95\linewidth]{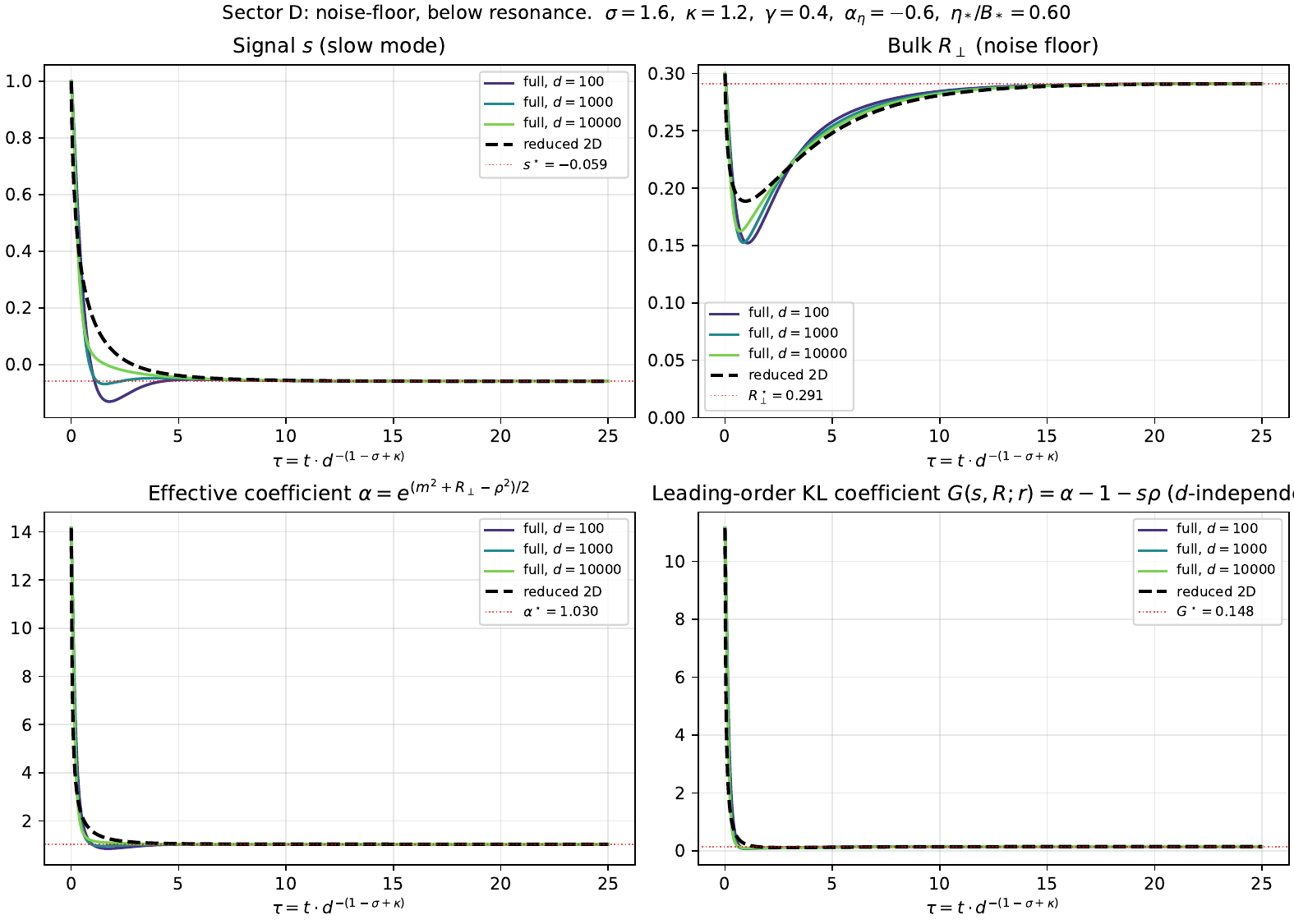}
\caption{\textbf{Noise-floor below resonance.} Adiabatic-elimination
limit ODE \eqref{eq:lr_phase3_slow} on the slow manifold $(s, R_\perp)$
(dashed) tracked by finite-$d$ trajectories of the full five-variable
sparse-limit ODE (colored, $d \in \{100, 1000, 10000\}$) at
$(\kappa, \gamma) = (1.2, 0.4)$, $\sigma = 1.6$, $r = 2.0$. Top-left:
signal $s$ converges to $s^\star = -0.059$
(eq.~\eqref{eq:lr_phase3_signal_bias}). Top-right: bulk $R_\perp$
converges to the noise floor $R_\perp^\star = 0.291$, distinct from the
LS limit which would give $R_\perp \to 0$. Bottom-left: effective
coupling $\alpha$ converges to $\alpha^\star = 1.030$. Bottom-right:
leading-order KL coefficient $G(s, R; r) := \alpha(s, R) - 1 - s r = \lim_{d \to \infty} \mathrm{KL}_{\mathrm{pop}}/p$
— a $d$-independent Gaussian integral, derived by sending all class-1
logits to $-\infty$ ($b^* \to -\infty$ as $p \to 0$) and using
$\sigma(z), \log(1+e^z) \sim e^z$. The class-2 contribution is
$O(p^2)$, subleading. Curves at $d = 100, 1000, 10000$ overlap exactly
on this normalized panel and converge to the equilibrium value
$G^\star = G(s^\star, R^\star; r) > 0$, witnessing the persistent
noise-floor equilibrium in the sharpest form.}
\label{fig:app_lr_phase_3}
\end{figure}

\begin{figure}[ht]
\centering
\includegraphics[width=0.95\linewidth]{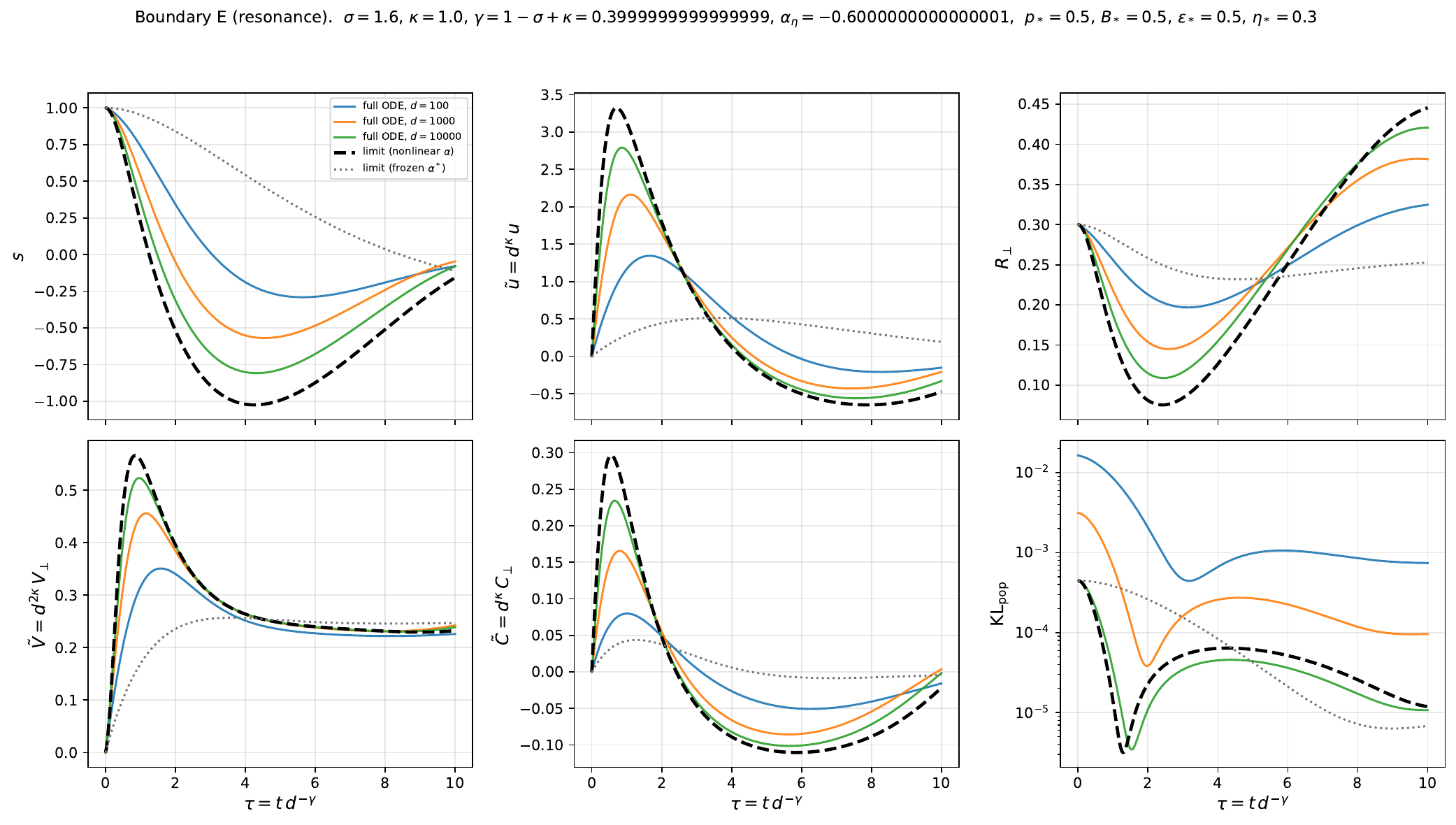}
\caption{\textbf{Resonance line $\gamma = 1 - \sigma + \kappa$
(boundary~E, between A/C and D).} Critical-case full five-variable
limit ODE \eqref{eq:lr_boundary_E_limit} (dashed proxy at large
reference $d$) tracked by finite-$d$ trajectories of the full
sparse-limit ODE (colored).}
\label{fig:app_lr_boundary_resonance}
\end{figure}

\begin{figure}[ht]
\centering
\includegraphics[width=0.95\linewidth]{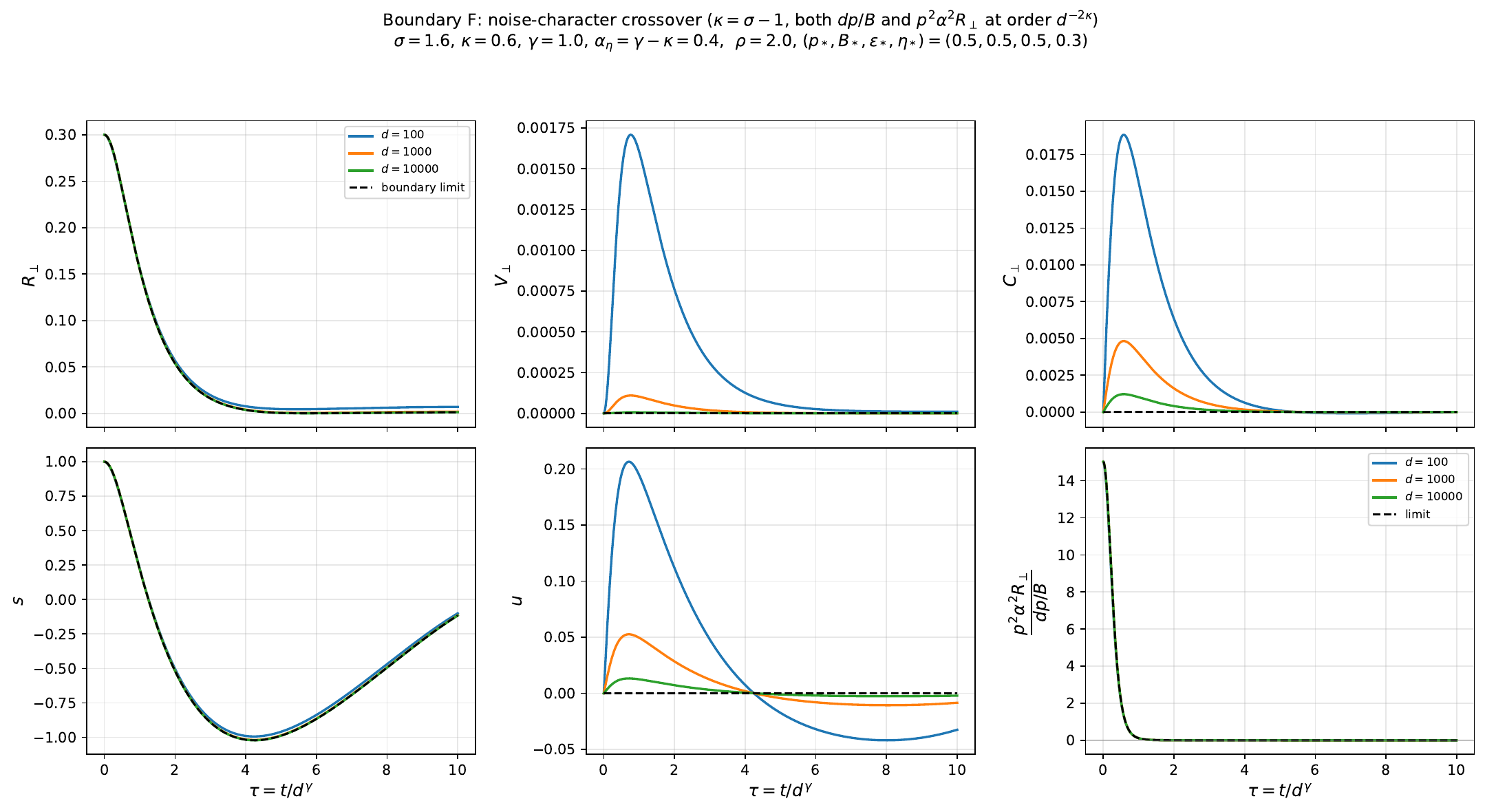}
\caption{\textbf{Noise-character boundary $\kappa = \sigma - 1$
(boundary~F, between A and C).} Critical-case 3D limit ODE
\eqref{eq:lr_boundary_F_limit} (dashed proxy at large reference $d$)
tracked by finite-$d$ trajectories of the full sparse-limit ODE
(colored).}
\label{fig:app_lr_boundary_kappa}
\end{figure}

\end{document}